\theoremstyle{plain} 
\newtheorem{lemma}{\textbf{Lemma}} 
\newtheorem{theorem}{\textbf{Theorem}}\setcounter{theorem}{0}
\newtheorem{assumption}{\textbf{Assumption}}
\theoremstyle{definition}
\theoremstyle{remark}
\newtheorem{remark}{\textbf{Remark}}
\newcommand{\pib}{\pi_{\mathsf{b}}}
\newcommand{\tmix}{t_{\mathsf{mix}}}
\newcommand{\cS}{\mathcal{S}}
\newcommand{\cA}{\mathcal{A}}
\newcommand{\mymid}{\,|\,}
\newcommand{\ceta}{c_\eta}
\newcommand{\mumin}{\mu_{\mathsf{min}}}
\newcommand{\defn}{\coloneqq}
\definecolor{yxc}{RGB}{255,0,0}
\definecolor{yjc}{RGB}{0,155,55}
\definecolor{ytw}{RGB}{255,69,0}
\definecolor{gen}{RGB}{0,0,200}
\definecolor{cxc}{RGB}{0,140,0}
\definecolor{red}{RGB}{255,0,0}
\begin{document}

\title{Is Q-Learning Minimax Optimal? \\ A Tight Sample Complexity Analysis} 

\author{Gen Li\thanks{Department of Statistics and Data Science,  Wharton School, University of Pennsylvania, Philadelphia, PA 19104, USA.} \\
UPenn   \\
	\and
	Changxiao Cai\thanks{Department of Biostatistics, University of Pennsylvania, Philadelphia, PA 19104, USA.} \\
 UPenn \\
 	\and
	Yuxin Chen\footnotemark[1] \\
 UPenn \\
  	\and 
	Yuting Wei\footnotemark[1]\\
	UPenn  \\
	\and
	Yuejie Chi\thanks{Department of Electrical and Computer Engineering, Carnegie Mellon University, Pittsburgh, PA 15213, USA.}\\
	CMU  \\
	}
	
\date{February 2021; ~~ Revised: October 2022}	

\maketitle

\begin{abstract}
Q-learning, which seeks to learn the optimal Q-function of a Markov decision process (MDP) in a model-free fashion, lies at the heart of reinforcement learning. When it comes to the synchronous setting (such that independent samples for all state-action pairs are drawn from a generative model in each iteration), substantial progress has been made towards understanding the sample efficiency of Q-learning. Consider a $\gamma$-discounted infinite-horizon MDP with state space $\cS$ and action space $\cA$: to yield an entrywise $\varepsilon$-approximation of the optimal Q-function,  state-of-the-art theory for Q-learning requires a sample size exceeding
	the order of $\frac{|\mathcal{S}||\mathcal{A}|}{(1-\gamma)^5\varepsilon^{2}}$, which fails to match existing minimax lower bounds. This gives rise to natural questions: what is the sharp sample complexity of Q-learning? Is Q-learning provably sub-optimal? This paper addresses these questions for the synchronous setting: (1) when $|\cA|=1$ (so that Q-learning reduces to TD learning), we prove that the sample complexity of TD learning is minimax optimal and scales as $\frac{|\mathcal{S}|}{(1-\gamma)^3\varepsilon^2}$ (up to log factor); (2) when $|\cA|\geq 2$, we settle the sample complexity of Q-learning to be on the order of $\frac{|\mathcal{S}||\mathcal{A}|}{(1-\gamma)^4\varepsilon^2}$ (up to log factor). Our theory unveils the strict sub-optimality of Q-learning when $|\cA|\geq 2$, and rigorizes the negative impact of over-estimation in Q-learning. Finally, we extend our analysis to accommodate asynchronous Q-learning (i.e., the case with Markovian samples), sharpening the horizon dependency of its sample complexity to be $\frac{1}{(1-\gamma)^4}$. 
\end{abstract}




\medskip

\noindent\textbf{Keywords: }Q-learning, temporal difference learning, effective horizon, sample
complexity, minimax optimality, lower bound, over-estimation 

\setcounter{tocdepth}{2}
\tableofcontents{}

\section{Introduction}

Q-learning is arguably one of the most widely adopted model-free algorithms \citep{watkins1989learning,watkins1992q}. 
Characterizing its sample efficiency  lies at the core of the statistical foundation of reinforcement learning (RL) \citep{sutton2018reinforcement}.
While classical convergence analyses for Q-learning \citep{tsitsiklis1994asynchronous,szepesvari1998asymptotic,jaakkola1994convergence,borkar2000ode} focused primarily on the asymptotic regime---in which the number of iterations tends to infinity with other problem parameters held fixed---recent years have witnessed a paradigm shift from asymptotic analyses towards a finite-sample\,/\,finite-time framework \citep{even2003learning,kearns1999finite,beck2012error,chen2020finite,wainwright2019stochastic,lee2018stochastic,qu2020finite,xiong2020finitetime,chen2021lyapunov,li2020sample,weng2020momentum}.  Drawing insights from high-dimensional statistics \citep{wainwright2019high},  a modern non-asymptotic framework unveils more clear and informative impacts of salient problem parameters upon the sample complexity, particularly for those applications with enormous state/action space and long horizon. 
Motivated by its practical value, a suite of non-asymptotic theory has been recently developed for Q-learning to accommodate multiple sampling mechanisms \citep{even2003learning,beck2012error,jin2018q,wainwright2019stochastic,qu2020finite,li2020sample}.

%
%
%

In this paper, we revisit the sample complexity of Q-learning for tabular Markov decision processes (MDPs).
For concreteness, let us consider the synchronous setting, which assumes access to a generative model or a simulator that produces independent samples for all state-action pairs  in each iteration \citep{kearns2002sparse,kakade2003sample}; 
this setting is termed ``synchronous'' as the estimates w.r.t.~all state-action pairs are updated at once. We investigate the $\ell_{\infty}$-based sample complexity, namely, the number of samples needed for synchronous Q-learning to yield an entrywise $\varepsilon$-accurate estimate of the optimal Q-function. Despite a number of prior works tackling this setting, the dependence of the sample complexity on the effective horizon $\frac{1}{1-\gamma}$ remains unsettled. Take $\gamma$-discounted infinite-horizon MDPs for instance: the state-of-the-art sample complexity bounds \citep{wainwright2019stochastic,chen2020finite} scale on the order of $ \frac{|\cS||\cA|}{(1-\gamma)^5\varepsilon^{2}} $ (up to some log factor), where $\cS$ and $\cA$ represent the state space and the action space, respectively. However, it is unclear whether this scaling is sharp for Q-learning,  and whether it can be further improved via a more refined theory. On the one hand, the minimax lower limit for this setting has been shown to be on the order of $ \frac{|\cS||\cA|}{(1-\gamma)^3\varepsilon^{2}} $ (up to some log factor) \citep{azar2013minimax}; this limit is achievable by model-based approaches \citep{agarwal2019optimality,li2020breaking} and apparently smaller than prior sample complexity bounds for Q-learning. On the other hand, \citet{wainwright2019variance} argued through numerical experiments that ``{\em the usual Q-learning suffers from at least worst-case fourth-order scaling in the discount complexity $\frac{1}{1-\gamma}$, as opposed
to the third-order scaling $\ldots$}'', although no rigorous justification was provided therein.   
Given the gap between the achievability bounds and lower bounds in the status quo, it is natural to seek answers to the following questions:   
\begin{center}
{\em What is the tight sample complexity characterization of Q-learning?} \\
{\em How does it compare to the minimax sample complexity limit? }
\end{center}
%



%
%
%
%
%

\newcommand{\topsepremove}{\aboverulesep = 0mm \belowrulesep = 0mm} \topsepremove

\begin{table*}[t]
\centering
\begin{tabular}{c|c|c}
\toprule 
paper $\vphantom{\frac{1^{7}}{1^{7^3}}}$ &  learning rates & sample complexity   \\ \toprule
	\cite{even2003learning} $\vphantom{\frac{1^{7^{7^7}}}{1^{7^{7^{7^7}}}}}$ & linear: $\frac{1}{t}$ & $ 2^{\frac{1}{1-\gamma}}\frac{|\cS||\cA|}{(1-\gamma)^4\varepsilon^2}$    \\ \hline
	\cite{even2003learning} $\vphantom{\frac{1^{7^{7^7}}}{1^{7^{7^{7^7}}}}}$  & polynomial: $\frac{1}{t^{\omega}}$,\, $\omega \in (1/2,1)$ & $|\cS||\cA| \Big\{ \big( \frac{1}{(1-\gamma)^4\varepsilon^2}\big)^{1/\omega} +  \big( \frac{1}{1-\gamma }\big)^{\frac{1}{1-\omega}} \Big\}$   \\ \hline
\cite{beck2012error} $\vphantom{\frac{1^{7^{7^7}}}{1^{7^{7^{7^7}}}}}$  & constant: $\frac{(1-\gamma)^4\varepsilon^2}{|\cS||\cA|}$ & $\frac{|\cS|^2|\cA|^2}{(1-\gamma)^5\varepsilon^2} $ \\ \hline
\cite{wainwright2019stochastic} $\vphantom{\frac{1^{7^{7^7}}}{1^{7^{7^{7^7}}}}}$ & rescaled linear: $\frac{1}{1+(1-\gamma)t}$ & $\frac{|\cS||\cA|}{(1-\gamma)^5\varepsilon^2} $ \\ \hline
\cite{wainwright2019stochastic} $\vphantom{\frac{1^{7^{7^7}}}{1^{7^{7^{7^7}}}}}$  & polynomial:  $\frac{1}{t^{\omega}}$, $\omega \in (0,1)$ &   $|\cS||\cA| \Big\{ \big( \frac{1}{(1-\gamma)^4\varepsilon^2}\big)^{1/\omega} +  \big( \frac{1}{1-\gamma }\big)^{\frac{1}{1-\omega}} \Big\}$  \\ \hline
	\cite{chen2020finite} $\vphantom{\frac{1^{7^{7}}}{1^{7^{7^{7^{7^{7^7}}}}}}}$  & rescaled linear: $\frac{1}{\frac{1}{(1-\gamma)^2}+(1-\gamma)t}$ & $\frac{ |\cS||\cA|  }{(1-\gamma)^5\varepsilon^2}$ \\ \hline
\cite{chen2020finite} $\vphantom{\frac{1^{7^{7^7}}}{1^{7^{7^{7^7}}}}}$  & constant: $(1-\gamma)^4\varepsilon^2$ & $\frac{ |\cS||\cA|  }{(1-\gamma)^5\varepsilon^2} $ \\ \hline
	{\bf this work (Q-learning, $|\cA|\geq 2$)} $\vphantom{\frac{1^{7^{7^7}}}{1^{7^{7^{7^7}}}}}$  & rescaled linear: $\frac{1}{1+ (1-\gamma)t}$  &  $\frac{ |\cS||\cA|  }{(1-\gamma)^4\varepsilon^2} $  \\ \hline
	{\bf this work (Q-learning, $|\cA|\geq 2$)} $\vphantom{\frac{1^{7^{7^7}}}{1^{7^{7^{7^7}}}}}$  & constant: $(1-\gamma)^3 \varepsilon^2$ &  $\frac{ |\cS||\cA|  }{(1-\gamma)^4\varepsilon^2} $  \\ \toprule
	{\bf this work (TD learning, $|\cA|=1$)} $\vphantom{\frac{1^{7^{7^7}}}{1^{7^{7^{7^7}}}}}$  & rescaled linear: $\frac{1}{1+ (1-\gamma)t}$  &  $\frac{ |\cS|   }{(1-\gamma)^3\varepsilon^2} $  \\ \hline
	{\bf this work (TD learning, $|\cA|=1$)} $\vphantom{\frac{1^{7^{7^7}}}{1^{7^{7^{7^7}}}}}$  & constant: $(1-\gamma)^3 \varepsilon^2$ &  $\frac{ |\cS|  }{(1-\gamma)^3\varepsilon^2} $  \\ \toprule
\end{tabular}
	\caption{Comparisons of existing sample complexity upper bounds of {\em synchronous} Q-learning and TD learning for an infinite-horizon $\gamma$-discounted MDP with state space $\cS$ and action space $\cA$, where $0<\varepsilon<1$ is the target accuracy level. Here, sample complexity refers to the total number of samples  needed to yield either $\max_{s,a} |\widehat{Q}(s,a)- {Q}^{\star}(s,a)|\leq \varepsilon $ with high probability or $\mathbb{E}\big[\max_{s,a} |\widehat{Q}(s,a)- {Q}^{\star}(s,a)| \big] \leq \varepsilon$, where $\widehat{Q}$ is the estimate returned by Q-learning. All logarithmic factors are omitted in the table to simplify the expressions.   \label{table:comparisons}}
\end{table*}


\subsection{Main contributions}

Focusing on $\gamma$-discounted infinite-horizon MDPs with state space $\cS$ and action space $\cA$, 
this paper settles the $\ell_{\infty}$-based sample complexity of synchronous Q-learning.
Here and throughout, the standard notation $f(\cdot)=\widetilde{O} ( g(\cdot) )$ (resp.~$f(\cdot)=\widetilde{\Omega} ( g(\cdot) )$) 
means that $f(\cdot)$ is orderwise no larger than (resp.~no smaller than) $g(\cdot)$ modulo some logarithmic factors. Our main contributions regarding synchronous Q-learning are summarized below.
\begin{itemize}

	\item  When $|\cA|=1$, Q-learning coincides with temporal difference (TD) learning in a Markov reward process. For any $0<\varepsilon<1$, we prove that a total sample size of 
	      \begin{equation}
		      \widetilde{O} \Big( \frac{|\cS|}{(1-\gamma)^3\varepsilon^2} \Big)
		\end{equation}
		is sufficient for TD learning to guarantee $\varepsilon$-accuracy in an $\ell_{\infty}$ sense; see Theorem~\ref{thm:policy-evaluation}. This is sharp and minimax optimal (up to some log factor).

	\item Moving on to the case with $|\cA|\geq 2$, we demonstrate that a sample size of  
	      \begin{equation}
		      \widetilde{O} \Big( \frac{|\cS||\cA|}{(1-\gamma)^4\varepsilon^2} \Big)
		\end{equation}
		suffices for Q-learning to yield $\varepsilon$-accuracy in an $\ell_{\infty}$ sense for any $0<\varepsilon<1$; 
		see Theorem~\ref{thm:infinite-horizon-simple}. 
		Conversely, we construct a hard MDP instance with 4 states and 2 actions, for which Q-learning provably requires at least
      		\begin{equation}
		      \widetilde{\Omega} \Big( \frac{1} {(1-\gamma)^4\varepsilon^2} \Big)
		\end{equation}
		iterations to achieve $\varepsilon$-accuracy in an $\ell_{\infty}$ sense; see Theorem~\ref{thm:LB-example}. 
		These two theorems taken collectively lead to the first sharp characterization of the sample complexity of Q-learning,
		strengthening prior theory \citep{wainwright2019stochastic,chen2020finite} by a factor of $\frac{1}{1-\gamma}$. 
		In addition, the discrepancy between our sharp characterization and the minimax lower bound makes clear that Q-learning is {\em not} minimax optimal when $|\cA|\geq 2$, and is outperformed by, say, the model-based approaches \citep{agarwal2019optimality,li2020breaking} in terms of the sample efficiency.

		
\end{itemize}
Our results cover both rescaled linear and constant learning rates; see Table~\ref{table:comparisons} for more detailed comparisons with previous literature. 
On the technical side, 
(i) our analysis for the upper bound relies on a sort of crucial error decompositions and variance control that are previously unexplored, which might shed light on how to pin down the finite-sample efficacy of other variants of Q-learning such as double Q-learning; 
(ii) the development of our lower bound, which is inspired by  \cite{azar2013minimax,wainwright2019variance}, puts the negative impact of over-estimation on sample efficiency on a rigorous footing.

Finally, we extend our analysis framework to accommodate the asynchronous setting, in which the samples are non-i.i.d.~and take the form of a single Markovian trajectory. We show for the first time that the sample complexity of asynchronous Q-learning exhibits a $\frac{1}{(1-\gamma)^4}$ scaling w.r.t.~the effective horizon, which is nearly sharp and improves upon the prior state-of-the-art \cite{li2020sample}.




%

\subsection{Related works}

There is a growing literature dedicated to analyzing the non-asymptotic behavior of value-based model-free RL algorithms in a variety of scenarios. 
In the discussion below, we subsample the literature and discuss a couple of papers that are the closest to ours. 


\paragraph{Finite-sample $\ell_\infty$-based guarantees for synchronous Q-learning and TD learning.} The sample complexities derived in prior literature often rely crucially on the choices of learning rates. \citet{even2003learning} studied the sample complexity of Q-learning with linear learning rates $1/t$ or polynomial learning rates $1/t^{\omega}$, which scales as $\widetilde{O}(\frac{|\cS||\cA|}{(1-\gamma)^5 \varepsilon^{2.5}})$ when optimized w.r.t.~the effective horizon (attained when $\omega =4/5$). The resulting sample complexity, however, is sub-optimal in terms of its dependency on not only  $\frac{1}{1-\gamma}$ but also the target accuracy level $\varepsilon$. \citet{beck2012error} investigated the case of constant learning rates; however, their result suffered from an additional factor of $|\cS||\cA|$, which could be prohibitively large in practice. More recently, \citet{wainwright2019stochastic,chen2020finite} further analyzed the sample complexity of Q-learning with either constant learning rates or linearly rescaled learning rates, leading to the state-of-the-art bound $\widetilde{O}\big(\frac{|\cS||\cA|}{(1-\gamma)^5\varepsilon^2}\big)$. However, this result remains sub-optimal in terms of its scaling with $\frac{1}{1-\gamma}$.  See Table~\ref{table:comparisons} for details. 
In the special case with $|\cA|=1$, 
the recent works \cite{khamaru2021temporal,mou2020linear} 
developed instance-dependent results for TD learning with Polyak-Ruppert averaging, and studied the local (sub)-optimality of TD learning in a different local minimax framework.   

\paragraph{Finite-sample $\ell_\infty$-based guarantees for asynchronous Q-learning and TD learning.}
Moving beyond the synchronous model, \cite{even2003learning,beck2012error,qu2020finite,li2020sample, shah2018q, chen2021lyapunov}
developed non-asymptotic convergence guarantees for the asynchronous setting, where the data samples take the form of a single Markovian trajectory (following some behavior policy) and only a single state-action pair is updated in each iteration. A similar scaling of $\widetilde{O}\big(\frac{1}{(1-\gamma)^5}\big)$  also showed up in the state-of-the-art sample complexity bounds for asynchronous Q-learning \citep{li2020sample}, and our theory is the first to sharpen it to $\widetilde{O}\big(\frac{1}{(1-\gamma)^4}\big)$. 
When it comes to the special case with $|\cA|=1$,  the non-asymptotic performance guarantees for TD learning with Markovian sample trajectories (assuming that the behavior policy coincides with the target policy) have been recently derived by \cite{bhandari2021finite,srikant2019finite,mou2020linear}.


\paragraph{Finite-sample $\ell_\infty$-based guarantees of other Q-learning variants.}  
With the aim of alleviating the sub-optimal dependency on the effective horizon in vanilla Q-learning and improving sample efficiency, several variants of Q-learning have been proposed and analyzed. \citet{azar2011reinforcement} proposed speedy Q-learning, which achieves a sample complexity of $\widetilde{O} \big( \frac{|\cS||\cA|}{ (1-\gamma)^4\varepsilon^2 } \big)$ at the expense of doubling the computation and storage complexity. Our result on vanilla Q-learning matches that of speedy Q-learning in an order-wise sense. In addition, \citet{wainwright2019variance} proposed a variance-reduced Q-learning algorithm that is shown to be minimax optimal in the range $\epsilon \in (0,1)$ with a sample complexity $\widetilde{O} \big( \frac{|\cS||\cA|}{ (1-\gamma)^3\varepsilon^2 } \big)$, which was subsequently generalized to the asynchronous setting by \citet{li2020sample}. The $\ell_\infty$ statistical bounds for variance-reduced TD learning have been investigated in \citet{khamaru2021temporal} for the synchronous setting, and in \citet{li2020sample} for the asynchronous setting. Last but not least, \citet{xiong2020finitetime} established the finite-sample convergence of double Q-learning following the framework of \cite{even2003learning}; however, it is unclear whether  double Q-learning can provably outperform vanilla Q-learning in terms of the sample efficiency.

%


\paragraph{Others.} 
 There are also several other strands of related papers that tackle model-free algorithms but do not pursue $\ell_{\infty}$-based non-asymptotic guarantees. For instance, \citet{bhandari2021finite,lakshminarayanan2018linear,srikant2019finite,gupta2019finite,doan2019finite,wu2020finite,xu2019reanalysis,chen2019performance,xu2019two} developed finite-sample (weighted) $\ell_2$ convergence guarantees for several model-free algorithms, which also allow one to accommodate linear function approximation as well as off-policy evaluation.  Another line of recent work \citep{jin2018q,li2021breaking,bai2019provably,zhang2020almost} considered the sample efficiency of Q-learning type algorithms paired with proper exploration strategies (e.g., upper confidence bounds) under the framework of regret analysis.   
The asymptotic behaviors of some variants of Q-learning, e.g., double Q-learning \citep{weng2020mean} and relative Q-learning \citep{devraj2020q} are also studied. 
In addition, Q-learning in conjunction with the pessimism principle has proven effective in dealing with offline data \citep{shi2022pessimistic,yan2022efficacy}. 
The effect of more general function approximation schemes (e.g., certain families of neural network approximations) has been studied in \citet{yang2019theoretical,murphy2005generalization,cai2019neural,wai2019variance,xu2020finite}, 
whereas the extension to multi-agent scenarios has been looked at in \citet{hu2003nash,li2022minimax}. These are beyond the scope of the present paper.


\section{Background and algorithms}
\label{sec:background}

This paper concentrates on discounted infinite-horizon MDPs \citep{bertsekas2017dynamic}. We shall start by introducing some basics of tabular MDPs, followed by a description of both Q-learning and  TD learning. 
Throughout this paper,  
we denote by $\cS=\{1,\cdots, |\cS|\}$ and $\cA=\{1,\cdots, |\cA|\}$ the state space and the action space of the MDP, respectively, 
and let $\Delta(\cS)$  represent the probability simplex over the set $\cS$.  



\paragraph{Basics of discounted infinite-horizon MDPs.}

Consider an infinite-horizon MDP as represented by a quintuple $\mathcal{M} = (\cS,\cA, P, r,\gamma)$, where $\gamma\in (0,1)$ indicates the discount factor, 
 $P:\cS\times\cA \rightarrow \Delta(\cS)$ represents the probability transition kernel (i.e., $P(s' \mymid {s,a})$ is the probability of transiting to state $s'$ from a state-action pair $(s,a)\in \cS\times \cA$),  and $r:  \cS\times\cA \rightarrow [0,1]$ stands for the reward function (i.e., $r(s,a)$ is the immediate reward collected in state $s\in \cS$ when action $a\in \cA$ is taken). Note that the immediate rewards are assumed to lie within $[0,1]$ throughout this paper.  
Moreover, we let $\pi: \cS \rightarrow \Delta(\cA)$ represent a policy, so that $\pi( \cdot \mymid s)\in \Delta(\cA)$ specifies the (possibly randomized) action selection rule in state $s$. If $\pi$ is a deterministic policy, then we denote by $\pi(s)$ the action selected by $\pi$ in state $s$.

A common objective in RL is to maximize a sort of long-term rewards called value functions or Q-functions. 
Specifically, given a policy $\pi$, the associated value function and Q-function of $\pi$ are defined respectively by
 \begin{align*}
   V^{\pi}(s) \defn \mathbb{E} \left[ \sum_{k=0}^{\infty} \gamma^k r(s_k,a_k ) \,\Big|\, s_0 =s \right]
\end{align*} 
 for all $s\in \cS$, and
 \begin{equation*}
Q^{\pi}(s,a) \defn \mathbb{E} \left[ \sum_{k=0}^{\infty} \gamma^k r(s_k,a_k ) \,\Big|\, s_0 =s, a_0 = a \right]
\end{equation*} 
for all $(s,a)\in \cS \times \cA$. 
Here, $\{(s_k,a_k)\}_{k\geq 0}$ is a trajectory of the MDP induced by the policy $\pi$ (except $a_0$ when evaluating the Q-function), and the expectations are evaluated with respect to the randomness of  the MDP trajectory. 
Given that the immediate rewards fall within $[0,1]$, it can be straightforwardly verified that $0\leq V^{\pi}(s)\leq \frac{1}{1-\gamma} $ and $0\leq Q^{\pi}(s,a)\leq \frac{1}{1-\gamma} $ for any $\pi$ and any state-action pair $(s,a)$. The optimal  value function $V^\star$ and optimal Q-function $Q^\star$ are defined respectively as
\begin{equation*}
	V^{\star}(s) \coloneqq \max_{\pi} V^{\pi}(s), \qquad Q^{\star}(s,a) \coloneqq \max_{\pi} Q^{\pi}(s,a)
\end{equation*}
for any state-action pair $(s,a)\in \cS\times \cA$.  
It is well known that there exists a {\em deterministic} optimal policy, denoted by $\pi^{\star}$, that attains $V^{\star}(s)$ and $Q^{\star}(s,a)$ simultaneously for all $(s,a)\in \cS\times \cA$ \citep{sutton2018reinforcement}.

\paragraph{Algorithms: Q-learning and TD learning (the synchronous setting).}

The synchronous setting assumes access to a generative model \citep{kearns1999finite,sidford2018near} such that:  
in each iteration $t$, we collect an independent sample $s_t(s,a) \sim P(\cdot \mymid s,a)$ for every state-action pair $(s,a)\in \cS\times \cA$.

With this sampling model in place, the Q-learning algorithm \citep{watkins1992q} maintains a Q-function estimate $Q_t: \cS\times \cA \rightarrow \mathbb{R}$ for all $t\geq 0$; in each iteration $t$, the algorithm updates {\em all} entries of the Q-function estimate at once via the following update rule
\begin{equation}
\label{eqn:q-learning}
	Q_t  = (1- \eta_t ) Q_{t-1} + \eta_t \mathcal{T}_t (Q_{t-1}) . 
\end{equation}
Here, $\eta_t \in (0,1]$ denotes the learning rate or the step size in the $t$-th iteration,  
and $\mathcal{T}_t$ denotes the empirical Bellman operator constructed by samples collected in the $t$-th iteration, i.e., 
\begin{align}  \label{defn:empirical-Bellman-t-inf}
	\mathcal{T}_t (Q) (s , a ) &\coloneqq 
		 r(s , a ) + \gamma \max_{a' \in \cA} Q(s_t, a') , \qquad s_t  \equiv s_t(s,a) \sim P(\cdot \mymid s,a) 
\end{align}
for each state-action pair $(s,a)\in \cS\times \cA$. 
Obviously, $\mathcal{T}_t$ is an unbiased estimate of the celebrated Bellman operator $\mathcal{T}$ given by  
\begin{equation*}
	\forall (s,a)\in \cS\times \cA: \qquad
	\mathcal{T}(Q)(s,a) := r(s,a) + \gamma \mathop{\mathbb{E}}_{s^{\prime} \sim P(\cdot| s,a)}  \Big[ \max_{a^{\prime}\in \cA} Q(s^{\prime}, a^{\prime}) \Big] .
\end{equation*}
Note that the optimal Q-function $Q^{\star}$ is the unique fixed point of the Bellman operator \citep{bellman1952theory}, that is, $\mathcal{T}(Q^{\star}) =Q^{\star}$.  Viewed in this light, synchronous Q-learning can be interpreted as a stochastic approximation scheme \citep{robbins1951stochastic} aimed at solving this fixed-point equation. 
Throughout this work, we initialize the algorithm in a way that obeys $0\leq Q_0(s,a) \leq \frac{1}{1-\gamma}$ for every state-action pair $(s,a)$.  
In addition, the corresponding value function estimate $V_t: \cS \rightarrow \mathbb{R}$ in the  $t$-th iteration is defined as
\begin{align}
	\label{defn:Vt}
	\forall s\in \cS: \qquad V_t(s) :=  \max_{a\in \cA} Q_{t}(s,a) .
\end{align}
The complete description of Q-learning is summarized in Algorithm~\ref{alg:q-infinite}.

\begin{algorithm}[t]
	\begin{algorithmic}[1] 
	\STATE\textbf{inputs:} learning rates $\{\eta_t\}$, number of iterations $T$, discount factor $\gamma$, initial estimate $Q_0$.
   \FOR{$t=1,2,\cdots,T$}
	\STATE{ Draw  $s_t(s,a) \sim  P(\cdot \mymid s,a)$ for each $(s,a)\in \cS\times \cA$. }
	\STATE{ Compute $Q_t$ according to \eqref{eqn:q-learning} and \eqref{defn:empirical-Bellman-t-inf}.}
\ENDFOR
   \end{algorithmic} 
    \caption{Synchronous Q-learning for infinite-horizon discounted MDPs.}
 \label{alg:q-infinite}
\end{algorithm}

As it turns out, TD learning \citep{sutton1988learning,tsitsiklis1997analysis,bhandari2021finite} in the synchronous setting can be viewed as a special instance of 
Q-learning when the action set $\cA$ is a singleton (i.e., $|\cA|=1$).    
In such a case, the MDP reduces to a Markov reward process (MRP) \citep{bertsekas2017dynamic}, and we shall abuse the notation to use  
 $P: \mathcal{S} \rightarrow \Delta(\cS)$ to describe the probability transition kernel, and employ $r: \mathcal{S} \rightarrow [0,1]$ to represent the reward function (with $r(s)$ indicating the immediate reward gained in state $s$). 
 The TD learning algorithm maintains an estimate $V_t: \mathcal{S} \rightarrow \mathbb{R}$ of the value function in each iteration $t$,\footnote{There is no need to maintain additional Q-estimates, as the Q-function and the value function coincide when $|\cA|=1$. } 
and carries out the following iterative update rule
\begin{align}
\label{eqn:td-learning}
	V_t (s) &= (1- \eta_t ) V_{t-1} (s) + \eta_t \big( r(s) + \gamma   V_{t-1}(s_t) \big),   
	\qquad s_t \equiv s_t(s) \sim P(\cdot\mymid s)
\end{align}
for each state $s\in \cS$. 
As before, $\eta_t\in (0,1]$ is the learning rate at time $t$, the initial estimate $V_0(s)$ is taken to be within $\big[0, \frac{1}{1-\gamma}\big]$, and in each iteration, 
the samples $\{s_t(s) \mymid s\in \cS\}$ are generated independently.  
 The whole algorithm of TD learning is summarized in Algorithm~\ref{alg:td-infinite}.

 \begin{algorithm}[t]
	\begin{algorithmic}[1] 
	\STATE\textbf{inputs:} learning rates $\{\eta_t\}$, number of iterations $T$, discount factor $\gamma$, initial estimate $V_0$. 
   \FOR{$t=1,2,\cdots,T$}
	\STATE{ Draw  $s_t(s) \sim  P(\cdot \mymid s)$ for each $s\in \cS$. }
	\STATE{ Compute $V_t$ according to \eqref{eqn:td-learning}.}
\ENDFOR
   \end{algorithmic} 
    \caption{Synchronous TD learning for infinite-horizon discounted MRPs.}
 \label{alg:td-infinite}
\end{algorithm}

Finally, while synchronous Q-learning is the main focal point of this paper, we shall also discuss the extension to asynchronous Q-learning,  which we will elaborate on in Section~\ref{sec:AsynQ}.

\section{Main results: sample complexity of synchronous Q-learning}
\label{sec:main-results}

With the above backgrounds in place, we are in a position to state formally our main findings in this section, concentrating on the synchronous setting.

\subsection{Minimax optimality of TD learning}

We start with the special with $|\cA|=1$ and characterize the $\ell_{\infty}$-based sample complexity of synchronous TD learning.

\begin{theorem}
\label{thm:policy-evaluation}
	Consider any $\delta\in(0,1)$,  $\varepsilon\in(0,1]$, and $\gamma\in [1/2,1)$. Suppose that for any $0\leq t\leq T$, the
learning rates satisfy
\begin{subequations}
\label{eq:thm-td}
\begin{equation}
\frac{1}{1+\frac{c_1 (1-\gamma) T}{\log^2 T}}\le\eta_{t}\le\frac{1}{1+\frac{c_2 (1-\gamma) t}{\log^2 T}} \label{eq:thm:eta-TD}
\end{equation}
for some small enough universal constants $c_{1}\geq c_{2}>0$. 
Assume that the total number of iterations $T$ obeys
\begin{equation}
	T\ge\frac{c_{3}\big(\log^{3}T\big)\big(\log\frac{|\mathcal{S}|T}{\delta}\big)}{(1-\gamma)^{3}\varepsilon^{2}}
	\label{eq:thm:sample-size-TD-thm}
\end{equation}
for some sufficiently large universal constant $c_{3}>0$. 
\end{subequations}
If the initialization obeys $0\leq {V}_{0}(s) \leq\frac{1}{1-\gamma}$ for all $s\in \cS$, 
then  with probability at least $1-\delta$, Algorithm~\ref{alg:td-infinite} achieves
\begin{align}
	\max_{s\in \cS} \big| {V}_{T}(s) -  {V}^{\star}(s) \big| \le\varepsilon.
	\label{eq:VT-entrywise-TD-thm}
\end{align}
\end{theorem}
\begin{remark}[Mean estimation error] \label{remark:expected-error-inf-td}
	This high-probability bound immediately translates to a mean estimation error guarantee. Recognizing the crude upper bound 
	$\big| {V}_{T}(s) -  {V}^{\star}(s) \big| \leq \frac{1}{1-\gamma}$ (see \eqref{eq:V-i-LB-UB} in Section~\ref{sec:infinite_prelim_TD}) and taking $\delta\leq {\varepsilon(1-\gamma)}$, we reach
	\begin{equation}
		\mathbb{E}\Big[ \max_{s} \big| {V}_{T}(s) -  {V}^{\star}(s) \big| \Big] \le \varepsilon (1-\delta)+\delta \frac{1}{1-\gamma}
		\leq 2\varepsilon,
	\end{equation}
	provided that $T\ge\frac{c_{3} (\log^{3}T)\big(\log\frac{|\mathcal{S}|T}{\varepsilon (1-\gamma)}\big)}{(1-\gamma)^{3}\varepsilon^{2}}$. 
\end{remark}

Given that each iteration of synchronous TD learning makes use of $|\cS|$ samples, 
Theorem~\ref{thm:policy-evaluation} implies that the sample complexity of TD learning is at most
\begin{equation}
	\label{eq:TD-sample-complexity-simple}
	\widetilde{O} \bigg( \frac{|\cS|}{(1-\gamma)^3 \varepsilon^2} \bigg)
\end{equation}
for any target accuracy level $\varepsilon \in (0,1]$. 
This non-asymptotic result is valid as long as
the learning rates are chosen to be either a proper constant or rescaled linear (see~\eqref{eq:thm:eta-TD}).  
Compared to a large number of prior works studying the performance of TD learning \citep{borkar2000ode,bhandari2021finite,khamaru2021temporal,wainwright2019stochastic,chen2020finite,lakshminarayanan2018linear}, 
 Theorem~\ref{thm:policy-evaluation} strengthens prior results by uncovering an improved scaling (i.e., $\frac{1}{(1-\gamma)^3}$) in the effective horizon. In fact, prior results on plain TD learning were only able to obtain a scaling as $\frac{1}{(1-\gamma)^5}$ \citep{wainwright2019stochastic}.

To assess the tightness of the above result, we take a moment to compare it with the minimax lower bound recently established in the context of value function estimation. Specifically,  
\citet[Theorem 2(b)]{pananjady2019value}
asserted that no algorithm whatsoever can obtain an entrywise $\varepsilon$ approximation of the value function---in a minimax sense---unless the total sample size exceeds
\begin{equation}
	\label{eq:TD-sample-complexity-simple-LB}
	\widetilde{\Omega} \bigg( \frac{|\cS|}{(1-\gamma)^3 \varepsilon^2} \bigg). 
\end{equation}
In turn, this taken together with Theorem~\ref{thm:policy-evaluation} unveils the minimax optimality of the sample complexity (modulo some logarithmic factor) of TD learning  for the synchronous setting.  
While prior works have demonstrated how to attain the minimax limit \eqref{eq:TD-sample-complexity-simple-LB} using model-based methods or variance-reduced model-free algorithms (e.g., \citet{azar2013minimax,pananjady2019value,li2020breaking,khamaru2021temporal}), 
our theory provides the first rigorous evidence that plain TD learning alone is already minimax optimal, without the need of Polyak-Ruppert averaging or variance reduction. 

\begin{remark}[Runtime-oblivious learning rates] \label{remark:anytime-TD}
	Careful readers might remark that the choice \eqref{eq:thm:eta-TD} of the learning rates might still rely on prior knowledge on $T$ (or $\log T$). Fortunately, Theorem~\ref{thm:policy-evaluation} immediately leads to convergence guarantees for another choice of $\eta_t$ selected completely independent of $T$. More specifically, suppose that the learning rates obey
	\begin{align}
		\frac{1}{1 + \frac{\widetilde{c}_1(1-\gamma)t}{\log^2 (t+1)}} \le \eta_t \le \frac{1}{1 + \frac{\widetilde{c}_2(1-\gamma)t}{\log^2 (t+1)}}, \qquad \forall t \geq 1
			\label{eq:new-choice-learning-rates-remark}
	\end{align}
	for some universal constants $\widetilde{c}_1,\widetilde{c}_2>0$. Then the claim \eqref{eq:VT-entrywise-TD-thm} remains valid under this choice \eqref{eq:new-choice-learning-rates-remark}, 	provided that	
		\begin{align}
			T \geq \frac{2c_3 (\log^3 T)\big( \log \frac{|\cS|T}{\delta} \big) } {(1-\gamma)^3 \varepsilon^2}.
			\label{eq:sample-size-anytime-TD}
		\end{align}
See Appendix~\ref{sec:proof-remark:anytime-TD} for the proof. 
\end{remark}
\begin{remark}[Polyak-Ruppert averaging]
\label{remark:averaging-TD}
The results claimed in Remark~\ref{remark:anytime-TD} further allow us to control the estimation error of TD learning under Polyak-Ruppert averaging \citep{polyak1992acceleration}. More precisely, under the choice \eqref{eq:new-choice-learning-rates-remark} of learning rates, the averaged iterates satisfy
\begin{align}
	\max_{s\in \cS} \Big| \frac{1}{T}\sum_{t=1}^T {V}_{T}(s) -  {V}^{\star}(s) \Big| 
	\leq 4\sqrt{\frac{c_{3}(\log^{3}T)\big(\log\frac{|\cS|T}{\delta}\big)}{(1-\gamma)^{3}T}} 
\end{align}
with probability exceeding $1-\delta$. See Appendix~\ref{sec:proof-remark:anytime-TD} for the proof.
\end{remark}

\begin{remark}
	It is also noteworthy that: while the last iterate of plain TD learning is shown to be minimax optimal (which concerns worst-case optimality), 
	it might not necessarily enjoy local optimality.  
	As recently demonstrated by \citet{khamaru2021instance}, 
	additional algorithmic tricks like variance reduction might be needed in order to ensure local optimality. 
\end{remark}


%

\subsection{Tight sample complexity and sub-optimality of Q-learning}

Next, we move on to the more general case with $|\cA|\geq 2$ and study the performance of Q-learning. 
As it turns out, Q-learning with $|\cA|\geq 2$ is considerably more challenging to analyze than the TD learning case,
due to the presence of the nonsmooth max operator.  
Our $\ell_{\infty}$-based sample complexity bound for Q-learning is summarized as follows, 
strengthening the state-of-the-art results. 
\begin{theorem}
\label{thm:infinite-horizon-simple}
Consider any $\delta\in(0,1)$,  $\varepsilon\in(0,1]$, and $\gamma\in [1/2,1)$. Suppose that for any $0\leq t\leq T$, the
learning rates satisfy
\begin{subequations}
\label{eq:thm-infinite-horizon-condition}
\begin{equation}
\frac{1}{1+\frac{c_1 (1-\gamma) T}{\log^3 T}}\le\eta_{t}\le\frac{1}{1+\frac{c_2 (1-\gamma) t}{\log^3 T}} \label{eq:thm:infinite-horizaon-eta}
\end{equation}
for some small enough universal constants $c_{1}\geq c_{2}>0$. 
Assume that the total number of iterations $T$ obeys
\begin{equation}
	T\ge\frac{c_{3}\big(\log^{4}T\big)\big(\log\frac{|\mathcal{S}||\mathcal{A}|T}{\delta}\big)}{(1-\gamma)^{4}\varepsilon^{2}}
	\label{eq:thm:infinite-horizon-T}
\end{equation}
for some sufficiently large universal constant $c_{3}>0$. 
\end{subequations}
If the initialization obeys $0\leq {Q}_{0}(s,a) \leq\frac{1}{1-\gamma}$ for any $(s,a)\in \cS\times \cA$, 
then Algorithm \ref{alg:q-infinite} achieves
\begin{align}
	\max_{(s,a)\in \cS\times \cA} \big| {Q}_{T}(s,a) -  {Q}^{\star}(s,a) \big| \le\varepsilon
	\label{eq:QT-entrywise-Q-thm}
\end{align}
 with probability at least $1-\delta$. 
\end{theorem}
\begin{remark}[Mean estimation error] \label{remark:expected-error-inf}
	Repeating exactly the same argument as in Remark~\ref{remark:expected-error-inf-td}, one can readily translate this 
high-probability bound into the following  mean estimation error guarantee: 
	\begin{equation}
		\mathbb{E}\Big[ \max_{s,a} \big| {Q}_{T}(s,a) -  {Q}^{\star}(s,a) \big| \Big] \le \varepsilon (1-\delta)+\delta \frac{1}{1-\gamma}
		\leq 2\varepsilon,
	\end{equation}
	holds as long as $T\ge\frac{c_{3} (\log^{4}T)\big(\log\frac{|\mathcal{S}||\mathcal{A}|T}{\varepsilon (1-\gamma)}\big)}{(1-\gamma)^{4}\varepsilon^{2}}$. 
\end{remark}

In a nutshell, Theorem~\ref{thm:infinite-horizon-simple} develops a non-asymptotic bound on the iteration complexity of Q-learning in the presence of the synchronous model.  A few remarks and implications are in order.

\paragraph{Sample complexity and sharpened dependency on $\frac{1}{1-\gamma}$.} 
Recognizing that $|\cS||\cA|$ independent samples are drawn in each iteration, 
we can see from Theorem~\ref{thm:infinite-horizon-simple} the following sample complexity bound
\begin{equation}
	\widetilde{O}\Big( \frac{|\cS||\cA|}{(1-\gamma)^4 \varepsilon^2} \Big)
	\label{eq:sample-complexity-sync-Q-learning}
\end{equation}
in order for Q-learning to attain $\varepsilon$-accuracy ($0<\varepsilon<1$) in an entrywise sense.  
To the best of our knowledge, this is the first result that breaks the $\frac{|\cS||\cA|}{(1-\gamma)^5 \varepsilon^2}$ barrier that is present in all state-of-the-art analyses for vanilla Q-learning \citep{beck2012error,wainwright2019stochastic,chen2020finite,qu2020finite,li2020sample}. 

\paragraph{Learning rates.} Akin to the TD learning case, 
our result accommodates two commonly adopted learning rate schemes (cf.~\eqref{eq:thm:infinite-horizaon-eta}): (i) linearly rescaled learning rates $\frac{1}{1+\frac{c_2 (1-\gamma)}{\log^2 T}t}$, and (ii) iteration-invariant learning rates $\frac{1}{1+\frac{c_1 (1-\gamma) T}{\log^2 T}}$ (which depend on the total number of iterations $T$ but not the iteration number $t$).
In particular, when $T=\frac{c_{3} (\log^{4}T)\big(\log\frac{|\mathcal{S}||\mathcal{A}|T}{\delta}\big)}{(1-\gamma)^{4}\varepsilon^{2}}$, the constant learning rates can be taken to be on the order of
\[
	\eta_t \equiv \widetilde{O} \big( (1-\gamma)^3 \varepsilon^2 \big),  \qquad 0\leq t\leq T, 
\]
which depends almost solely on the discount factor $\gamma$ and the target accuracy $\varepsilon$. 
Interestingly, both learning rate schedules lead to the same $\ell_{\infty}$-based sample complexity bound (in an order-wise sense), making them appealing for practical use.  

\begin{remark}[Runtime-oblivious learning rates and Polyak-Ruppert averaging] \label{remark:anytime-Q}
	Akin to Remark~\ref{remark:anytime-TD}, 
	Theorem~\ref{thm:infinite-horizon-simple} can be easily extended to accommodate a family of learning rates chosen without prior knowledge of $T$. More concretely, suppose that the learning rates obey
	\begin{align}
		\frac{1}{1 + \frac{\widetilde{c}_1(1-\gamma)t}{\log^3 (t+1)}} \le \eta_t \le \frac{1}{1 + \frac{\widetilde{c}_2(1-\gamma)t}{\log^3 (t+1)}}, \qquad \forall t \geq 1
			\label{eq:new-choice-learning-rates-remark-Q}
	\end{align}
	for some suitable constants $\widetilde{c}_1,\widetilde{c}_2>0$. Then the claim \eqref{eq:QT-entrywise-Q-thm} continues to hold under this choice \eqref{eq:new-choice-learning-rates-remark-Q}, 	provided that	
		$
			T/2 \geq \frac{c_3 (\log^4 T)\big( \log \frac{|\cS||\cA|T}{\delta} \big) } {(1-\gamma)^4 \varepsilon^2}.
		$	
Additionally, similar to Remark~\ref{remark:averaging-TD}, we can demonstrate that the averaged Q-learning iterates
under the choice \eqref{eq:new-choice-learning-rates-remark-Q} of learning rates obey
\begin{align}
	\max_{(s,a)\in \cS\times \cA} \Big| \frac{1}{T}\sum_{t=1}^T {Q}_{T}(s,a) -  {Q}^{\star}(s,a) \Big| 
	\leq 4\sqrt{\frac{c_{3}(\log^{4}T)\big(\log\frac{|\cS||\cA|T}{\delta}\big)}{(1-\gamma)^{4}T}} 
\end{align}
with probability exceeding $1-\delta$.
	The proofs of these results are identical to those of Remarks~\ref{remark:anytime-TD}-\ref{remark:averaging-TD} (see Appendix~\ref{sec:proof-remark:anytime-TD}), and are hence omitted. 
\end{remark}
%


\paragraph{A matching lower bound and sub-optimality.} 

The careful reader might remark that there remains a gap between our sample complexity bound for Q-learning and the minimax lower bound \citep{azar2013minimax}. More specifically, the minimax lower bound scales on the order of $\frac{|\cS||\cA|}{(1-\gamma)^3\varepsilon^2}$ and is achievable---up to some logarithmic factor---by the model-based approach and variance-reduced methods \citep{azar2013minimax,agarwal2019optimality,li2020breaking,wainwright2019variance}. 
This raises natural questions regarding whether our sample complexity bound can be further improved, and whether there is any intrinsic bottleneck that prevents vanilla Q-learning from attaining optimal performance.  
To answer these questions, we develop the following lower bound for plain Q-learning, 
with the aim of confirming the sharpness of Theorem~\ref{thm:infinite-horizon-simple} and revealing the sub-optimality of Q-learning. 
%
\begin{theorem}
\label{thm:LB-example}
	Assume that $3/4\leq \gamma < 1$ and that $T\geq \frac{c_3}{(1-\gamma)^2}$ for some sufficiently large constant $c_3>0$. 
	Suppose that the initialization is $Q_0 \equiv 0$, and that the learning rates are taken to be either (i) $\eta_t = \frac{1}{1+\ceta (1-\gamma)t}$ for all $t\geq 0$, or (ii) $\eta_t \equiv \eta$ for all $t\geq 0$. 
	There exists a $\gamma$-discounted MDP with $|\cS|=4$ and $|\cA|=2$ such that Algorithm \ref{alg:q-infinite}---with any $\ceta >0$ and any $\eta \in (0,1)$---obeys
\begin{equation}
	\max_{s\in \cS} \mathbb{E} \Big[ \big|V_T(s) - V^{\star}(s)\big|^2 \Big] 
	\geq  \frac{c_{\mathsf{lb}}}{(1-\gamma)^4T\log^2 T} ,
	\label{eq:lower-bounding-VT-thm-sync}
\end{equation}
where $c_{\mathsf{lb}} > 0$ is some universal constant.
\end{theorem}
\begin{remark}
	This theorem constructs a hard MDP instance with no more than 4 states and 2 actions, with the emphasis of unveiling the sub-optimality of horizon dependency. It can be generalized to accommodate larger state/action space, as we shall elucidate in Section~\ref{sec:MDP-construction-hard}. 
\end{remark}
\begin{remark}
	Theorem~\ref{thm:LB-example} concentrates on two families of learning rates---rescaled linear, and constant learning rates---that are most widely used in practice. Note, however, that our current analysis does not readily generalize to arbitrary learning rates, which we leave for future investigation.  
\end{remark}
Theorem~\ref{thm:LB-example} provides an {\em algorithm-dependent} lower bound for vanilla Q-learning. 
As asserted by this theorem, it is impossible for Q-learning to attain $\varepsilon$-accuracy (in the sense that $\max_s \mathbb{E} \big[ \big|V_T(s) - V^{\star}(s)\big|^2 \big] \leq \varepsilon^2$) unless the number of iterations exceeds the order of 
\[
	\frac{1}{(1-\gamma)^{4}\varepsilon^{2}}
\]
up to some logarithmic factor. Consequently, the performance guarantees for Q-learning derived in Theorem~\ref{thm:infinite-horizon-simple} are sharp in terms of the dependency on the effective horizon $\frac{1}{1-\gamma}$.  
On the other hand, it has been shown in prior literature that the minimax sample complexity limit with a generative model is on the order of \citep{azar2013minimax,li2020sample} 
\begin{align}
	\frac{|\cS||\cA|}{(1-\gamma)^3\varepsilon^2} \qquad \text{(up to log factor)}; 
	\label{eq:minimax-lower-limit-simulator}
\end{align}
this in turn reveals the sub-optimality of plain Q-learning, 
whose horizon scaling is larger than the minimax limit by a factor of $\frac{1}{1-\gamma}$. 
Hence, more sophisticated algorithmic tricks are necessary in order to further reduce the sample complexity. 
For instance, a variance-reduced variant of Q-learning---namely, leveraging the idea of variance reduction originating from stochastic optimization \citep{johnson2013accelerating} to accelerate convergence of Q-learning---has been shown to attain minimax optimality 
\eqref{eq:minimax-lower-limit-simulator} for any $\varepsilon \in (0,1]$; see \citet{wainwright2019variance} for more details.


%

\section{Key analysis ideas (the synchronous case)}
\label{sec:analysis}

This section outlines the key ideas for the establishment of our main results of Q-learning for the synchronous case, namely Theorem~\ref{thm:infinite-horizon-simple} and Theorem~\ref{thm:LB-example}. The proof for TD learning is deferred to Appendix~\ref{sec:TD-learning-analysis}. Before delving into the proof details, we first introduce convenient vector and matrix notation that shall be used frequently. 

\subsection{Vector and matrix notation}
\label{subsec:matrix-notation}


To begin with, for any matrix $\bm{M}$, 
the notation $\|\bm{M}\|_{1}:=\max_{i}\sum_{j}|M_{i,j}|$
is defined as the largest row-wise $\ell_{1}$ norm of $\bm{M}$.  
For any vector $\bm{a}=[a_{i}]_{i=1}^{n}\in\mathbb{R}^{n}$, we define
$\sqrt{\cdot}$ and $|\cdot|$ in a coordinate-wise manner, i.e.~$\sqrt{\bm{a}}:=[\sqrt{a_{i}}\,]_{i=1}^{n}\in\mathbb{R}^{n}$
and $|\bm{a}|:=[|a_{i}|]_{i=1}^{n}\in\mathbb{R}^{n}$. 
For a set of vectors $\bm{a}_1,\cdots,\bm{a}_m \in \mathbb{R}^n$ with $\bm{a}_k=[a_{k,j}]_{j=1}^n$ ($1\leq k\leq m$), we define the $\max$ operator in an entrywise fashion such that 
$\max_{1\leq k\leq m} \bm{a}_k \coloneqq [ \max_{k} a_{k,j} ]_{j=1}^n$. 
For any vectors $\bm{a}=[a_{i}]_{i=1}^{n}\in\mathbb{R}^{n}$ and $\bm{b}=[b_{i}]_{i=1}^{n}\in\mathbb{R}^{n}$,
the notation $\bm{a}\leq\bm{b}$ (resp.~$\bm{a}\geq\bm{b}$) means
$a_{i}\leq b_{i}$ (resp.~$a_{i}\geq b_{i}$) for all $1\leq i\leq n$.
We also let $\bm{a}\circ\bm{b}=[a_{i}b_{i}]_{i=1}^{n}$ denote the
Hadamard product. 
In addition, we denote by $\bm{1}$ (resp.~$\bm{e}_{i}$)
the all-one vector (resp.~the $i$-th standard basis vector), and let $\bm{I}$ be the identity matrix.

We shall also introduce the matrix $\bm{P}\in\mathbb{R}^{|\mathcal{S}||\mathcal{A}|\times|\mathcal{S}|}$
to represent the probability transition kernel $P$, whose $(s,a)$-th
row $\bm{P}_{s,a}$ is a probability vector representing $P(\cdot \mymid s,a)$.
Additionally, we define the {\em square} probability transition matrix
$\bm{P}^{\pi}\in\mathbb{R}^{|\mathcal{S}||\mathcal{A}|\times|\mathcal{S}||\mathcal{A}|}$
(resp.~$\bm{P}_{\pi}\in\mathbb{R}^{|\mathcal{S}|\times|\mathcal{S}|}$)
induced by a {\em deterministic} policy $\pi$ over the state-action pairs (resp.~states) as follows:
\begin{equation}
	\bm{P}^{\pi}\coloneqq\bm{P}\bm{\Pi}^{\pi}\qquad\text{and}\qquad\bm{P}_{\pi}\coloneqq\bm{\Pi}^{\pi}\bm{P} ,
	\label{eqn:ppivq}
\end{equation}
where $\bm{\Pi}^{\pi}\in\{0,1\}^{|\mathcal{S}|\times|\mathcal{S}||\mathcal{A}|}$
is a projection matrix associated with the deterministic policy $\pi$:
\begin{align}
\bm{\Pi}^{\pi} & =\begin{pmatrix}\bm{e}_{\pi(1)}^{\top}\\
 & \bm{e}_{\pi(2)}^{\top}\\
 &  & \ddots\\
 &  &  & \bm{e}_{\pi(|\mathcal{S}|)}^{\top}
\end{pmatrix}\label{eqn:bigpi}
\end{align}
with $\bm{e}_i$ the $i$-th standard basis vector. 
Moreover, for any vector $\bm{V}\in\mathbb{R}^{|\mathcal{S}|}$,
we define $\mathsf{Var}_{\bm{P}}(\bm{V})\in\mathbb{R}^{|\mathcal{S}||\mathcal{A}|}$ as follows:
\begin{align}
	\label{eq:defn-Var-P-V}
	\mathsf{Var}_{\bm{P}}(\bm{V})=\bm{P}(\bm{V}\circ\bm{V})-(\bm{P}\bm{V})\circ(\bm{P}\bm{V}).
\end{align}
In other words, the $(s,a)$-th entry of $\mathsf{Var}_{\bm{P}}(\bm{V})$ corresponds to the variance $\mathsf{Var}_{s'\sim P(\cdot|s,a)}(V(s'))$ w.r.t.~the distribution  $P(\cdot\mymid s,a)$.

Moreover, we use the vector $\bm{r}\in\mathbb{R}^{|\mathcal{S}||\mathcal{A}|}$
to represent the reward function $r$, so that for any $(s,a)\in\mathcal{S}\times\mathcal{A}$, the $(s,a)$-th entry of $\bm{r}$ is given by $r(s,a)$. 
Analogously, we shall employ the vectors $\bm{V}^{\pi}\in\mathbb{R}^{|\mathcal{S}|}$,
$\bm{V}^{\star}\in\mathbb{R}^{|\mathcal{S}|}$, $\bm{V}_{t}\in\mathbb{R}^{|\mathcal{S}|}$, $\bm{Q}^{\pi}\in\mathbb{R}^{|\mathcal{S}||\mathcal{A}|}$, $\bm{Q}^{\star}\in\mathbb{R}^{|\mathcal{S}||\mathcal{A}|}$ and $\bm{Q}_{t}\in\mathbb{R}^{|\mathcal{S}||\cA|}$  to represent $V^{\pi}$, $V^{\star}$, $V_t$, $Q^{\pi}$, $Q^{\star}$ and $Q_t$, respectively. 
Additionally, we define $\pi_{t}$ to be the policy associated with $Q_t$ such that for any
state-action pair $(s,a)$, 
\begin{align}
	\pi_{t}(s) = \min\Big\{ a' \,\big|\, Q_{t}(s,a')=\max_{a''}Q_{t}(s,a'')\Big\}.
	\label{eq:defn-pit-s}
\end{align}
%
%
In other words, for any $s\in\mathcal{S}$, the policy $\pi_{t}$
picks out the smallest indexed action that attains the largest Q-value
in the estimate $Q_{t}(s,\cdot)$. As an immediate consequence, one can easily verify 
\begin{equation}
\label{eq:Vt-Qt-pit-connection-infinite}
	Q_{t}\big(s,\pi_{t}(s)\big)=V_{t}(s)  \qquad \text{and} \qquad \bm{P}\bm{V}_{t}=\bm{P}^{\pi_{t}}\bm{Q}_{t}\geq\bm{P}^{\pi}\bm{Q}_{t}
\end{equation}
for any $\pi,$ where $\bm{P}^{\pi}$ is defined in (\ref{eqn:ppivq}). 
Further, we introduce a matrix $\bm{P}_{t}\in\{0,1\}^{|\mathcal{S}||\mathcal{A}|\times|\mathcal{S}|}$ such that
\begin{equation}
\bm{P}_{t}\big((s,a),s'\big):=\begin{cases}
1, & \text{if } s' = s_t(s,a)\\
0, & \text{otherwise}
\end{cases}\label{eq:defn-Pt}
\end{equation}
for any $(s,a)$, which is an empirical transition matrix constructed using samples collected in the $t$-th iteration.

Finally, let $\mathcal{X}\defn \big( |\cS|, |\cA|, \frac{1}{1-\gamma}, \frac{1}{\varepsilon} \big)$. 
The notation  $f(\mathcal{X})= O(g(\mathcal{X}))$ or $f(\mathcal{X})\lesssim g(\mathcal{X})$ (resp.~$f(\mathcal{X})\gtrsim g(\mathcal{X})$) means that there exists a universal constant $C_{0}>0$ such that $|f(\mathcal{X})|\leq C_{0}|g(\mathcal{X})|$ (resp.~$|f(\mathcal{X})|\geq C_{0}|g(\mathcal{X})|$). 
The notation $f(\mathcal{X})\asymp g(\mathcal{X})$ means  $f(\mathcal{X})\lesssim g(\mathcal{X})$ and $f(\mathcal{X})\gtrsim g(\mathcal{X})$ hold simultaneously. 
We define $\widetilde{O}(\cdot)$ in the same way as $O(\cdot)$ except that it hides logarithmic factors.

\subsection{Proof outline for Theorem~\ref{thm:infinite-horizon-simple}}
\label{sec:proof-outline-thm:infinite-horizon}

We are now positioned to describe how to establish Theorem \ref{thm:infinite-horizon-simple}, towards which we first express the Q-learning update rule \eqref{eqn:q-learning} and \eqref{defn:empirical-Bellman-t-inf} using the above matrix notation. 
As can be easily verified, Q-learning employs the samples in $\bm{P}_{t}$ (cf.~\eqref{eq:defn-Pt}) to perform the following update
%
\begin{align}
\bm{Q}_{t} & =(1-\eta_{t})\bm{Q}_{t-1}+\eta_{t}(\bm{r}+\gamma\bm{P}_{t}\bm{V}_{t-1}) \label{eq:iteration-rule-infinite}
\end{align}
%
in the $t$-th iteration. In the sequel, we denote by 
\begin{equation}
	\bm{\Delta}_{t}\coloneqq\bm{Q}_{t}-\bm{Q}^{\star}
	\label{eq:defn-Delta-t-infinite}
\end{equation}
the error of the Q-function estimate in the $t$-th iteration.

\subsubsection{Basic decomposition}

We start by decomposing the estimation error term $\bm{\Delta}_{t}$. 
In view of the update rule (\ref{eq:iteration-rule-infinite}), we
arrive at the following elementary decomposition:
\begin{align}
\bm{\Delta}_{t} & =\bm{Q}_{t}-\bm{Q}^{\star}=(1-\eta_{t})\bm{Q}_{t-1}+\eta_{t}\big(\bm{r}+\gamma\bm{P}_{t}\bm{V}_{t-1}\big)-\bm{Q}^{\star}\nonumber \\
 & =(1-\eta_{t})\big(\bm{Q}_{t-1}-\bm{Q}^{\star}\big)+\eta_{t}\big(\bm{r}+\gamma\bm{P}_{t}\bm{V}_{t-1}-\bm{Q}^{\star}\big)\nonumber \\
 & =(1-\eta_{t})\bm{\Delta}_{t-1}+\eta_{t}\gamma\big(\bm{P}_{t}\bm{V}_{t-1}-\bm{P}\bm{V}^{\star}\big)\nonumber \\
 & =(1-\eta_{t})\bm{\Delta}_{t-1}   +\eta_{t}\gamma\big\{\bm{P}(\bm{V}_{t-1}-\bm{V}^{\star})+(\bm{P}_{t}-\bm{P})\bm{V}_{t-1}\big\}, 
 \label{eq:iteration-infinite}
\end{align}
where the third line exploits the Bellman equation $\bm{Q}^{\star}= \bm{r}+ \gamma \bm{P}\bm{V}^{\star}$. 
Further, the term $\bm{P}(\bm{V}_{t-1}-\bm{V}^{\star})$ can be linked
with $\bm{\Delta}_{t-1}$ using the definition \eqref{eq:defn-pit-s} of $\pi_t$ as follows 
\begin{subequations}
\label{eq:V2Q-infinite}
\begin{align}
 \bm{P} (\bm{V}_{t-1}-\bm{V}^{\star}) & =\bm{P}^{\pi_{t-1}}\bm{Q}_{t-1}-\bm{P}^{\pi^{\star}}\bm{Q}^{\star}  \leq\bm{P}^{\pi_{t-1}}\bm{Q}_{t-1}-\bm{P}^{\pi_{t-1}}\bm{Q}^{\star}=\bm{P}^{\pi_{t-1}}\bm{\Delta}_{t-1},\\
 \bm{P}  (\bm{V}_{t-1} -\bm{V}^{\star}) &=\bm{P}^{\pi_{t-1}}\bm{Q}_{t-1}-\bm{P}^{\pi^{\star}}\bm{Q}^{\star} \geq\bm{P}^{\pi^{\star}}\bm{Q}_{t-1}-\bm{P}^{\pi^{\star}}\bm{Q}^{\star}=\bm{P}^{\pi^{\star}}\bm{\Delta}_{t-1},
\end{align}
\end{subequations}
where we have made use of the relation \eqref{eq:Vt-Qt-pit-connection-infinite}.
Substitute \eqref{eq:V2Q-infinite} into \eqref{eq:iteration-infinite} to reach
%
%
\begin{align}
\label{eq:Delta-onestep-UB-LB}
\begin{array}{l}
\bm{\Delta}_{t}  \le(1-\eta_{t})\bm{\Delta}_{t-1}+\eta_{t}\gamma\big\{\bm{P}^{\pi_{t-1}}\bm{\Delta}_{t-1}+(\bm{P}_{t}-\bm{P})\bm{V}_{t-1}\big\};\\
\bm{\Delta}_{t}  \geq(1-\eta_{t})\bm{\Delta}_{t-1}+\eta_{t}\gamma\big\{\bm{P}^{\pi^{\star}}\bm{\Delta}_{t-1}+(\bm{P}_{t}-\bm{P})\bm{V}_{t-1}\big\}.
\end{array}
\end{align}
%
%
Applying these relations recursively, we obtain 
%
%
\begin{align}
\label{eq:Delta-allstep-UB-LB}
\begin{array}{l}
\bm{\Delta}_{t}  \le\eta_{0}^{(t)}\bm{\Delta}_{0}+\sum\limits_{i=1}^{t}\eta_{i}^{(t)}\gamma\big\{\bm{P}^{\pi_{i-1}}\bm{\Delta}_{i-1}+(\bm{P}_{i}-\bm{P})\bm{V}_{i-1}\big\}, \\
\bm{\Delta}_{t}  \ge\eta_{0}^{(t)}\bm{\Delta}_{0}+\sum\limits_{i=1}^{t}\eta_{i}^{(t)}\gamma\big\{\bm{P}^{\pi^{\star}}\bm{\Delta}_{i-1}+(\bm{P}_{i}-\bm{P})\bm{V}_{i-1}\big\}, 
\end{array}
\end{align}
%
%
where we define
\begin{equation}
\eta_{i}^{(t)}\coloneqq\begin{cases}
\prod_{j=1}^{t}(1-\eta_{j}), & \text{if }i=0,\\
\eta_{i}\prod_{j=i+1}^{t}(1-\eta_{j}), & \text{if }0<i<t,\\
\eta_{t}, & \text{if }i=t.
\end{cases}\label{def:eta-i-t-simple}
\end{equation}

\paragraph{Comparisons to prior approaches.} 

We take a moment to discuss how prior analyses handle the above elementary decomposition. 
Several prior works (e.g., \citet{wainwright2019stochastic,li2020sample}) tackled the second term on the right-hand side of the relation \eqref{eq:Delta-onestep-UB-LB} via the following crude bounds:
\begin{align*} 
	\bm{P}^{\pi_{i-1}}\bm{\Delta}_{i-1}  &\leq \big\|\bm{P}^{\pi_{i-1}} \big\|_1  \| \bm{\Delta}_{i-1} \|_{\infty} \bm{1} = \| \bm{\Delta}_{i-1} \|_{\infty} \bm{1} , \\
	 \bm{P}^{\pi^{\star}}\bm{\Delta}_{i-1}  &\geq - \big\|\bm{P}^{\pi^{\star}} \big\|_1  \| \bm{\Delta}_{i-1} \|_{\infty} \bm{1} = - \| \bm{\Delta}_{i-1} \|_{\infty} \bm{1} ,
\end{align*}
which, however, are too loose when characterizing the dependency on $\frac{1}{1-\gamma}$. 
By contrast, expanding terms recursively without the above type of crude bounding and carefully analyzing the aggregate terms (e.g., $\sum_{i=1}^t \eta_{i}^{(t)}\bm{P}^{\pi_{i-1}}\bm{\Delta}_{i-1}$) play a major role in sharpening the dependence of sample complexity on the effective horizon.

\subsubsection{Key intertwined relations underlying $\{\|\bm{\Delta}_{t}\|_{\infty}\}$}

By exploiting the crucial relations \eqref{eq:Delta-allstep-UB-LB} derived above, we proceed to upper and lower bound $\bm{\Delta}_{t}$ separately. 
To be more specific, defining 
\begin{align}
	\beta\coloneqq\frac{c_{4}(1-\gamma)}{\log T}
\end{align}
for some constant $c_4>0$, one can further decompose the upper bound in \eqref{eq:Delta-allstep-UB-LB} into several terms:
\begin{align}
	\bm{\Delta}_{t}  & \leq\underbrace{\eta_{0}^{(t)}\bm{\Delta}_{0}+\sum_{i=1}^{(1-\beta)t}   \eta_{i}^{(t)}\gamma\big(\bm{P}^{\pi_{i-1}}\bm{\Delta}_{i-1}+(\bm{P}_{i}-\bm{P})\bm{V}_{i-1}\big)}_{=:\,\bm{\zeta}_{t}} \\
	& \qquad \qquad +    \underbrace{\sum_{i=(1-\beta)t+1}^{t}  \eta_{i}^{(t)}\gamma(\bm{P}_{i}-\bm{P})\bm{V}_{i-1}}_{=:\,\bm{\xi}_{t}}  	+\sum_{i=(1-\beta)t+1}^{t}  \eta_{i}^{(t)}  \gamma\bm{P}^{\pi_{i-1}}\bm{\Delta}_{i-1}.  
	\label{eq:Delta-UB2}
\end{align}
Let us briefly remark on the effect of the first two terms:
\begin{itemize}
	\item Each component in the first term $\bm{\zeta}_t$ is fairly small, given that $\eta_{i}^{(t)}$ is sufficiently small for any $i\leq (1-\beta)t$ (meaning that each component has undergone contraction---the ones taking the form of $1-\eta_j$---for sufficiently many times). As a result, the influence of $\bm{\zeta}_t$ becomes somewhat negligible. 

	\item The second term $\bm{\xi}_t$, which can be controlled via Freedman's inequality \citep{freedman1975tail} due to its martingale structure, contributes to the main variance term in the above recursion.  Note, however, that the resulting variance term also depends on $\{\bm{\Delta}_i\}$.
	
\end{itemize}
In summary, the right-hand side of the above inequality can be further decomposed into some weighted superposition of $\{\bm{\Delta}_i\}$ in addition to  some negligible effect. This is formalized in the following two lemmas, which make apparent the key intertwined relations underlying $\{\bm{\Delta}_i\}$.
%
%
\begin{lemma}
\label{lem:inf-upper-bound}
	Suppose that $c_1c_2\leq c_4/8$. 
	With probability at least $1-\delta$,
\begin{align*}
\bm{\Delta}_{t} & \leq 30 \sqrt{\frac{\big(\log^{4}T\big)\big(\log\frac{|\mathcal{S}||\mathcal{A}|T}{\delta}\big)}{\gamma^2(1-\gamma)^{4}T}\Big(1+\max_{\frac{t}{2}\le i<t}\|\bm{\Delta}_{i}\|_{\infty}\Big)}\ \bm{1} 
\end{align*}
holds simultaneously for all $t\ge\frac{T}{c_{2}\log T}$.
\end{lemma}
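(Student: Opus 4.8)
\textbf{Proof plan for Lemma~\ref{lem:inf-upper-bound}.}
The plan is to work from the upper-bound recursion \eqref{eq:Delta-UB2} and show that the three terms $\bm{\zeta}_t$, $\bm{\xi}_t$, and the residual sum $\sum_{i=(1-\beta)t+1}^{t}\eta_i^{(t)}\gamma\bm{P}^{\pi_{i-1}}\bm{\Delta}_{i-1}$ can all be absorbed into a bound of the claimed form. First I would record the elementary facts about the weights $\eta_i^{(t)}$ under the learning-rate schedule \eqref{eq:thm:infinite-horizaon-eta}: namely that $\sum_{i=1}^{t}\eta_i^{(t)}\le 1$, that each individual $\eta_i^{(t)}\lesssim \frac{\log^3 T}{(1-\gamma)t}$ once $i\ge \frac{T}{c_2\log T}$, and crucially that $\eta_0^{(t)}$ together with $\sum_{i=1}^{(1-\beta)t}\eta_i^{(t)}$ is exponentially small --- of order $e^{-\Theta(\beta c_2 (1-\gamma) t/\log^3 T)} = e^{-\Theta(c_4 c_2 t/\log^4 T)}$, which under the sample-size condition \eqref{eq:thm:infinite-horizon-T} is far smaller than the target. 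Since $\|\bm{\Delta}_i\|_\infty\le \frac{2}{1-\gamma}$ always (from the boundedness of $Q_i$ and $Q^\star$), this kills $\bm{\zeta}_t$ outright and shows the residual-sum term is bounded by $\gamma\big(\sum_{i=(1-\beta)t+1}^t \eta_i^{(t)}\big)\max_{(1-\beta)t< i<t}\|\bm{\Delta}_i\|_\infty\,\bm{1}$; a short computation with the weight sums shows $\gamma\sum_{i=(1-\beta)t+1}^t\eta_i^{(t)}\le \beta = \frac{c_4(1-\gamma)}{\log T}$, so this term is at most $\frac{c_4(1-\gamma)}{\log T}\max_{i}\|\bm{\Delta}_i\|_\infty\,\bm{1}$, which for small $c_4$ is itself dominated by $\max_i\|\bm{\Delta}_i\|_\infty$ times a small factor and in particular by the square-root expression whenever $\|\bm{\Delta}_i\|_\infty$ is not already tiny.

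The substantive step is controlling $\bm{\xi}_t = \sum_{i=(1-\beta)t+1}^{t}\eta_i^{(t)}\gamma(\bm{P}_i-\bm{P})\bm{V}_{i-1}$. For each fixed $(s,a)$, the summands form a martingale difference sequence in $i$ with respect to the natural filtration, since $\bm{P}_i$ is independent of everything up to iteration $i-1$ and $\mathbb{E}[\bm{P}_i]=\bm{P}$. I would apply Freedman's inequality \citep{freedman1975tail}: the predictable quadratic variation of the $(s,a)$ coordinate is $\sum_{i}(\eta_i^{(t)})^2\gamma^2\,\mathsf{Var}_{P(\cdot|s,a)}(V_{i-1})$, and each term is almost surely bounded by $\eta_i^{(t)}\gamma\cdot\frac{1}{1-\gamma}\lesssim \frac{\log^3 T}{(1-\gamma)^2 t}$. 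The variance $\mathsf{Var}_{P(\cdot|s,a)}(V_{i-1})$ is at most $\frac{2}{1-\gamma}\big(\mathsf{Var}_{P(\cdot|s,a)}(V^\star) + \|\bm{V}_{i-1}-\bm{V}^\star\|_\infty\big) \lesssim \frac{1}{(1-\gamma)^2}\big(1+\|\bm{\Delta}_{i-1}\|_\infty\big)$ (using that $\mathsf{Var}_{P}(V^\star)\le \frac{1}{(1-\gamma)^2}$ coordinatewise and $\|\bm{V}_{i-1}-\bm{V}^\star\|_\infty\le\|\bm{\Delta}_{i-1}\|_\infty$). Plugging in, the quadratic variation is bounded by $\frac{\gamma^2}{(1-\gamma)^2}\big(1+\max_{(1-\beta)t<i\le t}\|\bm{\Delta}_{i-1}\|_\infty\big)\sum_i (\eta_i^{(t)})^2 \lesssim \frac{\gamma^2\log^3 T}{(1-\gamma)^3 t}\big(1+\max_i\|\bm{\Delta}_i\|_\infty\big)$, where I used $\sum_i(\eta_i^{(t)})^2 \le \max_i\eta_i^{(t)}\cdot\sum_i\eta_i^{(t)}\lesssim \frac{\log^3 T}{(1-\gamma)t}$. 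Freedman's inequality with a union bound over the $|\cS||\cA|$ coordinates and over the $O(T)$ values of $t$ then gives, with probability $1-\delta$, $|\bm{\xi}_t(s,a)| \lesssim \sqrt{\frac{\log^3 T\log\frac{|\cS||\cA|T}{\delta}}{(1-\gamma)^3 t}\big(1+\max_i\|\bm{\Delta}_i\|_\infty\big)} + \frac{\log^3 T\log\frac{|\cS||\cA|T}{\delta}}{(1-\gamma)^2 t}$, valid for all $t\ge \frac{T}{c_2\log T}$. Since the noise bound for such $t$ has $t\asymp T/\log T$, the extra Bernstein term is $\widetilde O\big(\frac{1}{(1-\gamma)^2 T}\big)$, which is of strictly lower order than the square-root term under \eqref{eq:thm:infinite-horizon-T}, and the replacement $\max_{(1-\beta)t<i\le t}$ by $\max_{t/2\le i<t}$ is harmless since $\beta t\le t/2$.

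The main obstacle, as always with these self-referential recursions, is the circularity: the bound on $\bm{\Delta}_t$ involves $\max_{t/2\le i<t}\|\bm{\Delta}_i\|_\infty$ on the right-hand side, so Lemma~\ref{lem:inf-upper-bound} is not yet a bound on $\|\bm{\Delta}_t\|_\infty$ but only an intertwined inequality --- decoupling it (presumably against a matching lower-bound lemma and an induction on dyadic scales of $t$) is what the subsequent lemmas must do, and here I only need to assemble the three pieces above cleanly and track constants so that the final coefficient is $30$ (any absolute constant would do; the plan is to be generous in the Freedman step and the weight-sum estimates so that $30$ comfortably dominates). One bookkeeping subtlety to handle carefully: the condition $c_1 c_2\le c_4/8$ is exactly what guarantees the contraction count on the discarded prefix $i\le(1-\beta)t$ is large enough --- since $\eta_i^{(t)}$ for $i\le(1-\beta)t$ carries a factor $\prod_{j=i+1}^t(1-\eta_j)$ and $\eta_j\ge \frac{1}{1+c_1(1-\gamma)T/\log^3 T}\asymp \frac{\log^3 T}{c_1(1-\gamma)T}$, the product over the last $\beta t$ indices is $\le \exp(-\Theta(\beta t\log^3 T/(c_1(1-\gamma)T))) = \exp(-\Theta(c_4 t/(c_1\log^4 T)))$, and I would verify $c_1c_2\le c_4/8$ makes this smaller than, say, $T^{-10}$ for $t\ge T/(c_2\log T)$ --- so I would state the weight estimates as a short preliminary claim and then the body of the proof is just substitution.
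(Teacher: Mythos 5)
Your handling of $\bm{\zeta}_t$ and the Freedman bound on $\bm{\xi}_t$ essentially reproduces Steps 1--2 of the paper's proof and is sound. The proof breaks down at the residual term $\sum_{i=(1-\beta)t+1}^{t}\eta_i^{(t)}\gamma\bm{P}^{\pi_{i-1}}\bm{\Delta}_{i-1}$, in two ways. First, your claim that $\gamma\sum_{i=(1-\beta)t+1}^{t}\eta_i^{(t)}\le\beta$ is false: by \eqref{eq:sum-eta-i-t-infinite-tau} this sum equals $1-\prod_{j=(1-\beta)t+1}^{t}(1-\eta_j)$, and since the discarded prefix weights satisfy $\eta_0^{(t)}+\sum_{i\le(1-\beta)t}\eta_i^{(t)}\le t\cdot\frac{1}{2T^2}$ by \eqref{eq:eta-i-t-UB1} while the full sum $\sum_{i=0}^{t}\eta_i^{(t)}$ equals $1$, essentially all of the unit weight mass sits in the last $\beta t$ indices; the sum is $1-O(1/T)$, not $O(\beta)$. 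Consequently the crude bound $\bm{P}^{\pi_{i-1}}\bm{\Delta}_{i-1}\le\|\bm{\Delta}_{i-1}\|_{\infty}\bm{1}$ gives a residual of size roughly $\gamma\max_{t/2\le i<t}\|\bm{\Delta}_{i}\|_{\infty}$ --- a contraction with factor $\gamma$, not a negligible perturbation. Your assembled inequality would then read $\bm{\Delta}_t\le\gamma M\bm{1}+\widetilde{O}\big(\sqrt{(1+M)/((1-\gamma)^3T)}\big)\bm{1}$ with $M=\max_{t/2\le i<t}\|\bm{\Delta}_i\|_\infty$, which does not imply the lemma; unrolling that $\gamma$-contraction costs an extra $\frac{1}{1-\gamma}$ and lands you back at the $(1-\gamma)^{-5}$ rate of prior work. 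The paper explicitly identifies this crude bounding as the step it must avoid.

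What is missing are the two ingredients the paper uses in its place. Rather than bounding the residual in one shot, the paper substitutes the one-step inequality \eqref{eq:Delta-UB2} into itself $H=\frac{\log T}{1-\gamma}$ times, producing (up to a max over index tuples) the bound \eqref{eq:Delta-UB-temp-inf}: an accumulated operator $\bm{I}+\sum_{h=1}^{H-1}\gamma^{h}\prod_{k=1}^{h}\bm{P}^{\pi_{i_k}}$ applied to $\sqrt{\bm{\varphi}_t}$, plus a tail $\gamma^{H}\prod_{k}\bm{P}^{\pi_{i_k}}|\bm{\Delta}_{i_H}|$ that is $O(1/((1-\gamma)T))$ since $\gamma^{H}\le 1/T$. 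Crucially, $\bm{\varphi}_t$ retains $\max_i\mathsf{Var}_{\bm{P}}(\bm{V}_i)$ as a \emph{vector} rather than collapsing it coordinate-wise to $\frac{1}{(1-\gamma)^2}(1+M)$ as you do inside Freedman; after Cauchy--Schwarz, Lemma~\ref{lemma:inf-alpha-claim} (a telescoping total-variance argument via the Bellman equation, in the spirit of Azar et al.) shows $\sum_{h}\gamma^{h}\prod_{k}\bm{P}^{\pi_{i_k}}\max_i\mathsf{Var}_{\bm{P}}(\bm{V}_i)\lesssim\frac{1}{(1-\gamma)^2}(1+M)$ instead of the naive $\frac{1}{(1-\gamma)^3}(1+M)$. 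That is precisely where the factor of $\frac{1}{1-\gamma}$ is saved and where the stated $(1-\gamma)^{-4}$ comes from. Neither the $H$-fold recursive expansion nor the variance-telescoping lemma appears in your plan, and without them the claimed bound is out of reach.
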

%
%
\begin{lemma}
\label{lem:inf-lower-bound}
Suppose that $c_1c_2\leq c_4/8$. 
With probability at least $1-\delta$, 
\begin{equation*}
\bm{\Delta}_{t}\geq - 30 \sqrt{\frac{\big(\log^{4}T\big)\big(\log\frac{|\mathcal{S}||\mathcal{A}|T}{\delta}\big)}{\gamma^2(1-\gamma)^{4}T}\Big(1+\max_{\frac{t}{2}\le i<t}\|\bm{\Delta}_{i}\|_{\infty}\Big)}\ \bm{1} 
\end{equation*}
holds simultaneously for all $t\ge\frac{T}{c_{2}\log T}$. 
\end{lemma}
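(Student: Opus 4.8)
The plan is to establish Lemma~\ref{lem:inf-lower-bound} as the mirror image of Lemma~\ref{lem:inf-upper-bound}, exploiting the fact that the lower branch of~\eqref{eq:Delta-onestep-UB-LB} is driven by the \emph{fixed} optimal policy $\pi^\star$. Concretely, I would first rewrite that lower bound as a linear matrix recursion
\[
\bm{\Delta}_t\;\ge\;\bm{M}_t\bm{\Delta}_{t-1}+\bm{\epsilon}_t,\qquad \bm{M}_t\coloneqq(1-\eta_t)\bm{I}+\eta_t\gamma\bm{P}^{\pi^\star},\qquad \bm{\epsilon}_t\coloneqq\eta_t\gamma(\bm{P}_t-\bm{P})\bm{V}_{t-1},
\]
where $\bm{M}_t$ is entrywise nonnegative and has constant row sum $1-\eta_t(1-\gamma)$. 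Since every $\bm{M}_t\ge 0$ entrywise, this inequality can be iterated all the way to the initialization, yielding
\[
\bm{\Delta}_t\;\ge\;\Big(\prod_{j=1}^{t}\bm{M}_j\Big)\bm{\Delta}_0\;+\;\sum_{i=1}^{t}\Big(\prod_{j=i+1}^{t}\bm{M}_j\Big)\bm{\epsilon}_i .
\]
The advantage over the upper bound --- where the varying $\bm{P}^{\pi_{i-1}}$ obstructs this maneuver --- is that $\prod_{j=i+1}^{t}\bm{M}_j$ is an honest product of nonnegative matrices, with constant row sum $\prod_{j=i+1}^{t}\bigl(1-\eta_j(1-\gamma)\bigr)\le 1$; in particular all of its entries lie in $[0,1]$.

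For the first, deterministic term I would use $-\tfrac{1}{1-\gamma}\bm{1}\le\bm{\Delta}_0\le\tfrac{1}{1-\gamma}\bm{1}$ (from $0\le\bm{Q}_0,\bm{Q}^\star\le\tfrac{1}{1-\gamma}\bm{1}$), so that $\bigl(\prod_{j=1}^{t}\bm{M}_j\bigr)\bm{\Delta}_0\ge-\tfrac{1}{1-\gamma}\prod_{j=1}^{t}(1-\eta_j(1-\gamma))\,\bm{1}$; since~\eqref{eq:thm:infinite-horizaon-eta} forces $\eta_j\gtrsim\tfrac{\log^3 T}{(1-\gamma)T}$, for $t\ge\tfrac{T}{c_2\log T}$ one has $(1-\gamma)\sum_{j=1}^{t}\eta_j\gtrsim\log^2 T$ once $c_1c_2\le c_4/8$, hence $\prod_{j=1}^{t}(1-\eta_j(1-\gamma))\le e^{-(1-\gamma)\sum_j\eta_j}$ is super-polynomially small in $T$ and this term is negligible against the claimed bound. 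The stochastic term is handled by Freedman's inequality~\citep{freedman1975tail} applied, for each fixed $(s,a)\in\cS\times\cA$, to the martingale $\bigl\{\sum_{(s',a')}[\prod_{j>i}\bm{M}_j]_{(s,a),(s',a')}\,\epsilon_i(s',a')\bigr\}_{i\ge 1}$, which is a valid martingale because in the synchronous model the samples $s_i(\cdot,\cdot)$ are drawn fresh and independently across $(s',a')$, $\bm{V}_{i-1}$ is measurable with respect to the past, and the $\bm{M}_j$ are deterministic. Its predictable quadratic variation is $\sum_i\gamma^2\eta_i^2\sum_{(s',a')}[\prod_{j>i}\bm{M}_j]_{(s,a),(s',a')}^2\,\mathsf{Var}_{\bm{P}}(\bm{V}_{i-1})(s',a')$, which I would control via: (i) $[\prod_{j>i}\bm{M}_j]_{(s,a),(s',a')}^2\le[\prod_{j>i}\bm{M}_j]_{(s,a),(s',a')}$ together with $\sum_{(s',a')}[\prod_{j>i}\bm{M}_j]_{(s,a),(s',a')}=\prod_{j>i}(1-\eta_j(1-\gamma))$; (ii) the variance transfer $\mathsf{Var}_{\bm{P}}(\bm{V}_{i-1})\le 2\,\mathsf{Var}_{\bm{P}}(\bm{V}^\star)+2\|\bm{\Delta}_{i-1}\|_\infty^2\bm{1}\le\tfrac{2}{(1-\gamma)^2}\bigl(1+\max_{t/2\le i'<t}\|\bm{\Delta}_{i'}\|_\infty\bigr)\bm{1}$ for the non-negligible indices $i>t/2$, obtained from $\|\bm{V}_{i-1}-\bm{V}^\star\|_\infty\le\|\bm{\Delta}_{i-1}\|_\infty$ (a $\max$-contraction, by~\eqref{defn:Vt}), Minkowski's inequality for the $L^2$ norm of centered variables, $\mathsf{Var}_{\bm{P}}(\bm{V}^\star)\le\tfrac{1}{(1-\gamma)^2}\bm{1}$, and the crude bound $\|\bm{\Delta}_{i'}\|_\infty\le\tfrac{1}{1-\gamma}$; and (iii) the telescoping identity $\sum_{i=1}^{t}\eta_i(1-\gamma)\prod_{j=i+1}^{t}(1-\eta_j(1-\gamma))=1-\prod_{j=1}^{t}(1-\eta_j(1-\gamma))\le 1$, hence $\sum_{i=1}^{t}\eta_i\prod_{j>i}(1-\eta_j(1-\gamma))\le\tfrac{1}{1-\gamma}$, combined with $\eta_i^2\le\eta_i\max_{i'>t/2}\eta_{i'}$ and $\max_{i'>t/2}\eta_{i'}\lesssim\tfrac{\log^3 T}{(1-\gamma)t}\lesssim\tfrac{\log^4 T}{(1-\gamma)T}$ for $t\ge\tfrac{T}{c_2\log T}$ (the contributions of $i\le t/2$ carry a super-polynomially small weight $\prod_{j>i}(1-\eta_j(1-\gamma))$, exactly as in the deterministic term, so $\|\bm{\Delta}_{i-1}\|_\infty\le\tfrac{1}{1-\gamma}$ may be used crudely there). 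Multiplying (i)--(iii) gives a quadratic variation of order $\tfrac{\gamma^2\log^4 T}{(1-\gamma)^4 T}\bigl(1+\max_{t/2\le i<t}\|\bm{\Delta}_i\|_\infty\bigr)$ --- the window $t/2\le i<t$ appears precisely because only $i-1\ge t/2$ contributes --- while each martingale increment is bounded in magnitude by $\max_{i'}\eta_{i'}\cdot\tfrac{\gamma}{1-\gamma}\lesssim\tfrac{\log^4 T}{(1-\gamma)^2 T}$ (using $\|\bm{V}_{i-1}\|_\infty\le\tfrac{1}{1-\gamma}$). Feeding these into a variance-adaptive form of Freedman's inequality (or a routine dyadic peeling over the random quantity $1+\max_{t/2\le i<t}\|\bm{\Delta}_i\|_\infty$), taking a union bound over the $\le|\cS||\cA|T$ pairs $(s,a)$ and indices $t\in[\tfrac{T}{c_2\log T},T]$ (so the failure parameter becomes $\delta/(|\cS||\cA|T)$, producing the factor $\log\tfrac{|\cS||\cA|T}{\delta}$), and noting that the linear-in-range Freedman term is dominated by the square-root term since $\tfrac{\log^2 T}{\sqrt{T}}\to 0$, one arrives at the stated lower bound, the constant $30$ absorbing all numerical factors and the $\gamma^2$ in the denominator being a loose but legitimate relaxation because $\gamma<1$.

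I expect the crux --- and the main obstacle --- to be pinning the sharp $(1-\gamma)^{-4}$ scaling (rather than $(1-\gamma)^{-5}$): this hinges on observing that aggregating the martingale along the $\pi^\star$-discounted chain contributes only \emph{one} extra factor of $\tfrac{1}{1-\gamma}$ --- through $\sum_i\eta_i\prod_{j>i}(1-\eta_j(1-\gamma))\le\tfrac{1}{1-\gamma}$ --- while the smallness $\max_{i>t/2}\eta_i\asymp\tfrac{\log T}{(1-\gamma)T}$ cancels exactly one power of $\tfrac{1}{1-\gamma}$ against the $\tfrac{1}{(1-\gamma)^2}$ coming from $\mathsf{Var}_{\bm{P}}(\bm{V}^\star)$. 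A secondary delicate point is routing the dependence on $\{\bm{\Delta}_i\}$ through the variance transfer in (ii) so that it surfaces only as the benign factor $1+\max_{t/2\le i<t}\|\bm{\Delta}_i\|_\infty$ --- rather than $\|\bm{\Delta}_i\|_\infty^2$ or an additive $\tfrac{1}{1-\gamma}$-sized term --- and checking that every ``negligible'' remainder is genuinely $\ll\sqrt{\tfrac{1}{(1-\gamma)^4 T}}$ under the standing learning-rate conditions; once these are in place, the lower bound on $\bm{\Delta}_t$ comes out of exactly the same estimates as those behind Lemma~\ref{lem:inf-upper-bound}, with the inequality directions reversed.
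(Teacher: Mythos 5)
Your proposal is correct, and it takes a genuinely different route from the paper. The paper proves the lower bound by literally mirroring the upper-bound argument: it keeps the windowed decomposition into $\bm{\zeta}_t+\bm{\xi}_t$, applies Freedman only to the short-window martingale $\bm{\xi}_t$ to form $\bm{\varphi}_t$, then performs the $H$-fold nested recursion over the index set $\mathcal{I}_t$ (with $\pi_{i_k}$ replaced by $\pi^\star$), and crucially invokes the telescoping variance bound of Lemma~\ref{lemma:inf-alpha-claim} --- which uses the Bellman identity $\bm{Q}^{\star}=\bm{r}+\gamma\bm{P}\bm{V}^{\star}$ to show $\sum_h\gamma^h\prod_k\bm{P}^{\pi^\star}\mathsf{Var}_{\bm{P}}(\bm{V}^{\star})\lesssim(1-\gamma)^{-2}$ rather than the naive $(1-\gamma)^{-3}$ --- to land on $(1-\gamma)^{-4}$. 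You instead exploit the one feature that distinguishes the lower branch of \eqref{eq:Delta-onestep-UB-LB}: the transition operator $\bm{P}^{\pi^\star}$ is fixed, so the recursion linearizes and can be unrolled to $t=0$ in one shot, exactly as the paper itself does for TD learning in Appendix~\ref{sec:TD-learning-analysis}. A single application of Freedman to the fully aggregated martingale, with the crude bounds $\mathsf{Var}_{\bm{P}}(\bm{V}_{i-1})\lesssim(1-\gamma)^{-2}(1+\max\|\bm{\Delta}_i\|_\infty)\bm{1}$ and $\sum_i\eta_i\prod_{j>i}(1-\eta_j(1-\gamma))\le(1-\gamma)^{-1}$, already yields the $(1-\gamma)^{-4}$ variance --- so you bypass Lemma~\ref{lemma:inf-alpha-claim} entirely. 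What each approach buys: the paper's route reuses the exact machinery of the upper bound (where the varying $\pi_{i-1}$ forces the windowed recursion) and its telescoping lemma is needed there anyway; your route is shorter and more transparent for this one-sided statement, but is intrinsically one-sided and, without the telescoping refinement, would not sharpen to the $(1-\gamma)^{-3}$ that this particular side actually admits (irrelevant here, since the upper bound is the bottleneck).

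Two small points to tighten. First, your uniform increment bound ``$|z_i|\le\max_{i'}\eta_{i'}\cdot\frac{\gamma}{1-\gamma}\lesssim\frac{\log^4T}{(1-\gamma)^2T}$'' is false if the maximum ranges over all $i'\le t$ (with rescaled linear rates $\eta_1\asymp1$); you need to pair the early increments $i\le t/2$ with their super-polynomially small weight $\prod_{j>i}(1-\eta_j(1-\gamma))\le T^{-2}$ --- which you already do for the variance --- and reserve $\eta_i\lesssim\frac{\log^4T}{(1-\gamma)T}$ for $i>t/2$. Second, since the quadratic-variation bound depends on the random quantity $\max_{t/2\le i<t}\|\bm{\Delta}_i\|_\infty$, the dyadic peeling you mention is not optional: you need the $K=\lceil 2\log_2\frac{1}{1-\gamma}\rceil$-level version \eqref{eq:Freedman-random} of Theorem~\ref{thm:Freedman} (as in Step~2 of the paper's upper-bound proof) so that the deterministic cap $\sigma^2\asymp\frac{\log^4T}{(1-\gamma)^5T}$ is deflated to $\frac{\sigma^2}{2^K}\lesssim\frac{\log^4T}{(1-\gamma)^3T}$. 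Both are routine repairs.
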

\begin{proof}
	The proofs of Lemma~\ref{lem:inf-upper-bound} and Lemma~\ref{lem:inf-lower-bound} are deferred to  Appendices~\ref{sec:proof_inf_upper_bound} and \ref{sec:proof_inf_lower_bound}, respectively. As a remark, our analysis collects all the error terms accrued through the iterations---instead of bounding them individually---by conducting a high-order nonlinear expansion of the estimation error through recursion, followed by careful control of the main variance term leveraging the structure of the discounted MDP.  
\end{proof}
 
Putting the preceding bounds in Lemmas~\ref{lem:inf-upper-bound} and \ref{lem:inf-lower-bound}
together, we arrive at
\begin{align}
\|\bm{\Delta}_{t}\|_{\infty} & \leq 30 \sqrt{\frac{\big(\log^{4}T\big)\big(\log\frac{|\mathcal{S}||\mathcal{A}|T}{\delta}\big)}{\gamma^2(1-\gamma)^{4}T}\bigg(1+\max_{\frac{t}{2}\le i<t}\|\bm{\Delta}_{i}\|_{\infty}\bigg)}
\label{eq:Delta-t-inf-recursion-inf-outline}
\end{align}
for all  $t\ge\frac{T}{c_{2}\log T}$ 
with probability exceeding $1-2\delta$, which forms the crux of our analysis. 
Employing elementary analysis tailored to the above recursive relation, one can demonstrate that
\begin{align}
	\|\bm{\Delta}_{T}\|_{\infty} \leq O\bigg( \sqrt{\frac{\big(\log^{4}T\big)\big(\log\frac{|\mathcal{S}||\mathcal{A}|T}{\delta}\big)}{(1-\gamma)^{4}T}} +\frac{\big(\log^{4}T\big)\big(\log\frac{|\mathcal{S}||\mathcal{A}|T}{\delta}\big)}{(1-\gamma)^{4}T} 
	\bigg) \label{eq:Delta-T-UB-final-inf}
\end{align}
with probability at least $1-2\delta$, which in turn allows us to establish the advertised result under the assumed sample size condition. The details are deferred to Appendix~\ref{sec:combine_ub_lb}.

 \subsection{Proof outline for Theorem~\ref{thm:LB-example}}
 \label{sec:MDP-construction-hard}
 
 \paragraph{Construction of a hard instance with 4 states and 2 actions.}
Let us construct an MDP $\mathcal{M}_{\mathsf{hard}}$ with state space $\cS=\{0,1,2,3\}$ (see a pictorial illustration in Figure~\ref{fig:hard-MDP}). We shall denote by $\cA_s$ the action space associated with state $s$. 
The probability transition kernel and reward function of $\mathcal{M}_{\mathsf{hard}}$ are specified as follows 
\begin{subequations}
\label{eq:construction-hard-MDP}
\begin{align}
	& \mathcal{A}_0=\{1\},  & P(0 \mymid 0, 1) &= 1, &\quad && r(0, 1) = 0, \\
	& \mathcal{A}_1=\{1,2\},  & P(1 \mymid 1, 1) &= p,   & P(0 \mymid 1, 1) = 1-p,   && r(1, 1) = 1,  \\
	& \quad   & P(1 \mymid 1, 2) &= p,  & P(0 \mymid 1, 2) = 1-p,  && r(1, 2) = 1, \\
	& \mathcal{A}_2=\{1\},  & P(2 \mymid 2, 1) &= p,  & P(0 \mymid 2,1) = 1-p,  && r(2,1) = 1, \\
	& \mathcal{A}_3=\{1\},  & P(3 \mymid 3, 1) &= 1,  &\quad && r(3,1) = 1, 
\end{align}
\end{subequations}
where the parameter $p$  is taken to be 
\begin{align}
	\label{defn-parameter-p}
	p = \frac{4\gamma-1}{3\gamma} .
\end{align}
%
\begin{figure}[t]
	\centering{\includegraphics[width=0.6\textwidth]{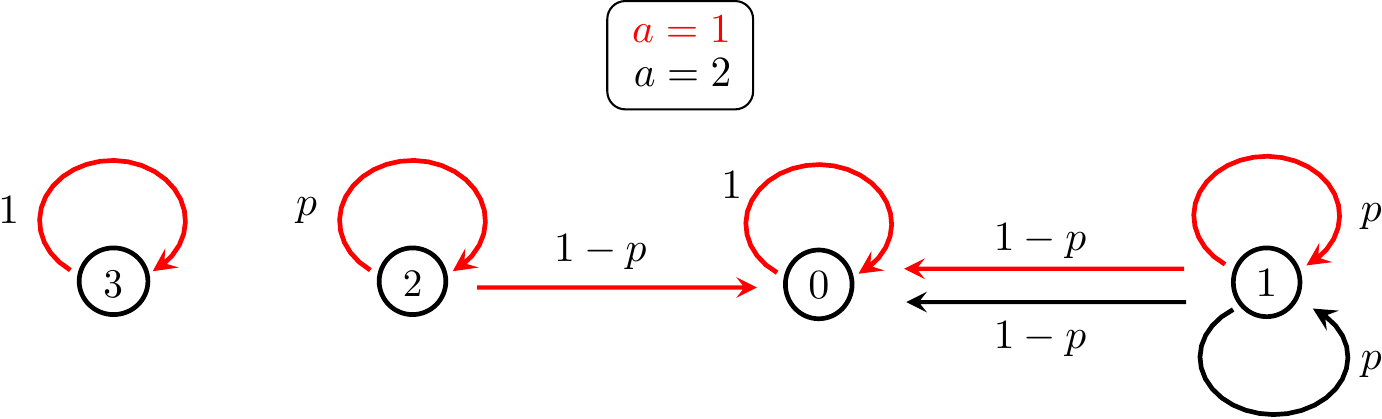}}
	\label{fig:hard-MDP}
	\caption{The constructed hard MDP instance used in the analysis of Theorem~\ref{thm:LB-example}, where $p= \frac{4\gamma-1}{3\gamma}$ and the specifications are described in \eqref{eq:construction-hard-MDP}.}
\end{figure}

Before moving forward to analyze the behavior of Q-learning, we first characterize the optimal value function and Q-function of this MDP; the proof is postponed to Section~\ref{sec:proof-lem:optimal-V-Q-hardMDP}. 
\begin{lemma}
\label{lem:optimal-V-Q-hardMDP}
Consider the MDP $\mathcal{M}_{\mathsf{hard}}$ constructed in \eqref{eq:construction-hard-MDP}. One has
\begin{subequations}
\label{eq:optimal-V-Q-hardMDP}
\begin{align}
&V^{\star}(0) = Q^{\star}(0,1) = 0; \\
&V^{\star}(1) = Q^{\star}(1, 1) = Q^{\star}(1, 2) = V^{\star}(2) = Q^{\star}(2,1) = \frac{1}{1-\gamma p}= \frac{3}{4(1-\gamma)}; \\
&V^{\star}(3) = Q^{\star}(3, 1) = \frac{1}{1-\gamma}.
\end{align}
\end{subequations}
\end{lemma}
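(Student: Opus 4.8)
\textbf{Proof plan for Lemma~\ref{lem:optimal-V-Q-hardMDP}.}

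The plan is to compute the optimal value/Q-function by exploiting the block structure of $\mathcal{M}_{\mathsf{hard}}$: state $0$ is absorbing with zero reward, so nothing downstream can help once we land there, while state $3$ is absorbing with reward $1$, and states $1$ and $2$ each transition either back to themselves (with probability $p$) or to the dead state $0$ (with probability $1-p$). First I would handle the trivial entries: since $\mathcal{A}_0=\{1\}$, $P(0\mymid 0,1)=1$, and $r(0,1)=0$, the only trajectory from state $0$ collects reward $0$ forever, so $V^\star(0)=Q^\star(0,1)=0$. Likewise $\mathcal{A}_3=\{1\}$, $P(3\mymid 3,1)=1$, $r(3,1)=1$ yields the geometric series $\sum_{k\ge 0}\gamma^k = \frac{1}{1-\gamma}$, giving $V^\star(3)=Q^\star(3,1)=\frac{1}{1-\gamma}$.

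Next I would treat states $1$ and $2$. In state $2$ there is only one action, and in state $1$ the two actions have identical transition kernels and identical rewards, so effectively the dynamics from $\{1,2\}$ are: stay with probability $p$ collecting reward $1$ each step, or drop to the absorbing zero-reward state $0$. Writing the Bellman optimality equation for, say, $x\defn Q^\star(2,1)$, we get $x = r(2,1) + \gamma\big(p\,V^\star(2) + (1-p)V^\star(0)\big) = 1 + \gamma p\,x$, since $V^\star(2)=x$ and $V^\star(0)=0$. Solving gives $x = \frac{1}{1-\gamma p}$, and the identical computation applies to $Q^\star(1,1)$, $Q^\star(1,2)$, hence to $V^\star(1)$. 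Finally I would substitute $p=\frac{4\gamma-1}{3\gamma}$ to simplify: $\gamma p = \frac{4\gamma-1}{3}$, so $1-\gamma p = \frac{3-(4\gamma-1)}{3} = \frac{4-4\gamma}{3} = \frac{4(1-\gamma)}{3}$, whence $\frac{1}{1-\gamma p} = \frac{3}{4(1-\gamma)}$, as claimed.

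To make this rigorous rather than merely heuristic, I would either (i) verify that the candidate vector $(V^\star(0),V^\star(1),V^\star(2),V^\star(3)) = \big(0,\tfrac{3}{4(1-\gamma)},\tfrac{3}{4(1-\gamma)},\tfrac{1}{1-\gamma}\big)$ together with the stated Q-values satisfies the Bellman optimality equation $\mathcal{T}(Q^\star)=Q^\star$ entrywise, and then invoke uniqueness of the fixed point; or (ii) just note that since each state has either one admissible action or two copies of the same action, there is essentially a unique policy, and compute its value directly via the closed-form value of an absorbing-chain reward process. Approach (i) is cleanest. The only point requiring a slight check is that $p\in[0,1]$ so that the construction is a valid MDP and $V^\star(2)=x$ is legitimate; since $3/4\le\gamma<1$ we have $p=\frac{4\gamma-1}{3\gamma}\in\big[\tfrac{8}{9},1\big)\subset[0,1]$, which I would record. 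There is no real obstacle here — the lemma is a direct computation — so the writeup is short; the substantive difficulty of Theorem~\ref{thm:LB-example} lies in the subsequent analysis of how Q-learning's max-operator induces over-estimation on this instance, not in this preliminary lemma.
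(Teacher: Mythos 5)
Your proposal is correct and follows essentially the same route as the paper: identify the trivial absorbing states $0$ and $3$, use the symmetry of the two actions at state $1$, and solve the scalar Bellman equation $x=1+\gamma p x$ for states $1$ and $2$ before substituting $p=\frac{4\gamma-1}{3\gamma}$. The extra remarks on uniqueness of the Bellman fixed point and on $p\in[0,1]$ are harmless additions to what is, as you note, a direct computation.
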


Recognizing the elementary decomposition
\begin{equation}
	\mathbb{E}\left[\big(V^{\star}(s)-V_{T}(s)\big)^{2}\right]=\big(\mathbb{E}\left[V^{\star}(s)-V_{T}(s)\right]\big)^{2}+\mathsf{Var}\big(V_{T}(s)\big)
	\label{eq:bias-variance-decomposition}
\end{equation}
for any state $s$, our proof consists of lower bounding either the squared bias term $\big( \mathbb{E}[V^{\star}(s)-V_{T}(s)] \big)^2$ or the variance term $\mathsf{Var}\big(V_{T}(s)\big)$. 
In short,  we shall primarily analyze the dynamics w.r.t.~state 2 to handle the case when the learning rates are either too small or too large, and analyze the dynamics w.r.t.~state 1 to cope with the case with medium learning rates (with state 3 serving as a helper state to simplify the analysis). The latter case---corresponding to the learning rates adopted in establishing the upper bounds---is the most challenging: critically, from state 1 the agent can take one of two identical actions, whose value tends to be estimated with a high positive bias due to maximizing over the empirical state-action values, highlighting the well-recognized ``over-estimation'' issue of Q-learning  in practice \citep{hasselt2010double}.  The complete proof is deferred to Appendix~\ref{sec:lower-bounds}.

\paragraph{Extension: lower bounds for larger $|\cS|$ and $|\cA|$.} 
For pedagogical reasons, the hard instance \eqref{eq:construction-hard-MDP} constructed above contains no more than 4 states and 2 actions (as the focus has been to unveil sub-optimal dependency on the effective horizon). 
As it turns out, one can straightforwardly extend it to cover larger state and action spaces, 
with a more general hard instance constructed as follows.  
\begin{itemize}
	\item We begin by generating the following sub-MDP, denoted by $\mathcal{M}_{\mathsf{sub}}$, which comprises 4 states $\{1,2,3,4\}$ and no more than $|\cA|\geq 2$ actions:  
\begin{subequations}
\label{eq:construction-hard-MDP-large}
\begin{align}
	& \mathcal{A}_0=\{1\},  & P(0 \mymid 0, 1) &= 1, &\quad && r(0, 1) = 0, \\
	& \mathcal{A}_1=\{1,\ldots, |\cA|\},  & P(1 \mymid 1, a) &= p,   & P(0 \mymid 1, a) = 1-p,   && r(1, a) = 1, &&&\forall a\in \cA_1 \\
	& \mathcal{A}_2=\{1\},  & P(2 \mymid 2, 1) &= p,  & P(0 \mymid 2,1) = 1-p,  && r(2,1) = 1, \\
	& \mathcal{A}_3=\{1\},  & P(3 \mymid 3, 1) &= 1,  &\quad && r(3,1) = 1, 
\end{align}
\end{subequations}
where $p$ is still set according to \eqref{defn-parameter-p}. 

	\item The full MDP $\mathcal{M}_{\mathsf{full}}$ is then constructed by generating $|\cS|/4$ independent copies of $\mathcal{M}_{\mathsf{sub}}$. 

\end{itemize}
As can be easily verified (which we omit here for the sake of brevity), 
our analysis developed for the smaller MDP \eqref{eq:construction-hard-MDP} 
is directly applicable to studying the more general $\mathcal{M}_{\mathsf{full}}$, 
revealing that the lower bound \eqref{eq:lower-bounding-VT-thm-sync} w.r.t.~the iteration number $T$ remains valid. 
Recognizing that the total sample size scales as $|\cS||\cA|T$, we have established a general sample complexity lower bound $\frac{|\cS||\cA|}{(1-\gamma)^4\varepsilon^2}$ for synchronous Q-learning to yield $\varepsilon$-accuracy.

\section{Extension: sample complexity of asynchronous Q-learning}
\label{sec:AsynQ}

Moving beyond the synchronous setting, another scenario of practical importance is the case where the acquired samples take the form of a single Markovian trajectory \citep{tsitsiklis1994asynchronous}.  
In this section, we extend our analysis framework for synchronous Q-learning to accommodate Markovian non-i.i.d.~samples.

\subsection{Markovian samples and asynchronous Q-learning}

\paragraph{Markovian sample trajectory.} Suppose that we obtain a Markovian sample trajectory $\{(s_t, a_t,r_t)\}_{t=0}^{\infty}$, which is generated by the MDP of interest when a stationary behavior policy $\pib$ is employed; in other words, 
\begin{equation}
	a_t \sim \pib(\cdot\mymid s_t),  \quad r_t = r(s_t,a_t),\quad s_{t+1} \sim P(\cdot\mymid s_t, a_t),\qquad t\geq 0. 
\end{equation}
When $\pib$ is stationary, the trajectory $\{(s_t,a_t)\}_{t=0}^{\infty}$ can be viewed as a sample path of a time-homogeneous Markov chain; in what follows, we shall denote by $\mu_{\pib}$ the stationary distribution of this Markov chain.  
Note that the behavior policy $\pib$ can often be quite different from the target optimal policy $\pi^{\star}$.

\paragraph{Asynchronous Q-learning.} 

In the presence of a single Markovian sample trajectory, the Q-learning algorithm implements the following iterative update rule
\begin{subequations}
\label{eq:async-Q-update-rule}
\begin{align}
Q_{t}(s_{t-1},a_{t-1}) & =(1-\eta_{t})Q_{t-1}(s_{t-1},a_{t-1})+\eta_{t}\Big\{ r(s_{t-1},a_{t-1})+\gamma\max_{a'\in\mathcal{A}}Q_{t-1}(s_{t},a')\Big\} , \\
Q_{t}(s,a) & =Q_{t-1}(s,a)\qquad\text{for all }(s,a)\neq(s_{t-1},a_{t-1})
\end{align}
\end{subequations}
for all $t\geq 1$, where $0<\eta_t\leq 1$ stands for the learning rate at time $t$. It is often referred to as {\em asynchronous Q-learning}, as only a single state-action pair is updated in each iteration (in contrast, synchronous Q-learning updates all state-action pairs simultaneously in each iteration). This also leads to the following estimate for the value function at time $t$:
\begin{align}
	V_t(s) \coloneqq \max_{a\in \cA} Q_t(s,a) \qquad \text{for all }s\in \mathcal{S}.
	\label{eq:Vt-async-Q-defn}
\end{align}
As can be expected, the presence of Markovian non-i.i.d.~data considerably complicates the analysis for asynchronous Q-learning.

\paragraph{Assumptions.}
In order to ensure sufficient coverage of the sample trajectory over the state/action space, we make the following assumption throughout this section, which is also commonly imposed in prior literature. 
\begin{assumption}
	The Markov chain induced by the behavior policy $\pib$ is uniformly ergodic.\footnote{See \citet[Section~1.2]{paulin2015concentration} for the definition of uniform ergodicity.} 
\end{assumption}
In addition, there are two crucial quantities concerning the sample trajectory that dictate the performance of asynchronous Q-learning. The first one is the minimum state-action occupancy probability of the sample trajectory, defined formally as 
\begin{equation}
	\label{defn:mu-min}
	\mumin \defn \min_{(s,a)\in \cS\times \cA} \mu_{\pib}(s,a). 
\end{equation}
This metric captures the information bottleneck incurred by the least visited state-action pair. The second key quantity is the mixing time associated with the sample trajectory, denoted by
\begin{equation}
	\label{defn-tmix}
	\tmix 
	\defn \min \Big\{ t \,\Big| \max_{(s,a)\in \cS\times \cA} d_{\mathsf{TV}}\big( P^t(\cdot\mymid s,a), \mu_{\pib} \big) \leq \frac{1}{4} \Big\}.
\end{equation}
Here, $d_{\mathsf{TV}}(\mu,\nu)\coloneqq \frac{1}{2}\sum_{x\in \mathcal{X}} |\mu(x) - \nu(x)|$ indicates the total variation distance between two measures $\mu$ and $\nu$ over $\mathcal{X}$ \citep{tsybakov2009introduction}, whereas $P^t(\cdot\mymid s,a)$ stands for the distribution of $(s_t,a_t)$ when the sample trajectory is initialized at $(s_0,a_0)=(s,a)$. 
In words, the mixing time reflects the time required for the Markov chain to become nearly independent of the initial states. See \citet[Section 2]{li2020sample} for a more detailed account of these quantities and assumptions.

\subsection{Sample complexity of asynchronous Q-learning}

While a number of previous works have been dedicated to understanding the performance of asynchronous Q-learning, 
its sample complexity bound remains loose when it comes to the dependency on the effective horizon $\frac{1}{1-\gamma}$. 
Encouragingly, the analysis framework laid out in this paper allows us to tighten the dependency on $\frac{1}{1-\gamma}$, 
as stated below. 
\begin{theorem}
\label{thm:infinite-horizon-asyn}
Consider any $\delta\in(0,1)$,  $\varepsilon\in(0,1]$, and $\gamma\in [1/2,1)$. Suppose that for any $0\leq t\leq T$, the
learning rates satisfy
\begin{subequations}
\label{eq:thm-infinite-horizon-condition-asyn}
\begin{equation}
\eta_t \equiv \eta = \frac{c_1\log^3 T}{(1-\gamma) T\mumin} \label{eq:thm:infinite-horizaon-eta-asyn}
\end{equation}
for some universal constants $0<c_{1}\le 1$. 
Assume that the total number of iterations $T$ obeys
\begin{equation}
	T\ge\frac{c_{2} \log^2\frac{|\mathcal{S}||\mathcal{A}|T}{\delta}}{\mumin}\max\bigg\{\frac{\log^{3}T}{(1-\gamma)^{4}\varepsilon^{2}}, \frac{t_{\mathsf{mix}}}{1-\gamma}\bigg\}
	\label{eq:thm:infinite-horizon-T-asyn}
\end{equation}
for some sufficiently large universal constant $c_{2}>0$. 
\end{subequations}
If the initialization obeys $0\leq {Q}_{0}(s,a) \leq\frac{1}{1-\gamma}$ for all $(s,a)\in \cS\times \cA$, 
then asynchronous Q-learning (cf.~\eqref{eq:async-Q-update-rule}) satisfies
\[
	\max_{(s,a)\in \cS\times \cA} \big| {Q}_{T}(s,a) -  {Q}^{\star}(s,a) \big| \le\varepsilon
\]
 with probability at least $1-\delta$. 
\end{theorem}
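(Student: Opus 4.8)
\medskip

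The plan is to reduce the asynchronous case to the synchronous analysis by absorbing the effects of Markovian sampling into (i) a time-rescaling that accounts for how infrequently each state-action pair is visited, and (ii) a concentration argument that controls the deviation of the empirical visitation counts from their stationary predictions. First I would set up the natural decomposition of the asynchronous update: writing $\bm{\Delta}_t = \bm{Q}_t - \bm{Q}^\star$, the single-coordinate update \eqref{eq:async-Q-update-rule} can be recast, using an indicator $\Ind\{(s_{t-1},a_{t-1})=(s,a)\}$ in front of the learning rate, as an inhomogeneous linear recursion $\bm{\Delta}_t \le (\bm{I}-\eta_t\bm{D}_t)\bm{\Delta}_{t-1} + \eta_t\gamma\bm{D}_t\{\bm{P}^{\pi_{t-1}}\bm{\Delta}_{t-1} + (\bm{P}_t-\bm{P})\bm{V}_{t-1}\}$ and a matching lower bound with $\bm{P}^{\pi^\star}$, where $\bm{D}_t$ is the diagonal $0/1$ matrix selecting the coordinate updated at step $t$. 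This is exactly the synchronous recursion \eqref{eq:Delta-onestep-UB-LB} with the extra selection matrix $\bm{D}_t$ inserted, so the whole machinery of Section~\ref{sec:analysis} should carry over once we understand the cumulative effect of $\sum_i \eta_i^{(t)}\bm{D}_i$.

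\medskip

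The key step is a \emph{counting lemma}: over any window of $\Theta(\tmix\log T)$ consecutive iterations, every state-action pair $(s,a)$ is visited at least $\asymp \mumin \cdot (\text{window length})$ times and at most $\asymp (\text{window length})$ times, with probability $1-\delta/\mathrm{poly}(T)$ after a union bound over windows; this is the standard consequence of uniform ergodicity (e.g.\ the argument underlying the analogous bounds in \cite{li2020sample}). Conditioned on this event, the ``effective'' number of updates applied to coordinate $(s,a)$ up to time $T$ is $\asymp \mumin T$, which plays the role that $T$ played in the synchronous proof. Because the constant learning rate is chosen as $\eta = \frac{c_1\log^3 T}{(1-\gamma)T\mumin}$ (cf.~\eqref{eq:thm:infinite-horizaon-eta-asyn}), the product $\eta \cdot (\mumin T) \asymp \frac{\log^3 T}{1-\gamma}$ matches the scaling of $\eta t$ at the end of the synchronous run, so the contraction factors $\eta_i^{(t)}$ (now interpreted per-coordinate along the visitation subsequence) decay at the same rate. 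I would then re-run the two-sided recursion exactly as in Lemmas~\ref{lem:inf-upper-bound} and~\ref{lem:inf-lower-bound}: split the sum at a $\beta$-fraction of the (effective) updates, bound the early terms $\bm{\zeta}_t$ by the accumulated contraction, and control the martingale term $\bm{\xi}_t$ via Freedman's inequality, the only change being that the martingale increments are now supported on the visited coordinates and the variance proxy carries an extra $\mumin$ factor in the per-step count but an $1/\mumin$ from the enlarged step size, which cancel. This yields the same self-bounding relation $\|\bm{\Delta}_T\|_\infty \lesssim \sqrt{\tfrac{\log^4 T \cdot \log\frac{|\cS||\cA|T}{\delta}}{(1-\gamma)^4 \mumin T}}(1+\max_{i}\|\bm{\Delta}_i\|_\infty)^{1/2} + (\text{lower order})$, from which the advertised bound follows once $T \gtrsim \frac{\log^2\frac{|\cS||\cA|T}{\delta}}{\mumin}\cdot\frac{\log^3 T}{(1-\gamma)^4\varepsilon^2}$.

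\medskip

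I expect the main obstacle to be the interaction between the Markovian dependence and the martingale/Freedman step: in the synchronous case the noise $(\bm{P}_i-\bm{P})\bm{V}_{i-1}$ is a genuine martingale difference sequence given the natural filtration, but here the \emph{which-coordinate-is-updated} randomness is itself Markovian and correlated across time, so the sum $\sum_i \eta_i^{(t)}\bm{D}_i\bm{P}^{\pi_{i-1}}\bm{\Delta}_{i-1}$ is not a martingale. The standard remedy --- which I would adopt --- is to condition on the visitation pattern on a coarse time grid of width $\Theta(\tmix)$ and treat transitions only at the grid points as (nearly) independent, paying the $\tmix/(1-\gamma)$ term that appears in the second branch of the max in \eqref{eq:thm:infinite-horizon-T-asyn}; alternatively one can invoke a Freedman-type inequality for Markov-dependent sequences with the mixing-time correction. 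A secondary technical point is that $\bm{P}^{\pi_{i-1}}$ and $\bm{D}_i$ do not commute, so the clean recursive telescoping of $\sum_i \eta_i^{(t)}\bm{P}^{\pi_{i-1}}\bm{\Delta}_{i-1}$ from the synchronous proof needs to be redone keeping track of the selection matrices; I would handle this by passing to the $\ell_\infty$ norm early and using $\|\bm{D}_i\|_1 = 1$ together with the per-coordinate visitation counts rather than trying to preserve the matrix identities. Everything else --- the initialization bound $0\le Q_0 \le \frac1{1-\gamma}$ propagating to $0\le Q_t\le\frac1{1-\gamma}$, hence $\|\bm{V}_{t}\|_\infty\le\frac1{1-\gamma}$, and the final elementary manipulation of the self-bounding inequality --- is identical to the synchronous development and can be cited verbatim.
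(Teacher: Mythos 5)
Your overall architecture matches the paper's: recast the update with diagonal selection matrices, split the recursion at a $\beta$-fraction into a negligible early part $\bm{\zeta}_t$ and a martingale part $\bm{\xi}_t$, prove a visitation-counting lemma from uniform ergodicity (the paper invokes Lemma~8 of \citet{li2020sample}) to show each $(s,a)$ accrues $\asymp\mumin T$ effective updates, apply Freedman along the per-coordinate visitation subsequence (where, conditional on the visit times, the sampled transitions are genuinely i.i.d., so no Markov-dependent concentration inequality is needed --- mixing enters only through the counting lemma and the $\tmix/(1-\gamma)$ burn-in), and close with the same self-bounding recursion. Up to that point the proposal is faithful to the paper.

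The genuine gap is in how you propose to handle the non-commutativity of the selection matrices with $\bm{P}^{\pi_{i-1}}$: you say you would ``pass to the $\ell_\infty$ norm early and use $\|\bm{D}_i\|_1=1$ \ldots rather than trying to preserve the matrix identities.'' But bounding the aggregate $\sum_i \eta_i^{(t)}\bm{P}^{\pi_{i-1}}\bm{\Delta}_{i-1}$ by $\sum_i\eta_i^{(t)}\|\bm{\Delta}_{i-1}\|_\infty\bm{1}$ is exactly the crude step the paper identifies as the source of the $(1-\gamma)^{-5}$ barrier in prior work; the improvement to $(1-\gamma)^{-4}$ comes entirely from \emph{not} doing this, i.e., from keeping the products $\prod_k\gamma\bm{P}^{\pi_{i_k}}$ intact so that they can be applied to $\mathsf{Var}_{\bm{P}}(\bm{V}^\star)$ and telescoped via the identity built on $\gamma^2\bm{P}^{\pi}(\bm{Q}^\star\circ\bm{Q}^\star)-\bm{Q}^\star\circ\bm{Q}^\star$ (Lemma~\ref{lemma:inf-alpha-claim}). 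With your shortcut the variance-control step is unavailable and the argument only recovers the old rate, so the theorem as stated would not follow. The paper's actual resolution is to normalize the selection matrices into $\bm{D}_i^{(t)}=\big(\sum_j\bm{\Lambda}_{j+1}^{(t)}\big)^{-1}\bm{\Lambda}_{i+1}^{(t)}$, which are diagonal, nonnegative, and sum to $\bm{I}$, push the sums over inner indices through one at a time, and at each level replace $\bm{P}^{\pi_{i_k}}$ by a dominating $\bm{P}^{\widehat{\pi}_k}$ where $\widehat{\pi}_k$ is chosen \emph{state by state} from the product set $\Pi$ of recent policy iterates (different time stamps allowed for different states, precisely because the diagonal weights differ across coordinates); the variance lemma is then extended to such mixed policies (Lemma~\ref{lemma:inf-alpha-claim-async}). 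That construction is the new idea the asynchronous case requires, and it is missing from your proposal.
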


\begin{remark} \label{remark:expected-error-asynQ}
	Similar to Remark~\ref{remark:expected-error-inf-td} and Remark~\ref{remark:expected-error-inf}, one can immediately translate 
	the above
high-probability result into the following  mean estimation error bound: 
	\begin{equation}
		\mathbb{E}\Big[ \max_{s,a} \big| {Q}_{T}(s,a) -  {Q}^{\star}(s,a) \big| \Big] \le \varepsilon (1-\delta)+\delta \frac{1}{1-\gamma}
		\leq 2\varepsilon,
	\end{equation}
	which holds as long as $T\ge\frac{c_{2} \log^2\frac{|\mathcal{S}||\mathcal{A}|T}{\varepsilon(1-\gamma)}}{\mumin}\max\Big\{\frac{\log^{3}T}{(1-\gamma)^{4}\varepsilon^{2}}, \frac{t_{\mathsf{mix}}}{1-\gamma}\Big\}$ for some large enough constant $c_2>0$. 
\end{remark}

This theorem demonstrates that with high probability, the total sample size needed for asynchronous Q-learning to yield entrywise $\varepsilon$ accuracy is 
\begin{align}
	\widetilde{O}\bigg( \frac{1}{\mumin(1-\gamma)^{4}\varepsilon^{2}} + \frac{t_{\mathsf{mix}}}{\mumin(1-\gamma)} \bigg),
	\label{eq:sample-size-async-Q}
\end{align}
provided that the learning rates are taken to be some proper constant (see~\eqref{eq:thm:infinite-horizaon-eta-asyn}). 
The first term in \eqref{eq:sample-size-async-Q} resembles our sample complexity characterization of synchronous Q-learning (cf.~\eqref{eq:sample-complexity-sync-Q-learning}), except that we replace the number $|\cS||\cA|$ of state-action pairs  in \eqref{eq:sample-complexity-sync-Q-learning} with $1/\mumin$ in order to account for non-uniformity across state-action pairs.  
The second term in \eqref{eq:sample-size-async-Q} is nearly independent of the target accuracy (except for some logarithmic scaling), 
and can be viewed as the burn-in time taken for asynchronous Q-learning to mimic synchronous Q-learning despite Markovian data.

We now pause to  compare Theorem~\ref{thm:infinite-horizon-asyn} with prior non-asymptotic theory for asynchronous Q-learning. As far as we know, all existing sample complexity bounds \citep{beck2012error,qu2020finite,li2020sample,even2003learning,chen2021lyapunov} scale at least as $\frac{1}{(1-\gamma)^5}$ in terms of the dependency on the effective horizon, with Theorem~\ref{thm:infinite-horizon-asyn} being the first result to sharpen this dependency to $\frac{1}{(1-\gamma)^4}$. 
In particular, our sample complexity bound strengthens the state-of-the-art result \citet{li2020sample} by a factor up to $\frac{1}{1-\gamma}$,  
while improving upon \citet{qu2020finite} by a factor of at least $\frac{|\cS||\cA|}{1-\gamma} \min\big\{\tmix, \frac{1}{(1-\gamma)^3\varepsilon^2} \big\}$.\footnote{The sample complexity of \citet{li2020sample} scales as $\widetilde{O}\big( \frac{1}{\mumin(1-\gamma)^{5}\varepsilon^{2}} + \frac{t_{\mathsf{mix}}}{\mumin(1-\gamma)} \big)$, 
while	the sample complexity of \citet{qu2020finite} scales as $\widetilde{O}\big( \frac{\tmix}{\mumin^2 (1-\gamma)^5 \varepsilon^2} \big)$. It is worth noting that $1/\mumin \geq |\cS||\cA|$ and is therefore a large factor.}

Before concluding this section, 
we note that for a large enough sample size, the first term $\frac{1}{\mumin(1-\gamma)^{4}\varepsilon^{2}}$ 
in \eqref{eq:sample-size-async-Q} is essentially unimprovable (up to logarithmic factor). 
To make precise this statement, we develop  a matching algorithm-dependent lower bound as follows, 
which parallels Theorem~\ref{thm:LB-example} previously developed for the synchronous case. 
\begin{theorem}
\label{thm:LB-asynQ}
	Consider any $0.95\leq \gamma < 1$. 
	Suppose that $\mumin \le \frac{1}{c_3\log^2 T}$ and $T\geq \frac{c_3\log^3 T}{\mumin(1-\gamma)^7}$ for some sufficiently large constant $c_3>0$. 
	Assume that the initialization is $Q_0 \equiv 0$, and that the learning rates are taken to be $\eta_t \equiv \eta$ for all $t\geq 0$. 
Then there exist a $\gamma$-discounted MDP with $|\cS|=4$ and $|\cA|=3$ 
	and a behavior policy such that 
	(i) the minimum state-action occupancy probability of the sample trajectory is given by $\mumin$, and (ii) the asynchronous Q-learning update rule~\eqref{eq:async-Q-update-rule}---for any $\eta \in (0,1)$---obeys
\begin{equation}
	\max_{s, a} \mathbb{E} \Big[ \big| Q_T(s, a) - Q^{\star}(s, a) \big|^2 \Big]  
	\geq  \frac{c_{\mathsf{lb}}}{\mumin(1-\gamma)^4T \log^3 T} ,
	\label{eq:lower-bounding-VT-thm-sync}
\end{equation}
where $c_{\mathsf{lb}} > 0$ is some universal constant. 
\end{theorem}
In words, Theorem~\ref{thm:LB-asynQ} asserts that, for large enough sample size $T$, 
in general one cannot hope to achieve $\ell_{\infty}$-based $\varepsilon$-accuracy using fewer than $\widetilde{O}\big(\frac{1}{\mumin (1-\gamma)^4\varepsilon^2}\big)$ samples, thus confirming the sharpness of our upper bound.  
The proof of this theorem can be found in Appendix~\ref{sec:lower-bound-async-Q}.

\section{Concluding remarks}

In this paper, we have settled the sample complexity of synchronous Q-learning in $\gamma$-discounted infinite-horizon MDPs, 
which is shown to be 
on the order of $ \widetilde{O}\big( \frac{|\cS|}{(1-\gamma)^3\varepsilon^2} \big) $ when $|\cA|=1$ and $ \widetilde{O}\big( \frac{|\cS||\cA|}{(1-\gamma)^4\varepsilon^2} \big) $ when $|\cA|\geq 2$.
A matching lower bound has been developed when $|\cA|\geq 2$ through studying the dynamics of Q-learning on a hard MDP instance,
which unveils the negative impact of an inevitable over-estimation issue. 
Our theory has been further extended to accommodate asynchronous Q-learning, resulting in tight dependency of the sample complexity on the effective horizon.  
The analysis framework developed herein---which exploits novel error decompositions
and variance control that differ substantially from prior approaches---might suggest a plausible path towards sharpening the sample complexity of, as well as understanding the algorithmic bottlenecks for,  other model-free algorithms (e.g., double Q-learning \citep{hasselt2010double}).

\section*{Acknowledgements}

Y.~Chen is supported in part by the Alfred P.~Sloan Research Fellowship, the Google Research Scholar Award, the AFOSR grant FA9550-22-1-0198, 
the ONR grant N00014-22-1-2354,  
and the NSF grants CCF-2221009, CCF-1907661, DMS-2014279, IIS-2218713 and IIS-2218773. 
Y.~Wei is supported in part by the the NSF grants CCF-2106778, DMS-2147546/2015447 and  CAREER award DMS-2143215. 
Y.~Chi is supported in part by the grants ONR N00014-18-1-2142 and N00014-19-1-2404, 
the NSF grants CCF-1806154, CCF-2007911, CCF-2106778, ECCS-2126634, and DMS-2134080. 
The authors are grateful to Laixi Shi for helpful discussions about the lower bound, and thank Shaocong Ma for pointing out some errors in an early version of this work.
Part of this work was done while G.~Li, Y.~Chen and Y.~Wei were visiting the Simons Institute for the Theory of Computing.

%

\appendix

\section{Freedman's inequality}

The analysis of this work relies heavily on Freedman's inequality
\citep{freedman1975tail}, which is an extension of the Bernstein inequality
and allows one to establish concentration results for martingales.
For ease of presentation, we include a user-friendly version of Freedman's
inequality as follows.

\begin{theorem}\label{thm:Freedman}
Suppose that $Y_{n}=\sum_{k=1}^{n}X_{k}\in\mathbb{R}$,
where $\{X_{k}\}$ is a real-valued scalar sequence obeying 
\[
\left|X_{k}\right|\leq R\qquad\text{and}\qquad\mathbb{E}\left[X_{k}\mid\left\{ X_{j}\right\} _{j:j<k}\right]=0\quad\quad\quad\text{for all }k\geq1.
\]
Define
\[
W_{n}\coloneqq\sum_{k=1}^{n}\mathbb{E}_{k-1}\left[X_{k}^{2}\right],
\]
where we write $\mathbb{E}_{k-1}$ for the expectation conditional
on $\left\{ X_{j}\right\} _{j:j<k}$. Then for any given $\sigma^{2}\geq0$,
one has
\begin{equation}
\mathbb{P}\left\{ \left|Y_{n}\right|\geq\tau\text{ and }W_{n}\leq\sigma^{2}\right\} \leq2\exp\left(-\frac{\tau^{2}/2}{\sigma^{2}+R\tau/3}\right).\label{eq:Freedmans-general}
\end{equation}

In addition, suppose that $W_{n}\leq\sigma^{2}$ holds deterministically.
For any positive integer $K\geq1$, with probability at least $1-\delta$
one has
\begin{equation}
\left|Y_{n}\right|\leq\sqrt{8\max\Big\{ W_{n},\frac{\sigma^{2}}{2^{K}}\Big\}\log\frac{2K}{\delta}}+\frac{4}{3}R\log\frac{2K}{\delta}.\label{eq:Freedman-random}
\end{equation}
\end{theorem}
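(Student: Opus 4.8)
The plan is to handle the two displayed bounds separately. Inequality \eqref{eq:Freedmans-general} is precisely the classical Freedman martingale tail inequality \citep{freedman1975tail} applied to the scalar martingale $Y_n$ with increments bounded by $R$ and predictable quadratic variation $W_n$, so for that part I would simply invoke the standard statement and do nothing further. All the substance lies in deriving the ``user-friendly'' bound \eqref{eq:Freedman-random}, which replaces the deterministic envelope $\sigma^2$ by the \emph{random} quantity $W_n$ at the cost of only a logarithmic-in-$K$ inflation. The mechanism is a peeling (stratification) argument over the dyadic scales $\sigma^2/2^{j-1}$, $j=1,\dots,K$.

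Concretely, for each $j\in\{1,\dots,K\}$ I would apply \eqref{eq:Freedmans-general} with the envelope $\sigma^2/2^{j-1}$ in place of $\sigma^2$, and pick a threshold $\tau_j$ making the resulting tail probability at most $\delta/K$. Writing $L=\log\tfrac{2K}{\delta}$, this requires $\tau_j^2/2 \ge (\sigma^2/2^{j-1})L + (R\tau_j/3)L$; solving this quadratic and using $\sqrt{a+b}\le\sqrt a+\sqrt b$ shows it suffices to take $\tau_j=\sqrt{(2\sigma^2/2^{j-1})L}+\tfrac23 RL$. A union bound over the $K$ instances then yields, with probability at least $1-\delta$, the simultaneous guarantee that for every $j$, the event $W_n\le\sigma^2/2^{j-1}$ forces $|Y_n|<\tau_j$.

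On this good event I would finish by locating $W_n$ on the dyadic grid. If $W_n\le\sigma^2/2^{K-1}$, the $j=K$ instance gives $|Y_n|<\tau_K$; since $\sigma^2/2^{K-1}=2\cdot\sigma^2/2^{K}\le 2\max\{W_n,\sigma^2/2^K\}$, we get $\tau_K\le\sqrt{8\max\{W_n,\sigma^2/2^K\}L}+\tfrac43 RL$, using $\sqrt{4x}\le\sqrt{8x}$ and $\tfrac23\le\tfrac43$. Otherwise there is a unique $j\in\{1,\dots,K-1\}$ with $\sigma^2/2^{j}<W_n\le\sigma^2/2^{j-1}$; the corresponding instance gives $|Y_n|<\tau_j$, and $\sigma^2/2^{j-1}<2W_n$ bounds $\tau_j$ by the very same expression. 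This exhausts all cases and establishes \eqref{eq:Freedman-random}.

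There is no genuine obstacle here — the argument is elementary — but the point that needs care is the bookkeeping of the dyadic peeling: one must (i) insert the floor scale $\sigma^2/2^K$ inside the $\max$ precisely so that the regime $W_n\approx 0$ (where no dyadic stratum of positive length contains $W_n$) is still covered, via the $j=K$ instance, and (ii) keep the union bound to exactly $K$ events so that the logarithmic factor stays $\log\tfrac{2K}{\delta}$. Pinning down the stated constants $8$ and $\tfrac43$ then reduces to the crude inequalities noted above.
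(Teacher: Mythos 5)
Your proposal is correct and follows essentially the same route as the paper: invoke the classical Freedman bound for \eqref{eq:Freedmans-general}, then obtain \eqref{eq:Freedman-random} by a dyadic peeling over the $K$ scales $\sigma^{2}/2^{j-1}$ with a union bound, handling the bottom stratum $W_{n}\leq\sigma^{2}/2^{K-1}$ separately so that the floor $\sigma^{2}/2^{K}$ inside the $\max$ covers the regime $W_{n}\approx 0$. The only cosmetic difference is that you solve the quadratic to get a slightly tighter intermediate threshold $\sqrt{2\sigma'^{2}L}+\tfrac{2}{3}RL$ before slackening to the stated constants, whereas the paper works directly with $\sqrt{4\sigma'^{2}L}+\tfrac{4}{3}RL$.
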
\begin{proof}See \cite{freedman1975tail,tropp2011freedman}
for the proof of (\ref{eq:Freedmans-general}). As an immediate consequence
of (\ref{eq:Freedmans-general}), one has
\begin{equation}
\mathbb{P}\left\{ \left|Y_{n}\right|\geq\sqrt{4\sigma^{2}\log\frac{2}{\delta}}+\frac{4}{3}R\log\frac{2}{\delta}\text{ and }W_{n}\leq\sigma^{2}\right\} \leq\delta.\label{eq:Freedman-special}
\end{equation}

Next, we turn attention to (\ref{eq:Freedman-random}). Consider any
positive integer $K$. As can be easily seen, the event 
\[
\mathcal{H}_{K}
\coloneqq
\Bigg\{\left|Y_{n}\right|\ge\sqrt{8\max\Big\{ W_{n},\frac{\sigma^{2}}{2^{K}}\Big\}\log\frac{2K}{\delta}}+\frac{4}{3}R\log\frac{2K}{\delta}\Bigg\}
\]
 is contained within the union of the following $K$ events 
\[
\mathcal{H}_{K}\subseteq\bigcup_{0\leq k<K}\mathcal{B}_{k},
\]
where we define
\begin{align*}
\mathcal{B}_{k} & \coloneqq\left\{ \left|Y_{n}\right|\ge\sqrt{\frac{4\sigma^{2}}{2^{k-1}}\log\frac{2K}{\delta}}+\frac{4}{3}R\log\frac{2K}{\delta}\text{ and }\frac{\sigma^{2}}{2^{k}}\le W_{n}\leq\frac{\sigma^{2}}{2^{k-1}}\right\} ,\qquad1\leq k\leq K-1,\\
\mathcal{B}_{0} & \coloneqq\left\{ \left|Y_{n}\right|\ge\sqrt{\frac{4\sigma^{2}}{2^{K-1}}\log\frac{2K}{\delta}}+\frac{4}{3}R\log\frac{2K}{\delta}\text{ and }W_{n}\leq\frac{\sigma^{2}}{2^{K-1}}\right\} .
\end{align*}
Invoking inequality \eqref{eq:Freedman-special} with $\sigma^{2}$ set to be $\frac{\sigma^{2}}{2^{k-1}}$ and $\delta$ set to be $\frac{\delta}{K}$, 
we arrive at $\mathbb{P}\left\{ \mathcal{B}_{k}\right\} \leq\delta/K$. 
Taken this fact together with the union bound gives
\[
\mathbb{P}\left\{ \mathcal{H}_{K}\right\} \leq\sum_{k=0}^{K-1}\mathbb{P}\left\{ \mathcal{B}_{k}\right\} \leq\delta.
\]
This concludes the proof.
\end{proof}

\section{Upper bounds for Q-learning (Theorem \ref{thm:infinite-horizon-simple})}

\label{sec:Analysis:-infinite-horizon-MDPs}

In this section, we fill in the details for the proof idea outlined in Section \ref{sec:proof-outline-thm:infinite-horizon} for synchronous Q-learning.  
In fact, our proof strategy leads to  a more general version that accounts for
the full $\varepsilon$-range $\varepsilon \in \big(0,\frac{1}{1-\gamma}\big]$, as stated below. 
\begin{theorem}\label{thm:infinite-horizon}
Consider any $\gamma\in (0,1)$ and any $\varepsilon\in\big(0,\frac{1}{1-\gamma}\big]$.
Theorem \ref{thm:infinite-horizon-simple} continues to hold if
\begin{equation}
	T \ge \frac{c_{3}\big(\log^{4}T\big)\big(\log\frac{|\mathcal{S}||\mathcal{A}|T}{\delta}\big)}{\gamma^2(1-\gamma)^{4}\min\{\varepsilon^{2},\varepsilon\}}
	\label{eq:sample-size-condition-complete-inf}
\end{equation}
for some large enough universal constant $c_{3}>0$. 
\end{theorem}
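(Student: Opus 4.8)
The plan is to establish Theorem~\ref{thm:infinite-horizon}, which is the full-range version of Theorem~\ref{thm:infinite-horizon-simple}, by pushing through the error-decomposition strategy outlined in Section~\ref{sec:proof-outline-thm:infinite-horizon} while being careful to keep the $\min\{\varepsilon^2,\varepsilon\}$ dependence and the explicit $\gamma^2$ factor. The starting point is the pair of recursive bounds in~\eqref{eq:Delta-allstep-UB-LB}, obtained from the basic decomposition~\eqref{eq:iteration-infinite} together with the sandwich relation~\eqref{eq:V2Q-infinite} linking $\bm{P}(\bm{V}_{t-1}-\bm{V}^\star)$ to $\bm{\Delta}_{t-1}$ through $\bm{P}^{\pi_{t-1}}$ and $\bm{P}^{\pi^\star}$. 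From there I would split the sum at index $(1-\beta)t$ with $\beta\asymp \frac{1-\gamma}{\log T}$ as in~\eqref{eq:Delta-UB2}, separating the ``old'' contributions $\bm{\zeta}_t$ (which are killed by the product of contraction factors $\eta_i^{(t)}$ since they have each been multiplied by many $(1-\eta_j)$ terms), the martingale term $\bm{\xi}_t$ (controlled by Freedman's inequality, Theorem~\ref{thm:Freedman}), and the residual recursive term $\sum_{i=(1-\beta)t+1}^t \eta_i^{(t)}\gamma \bm{P}^{\pi_{i-1}}\bm{\Delta}_{i-1}$.

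The core of the argument is to prove Lemma~\ref{lem:inf-upper-bound} and Lemma~\ref{lem:inf-lower-bound} in full, i.e.\ to show that with probability $1-\delta$, for all $t \ge \frac{T}{c_2\log T}$,
\[
	\|\bm{\Delta}_t\|_\infty \le 30\sqrt{\frac{(\log^4 T)\big(\log\frac{|\mathcal{S}||\mathcal{A}|T}{\delta}\big)}{\gamma^2(1-\gamma)^4 T}\Big(1+\max_{t/2\le i<t}\|\bm{\Delta}_i\|_\infty\Big)}.
\]
The key difficulty is controlling the variance proxy in the Freedman bound: the martingale increments $\eta_i^{(t)}\gamma(\bm{P}_i-\bm{P})\bm{V}_{i-1}$ have conditional second moment governed by $\mathsf{Var}_{\bm P}(\bm V_{i-1})$, defined in~\eqref{eq:defn-Var-P-V}, and the total variance $\sum_i (\eta_i^{(t)})^2 \gamma^2 \mathsf{Var}_{\bm P}(\bm V_{i-1})$ must be shown to be $\widetilde O\big(\frac{1}{(1-\gamma)^3 T}\big)$ rather than the naive $\widetilde O\big(\frac{1}{(1-\gamma)^2 T}\big)$. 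This is where the ``total-variance'' phenomenon for discounted MDPs is essential: roughly, $\sum_k \gamma^{2k}(\bm{P}^{\pi})^k \mathsf{Var}_{\bm P}(\bm V^\star) \lesssim \frac{1}{(1-\gamma)}\cdot \frac{1}{(1-\gamma)^2}\bm 1$, gaining one extra power of $\frac{1}{1-\gamma}$ over term-by-term bounding. Bootstrapping $\bm V_{i-1}$ to $\bm V^\star$ requires replacing $\mathsf{Var}_{\bm P}(\bm V_{i-1})$ by $\mathsf{Var}_{\bm P}(\bm V^\star)$ plus a cross term of order $\|\bm V_{i-1}-\bm V^\star\|_\infty \cdot \frac{1}{1-\gamma} \le \max_{i}\|\bm\Delta_i\|_\infty\cdot\frac{1}{1-\gamma}$, which is exactly what produces the $1+\max_{t/2\le i<t}\|\bm\Delta_i\|_\infty$ inside the square root. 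Expanding the recursive term $\sum_i \eta_i^{(t)}\gamma \bm P^{\pi_{i-1}}\bm\Delta_{i-1}$ to higher order in the recursion (rather than crudely bounding $\bm P^{\pi_{i-1}}\bm\Delta_{i-1}$ by $\|\bm\Delta_{i-1}\|_\infty\bm 1$) and tracking the geometric-type weights $\sum_i \eta_i^{(t)}\gamma \le \frac{1}{1-\gamma}\cdot(\text{something}\ll 1)$ over the window of length $\beta t$ is the main bookkeeping obstacle; the learning-rate constraint~\eqref{eq:thm:infinite-horizaon-eta} and the condition $c_1c_2\le c_4/8$ are precisely what makes these weights summable in the right way.

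Once Lemmas~\ref{lem:inf-upper-bound} and~\ref{lem:inf-lower-bound} are in hand, combining them gives the self-bounding recursion~\eqref{eq:Delta-t-inf-recursion-inf-outline}, and a standard argument resolves it: writing $u_t := \max_{t/2\le i\le t}\|\bm\Delta_i\|_\infty$ and $\rho := \sqrt{\frac{(\log^4 T)(\log\frac{|\mathcal{S}||\mathcal{A}|T}{\delta})}{\gamma^2(1-\gamma)^4 T}}$, the relation $u \le 30\rho\sqrt{1+u}$ forces $u \le C(\rho + \rho^2)$ for a universal $C$, yielding~\eqref{eq:Delta-T-UB-final-inf}. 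Plugging in the sample-size condition~\eqref{eq:sample-size-condition-complete-inf}, namely $T \ge \frac{c_3(\log^4 T)(\log\frac{|\mathcal S||\mathcal A|T}{\delta})}{\gamma^2(1-\gamma)^4\min\{\varepsilon^2,\varepsilon\}}$, makes $\rho \lesssim \sqrt{\min\{\varepsilon^2,\varepsilon\}}$ and $\rho^2 \lesssim \min\{\varepsilon^2,\varepsilon\}\le \varepsilon$, so that $\rho + \rho^2 \lesssim \varepsilon$ after absorbing constants into $c_3$; this delivers $\|\bm\Delta_T\|_\infty \le \varepsilon$ with probability $1-2\delta$ (rescaling $\delta$ gives the stated $1-\delta$). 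For the range $\gamma\in[1/2,1)$ and $\varepsilon\in(0,1]$ we have $\gamma^2\asymp 1$ and $\min\{\varepsilon^2,\varepsilon\}=\varepsilon^2$, recovering Theorem~\ref{thm:infinite-horizon-simple} exactly; for general $\gamma\in(0,1)$ and $\varepsilon$ up to $\frac{1}{1-\gamma}$ the extra $\gamma^2$ and the $\min$ are what the proof naturally produces. The main obstacle, as flagged above, is the variance analysis of the martingale term — establishing the one-power-of-horizon saving via the discounted total-variance bound and correctly propagating the $\max_{t/2\le i<t}\|\bm\Delta_i\|_\infty$ term through the recursion without losing factors of $\frac{1}{1-\gamma}$ or $\log T$.
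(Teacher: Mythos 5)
Your outline follows the paper's own route essentially step for step: the decomposition \eqref{eq:Delta-UB2} into $\bm{\zeta}_t$, $\bm{\xi}_t$ and the residual recursion, Freedman's inequality for $\bm{\xi}_t$, the higher-order expansion of $\sum_i \eta_i^{(t)}\gamma\bm{P}^{\pi_{i-1}}\bm{\Delta}_{i-1}$ in place of crude $\ell_\infty$ bounding, and --- crucially --- the discounted total-variance bound (Lemma~\ref{lemma:inf-alpha-claim}, proved via a telescoping sum built from $\bm{Q}^\star=\bm{r}+\gamma\bm{P}\bm{V}^\star$) together with the perturbation bound \eqref{eq:Var-P-Var-star-gap-inf} that converts $\mathsf{Var}_{\bm{P}}(\bm{V}_i)$ into $\mathsf{Var}_{\bm{P}}(\bm{V}^\star)$ plus an $O\big(\|\bm{\Delta}_i\|_\infty/(1-\gamma)\big)$ correction. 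All of this matches Appendix~\ref{sec:Analysis:-infinite-horizon-MDPs}, including the source of the $\gamma^2$ and the $\min\{\varepsilon,\varepsilon^2\}$ in the final condition.

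The one place where your sketch is too quick is the resolution of the recursion \eqref{eq:Delta-t-inf-recursion-inf-outline}. That inequality is not self-bounding in a single scalar: it bounds $\|\bm{\Delta}_t\|_\infty$ by the maximum of $\|\bm{\Delta}_i\|_\infty$ over the \emph{earlier} window $[t/2,t)$, and for $t$ near the threshold $\frac{T}{c_2\log T}$ this window reaches below the range where the recursion is available, so there one only has the crude bound $\|\bm{\Delta}_i\|_\infty\le \frac{1}{1-\gamma}$. Reading the relation literally as $u\le 30\rho\sqrt{1+u}$ for a single $u$ is therefore not valid; patching it by inserting the crude $\frac{1}{1-\gamma}$ inside the square root would cost an extra factor of $\frac{1}{1-\gamma}$ in the sample complexity, defeating the purpose of the theorem. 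The paper's fix (Appendix~\ref{sec:combine_ub_lb}) is to define $u_k:=\max\big\{\|\bm{\Delta}_t\|_\infty \,:\, 2^k\tfrac{T}{c_2\log T}\le t\le T\big\}$, derive $u_k\le c_6\rho\sqrt{1+u_{k-1}}$, and iterate $k\asymp\log\log\frac{1}{1-\gamma}$ times starting from $u_0\le\frac{1}{1-\gamma}$, so that the initial crude bound is attenuated as $\big(\frac{1}{1-\gamma}\big)^{1/2^k}=O(1)$; only then does $u_k\le C(\rho+\rho^2)$ follow. With that bootstrapping supplied, the remainder of your argument --- substituting the sample-size condition \eqref{eq:sample-size-condition-complete-inf} and splitting on whether $\varepsilon\le 1$ --- is exactly the paper's.
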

\begin{remark} Clearly, Theorem \ref{thm:infinite-horizon} subsumes Theorem \ref{thm:infinite-horizon-simple} as a special case. \end{remark}

As one can anticipate, the proof of Theorem \ref{thm:infinite-horizon} for Q-learning includes many key ingredients for establishing Theorem~\ref{thm:policy-evaluation} for TD learning. We will elaborate on how to modify the proof argument to establish Theorem~\ref{thm:policy-evaluation} in Section~\ref{sec:TD-learning-analysis}.

\subsection{Preliminaries}
\label{sec:infinite_prelim}

To begin with, we gather a few elementary facts that shall be used multiple times in the proof. 

\paragraph{Ranges of $\bm{Q}_{t}$ and $\bm{V}_{t}$.} When properly
initialized, the Q-function estimates and the value function estimates
always fall within a suitable range, as asserted by the following
lemma. 

\begin{lemma}
\label{lemma:non-negativity-Qt-Vt}
Suppose that $0\leq\eta_{t}\leq1$
for all $t\geq0$. Assume that $\bm{0}\leq\bm{Q}_{0}\leq\frac{1}{1-\gamma}\bm{1}$.
Then for any $t\geq0$, 
\begin{equation}
	\bm{0}\le\bm{Q}_{t}\le\frac{1}{1-\gamma}\bm{1}\qquad\text{and}\qquad\bm{0}\le\bm{V}_{t}\le\frac{1}{1-\gamma}\bm{1}.
	\label{eq:Q-i-LB-UB}
\end{equation}
\end{lemma}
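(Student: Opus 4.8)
The plan is a routine induction on $t$, exploiting the fact that the empirical Bellman update is a convex combination of two quantities each lying in the desired range. First I would record the two structural facts that make this work: (i) the immediate rewards satisfy $\bm 0 \le \bm r \le \bm 1$ by assumption on the MDP, and (ii) the empirical transition matrix $\bm P_t$ defined in \eqref{eq:defn-Pt} is row-stochastic with entries in $\{0,1\}$, so for any vector $\bm U \in \mathbb{R}^{|\cS|}$ with $\bm 0 \le \bm U \le \frac{1}{1-\gamma}\bm 1$ one has $\bm 0 \le \bm P_t \bm U \le \frac{1}{1-\gamma}\bm 1$ entrywise (each entry of $\bm P_t \bm U$ is simply some coordinate of $\bm U$).

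For the base case $t=0$, the bound $\bm 0 \le \bm Q_0 \le \frac{1}{1-\gamma}\bm 1$ holds by hypothesis, and since $V_0(s) = \max_{a} Q_0(s,a)$ is a maximum over numbers lying in $[0,\frac{1}{1-\gamma}]$, we also get $\bm 0 \le \bm V_0 \le \frac{1}{1-\gamma}\bm 1$. For the inductive step, suppose $\bm 0 \le \bm Q_{t-1} \le \frac{1}{1-\gamma}\bm 1$ and $\bm 0 \le \bm V_{t-1} \le \frac{1}{1-\gamma}\bm 1$. Using the update rule \eqref{eq:iteration-rule-infinite}, namely $\bm Q_t = (1-\eta_t)\bm Q_{t-1} + \eta_t(\bm r + \gamma \bm P_t \bm V_{t-1})$, and the fact that $0 \le \eta_t \le 1$: the lower bound $\bm Q_t \ge \bm 0$ is immediate since every term on the right-hand side is nonnegative; for the upper bound,
\[
\bm Q_t \le (1-\eta_t)\tfrac{1}{1-\gamma}\bm 1 + \eta_t\Big(\bm 1 + \tfrac{\gamma}{1-\gamma}\bm 1\Big) = (1-\eta_t)\tfrac{1}{1-\gamma}\bm 1 + \eta_t\tfrac{1}{1-\gamma}\bm 1 = \tfrac{1}{1-\gamma}\bm 1,
\]
where we used $1 + \frac{\gamma}{1-\gamma} = \frac{1}{1-\gamma}$. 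Finally, $V_t(s) = \max_a Q_t(s,a)$ inherits the same two-sided bound, completing the induction.

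There is essentially no obstacle here: the only thing to be careful about is the interpretation of $\bm P_t \bm V_{t-1}$ — that $\bm P_t$ acts as a selection/averaging operator preserving the interval $[0,\frac{1}{1-\gamma}]$ entrywise — and the elementary identity $1 + \frac{\gamma}{1-\gamma} = \frac{1}{1-\gamma}$ that closes the recursion on the upper bound. The argument is identical in spirit for TD learning (replacing $\bm Q_t$ by $\bm V_t$ and $\bm P_t \bm V_{t-1}$ by the sampled scalar $V_{t-1}(s_t(s))$), which yields the analogous bound \eqref{eq:V-i-LB-UB} referenced in Remark~\ref{remark:expected-error-inf-td}.
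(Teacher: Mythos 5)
Your proposal is correct and follows essentially the same route as the paper: an induction on $t$ using the convex-combination structure of the update, the facts $\bm 0 \le \bm r \le \bm 1$ and $\|\bm P_t\|_1 = 1$ with $\bm P_t \ge \bm 0$, and the identity $1+\frac{\gamma}{1-\gamma}=\frac{1}{1-\gamma}$, followed by passing the bound to $V_t$ via the max. No gaps.
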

\begin{proof}
We shall prove this by induction. First,
our initialization trivially obeys (\ref{eq:Q-i-LB-UB}) for $t=0$.
Next, suppose that (\ref{eq:Q-i-LB-UB}) is true for the $(t-1)$-th
iteration, namely, 
\begin{equation}
	\bm{0}\le\bm{Q}_{t-1}\le\frac{1}{1-\gamma}\bm{1}\qquad\text{and}\qquad\bm{0}\le\bm{V}_{t-1}\le\frac{1}{1-\gamma}\bm{1},
	\label{eq:induction-non-negativity-tminus1}
\end{equation}
and we intend to justify the claim for the $t$-th iteration. Recognizing
that $\bm{0}\leq\bm{r}\leq\bm{1}$, $\bm{P}_{t}\geq\bm{0}$ and $\|\bm{P}_{t}\|_{1}=1$,
one can straightforwardly see from the update rule (\ref{eq:iteration-rule-infinite})
and the induction hypothesis \eqref{eq:induction-non-negativity-tminus1}
that
\[
\bm{Q}_{t}=(1-\eta_{t})\bm{Q}_{t-1}+\eta_{t}(\bm{r}+\gamma\bm{P}_{t}\bm{V}_{t-1})\geq\bm{0} 
\]
and
\begin{align*}
\bm{Q}_{t} & =(1-\eta_{t})\bm{Q}_{t-1}+\eta_{t}(\bm{r}+\gamma\bm{P}_{t}\bm{V}_{t-1})\\
 & \leq(1-\eta_{t})\,\|\bm{Q}_{t-1}\|_{\infty}\bm{1}+\eta_{t}\big(\|\bm{r}\|_{\infty}+\gamma\|\bm{P}_{t}\|_{1}\|\bm{V}_{t-1}\|_{\infty}\big)\bm{1}\\
 & \leq(1-\eta_{t})\frac{1}{1-\gamma}\bm{1}+\eta_{t}\Big(1+\frac{\gamma}{1-\gamma}\Big)\bm{1}=\frac{1}{1-\gamma}\bm{1}.
\end{align*}
In addition, from the definition $V_{t}(s):=\max_{a}Q_{t}(s,a)$ for
all $t\geq0$ and all $s\in\mathcal{S}$, it is easily seen that 
\[
\bm{0}\le\bm{V}_{t}\le\frac{1}{1-\gamma}\bm{1},
\]
thus establishing (\ref{eq:Q-i-LB-UB}) for the $t$-th iteration.
Applying the induction argument then concludes the proof. \end{proof}

As a result of Lemma \ref{lemma:non-negativity-Qt-Vt}
and the fact $\bm{0}\le\bm{Q}^{\star}\le\frac{1}{1-\gamma}\bm{1}$,
we have
\begin{equation}
\|\bm{Q}_{t}-\bm{Q}^{\star}\|_{\infty}\le\frac{1}{1-\gamma}\qquad\text{for all }t\geq0, \label{eq:Delta-t-range-infinite}
\end{equation}
which also confirms that $0\leq \varepsilon\leq\frac{1}{1-\gamma}$ is the full $\varepsilon$-range we need to consider. 
Further, we make note of a direct
consequence of the claimed iteration number (\ref{eq:sample-size-condition-complete-inf})
when $\varepsilon\leq\frac{1}{1-\gamma}$: 
\begin{equation}
	T=\frac{c_{3}\big(\log^{4}T\big)\big(\log\frac{|\mathcal{S}||\mathcal{A}|T}{\delta}\big)}{(1-\gamma)^{4}\min\{\varepsilon,\varepsilon^{2}\}}
	\geq\frac{c_{3}\big(\log^{4}T\big)\big(\log\frac{|\mathcal{S}||\mathcal{A}|T}{\delta}\big)}{(1-\gamma)^{3}},\label{eq:T-bound-infinite}
\end{equation}
which will be useful for subsequent analysis. 

\paragraph{Several facts regarding the learning rates.} Next, we
gather a couple of useful bounds regarding the learning rates $\{\eta_{t}\}$.
To begin with, we find it helpful to introduce the following related
quantities introduced previously in \eqref{def:eta-i-t-simple}:
\begin{equation}
\eta_{i}^{(t)}\coloneqq\begin{cases}
\prod_{j=1}^{t}(1-\eta_{j}), & \text{if }i=0,\\
\eta_{i}\prod_{j=i+1}^{t}(1-\eta_{j}), & \text{if }0<i<t,\\
\eta_{t}, & \text{if }i=t. 
\end{cases}\label{def:eta-i-t}
\end{equation}
%
We now take a moment to bound $\eta_{i}^{(t)}$. From our assumption \eqref{eq:thm:infinite-horizaon-eta}
and the condition \eqref{eq:T-bound-infinite}, we know that the learning
rate obeys 
\begin{equation}
\frac{1}{2c_{1}(1-\gamma)T/\log^{3}T}\leq\frac{1}{1+c_{1}(1-\gamma)T/\log^{3}T}\le\eta_{t}\le\frac{1}{1+c_{2}(1-\gamma)t/\log^{3}T}\leq\frac{1}{c_{2}(1-\gamma)t/\log^{3}T}\label{eq:eta-t-UB-LB-inf}
\end{equation}
for some constants $c_{1},c_{2}>0$. Recalling that
\begin{equation}
\label{eq:defn-beta-c4}
\beta\coloneqq\frac{c_{4}(1-\gamma)}{\log T}
\end{equation}
for some universal constant $c_{4}>0$ and considering any $t$ obeying
\begin{equation}
t\ge\frac{T}{c_{2}\log T},\label{eq:t-LB-c2-inf}
\end{equation}
we shall bound $\eta_{i}^{(t)}$ by looking at two cases separately. 
\begin{itemize}
\item For any $0\leq i\le(1-\beta)t$, we can use \eqref{eq:eta-t-UB-LB-inf}
to show that
\begin{subequations}\label{eq:eta-i-t-UB12}
\begin{align}
\eta_{i}^{(t)} & \le\Big(1-\frac{1}{2c_{1}(1-\gamma)T/\log^{3}T}\Big)^{\beta t}\leq\Big(1-\frac{1}{2c_{1}(1-\gamma)T/\log^{3}T}\Big)^{\frac{c_{4}(1-\gamma)T}{c_{2}\log^{2}T}} \notag\\
 & =\Bigg(\Big(1-\frac{1}{2c_{1}(1-\gamma)T/\log^{3}T}\Big)^{\frac{2c_{1}(1-\gamma)T}{\log^{3}T}}\Bigg)^{\frac{c_{4}\log T}{2c_{1}c_{2}}}<\frac{1}{2T^{2}},
	\label{eq:eta-i-t-UB1}
\end{align}
where the last inequality holds as long as $c_{1}c_{2}\le c_{4}/8$. 

\item When it comes to the case with $i>(1-\beta)t\geq t/2$, one can upper bound
\begin{equation}
\eta_{i}^{(t)}\le\eta_{i}\leq\frac{1}{c_{2}(1-\gamma)i/\log^{3}T}<\frac{2}{c_{2}(1-\gamma)t/\log^{3}T}\leq\frac{2\log^{4}T}{(1-\gamma)T},\label{eq:eta-i-t-UB2}
\end{equation}
\end{subequations}
where we have used the constraint \eqref{eq:t-LB-c2-inf}. 
\end{itemize}
Moreover, the sum of $\eta_{i}^{(t)}$ over $i$ obeys
\begin{align}
\sum_{i=0}^{t}\eta_{i}^{(t)} & =\prod_{j=1}^{t}(1-\eta_{j})+\eta_{1}\prod_{j=2}^{t}(1-\eta_{j})+\eta_{2}\prod_{j=3}^{t}(1-\eta_{j})+\cdots+\eta_{t-1}(1-\eta_{t})+\eta_{t}\nonumber \\
 & =\prod_{j=2}^{t}(1-\eta_{j})+\eta_{2}\prod_{j=3}^{t}(1-\eta_{j})+\cdots+\eta_{t-1}(1-\eta_{t})+\eta_{t}=\cdots\nonumber \\
 & =(1-\eta_{t})+\eta_{t}=1.
	\label{eq:sum-eta-i-t-infinite}
\end{align}
Repeating the same argument further allows us to derive
\begin{align}
\sum_{i=\tau}^{t}\eta_{i}^{(t)} & = 1 - \prod_{j=\tau}^{t}(1-\eta_{j})
	\label{eq:sum-eta-i-t-infinite-tau}
\end{align}
for any $\tau \leq t$.



\subsection{Proof of Lemma~\ref{lem:inf-upper-bound} }
\label{sec:proof_inf_upper_bound}


We shall exploit the relation \eqref{eq:Delta-UB2} to prove this lemma. 
One of the key ingredients of our analysis lies in controlling the terms $\bm{\zeta}_{t}$
and $\bm{\xi}_{t}$ introduced in  \eqref{eq:Delta-UB2}, which in turn enables us to apply (\ref{eq:Delta-UB2})
recursively to control $\bm{\Delta}_{t}$. 

\paragraph{Step 1: bounding $\bm{\zeta}_{t}$.} 

We start by developing an upper bound on $\bm{\zeta}_{t}$ (cf.~\eqref{eq:Delta-UB2}) for
any $t$ obeying $\frac{T}{c_{2}\log T}\leq t\leq T$.
Invoking the preceding upper bounds \eqref{eq:eta-i-t-UB12} on $\eta_{i}^{(t)}$
implies that
\begin{align*}
\|\bm{\zeta}_{t}\|_{\infty} & \leq\eta_{0}^{(t)}\|\bm{\Delta}_{0}\|_{\infty}+t\max_{i\leq(1-\beta)t}\eta_{i}^{(t)}\max_{1\leq i\leq(1-\beta)t}\big(\|\bm{P}^{\pi_{i-1}}\bm{\Delta}_{i-1}\|_{\infty}+\|\bm{P}_{i}\bm{V}_{i-1}\|_{\infty}+\|\bm{P}\bm{V}_{i-1}\|_{\infty}\big)\\
 & \leq\eta_{0}^{(t)}\|\bm{\Delta}_{0}\|_{\infty}+t\max_{i\leq(1-\beta)t}\eta_{i}^{(t)}\max_{1\leq i\leq(1-\beta)t}\Big\{\|\bm{P}^{\pi_{i-1}}\|_{1}\|\bm{\Delta}_{i-1}\|_{\infty}+\big(\|\bm{P}_{i}\|_{1}+\|\bm{P}\|_{1}\big)\|\bm{V}_{i-1}\|_{\infty}\Big\}\\
 & \overset{(\mathrm{i})}{=}\eta_{0}^{(t)}\|\bm{\Delta}_{0}\|_{\infty}+t\max_{i\leq(1-\beta)t}\eta_{i}^{(t)}\max_{1\leq i\leq(1-\beta)t}\big(\|\bm{\Delta}_{i-1}\|_{\infty}+2\,\|\bm{V}_{i-1}\|_{\infty}\big)\\
 & \overset{(\mathrm{ii})}{\leq}\frac{1}{2T^{2}}\cdot\frac{1}{1-\gamma}+\frac{1}{2T^{2}}\cdot t\cdot\frac{3}{1-\gamma}\\
 & \le\frac{2}{(1-\gamma)T}.
\end{align*}
Here, (i) holds since $\|\bm{P}^{\pi_{i-1}}\|_{1}=\|\bm{P}_{i}\|_{1}=\|\bm{P}\|_{1}=1$
(as they are all probability transition matrices), whereas (ii) arises
from the previous bound (\ref{eq:eta-i-t-UB1}).

\paragraph{Step 2: bounding $\bm{\xi}_{t}$.} 

Moving on to the term $\bm{\xi}_{t}$, let us express it as
\[
\bm{\xi}_{t}=\sum_{i=(1-\beta)t+1}^{t}\bm{z}_{i}\qquad\text{with }\bm{z}_{i}\coloneqq\eta_{i}^{(t)}\gamma(\bm{P}_{i}-\bm{P})\bm{V}_{i-1},
\]
where the $\bm{z}_{i}$'s satisfy
\[
\mathbb{E}\left[\bm{z}_{i}\mymid\bm{V}_{i-1},\cdots,\bm{V}_{0}\right]=\bm{0}.
\]
This motivates us to invoke Freedman's inequality (see Theorem \ref{thm:Freedman})
to control $\bm{\xi}_{t}$ for any $t$ obeying $\frac{T}{c_{2}\log T}\leq t\leq T$.
Towards this, we need to calculate several quantities.
\begin{itemize}
\item First, it is seen that
\begin{align*}
B & :=\max_{(1-\beta)t<i\leq t}\|\bm{z}_{i}\|_{\infty}\leq\max_{(1-\beta)t<i\leq t}\|\eta_{i}^{(t)}(\bm{P}_{i}-\bm{P})\bm{V}_{i-1}\|_{\infty}\\
 & \leq\max_{(1-\beta)t<i\leq t}\eta_{i}^{(t)}\big(\|\bm{P}_{i}\|_{1}+\|\bm{P}\|_{1}\big)\|\bm{V}_{i-1}\|_{\infty}\leq\frac{4\log^{4}T}{(1-\gamma)^{2}T},
\end{align*}
where the last inequality is due to (\ref{eq:eta-i-t-UB2}), Lemma~\ref{lemma:non-negativity-Qt-Vt},
and the fact $\|\bm{P}_{i}\|_{1}=\|\bm{P}\|_{1}=1$.

\item Next, we turn to certain variance terms.  For any vector $\bm{a}=[a_j]$, let us use $\mathsf{Var}\big(\bm{a}\mymid\bm{V}_{i-1},\cdots,\bm{V}_{0}\big)$
to denote a vector whose $j$-th entry is given by $\mathsf{Var}\big(a_{j}\mymid\bm{V}_{i-1},\cdots,\bm{V}_{0}\big)$.
With this notation in place, and recalling the notation $\mathsf{Var}_{\bm{P}}(\bm{z})$ in (\ref{eq:defn-Var-P-V}), we obtain 
\begin{align}
\bm{W}_{t} & :=\sum_{i=(1-\beta)t+1}^{t}\mathsf{Var}\big(\bm{z}_{i}\mymid\bm{V}_{i-1},\cdots,\bm{V}_{0}\big) 
	=\gamma^{2}\sum_{i=(1-\beta)t+1}^{t}\big(\eta_{i}^{(t)}\big)^{2}\mathsf{Var}\Big( (\bm{P}_{i}-\bm{P})\bm{V}_{i-1} \mymid\bm{V}_{i-1} \Big) \nonumber \\
 & =\gamma^{2}\sum_{i=(1-\beta)t+1}^{t}\big(\eta_{i}^{(t)}\big)^{2}\mathsf{Var}_{\bm{P}}\big(\bm{V}_{i-1}\big)\nonumber \\
 & \le\Big(\max_{(1-\beta)t\leq i\leq t}\eta_{i}^{(t)}\Big)\Big(\sum_{i=(1-\beta)t+1}^{t}\eta_{i}^{(t)}\Big)\max_{(1-\beta)t\leq i<t}\mathsf{Var}_{\bm{P}}\big(\bm{V}_{i}\big)\nonumber \\
 & \leq\frac{2\log^{4}T}{(1-\gamma)T}\max_{(1-\beta)t\leq i<t}\mathsf{Var}_{\bm{P}}(\bm{V}_{i}),\label{eq:Wt-bound-Var-infinite}
\end{align}
where the last inequality relies on the previous bounds \eqref{eq:eta-i-t-UB2} and \eqref{eq:sum-eta-i-t-infinite}.
 
\item In the meantime, Theorem~\ref{lemma:non-negativity-Qt-Vt} leads us to the following
trivial upper bound:
\[
\big|\bm{W}_{t}\big|\leq\frac{2\log^{4}T}{(1-\gamma)T}\cdot\frac{1}{(1-\gamma)^2}\bm{1}
=\frac{2\log^{4}T}{(1-\gamma)^{3}T}\bm{1}\eqqcolon\sigma^{2}\bm{1}.
\]
By setting $K=\left\lceil 2\log_2\frac{1}{1-\gamma}\right\rceil $, one has
\begin{equation}
\frac{\sigma^{2}}{2^{K}}\leq\frac{2\log^{4}T}{(1-\gamma)T}.\label{eq:sigma-2K-UB-inf}
\end{equation}
\end{itemize}
With the above bounds in place, applying the Freedman inequality in
Theorem \ref{thm:Freedman} and invoking the union bound over all the $|\cS||\cA|$ entries
of $\bm{\xi}_{t}$ demonstrate that 
\begin{align*}
|\bm{\xi}_{t}| & \le
\sqrt{8\Big(\bm{W}_{t}+\frac{\sigma^{2}}{2^{K}}\bm{1}\Big)\log\frac{8|\mathcal{S}||\mathcal{A}|T\log\frac{1}{1-\gamma}}{\delta}}+\Big(\frac{4}{3}B\log\frac{8|\mathcal{S}||\mathcal{A}|T\log\frac{1}{1-\gamma}}{\delta}\Big)\cdot\bm{1}\\
 & \le\sqrt{16\Big(\bm{W}_{t}+\frac{2\log^{4}T}{(1-\gamma)T}\bm{1}\Big)\log\frac{|\mathcal{S}||\mathcal{A}|T}{\delta}}+\Big(3B\log\frac{|\mathcal{S}||\mathcal{A}|T}{\delta}\Big)\cdot\bm{1}\\
 & \le\sqrt{\frac{32\big(\log^{4}T\big)\big(\log\frac{|\mathcal{S}||\mathcal{A}|T}{\delta}\big)}{(1-\gamma)T}\Big(\max_{(1-\beta)t\leq i<t}\mathsf{Var}_{\bm{P}}(\bm{V}_{i})+\bm{1}\Big)}+\frac{12\big(\log^{4}T\big)\big(\log\frac{|\mathcal{S}||\mathcal{A}|T}{\delta}\big)}{(1-\gamma)^{2}T}\bm{1}
\end{align*}
with probability at least $1-\delta/T.$ Here, the second line holds
due to \eqref{eq:sigma-2K-UB-inf} and the fact $\log\frac{8|\mathcal{S}||\mathcal{A}|T\log\frac{1}{1-\gamma}}{\delta}\le2\log\frac{|\mathcal{S}||\mathcal{A}|T}{\delta}$
(cf.~\eqref{eq:T-bound-infinite}), whereas the last inequality makes
use of the relation \eqref{eq:Wt-bound-Var-infinite}.

\paragraph{Step 3: using the bounds on $\bm{\zeta}_{t}$ and $\bm{\xi}_{t}$
to control $\bm{\Delta}_{t}$.} Let us define
\begin{equation}
\bm{\varphi}_{t}\coloneqq 64\frac{\log^{4}T\log\frac{|\mathcal{S}||\mathcal{A}|T}{\delta}}{(1-\gamma)T}\bigg(\max_{\frac{t}{2}\le i\leq t}\mathsf{Var}_{\bm{P}}(\bm{V}_{i})+\bm{1}\bigg)\label{def:phi}
\end{equation}
%
In view of the upper bounds derived in Steps 1 and 2, and $\beta$ defined in \eqref{eq:defn-beta-c4}, 
we have---with probability exceeding
$1-\delta$---that
\begin{equation}
|\bm{\zeta}_{k}|+|\bm{\xi}_{k}|\leq\sqrt{\bm{\varphi}_{t}}\qquad\text{for all }~2t/3\le k\le t, \label{eq:zeta-xi-UB}
\end{equation}
provided that $T\geq \frac{c_9 (\log^4 T)\big( \log \frac{|\cS||\cA|T}{\delta}\big) }{ (1-\gamma)^3} $ for some sufficiently large constant $c_9>0$.
Substituting (\ref{eq:zeta-xi-UB}) into (\ref{eq:Delta-UB2}), we
can upper bound $\bm{\Delta}_{t}$ as follows
\begin{align}
\bm{\Delta}_{k} & \le\sqrt{\bm{\varphi}_{t}}+\sum_{i=(1-\beta)k+1}^{k}\eta_{i}^{(k)}\gamma\bm{P}^{\pi_{i-1}}\bm{\Delta}_{i-1}=\sqrt{\bm{\varphi}_{t}}+\sum_{i=(1-\beta)k}^{k-1}\eta_{i+1}^{(k)}\gamma\bm{P}^{\pi_{i}}\bm{\Delta}_{i}\qquad\text{for all }~2t/3\le k\le t.\label{eq:Deltak-UB-phit-infinite}
\end{align}

Further, we find it convenient to define $\big\{\alpha_{i}^{(t)}\big\}$
as follows
\begin{align}
\alpha_{i}^{(t)}:=\frac{\eta_{i+1}^{(t)}}{\sum_{j=(1-\beta)t}^{t-1}\eta_{j+1}^{(t)}}.
	\label{eq:defn-alpha-it-sync}
\end{align}
Clearly, this sequence satisfies
\begin{equation}
\alpha_{i}^{(t)}\ge\eta_{i+1}^{(t)}\qquad\text{and}\qquad\sum_{i=(1-\beta)t}^{t-1}\alpha_{i}^{(t)}=1\label{eq:relation-alpha-eta-inf}
\end{equation}
for any $t$, where the first inequality results from \eqref{eq:sum-eta-i-t-infinite}.
With these in place, we can write \eqref{eq:Deltak-UB-phit-infinite} as
\begin{equation}
\bm{\Delta}_{k}\le\sqrt{\bm{\varphi}_{t}}+\sum_{i_{1}=(1-\beta)k}^{k-1}\eta_{i_{1}+1}^{(k)}\gamma\bm{P}^{\pi_{i_{1}}}\bm{\Delta}_{i_{1}}=\sum_{i_{1}=(1-\beta)k}^{k-1}\Big(\alpha_{i_{1}}^{(k)}\sqrt{\bm{\varphi}_{t}}+\eta_{i_{1}+1}^{(k)}\gamma\bm{P}^{\pi_{i_{1}}}\bm{\Delta}_{i_{1}}\Big)\qquad\text{for all }2t/3\le k\le t.\label{eq:Delta-k-recursive-inf}
\end{equation}
Given that $(1-\beta)t\geq2t/3$ (see \eqref{eq:defn-beta-c4}), we
can invoke this relation recursively to yield
\begin{align}
\bm{\Delta}_{t} & \le\sum_{i_{1}=(1-\beta)t}^{t-1}\Big(\alpha_{i_{1}}^{(t)}\sqrt{\bm{\varphi}_{t}}+\eta_{i_{1}+1}^{(t)}\gamma\bm{P}^{\pi_{i_{1}}}\bm{\Delta}_{i_{1}}\Big)\nonumber \\
 & \leq\sum_{i_{1}=(1-\beta)t}^{t-1}\Bigg[\alpha_{i_{1}}^{(t)}\sqrt{\bm{\varphi}_{t}}+\eta_{i_{1}+1}^{(t)}\gamma\bm{P}^{\pi_{i_{1}}}\sum_{i_{2}=(1-\beta)i_{1}}^{i_{1}-1}\Big(\alpha_{i_{2}}^{(i_{1})}\sqrt{\bm{\varphi}_{t}}+\eta_{i_{2}+1}^{(i_{1})}\gamma\bm{P}^{\pi_{i_{2}}}\bm{\Delta}_{i_{2}}\Big)\Bigg]\nonumber \\
 & \leq\sum_{i_{1}=(1-\beta)t}^{t-1}\alpha_{i_{1}}^{(t)}\sqrt{\bm{\varphi}_{t}}+\sum_{i_{1}=(1-\beta)t}^{t-1}\sum_{i_{2}=(1-\beta)i_{1}}^{i_{1}-1}\alpha_{i_{1}}^{(t)}\alpha_{i_{2}}^{(i_{1})}\big(\gamma\bm{P}^{\pi_{i_{1}}}\big)\sqrt{\bm{\varphi}_{t}} \nonumber \\
 & \qquad\qquad +\sum_{i_{1}=(1-\beta)t}^{t-1}\sum_{i_{2}=(1-\beta)i_{1}}^{i_{1}-1}\eta_{i_{1}+1}^{(t)}\eta_{i_{2}+1}^{(i_{1})}\prod_{k=1}^{2}\big(\gamma\bm{P}^{\pi_{i_{k}}}\big)\bm{\Delta}_{i_{2}}\nonumber \\
 & =\sum_{i_{1}=(1-\beta)t}^{t-1}\sum_{i_{2}=(1-\beta)i_{1}}^{i_{1}-1}\alpha_{i_{1}}^{(t)}\alpha_{i_{2}}^{(i_{1})}\left\{ \bm{I}+\gamma\bm{P}^{\pi_{i_{1}}}\right\} \sqrt{\bm{\varphi}_{t}}+\sum_{i_{1}=(1-\beta)t}^{t-1}\sum_{i_{2}=(1-\beta)i_{1}}^{i_{1}-1}\eta_{i_{1}+1}^{(t)}\eta_{i_{2}+1}^{(i_{1})}\prod_{k=1}^{2}\big(\gamma\bm{P}^{\pi_{i_{k}}}\big)\bm{\Delta}_{i_{2}},\label{eq:Delta-recursion-inf}
\end{align}
where the second inequality relies on \eqref{eq:Delta-k-recursive-inf}, the third line uses the inequality $\eta_{i_{1}+1}^{(t)}\leq\alpha_{i_{1}}^{(t)}$ in \eqref{eq:relation-alpha-eta-inf}, and the fourth line is valid
since $\sum_{i_{2}=(1-\beta)i_{1}}^{i_{1}-1}\alpha_{i_{2}}^{(i_{1})}=1$
(see \eqref{eq:relation-alpha-eta-inf}). 

We intend to continue invoking \eqref{eq:Delta-k-recursive-inf}
recursively---similar to how we derive \eqref{eq:Delta-recursion-inf}---in
order to control $\bm{\Delta}_{t}$. To do so, we are in need of some
preparation. First, let us define
\begin{equation}
H\coloneqq\frac{\log T}{1-\gamma}\qquad\text{and}\qquad\alpha_{\{i_{k}\}_{k=1}^{H}}:=\alpha_{i_{1}}^{(t)}\alpha_{i_{2}}^{(i_{1})}\ldots\alpha_{i_{H}}^{(i_{H-1})}\ge0\label{eq:defn-H-infinite}
\end{equation}
for any $t>i_{1}>i_{2}>\cdots>i_{H}$, which clearly satisfies (see
\eqref{eq:relation-alpha-eta-inf})

\begin{equation}
\alpha_{\{i_{k}\}_{k=1}^{H}}\geq\eta_{i_{1}+1}^{(t)}\eta_{i_{2}+1}^{(i_{1})}\ldots\eta_{i_{H}+1}^{(i_{H-1})}.\label{eq:alpha-set-relation-inf-1}
\end{equation}
In addition, defining the index set 
\begin{equation}
\mathcal{I}_{t}\coloneqq\Big\{\left(i_{1},\cdots,i_{H}\right)\mymid(1-\beta)t\leq i_{1}\leq t-1,\ \forall1\leq j<H:(1-\beta)i_{j}\leq i_{j+1}\leq i_{j}-1\Big\},\label{eq:defn-It-inf}
\end{equation}
we have
\begin{equation}
\sum_{\left(i_{1},\cdots,i_{H}\right)\in\mathcal{I}_{t}}\alpha_{\{i_{k}\}_{k=1}^{H}}=1.\label{eq:alpha-set-relation-inf}
\end{equation}
Additionally, recalling that $\beta=c_{4}(1-\gamma)/\log T$, we see
that this choice of $H$ satisfies 
\[
(1-\beta)^{H}=\left(1-\frac{c_{4}(1-\gamma)}{\log T}\right)^{\frac{\log T}{1-\gamma}}\geq\frac{2}{3}
\]
 for $c_{4}$ small enough, thus implying that
\[
i_{1}>i_{2}>\cdots>i_{H}\geq(1-\beta)^{H}t\geq2t/3\qquad\text{for all }(i_{1},\cdots,i_{H})\in\mathcal{I}_{t}.
\]
This is an important property that allows one to invoke the relation
\eqref{eq:Delta-k-recursive-inf}. With these in place, applying the
preceding relation \eqref{eq:Delta-k-recursive-inf} recursively---in a way similar to \eqref{eq:Delta-recursion-inf}---further leads to
\begin{align}
\bm{\Delta}_{t} & \le\sum_{\left(i_{1},\cdots,i_{H}\right)\in\mathcal{I}_{t}}\alpha_{\{i_{k}\}_{k=1}^{H}}\Bigg\{\bigg(\bm{I}+\sum_{h=1}^{H-1}\gamma^{h}\prod_{k=1}^{h}\bm{P}^{\pi_{i_{k}}}\bigg)\sqrt{\bm{\varphi}_{t}}+\gamma^{H}\prod_{k=1}^{H}\bm{P}^{\pi_{i_{k}}}\big|\bm{\Delta}_{i_{H}}\big|\Bigg\}\nonumber \\
 & \leq\max_{\left(i_{1},\cdots,i_{H}\right)\in\mathcal{I}_{t}}\Bigg\{\underbrace{\bigg(\bm{I}+\sum_{h=1}^{H-1}\gamma^{h}\prod_{k=1}^{h}\bm{P}^{\pi_{i_{k}}}\bigg)\sqrt{\bm{\varphi}_{t}}}_{=:\,\bm{\beta}_{1}}+\underbrace{\gamma^{H}\prod_{k=1}^{H}\bm{P}^{\pi_{i_{k}}}\big|\bm{\Delta}_{i_{H}}\big|}_{=:\,\bm{\beta}_{2}}\Bigg\}\label{eq:Delta-UB-temp-inf}
\end{align}
for all $t\ge\frac{T}{c_{2}\log T}$, where we recall the definition of the entrywise $\max$ operator in Section~\ref{subsec:matrix-notation}. 
Here, the last inequality relies on the fact that $\sum_{\left(i_{1},\cdots,i_{H}\right)\in\mathcal{I}_{t}}\alpha_{\{i_{k}\}_{k=1}^{H}}=1$ (see \eqref{eq:alpha-set-relation-inf}). 
It remains to control $\bm{\beta}_{1}$ and $\bm{\beta}_{2}$, which we shall accomplish separately in the next two steps.

\paragraph{Step 4: bounding $\bm{\beta}_{2}$.}



The term $\bm{\beta}_{2}$ defined in \eqref{eq:Delta-UB-temp-inf} is relatively easier to control. Observing that $\prod_{k=1}^{H}\bm{P}^{\pi_{i_{k}}}$
is still a probability transition matrix, we can derive
\begin{align*}
|\bm{\beta}_{2}| & =\gamma^{H}\prod_{1\leq k\leq h}\bm{P}^{\pi_{i_{k}}}\big|\bm{\Delta}_{i_{H}}\big|\le\gamma^{H}\bigg\|\prod_{1\leq k\leq h}\bm{P}^{\pi_{i_{k}}}\bigg\|_{1}\|\bm{\Delta}_{i_{H}}\|_{\infty}=\gamma^{H}\|\bm{\Delta}_{i_{H}}\|_{\infty}\\
 & \overset{(\mathrm{i})}{\leq}\frac{1}{1-\gamma}\gamma^{H}\overset{(\mathrm{ii})}{\leq}\frac{1}{(1-\gamma)T},
\end{align*}
where (i) results from the crude bound (\ref{eq:Delta-t-range-infinite}).
To justify the inequality (ii), we recall the definition \eqref{eq:defn-H-infinite}
of $H$ to see that
\[
\gamma^{H}\overset{(\mathrm{ii})}{=}\big(1-(1-\gamma)\big)^{\frac{1}{1-\gamma}\log T}\leq e^{-\log T}=\frac{1}{T},
\]
where the inequality comes from the elementary fact that $\gamma^{\frac{1}{1-\gamma}}\leq e^{-1}$
for any $0<\gamma<1$.

\paragraph{Step 5: bounding $\bm{\beta}_{1}$.}

When it comes to the term $\bm{\beta}_{1}$ defined in \eqref{eq:Delta-UB-temp-inf}, we can upper bound the entrywise
square of $\bm{\beta}_{1}$---denoted by $|\bm{\beta}_{1}|^{2}$---as follows
\begin{align}
	|\bm{\beta}_{1}|^{2} & =\bigg| \Big(  \sum_{h=0}^{H-1}\gamma^{h}\prod_{1\leq k\leq h}\bm{P}^{\pi_{i_{k}}} \Big)\sqrt{\bm{\varphi}_{t}}\bigg|^{2}\overset{(\mathrm{i})}{\leq}\bigg|\sum_{h=0}^{H-1}\gamma^{h/2}\cdot\gamma^{h/2}\sqrt{\prod_{1\leq k\leq h}\bm{P}^{\pi_{i_{k}}}\bm{\varphi}_{t}}\,\bigg|^{2}\nonumber \\
 & \overset{(\mathrm{ii})}{\leq}\sum_{h=0}^{H-1}\gamma^{h}\cdot\sum_{h=0}^{H-1}\gamma^{h}\prod_{k=1}^{h}\bm{P}^{\pi_{i_{k}}}\bm{\varphi}_{t}\nonumber \\
 & \overset{(\mathrm{iii})}{\leq}\frac{1}{1-\gamma}\sum_{h=0}^{H-1} \gamma^{h}\prod_{k=1}^{h}\bm{P}^{\pi_{i_{k}}}\frac{64\big(\log^{4}T\big)\big(\log\frac{|\mathcal{S}||\mathcal{A}|T}{\delta}\big)}{(1-\gamma)T}\bigg(\max_{\frac{t}{2}\le i<t}\mathsf{Var}_{\bm{P}}(\bm{V}_{i})+\bm{1}\bigg)\nonumber \\
 & \overset{(\mathrm{iv})}{\leq}\frac{64\big(\log^{4}T\big)\big(\log\frac{|\mathcal{S}||\mathcal{A}|T}{\delta}\big)}{(1-\gamma)^{2}T}\sum_{h=0}^{H-1} \gamma^{h}\prod_{k=1}^{h}\bm{P}^{\pi_{i_{k}}}\max_{\frac{t}{2}\le i<t}\mathsf{Var}_{\bm{P}}(\bm{V}_{i})+\frac{64\big(\log^{4}T\big)\big(\log\frac{|\mathcal{S}||\mathcal{A}|T}{\delta}\big)}{(1-\gamma)^{3}T}\bm{1}\nonumber.
\end{align}
Here, (i) follows from Jensen's inequality and the fact that $\prod_{k=1}^{h}\bm{P}^{\pi_{i_{k}}}$
is a probability transition matrix; (ii) holds due to the Cauchy-Schwarz
inequality; (iii) utilizes the definition of $\bm{\varphi}_{t}$ in
(\ref{def:phi}); (iv) follows since $\prod_{1\leq k\leq h}\bm{P}^{\pi_{i_{k}}}\bm{1}=\bm{1}$
and $\sum_{0\leq h<H}\gamma^{h}\leq\frac{1}{1-\gamma}$.
To further control the right-hand side of the above inequality, we resort to the following lemma.  
\begin{lemma}\label{lemma:inf-alpha-claim}
Suppose that $t\ge\frac{T}{c_{2}\log T}$. For any $\left(i_{1},\cdots,i_{H}\right)\in\mathcal{I}_{t}$, 
the following holds:
\begin{equation}
\sum_{h=0}^{H-1}\gamma^{h}\prod_{k=1}^{h}\bm{P}^{\pi_{i_{k}}}\max_{\frac{t}{2}\le i<t}\mathsf{Var}_{\bm{P}}(\bm{V}_{i})
	\leq \frac{4}{\gamma^2(1-\gamma)^{2}}\Big(1+2\max_{\frac{t}{2}\le i<t}\|\bm{\Delta}_{i}\|_{\infty}\Big)\bm{1}.\label{eq:inf-alpha-claim}
\end{equation}
\end{lemma}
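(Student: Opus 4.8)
The plan is to bound the left-hand side of \eqref{eq:inf-alpha-claim} by recognizing that the variance term $\mathsf{Var}_{\bm P}(\bm V_i)$, when discounted and transported by a chain of transition matrices $\prod_k \bm P^{\pi_{i_k}}$, behaves like the variance of a discounted cumulative reward along the corresponding (time-inhomogeneous) Markov chain. Concretely, one should first replace each $\mathsf{Var}_{\bm P}(\bm V_i)$ by the larger quantity $\mathsf{Var}_{\bm P}(\bm V^\star) + (\text{something involving }\|\bm\Delta_i\|_\infty)$; since $\bm V_i = \bm V^\star + (\bm V_i - \bm V^\star)$ and $\|\bm V_i - \bm V^\star\|_\infty \le \|\bm\Delta_i\|_\infty$, the elementary inequality $\mathsf{Var}(X+Y)\le (1+c)\mathsf{Var}(X) + (1+1/c)\mathsf{Var}(Y)$ (or a simpler additive version, e.g.\ $\mathsf{Var}_{\bm P}(\bm V_i) \le 2\,\mathsf{Var}_{\bm P}(\bm V^\star) + 2\max_i\|\bm\Delta_i\|_\infty^2\bm 1$, then crudely $\|\bm\Delta_i\|_\infty^2 \le \frac{1}{1-\gamma}\|\bm\Delta_i\|_\infty$) lets us peel off the error contribution and reduce to bounding $\sum_{h} \gamma^h \prod_{k\le h}\bm P^{\pi_{i_k}}\,\mathsf{Var}_{\bm P}(\bm V^\star)$.

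The second, and main, ingredient is the by-now standard ``total variance'' bound: for any sequence of deterministic policies $\pi_{i_1},\dots,\pi_{i_{H}}$, one has
\[
	\sum_{h=0}^{H-1}\gamma^{h}\prod_{k=1}^{h}\bm{P}^{\pi_{i_{k}}}\,\mathsf{Var}_{\bm{P}}(\bm{V}^\star)
	\;\le\; \frac{C}{(1-\gamma)^{2}}\,\bm 1
\]
for a universal constant $C$ (e.g.\ $C=2$), because the left side is dominated by the variance of $\sum_{k\ge 0}\gamma^k r(s_k,a_k)$ along the inhomogeneous chain, which is at most $\big(\frac{1}{1-\gamma}\big)^2$ entrywise by the Bhatnagar--Azar--Munos--Kappen-type argument (or by the cruder bound $\mathsf{Var}_{\bm P}(\bm V^\star)\le \|\bm V^\star\|_\infty^2\bm 1 \le \frac{1}{(1-\gamma)^2}\bm 1$ together with $\sum_h\gamma^h\le\frac{1}{1-\gamma}$, which already suffices to get a $\frac{1}{(1-\gamma)^3}$-type crude bound; but to land the stated $\frac{1}{(1-\gamma)^2}$ scaling one genuinely needs the telescoping Bellman identity $\mathsf{Var}_{\bm P}(\bm V^\star) = \bm P(\bm V^\star\circ\bm V^\star) - (\bm P\bm V^\star)\circ(\bm P\bm V^\star)$ and the recursion $(\bm P^{\pi}\bm V^\star)^{\circ 2}$ vs.\ $(\bm V^\star)^{\circ 2}$). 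I would carry this out by writing $\bm V^\star \ge \gamma \bm P^{\pi}\bm V^\star$ pointwise for the relevant greedy policies, expanding $\mathsf{Var}_{\bm P}(\bm V^\star) \le \bm P(\bm V^\star)^{\circ 2} - \gamma^{-2}(\bm P^{\pi}\bm V^\star)^{\circ 2}\cdot\gamma^2$ and telescoping the sum $\sum_h \gamma^h \prod_k \bm P^{\pi_{i_k}}\mathsf{Var}_{\bm P}(\bm V^\star)$, so that consecutive terms cancel and what remains is bounded by $\frac{1}{1-\gamma}\|\bm V^\star\|_\infty^2\bm 1 \le \frac{1}{(1-\gamma)^3}\bm 1$; combined with the extra $\frac{1}{1-\gamma}$ saved by the telescoping one reaches $\frac{1}{(1-\gamma)^2}$, and tracking the $\gamma^{-2}$ factors that arise from writing $\bm V^\star\ge\gamma\bm P^{\pi}\bm V^\star$ produces exactly the $\frac{1}{\gamma^2(1-\gamma)^2}$ on the right-hand side.

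Finally I would assemble the pieces: plugging $\mathsf{Var}_{\bm P}(\bm V_i)\le 2\mathsf{Var}_{\bm P}(\bm V^\star) + \frac{2}{1-\gamma}\max_i\|\bm\Delta_i\|_\infty\bm 1$ into the discounted sum, applying the total-variance bound to the first piece and $\sum_h\gamma^h\le\frac{1}{1-\gamma}$ to the second, and then using $\frac{t}{2}\le i < t$ together with $t\ge \frac{T}{c_2\log T}$ only insofar as is needed to make the max ranges consistent, gives a bound of the form $\frac{C'}{\gamma^2(1-\gamma)^2}\big(1 + \max_i\|\bm\Delta_i\|_\infty\big)\bm 1$; absorbing constants yields the claimed $\frac{4}{\gamma^2(1-\gamma)^2}\big(1 + 2\max_{t/2\le i<t}\|\bm\Delta_i\|_\infty\big)\bm 1$. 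The main obstacle is getting the variance telescoping to produce the sharp $\frac{1}{(1-\gamma)^2}$ rather than the naive $\frac{1}{(1-\gamma)^3}$, since that extra factor of the effective horizon is precisely what distinguishes our $\frac{1}{(1-\gamma)^4}$ sample complexity from the prior $\frac{1}{(1-\gamma)^5}$ bounds; the inhomogeneity of the policy sequence $\{\pi_{i_k}\}$ (they are not a single fixed policy) requires care to ensure the telescoping still closes, but it does because at each step we only use the one-sided inequality $\bm V^\star \ge \gamma\bm P^{\pi}\bm V^\star$, which holds for \emph{every} policy $\pi$.
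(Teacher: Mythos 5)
Your high-level architecture is the same as the paper's (and as Azar et al.'s Lemma 8, adapted to a time-inhomogeneous policy sequence): first reduce $\mathsf{Var}_{\bm{P}}(\bm{V}_i)$ to $\mathsf{Var}_{\bm{P}}(\bm{V}^\star)$ at a cost of $O\big(\tfrac{1}{1-\gamma}\max_i\|\bm{\Delta}_i\|_\infty\big)$, then bound the transported sum of $\mathsf{Var}_{\bm{P}}(\bm{V}^\star)$ by a Bellman-based telescoping whose surviving term involves $\bm{Q}^\star\circ\bm{r}$ and is therefore only $O\big(\tfrac{1}{1-\gamma}\big)$ entrywise because $\bm{r}\le\bm{1}$. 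The first reduction is fine in substance (the paper bounds the difference $\mathsf{Var}_{\bm{P}}(\bm{V}_i)-\mathsf{Var}_{\bm{P}}(\bm{V}^\star)$ directly and so keeps coefficient $1$ on the variance, whereas your $2\,\mathsf{Var}_{\bm{P}}(\bm{V}^\star)$ would double the leading constant and miss the stated factor $4$ — a cosmetic issue).

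The genuine gap is in the telescoping step. You assert $\sum_h\gamma^h\prod_k\bm{P}^{\pi_{i_k}}\mathsf{Var}_{\bm{P}}(\bm{V}^\star)\le\frac{C}{(1-\gamma)^2}\bm{1}$ with no $\|\bm{\Delta}_i\|_\infty$ dependence, and claim the policy inhomogeneity is harmless ``because at each step we only use $\bm{V}^\star\ge\gamma\bm{P}_\pi\bm{V}^\star$, which holds for every $\pi$.'' That inequality is true but points the wrong way for what the telescoping needs. Writing $\mathsf{Var}_{\bm{P}}(\bm{V}^\star)=\bm{P}(\bm{V}^\star\circ\bm{V}^\star)-(\bm{P}\bm{V}^\star)\circ(\bm{P}\bm{V}^\star)$, the subtracted term converts via $\gamma\bm{P}\bm{V}^\star=\bm{Q}^\star-\bm{r}$ into $-\frac{1}{\gamma^2}\bm{Q}^\star\circ\bm{Q}^\star+\frac{2}{\gamma^2}\bm{Q}^\star\circ\bm{r}-\cdots$, but the positive term equals $\bm{P}^{\pi^\star}(\bm{Q}^\star\circ\bm{Q}^\star)$, and to cancel against the chain $\prod_k\gamma\bm{P}^{\pi_{i_k}}$ you must replace it by $\bm{P}^{\pi_{i_{h+1}}}(\bm{Q}^\star\circ\bm{Q}^\star)$. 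Since $V^\star(s)^2=\max_a Q^\star(s,a)^2$ (as $\bm{Q}^\star\ge\bm{0}$), this replacement makes the quantity \emph{smaller}, so it is an upper bound only after paying the gap $\big\|\bm{P}^{\pi^\star}(\bm{Q}^\star\circ\bm{Q}^\star)-\bm{P}^{\pi_{i_{h+1}}}(\bm{Q}^\star\circ\bm{Q}^\star)\big\|_\infty\le\frac{4}{1-\gamma}\max_i\|\bm{\Delta}_i\|_\infty$, which follows from $\pi_{i_{h+1}}$ being greedy w.r.t.\ $\bm{Q}_{i_{h+1}}$ — this is exactly why the paper's intermediate bound carries an extra $\frac{4}{(1-\gamma)^2}\max_i\|\bm{\Delta}_i\|_\infty$ term. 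The lemma still holds because that extra term is absorbed into the right-hand side, but your argument as written does not establish it. Relatedly, your accounting of the saved horizon factor is inconsistent (``what remains is bounded by $\frac{1}{(1-\gamma)^3}$; combined with the extra $\frac{1}{1-\gamma}$ saved by the telescoping one reaches $\frac{1}{(1-\gamma)^2}$''): the correct statement is that the non-telescoped remainder is $\frac{2}{\gamma^2}\bm{Q}^\star\circ\bm{r}$ with $\|\bm{Q}^\star\circ\bm{r}\|_\infty\le\frac{1}{1-\gamma}$, which after multiplying by $\sum_h\gamma^h\le\frac{1}{1-\gamma}$ gives $\frac{2}{\gamma^2(1-\gamma)^2}$, while the telescoped boundary terms contribute $\frac{2}{\gamma}\|\bm{Q}^\star\|_\infty^2\le\frac{2}{\gamma(1-\gamma)^2}$.
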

\begin{proof} This lemma, which is inspired by but significantly more complicated than \citet[Lemma 8]{azar2013minimax},
plays a key role in shaving one $\frac{1}{1-\gamma}$ factor. 
See Section~\ref{subsec:Proof-of-eq:inf-alpha-claim} for the proof.\end{proof}
Therefore, the above result directly implies that 
\begin{align}
|\bm{\beta}_{1}|^{2} 
& \leq 
\frac{320\big(\log^{4}T\big)\big(\log\frac{|\mathcal{S}||\mathcal{A}|T}{\delta}\big)}{\gamma^2 (1-\gamma)^{4}T}\Big(1+2\max_{\frac{t}{2}\le i<t}\|\bm{\Delta}_{i}\|_{\infty}\Big)\bm{1}.\label{eq:alpha-1-UB}
\end{align}

\paragraph{Step 6: putting all this together.}
Substituting the preceding bounds for $\bm{\beta}_{1}$ and $\bm{\beta}_{2}$
into \eqref{eq:Delta-UB-temp-inf}, we can demonstrate that: with
probability at least $1-\delta$, 
\begin{align}
\bm{\Delta}_{t} & \leq \frac{1}{(1-\gamma)T}\bm{1}+\sqrt{\frac{320\big(\log^{4}T\big)\big(\log\frac{|\mathcal{S}||\mathcal{A}|T}{\delta}\big)}{\gamma^2 (1-\gamma)^{4}T}\Big(1+2\max_{\frac{t}{2}\le i<t}\|\bm{\Delta}_{i}\|_{\infty}\Big)}\ \bm{1}\nonumber \\
 & \leq 30 \sqrt{\frac{\big(\log^{4}T\big)\big(\log\frac{|\mathcal{S}||\mathcal{A}|T}{\delta}\big)}{\gamma^2 (1-\gamma)^{4}T}\Big(1+\max_{\frac{t}{2}\le i<t}\|\bm{\Delta}_{i}\|_{\infty}\Big)}\ \bm{1}\label{eq:Delta-UB-final}
\end{align}
holds simultaneously for all $t\ge\frac{T}{c_{2}\log T}$,
where the second line is valid since $\frac{1}{(1-\gamma)T}\leq \sqrt{\frac{(\log^{4}T)\big(\log\frac{|\mathcal{S}||\mathcal{A}|T}{\delta}\big)}{(1-\gamma)^{4}T}}$
under our sample size condition \eqref{eq:T-bound-infinite}.

\subsection{Proof of Lemma~\ref{lem:inf-lower-bound} }
\label{sec:proof_inf_lower_bound}

Next, we move forward to develop an lower bound on $\bm{\Delta}_{t}$,
which can be accomplished in an analogous manner as for the above
upper bound. Applying a similar argument for (\ref{eq:Delta-UB-temp-inf})
(except that we need to replace $\pi_{i}$ with $\pi^{\star}$), one
can deduce that
\begin{align}
\bm{\Delta}_{t} & \ge-\max_{\left(i_{1},\cdots,i_{H}\right)\in\mathcal{I}_{t}}\left\{\sum_{h=0}^{H-1}\gamma^{h}\prod_{k=1}^{h}\bm{P}^{\pi^{\star}}\sqrt{\bm{\varphi}_{t}}+\gamma^{H}\prod_{k=1}^{H}\bm{P}^{\pi^{\star}}\big|\bm{\Delta}_{i_{H}}\big|\right\}\label{eq:Delta-t-lower-bound-inf}
\end{align}
for any $t\ge\frac{c_{2}T}{\log\frac{1}{1-\gamma}}$. It is straightforward
to bound the second term on the right-hand side of \eqref{eq:Delta-t-lower-bound-inf}
as
\[
\gamma^{H}\prod_{1\leq k\leq H}\bm{P}^{\pi^{\star}}\big|\bm{\Delta}_{i_{H}}\big|\le\gamma^{H}\bigg\|\prod_{1\leq k\leq H}\bm{P}^{\pi^{\star}}\bigg\|_{1}\|\bm{\Delta}_{i_{H}}\|_{\infty}\bm{1}\le\frac{1}{(1-\gamma)T}\bm{1},
\]
where the second inequality makes use of \eqref{eq:Delta-t-range-infinite}
as well as the fact that $\prod_{k}\bm{P}^{\pi^{\star}}$ is a probability
transition matrix (so that $\|\prod_{k}\bm{P}^{\pi^{\star}}\big\|_{1}=1$).
As for the first term on the right-hand side of \eqref{eq:Delta-t-lower-bound-inf},
we can invoke a similar argument for (\ref{eq:alpha-1-UB}) to obtain
\begin{align*}
\bigg|\sum_{h=0}^{H-1}\gamma^{h}\prod_{k=1}^{h}\bm{P}^{\pi^{\star}}\sqrt{\bm{\varphi}_{t}}\bigg|^{2} & 
\leq 320\frac{\big(\log^{4}T\big)\big(\log\frac{|\mathcal{S}||\mathcal{A}|T}{\delta}\big)}{\gamma^2(1-\gamma)^{4}T}\bigg(1+2\max_{\frac{t}{2}\le i<t}\|\bm{\Delta}_{i}\|_{\infty}\bigg)\bm{1}.
\end{align*}
Taking these two bounds together, we see that with probability at
least $1-\delta$, 
\begin{equation}
\bm{\Delta}_{t}\geq -30 \sqrt{\frac{\big(\log^{4}T\big)\big(\log\frac{|\mathcal{S}||\mathcal{A}|T}{\delta}\big)}{\gamma^2 (1-\gamma)^{4}T}\bigg(1+\max_{\frac{t}{2}\le i<t}\|\bm{\Delta}_{i}\|_{\infty}\bigg)}\ \bm{1}\label{eq:Delta-LB-final}
\end{equation}
holds simultaneously for all $t\ge\frac{T}{c_{2}\log T}$. 


 \subsection{Solving the recurrence relation regarding $\bm{\Delta}_{t}$}

\label{sec:combine_ub_lb}

 Recall from \eqref{eq:Delta-t-inf-recursion-inf-outline} that with probability exceeding $1-2\delta$, 
 the following recurrence relation  
 \begin{align}
 \|\bm{\Delta}_{t}\|_{\infty} & \leq 30 \sqrt{\frac{\big(\log^{4}T\big)\big(\log\frac{|\mathcal{S}||\mathcal{A}|T}{\delta}\big)}{\gamma^2(1-\gamma)^{4}T}\bigg(1+\max_{\frac{t}{2}\le i<t}\|\bm{\Delta}_{i}\|_{\infty}\bigg)}\qquad\text{for all}\qquad t\ge\frac{T}{c_{2}\log T}\label{eq:Delta-t-inf-recursion-inf}
 \end{align}
 holds, which plays a crucial role
 in establishing the desired estimation error bound. Specifically,
 for any $k\ge0$, let us define 
 \begin{equation}
 u_{k}:=\max\left\{ \|\bm{\Delta}_{t}\|_{\infty}\ \Big|\ 2^{k}\frac{T}{c_{2}\log T}\leq t\leq T\right\} .\label{eq:uk-definition-inf}
 \end{equation}
 To bound this sequence, we first obtain a crude bound as a result
 of \eqref{eq:Delta-t-range-infinite}: 
 \begin{equation}
	 u_{0}\le\frac{1}{1-\gamma}. \label{eq:u0-trivial-UB}
 \end{equation}
 Next, it is directly seen from \eqref{eq:Delta-t-inf-recursion-inf}
 and the definition of $u_{k}$ that
 \begin{equation}
 u_{k}\leq c_6\sqrt{\frac{\big(\log^{4}T\big)\big(\log\frac{|\mathcal{S}||\mathcal{A}|T}{\delta}\big)}{\gamma^2 (1-\gamma)^{4}T}\left(1+u_{k-1}\right)},\qquad k\geq1\label{eq:uk-recursion-inf}
 \end{equation}
 for some constant $c_{6}=20/\gamma>0$. In order to analyze the size of $u_{k}$,
 we divide into two cases. 
 \begin{itemize}
 \item If $u_{k}\leq1$ for some $k\geq1$, then \eqref{eq:uk-recursion-inf}
 tells us that
 \[
 u_{k+1}\leq c_{6}\sqrt{\frac{\big(\log^{4}T\big)\big(\log\frac{|\mathcal{S}||\mathcal{A}|T}{\delta}\big)}{\gamma^2(1-\gamma)^{4}T}\left(1+u_{k}\right)}\leq c_{6}\sqrt{\frac{2\big(\log^{4}T\big)\big(\log\frac{|\mathcal{S}||\mathcal{A}|T}{\delta}\big)}{\gamma^2(1-\gamma)^{4}T}}\leq1,
 \]
 as long as $T\geq\frac{2c_{6}^{2}\log^{4}T\log\frac{|\mathcal{S}||\mathcal{A}|T}{\delta}}{\gamma^2(1-\gamma)^{4}}$.
 In other words, once $u_{k-1}$ drops below $1$, then all subsequent
 quantities will remain bounded above by 1, namely, $\max_{j:j\geq k}u_{j}\leq1$.
 As a result, 
 \[
 u_{j}\leq c_{6}\sqrt{\frac{\big(\log^{4}T\big)\big(\log\frac{|\mathcal{S}||\mathcal{A}|T}{\delta}\big)}{\gamma^2(1-\gamma)^{4}T}\left(1+u_{j-1}\right)}
		 \leq c_{6}\sqrt{\frac{2\big(\log^{4}T\big)\big(\log\frac{|\mathcal{S}||\mathcal{A}|T}{\delta}\big)}{\gamma^2(1-\gamma)^{4}T}}\qquad\text{for all }j>k.
 \]

 \item Instead, suppose that $u_{j}>1$ for all $0\leq j\leq k$. Then it
 is seen from \eqref{eq:uk-recursion-inf} that
 \[
 u_{j+1}\leq c_{6}\sqrt{\frac{\big(\log^{4}T\big)\big(\log\frac{|\mathcal{S}||\mathcal{A}|T}{\delta}\big)}{\gamma^2(1-\gamma)^{4}T}\left(1+u_{j}\right)}\leq c_{6}\sqrt{\frac{2\big(\log^{4}T\big)\big(\log\frac{|\mathcal{S}||\mathcal{A}|T}{\delta}\big)}{\gamma^2(1-\gamma)^{4}T}u_{j}}\qquad\text{for all }j\leq k.
 \]
 This is equivalent to saying that
 \[
 \log u_{j+1}\leq\log\alpha_{u}+\frac{1}{2}\log u_{j}\qquad\text{for all }j\leq k,
 \]
 where $\alpha_{u}=c_{6}\sqrt{\frac{2\big(\log^{4}T\big)\big(\log\frac{|\mathcal{S}||\mathcal{A}|T}{\delta}\big)}{\gamma^2(1-\gamma)^{4}T}}.$
 Invoking a standard analysis strategy for this type of recursive relations
 yields
 \[
	 \log u_{j+1} - 2\log\alpha_{u} \leq  \frac{1}{2} \left(\log u_{j}-2\log\alpha_{u}\right)\qquad\text{for all }j\leq k,
 \]
 and hence
 \[
 \log u_{j+1}\leq2\log\alpha_{u}+\left(\frac{1}{2}\right)^{j+1}\left(\log u_{0}-2\log\alpha_{u}\right)\qquad\text{for all }j\leq k.
 \]
 This is equivalent to saying that
 \[
 u_{j}\leq\alpha_{u}^{2}\left(\frac{u_{0}}{\alpha_{u}^{2}}\right)^{1/2^{j}}=\left(\alpha_{u}^{2}\right)^{1-1/2^{j}}\left(u_{0}\right)^{1/2^{j}}\qquad\text{for all }j\leq k+1.
 \]
 \end{itemize}
 Putting the above two cases together and using \eqref{eq:u0-trivial-UB}, we conclude that
 \begin{align*}
 u_{k} & \leq\sqrt{\frac{2c_{6}^{2}\big(\log^{4}T\big)\big(\log\frac{|\mathcal{S}||\mathcal{A}|T}{\delta}\big)}{\gamma^2(1-\gamma)^{4}T}}+\Big(\frac{2c_{6}^{2}\big(\log^{4}T\big)\big(\log\frac{|\mathcal{S}||\mathcal{A}|T}{\delta}\big)}{\gamma^2(1-\gamma)^{4}T}\Big)^{1-1/2^{k}}u_{0}^{1/2^{k}}\\
  & \leq\sqrt{\frac{2c_{6}^{2}\big(\log^{4}T\big)\big(\log\frac{|\mathcal{S}||\mathcal{A}|T}{\delta}\big)}{\gamma^2(1-\gamma)^{4}T}}+\Big(\frac{2c_{6}^{2}\big(\log^{4}T\big)\big(\log\frac{|\mathcal{S}||\mathcal{A}|T}{\delta}\big)}{\gamma^2(1-\gamma)^{4}T}\Big)^{1-1/2^{k}}\left(\frac{1}{1-\gamma}\right)^{1/2^{k}},\qquad k\geq1.
 \end{align*}
 In particular, as long as $k\geq c_7 \log\log\frac{1}{1-\gamma}$ for some constant $c_7>0$, one
 has $\big(\frac{1}{1-\gamma}\big)^{1/2^{k}} \leq O(1)$ and
 \[
 \Big(\frac{2c_{6}^{2}\big(\log^{4}T\big)\big(\log\frac{|\mathcal{S}||\mathcal{A}|T}{\delta}\big)}{(1-\gamma)^{4}T}\Big)^{1-1/2^{k}}\le\max\left\{ \sqrt{\frac{2c_{6}^{2}\big(\log^{4}T\big)\big(\log\frac{|\mathcal{S}||\mathcal{A}|T}{\delta}\big)}{\gamma^2(1-\gamma)^{4}T}},\frac{2c_{6}^{2}\big(\log^{4}T\big)\big(\log\frac{|\mathcal{S}||\mathcal{A}|T}{\delta}\big)}{\gamma^2(1-\gamma)^{4}T}\right\} .
 \]
 As a result, the above bound simplifies to
 \begin{align*}
	 u_{k} & \leq c_8 \bigg( \sqrt{\frac{\big(\log^{4}T\big)\big(\log\frac{|\mathcal{S}||\mathcal{A}|T}{\delta}\big)}{\gamma^2(1-\gamma)^{4}T}}+\frac{\big(\log^{4}T\big)\big(\log\frac{|\mathcal{S}||\mathcal{A}|T}{\delta}\big)}{\gamma^2(1-\gamma)^{4}T}\bigg),
	 \qquad k\geq c_7 \log\log\frac{1}{1-\gamma} 
 \end{align*}
 for some  constant $c_8>0$. 

 Consequently, taking $t=T$ and choosing $k= c_7\log\log\frac{1}{1-\gamma}$
 for some appropriate constant $c_7>0$ (so as to ensure $2^{k}\frac{T}{c_{2}\log T}<T$),
 we immediately see from the definition \eqref{eq:uk-definition-inf}
 of $u_{k}$ that 
 \begin{equation}
	 \|\bm{\Delta}_{T}\|_{\infty}\leq c_8 \bigg( \sqrt{\frac{\big(\log^{4}T\big)\big(\log\frac{|\mathcal{S}||\mathcal{A}|T}{\delta}\big)}{\gamma^2(1-\gamma)^{4}T}}+\frac{\big(\log^{4}T\big)\big(\log\frac{|\mathcal{S}||\mathcal{A}|T}{\delta}\big)}{\gamma^2(1-\gamma)^{4}T} \bigg) .\label{eq:Delta-T-UB-final-inf}
 \end{equation}
 with probability at least $1-2\delta$. To finish up, we note that
 the sample size assumption (\ref{eq:sample-size-condition-complete-inf})
 is equivalent to
 \[
 \frac{\big(\log^{4}T\big)\big(\log\frac{|\mathcal{S}||\mathcal{A}|T}{\delta}\big)}{\gamma^2(1-\gamma)^{4}T}\le\frac{\min\{\varepsilon^{2},\varepsilon\}}{c_{3}}.
 \]
 When $c_{3}>0$ is sufficiently large, substituting this relation
 into \eqref{eq:Delta-T-UB-final-inf} gives
 \begin{align*}
 \|\bm{\Delta}_{T}\|_{\infty} & \leq\frac{1}{2}\sqrt{\min\{\varepsilon^{2},\varepsilon\}}+\frac{1}{2}\min\{\varepsilon^{2},\varepsilon\}=\begin{cases}
 \frac{1}{2}\varepsilon+\frac{1}{2}\varepsilon^{2} & \text{if }\varepsilon\leq1\\
 \frac{1}{2}\sqrt{\varepsilon}+\frac{1}{2}\varepsilon\quad & \text{if }\varepsilon>1
 \end{cases}\\
  & \leq\varepsilon
 \end{align*}
 as claimed in Theorem \ref{thm:infinite-horizon}. 

\subsection{Proof of Lemma \ref{lemma:inf-alpha-claim}}

\label{subsec:Proof-of-eq:inf-alpha-claim}

We first claim that
\begin{equation}
\max_{\frac{t}{2}\le i<t}\mathsf{Var}_{\bm{P}}(\bm{V}_{i})-\mathsf{Var}_{\bm{P}}(\bm{V}^{\star})\le\frac{4}{1-\gamma}\max_{\frac{t}{2}\le i<t}\|\bm{\Delta}_{i}\|_{\infty}\bm{1}.\label{eq:Var-P-Var-star-gap-inf}
\end{equation}
If this claim were valid (which we shall justify towards the end of
this subsection), then it would lead to
\begin{equation}
\sum_{h=0}^{H-1}\gamma^{h}\prod_{k=1}^{h}\bm{P}^{\pi_{i_{k}}}\max_{\frac{t}{2}\le i<t}\mathsf{Var}_{\bm{P}}(\bm{V}_{i})\le\sum_{h=0}^{H-1}\gamma^{h}\prod_{k=1}^{h}\bm{P}^{\pi_{i_{k}}}\mathsf{Var}_{\bm{P}}(\bm{V}^{\star})+\frac{4}{(1-\gamma)^{2}}\max_{\frac{t}{2}\le i<t}\|\bm{\Delta}_{i}\|_{\infty}\bm{1}.\label{eq:mixture-gamma-P-Var-V-inf}
\end{equation}

It then boils down to bounding the first term on the right-hand side
of \eqref{eq:mixture-gamma-P-Var-V-inf}. 
Let us first upper bound the variance term involving $\bm{V}^{\star}$.
For any $0\leq h<H$, one can express (see \eqref{eq:defn-Var-P-V})
\begin{align}
\mathsf{Var}_{\bm{P}}(\bm{V}^{\star}) & =\bm{P}(\bm{V}^{\star}\circ\bm{V}^{\star})-(\bm{P}\bm{V}^{\star})\circ(\bm{P}\bm{V}^{\star})\nonumber\\
 & \overset{(\mathrm{i})}{=}\bm{P}^{\pi_{i_{h+1}}}(\bm{Q}^{\star}\circ\bm{Q}^{\star})+\bm{P}(\bm{V}^{\star}\circ\bm{V}^{\star})-\bm{P}^{\pi_{i_{h+1}}}(\bm{Q}^{\star}\circ\bm{Q}^{\star})-\frac{1}{\gamma^{2}}(\bm{Q}^{\star}-\bm{r})\circ(\bm{Q}^{\star}-\bm{r})\nonumber\\
 & =\bm{P}^{\pi_{i_{h+1}}}(\bm{Q}^{\star}\circ\bm{Q}^{\star})+\bm{P}^{\pi^{\star}}(\bm{Q}^{\star}\circ\bm{Q}^{\star})-\bm{P}^{\pi_{i_{h+1}}}(\bm{Q}^{\star}\circ\bm{Q}^{\star})-\frac{1}{\gamma^{2}}(\bm{Q}^{\star}-\bm{r})\circ(\bm{Q}^{\star}-\bm{r})\nonumber\\
 & \leq\bm{P}^{\pi_{i_{h+1}}}(\bm{Q}^{\star}\circ\bm{Q}^{\star})+\big\|\bm{P}^{\pi^{\star}}(\bm{Q}^{\star}\circ\bm{Q}^{\star})-\bm{P}^{\pi_{i_{h+1}}}(\bm{Q}^{\star}\circ\bm{Q}^{\star})\big\|_{\infty}\bm{1}-\frac{1}{\gamma^{2}}(\bm{Q}^{\star}-\bm{r})\circ(\bm{Q}^{\star}-\bm{r})\nonumber\\
 & \overset{(\mathrm{ii})}{\leq}\bm{P}^{\pi_{i_{h+1}}}(\bm{Q}^{\star}\circ\bm{Q}^{\star})+\frac{4}{1-\gamma}\max_{\frac{t}{2}\le i<t}\|\bm{\Delta}_{i}\|_{\infty}\bm{1}-\frac{1}{\gamma^{2}}(\bm{Q}^{\star}-\bm{r})\circ(\bm{Q}^{\star}-\bm{r})\nonumber\\
 & =\frac{1}{\gamma^{2}}\big(\gamma^{2}\bm{P}^{\pi_{i_{h+1}}}(\bm{Q}^{\star}\circ\bm{Q}^{\star})-\bm{Q}^{\star}\circ\bm{Q}^{\star}\big)+\frac{4}{1-\gamma}\max_{\frac{t}{2}\le i<t}\|\bm{\Delta}_{i}\|_{\infty}\bm{1}-\frac{1}{\gamma^{2}}\bm{r}\circ\bm{r}+\frac{2}{\gamma^{2}}\bm{Q}^{\star}\circ\bm{r}\nonumber\\
 & \overset{(\mathrm{iii})}{\leq}\frac{1}{\gamma}\big(\gamma\bm{P}^{\pi_{i_{h+1}}}(\bm{Q}^{\star}\circ\bm{Q}^{\star})-\bm{Q}^{\star}\circ\bm{Q}^{\star}\big)+\frac{2}{\gamma^{2}}\bm{Q}^{\star}\circ\bm{r}+\frac{4}{1-\gamma}\max_{\frac{t}{2}\le i<t}\|\bm{\Delta}_{i}\|_{\infty}\bm{1},\label{eq:var-V-UB}
\end{align}
where (i) relies on the identity $\bm{Q}^{\star}=\bm{r}+\gamma\bm{P}\bm{V}^{\star}$,
and (iii) holds since $0<\gamma<1$. To justify (ii), we make the
following observation:
\begin{align*}
 & \big\|\bm{P}^{\pi_{i_{h+1}}}\big(\bm{Q}^{\star}\circ\bm{Q}^{\star}\big)-\bm{P}^{\pi^{\star}}\big(\bm{Q}^{\star}\circ\bm{Q}^{\star}\big)\big\|_{\infty}=\big\|\bm{P}\bm{\Pi}^{\pi_{i_{h+1}}}\big(\bm{Q}^{\star}\circ\bm{Q}^{\star}\big)-\bm{P}\bm{\Pi}^{\pi^{\star}}\big(\bm{Q}^{\star}\circ\bm{Q}^{\star}\big)\big\|_{\infty}\\
 & \quad\overset{(\mathrm{iv})}{\leq}\big\|\bm{\Pi}^{\pi_{i_{h+1}}}\big(\bm{Q}^{\star}\circ\bm{Q}^{\star}\big)-\bm{\Pi}^{\pi^{\star}}\big(\bm{Q}^{\star}\circ\bm{Q}^{\star}\big)\big\|_{\infty}\\
 & \quad=\big\|\big(\bm{\Pi}^{\pi_{i_{h+1}}}\bm{Q}^{\star}-\bm{\Pi}^{\pi^{\star}}\bm{Q}^{\star}\big)\circ\big(\bm{\Pi}^{\pi_{i_{h+1}}}\bm{Q}^{\star}+\bm{\Pi}^{\pi^{\star}}\bm{Q}^{\star}\big)\big\|_{\infty}\\
 & \quad\overset{(\mathrm{v})}{\leq}\frac{2}{1-\gamma}\big\|\bm{\Pi}^{\pi_{i_{h+1}}}\bm{Q}^{\star}-\bm{\Pi}^{\pi^{\star}}\bm{Q}^{\star}\big\|_{\infty}\\
 & \quad\le\frac{2}{1-\gamma}\Big(\big\|\bm{\Pi}^{\pi_{i_{h+1}}}\bm{Q}^{\star}-\bm{\Pi}^{\pi_{i_{h+1}}}\bm{Q}_{i_{h+1}}\big\|_{\infty}+\big\|\bm{\Pi}^{\pi_{i_{h+1}}}\bm{Q}_{i_{h+1}}-\bm{V}^{\star}\big\|_{\infty}\Big)\\
	& \quad \overset{\mathrm{(vi)}}{\leq} \frac{2}{1-\gamma}\Big(\big\|\bm{Q}^{\star}-\bm{Q}_{i_{h+1}}\big\|_{\infty}+\big\|\bm{V}_{i_{h+1}}-\bm{V}^{\star}\big\|_{\infty}\Big)\\
 & \quad\overset{\mathrm{(vii)}}{\leq}\frac{4}{1-\gamma}\max_{\frac{t}{2}\le i<t}\|\bm{\Delta}_{i}\|_{\infty},
\end{align*}
where (iv) arises from the fact $\|\bm{P}\bm{z}\|_{\infty}\leq\|\bm{P}\|_{1}\|\bm{z}\|_{\infty}=\|\bm{z}\|_{\infty}$, 
(v) is valid because $\|\bm{Q}^{\star}\|_{\infty}\leq  1/(1-\gamma)$,  (vi) follows from the fact that $\bm{V}_{i_{h+1}}=\bm{\Pi}^{\pi_{i_{h+1}}}\bm{Q}_{i_{h+1}}$,
and (vi) holds since $\|\bm{V}_{i_{h+1}}-\bm{V}^{\star}\|_{\infty}\leq \|\bm{Q}_{i_{h+1}}-\bm{Q}^{\star}\|_{\infty}$.  

As it turns out, the first term in \eqref{eq:var-V-UB} allows one to build a telescoping sum. Specifically, invoking (\ref{eq:var-V-UB}) allows one to bound
\begin{align}
\sum_{h=0}^{H-1}\prod_{k=1}^{h}\gamma\bm{P}^{\pi_{i_{k}}}\mathsf{Var}_{\bm{P}}(\bm{V}^{\star}) & \leq\frac{1}{\gamma}\sum_{h=0}^{H-1}\prod_{k=1}^{h}\gamma\bm{P}^{\pi_{i_{k}}}\big(\gamma\bm{P}^{\pi_{i_{h+1}}}(\bm{Q}^{\star}\circ\bm{Q}^{\star})-\bm{Q}^{\star}\circ\bm{Q}^{\star}\big) \nonumber \\
 & \qquad\qquad+\frac{4}{1-\gamma}\max_{\frac{t}{2}\le i<t}\|\bm{\Delta}_{i}\|_{\infty}\sum_{h=0}^{H-1}\gamma^{h}\prod_{k=1}^{h}\bm{P}^{\pi_{i_{k}}}\bm{1}+\frac{2}{\gamma^{2}}\sum_{h=0}^{H-1}\gamma^{h}\prod_{k=1}^{h}\bm{P}^{\pi_{i_{k}}}\left(\bm{Q}^{\star}\circ\bm{r}\right) \nonumber \\
 & \overset{(\mathrm{i})}{=}\frac{1}{\gamma}\Big(\sum_{h=0}^{H-1}\prod_{k=1}^{h+1}\gamma\bm{P}^{\pi_{i_{k}}}-\sum_{h=0}^{H-1}\prod_{k=1}^{h}\gamma\bm{P}^{\pi_{i_{k}}}\Big)\left(\bm{Q}^{\star}\circ\bm{Q}^{\star}\right) \nonumber \\
 & \qquad\qquad+\frac{4}{1-\gamma}\max_{\frac{t}{2}\le i<t}\|\bm{\Delta}_{i}\|_{\infty}\sum_{h=0}^{H-1}\gamma^{h}\bm{1}+\frac{2}{\gamma^{2}}\sum_{h=0}^{H-1}\gamma^{h}\prod_{k=1}^{h}\bm{P}^{\pi_{i_{k}}}\left(\bm{Q}^{\star}\circ\bm{r}\right) \nonumber \\
 & \leq\frac{1}{\gamma}\bigg(\prod_{k=1}^{H}\gamma\bm{P}^{\pi_{i_{k}}}-\bm{I}\bigg)\left(\bm{Q}^{\star}\circ\bm{Q}^{\star}\right)+\frac{4}{(1-\gamma)^{2}}\max_{\frac{t}{2}\le i<t}\|\bm{\Delta}_{i}\|_{\infty}\bm{1} \nonumber \\
 & \qquad\qquad+\frac{2}{\gamma^{2}}\sum_{h=0}^{H-1}\gamma^{h}\prod_{k=1}^{h}\bm{P}^{\pi_{i_{k}}}\left(\bm{Q}^{\star}\circ\bm{r}\right) \nonumber \\
 & \overset{(\mathrm{ii})}{\leq}\bigg(\frac{2}{\gamma}\|\bm{Q}^{\star}\|_{\infty}^{2}+\frac{4}{(1-\gamma)^{2}}\max_{\frac{t}{2}\le i<t}\|\bm{\Delta}_{i}\|_{\infty}+\frac{2}{\gamma^{2}}\frac{1}{1-\gamma}\|\bm{Q}^{\star}\|_{\infty}\|\bm{r}\|_{\infty}\bigg)\bm{1} \nonumber \\
 & \overset{(\mathrm{iii})}{\leq}\frac{1}{(1-\gamma)^{2}}\bigg(\frac{2}{\gamma}+4\max_{\frac{t}{2}\le i<t}\|\bm{\Delta}_{i}\|_{\infty}+\frac{2}{\gamma^{2}}\bigg)\bm{1} \nonumber \\
 & \le\frac{1}{(1-\gamma)^{2}}\bigg(\frac{4}{\gamma^{2}}+4\max_{\frac{t}{2}\le i<t}\|\bm{\Delta}_{i}\|_{\infty}\bigg)\bm{1}. \label{eq:sum-prod-P-var-UB}
\end{align}
Here, (i) comes from the identity $\prod_{k=1}^{h}\bm{P}^{\pi_{i_{k}}}\bm{1}=\bm{1}$;
(ii) holds because each row of $\prod_{k=1}^{h}\bm{P}^{\pi_{i_{k}}}$
has unit $\|\cdot\|_{1}$ norm for any $h$; (iii) arises from the
bound $\|\bm{Q}^{\star}\|_{\infty}\leq1/(1-\gamma)$. This completes
the proof, as long as the claim \eqref{eq:Var-P-Var-star-gap-inf}
can be justified.

\paragraph{Proof of the inequality \eqref{eq:Var-P-Var-star-gap-inf}.}
To validate this result, we make the observation that
\begin{align*}
\mathsf{Var}_{\bm{P}}(\bm{V}_{i})-\mathsf{Var}_{\bm{P}}(\bm{V}^{\star}) & =\big[\bm{P}(\bm{V}_{i}\circ\bm{V}_{i})-(\bm{P}\bm{V}_{i})\circ(\bm{P}\bm{V}_{i})\big]-\big[\bm{P}(\bm{V}^{\star}\circ\bm{V}^{\star})-(\bm{P}\bm{V}^{\star})\circ(\bm{P}\bm{V}^{\star})\big]\\
 & =\bm{P}(\bm{V}_{i}\circ\bm{V}_{i}-\bm{V}^{\star}\circ\bm{V}^{\star})+(\bm{P}\bm{V}^{\star})\circ(\bm{P}\bm{V}^{\star})-(\bm{P}\bm{V}_{i})\circ(\bm{P}\bm{V}_{i})\\
	& =\bm{P}\big((\bm{V}_{i}-\bm{V}^{\star})\circ(\bm{V}_{i}+\bm{V}^{\star}) \big)
	+(\bm{P}\bm{V}^{\star}-\bm{P}\bm{V}_{i})\circ(\bm{P}\bm{V}^{\star}+\bm{P}\bm{V}_{i})\\
	& \le\Big\{ \big\|\bm{P}\big( (\bm{V}_{i}-\bm{V}^{\star}) \circ(\bm{V}_{i}+\bm{V}^{\star}) \big)\big\|_{\infty}
	+ \big\|(\bm{P}\bm{V}^{\star}-\bm{P}\bm{V}_{i})\circ(\bm{P}\bm{V}^{\star}+\bm{P}\bm{V}_{i}) \big\|_{\infty}\Big\}\bm{1}\\
 & \leq\frac{4}{1-\gamma}\|\bm{\Delta}_{i}\|_{\infty}\bm{1}.
\end{align*}
Here, the last inequality follows since (by applying Lemma \ref{lemma:non-negativity-Qt-Vt})
\[
	\big\|\bm{P}\big( (\bm{V}_{i}-\bm{V}^{\star})\circ(\bm{V}_{i}+\bm{V}^{\star}) \big)\big\|_{\infty}\le\|\bm{P}\|_{1}\|\bm{V}_{i}-\bm{V}^{\star}\|_{\infty}\|\bm{V}_{i}+\bm{V}^{\star}\|_{\infty}\le\frac{2}{1-\gamma}\|\bm{\Delta}_{i}\|_{\infty},
\]
\[
\text{and}\quad 
\big\|(\bm{P}\bm{V}^{\star}-\bm{P}\bm{V}_{i})\circ(\bm{P}\bm{V}^{\star}+\bm{P}\bm{V}_{i}) \big\|_{\infty}\le\|\bm{P}\|_{1}\|\bm{V}_{i}-\bm{V}^{\star}\|_{\infty}\cdot\|\bm{P}\|_{1}\|\bm{V}_{i}+\bm{V}^{\star}\|_{\infty}\le\frac{2}{1-\gamma}\|\bm{\Delta}_{i}\|_{\infty}.
\]

\paragraph{A useful extension of Lemma~\ref{lemma:inf-alpha-claim}.}
Before concluding, we  make note of the following extension that proves useful for studying asynchronous Q-learning. 
\begin{lemma}
\label{lemma:inf-alpha-claim-async}
Suppose that $t\geq \frac{T}{c_2\log T}$. Then one has
\begin{equation}
	\sum_{h=0}^{H-1}\gamma^{h}\prod_{k=1}^{h}\bm{P}^{\widehat{\pi}_{k}}\max_{\frac{t}{2}\le i<t}\mathsf{Var}_{\bm{P}}(\bm{V}_{i})
	\leq \frac{4}{\gamma^2(1-\gamma)^{2}}\Big(1+2\max_{\frac{t}{2}\le i<t}\|\bm{\Delta}_{i}\|_{\infty}\Big)\bm{1} \label{eq:inf-alpha-claim}
\end{equation}
for any set of policies $\{\widehat{\pi}_{k}\}$ obeying $\{\widehat{\pi}_{k}\}\subseteq \Pi$.  Here, we define
\begin{eqnarray}
\label{defn:Pi-asyncQ-123}
\begin{aligned}
	\Pi  \defn\big\{\pi=[\pi(s)]_{s\in\cS}\, \big| \, \pi(s) \in\Pi_{s}, \forall s\in\cS\big\}, 
	\qquad \Pi_{s}  \defn\big\{ \pi_{i}(s) \mid i\in[t/2,t)\big\}. 
\end{aligned}
\end{eqnarray}
\end{lemma}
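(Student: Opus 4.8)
The plan is to follow the proof of Lemma~\ref{lemma:inf-alpha-claim} essentially verbatim, isolating the single place where the greedy structure of the iterates $\pi_{i_k}$ entered and re-deriving it for an arbitrary policy $\widehat{\pi}\in\Pi$. Recall that the proof of Lemma~\ref{lemma:inf-alpha-claim} rested on two ingredients: the policy-free inequality \eqref{eq:Var-P-Var-star-gap-inf} comparing $\mathsf{Var}_{\bm{P}}(\bm{V}_i)$ with $\mathsf{Var}_{\bm{P}}(\bm{V}^{\star})$, and the telescoping estimate \eqref{eq:sum-prod-P-var-UB} bounding $\sum_{h=0}^{H-1}\gamma^{h}\prod_{k=1}^{h}\bm{P}^{\pi_{i_k}}\mathsf{Var}_{\bm{P}}(\bm{V}^{\star})$. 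The former does not involve the policies and carries over unchanged. The latter uses only that each $\bm{P}^{\pi_{i_k}}$ is a stochastic matrix --- so $\prod_{k\le h}\bm{P}^{\pi_{i_k}}$ has unit row-wise $\ell_1$ norm and fixes $\bm{1}$ --- together with $\|\bm{Q}^{\star}\|_{\infty}\le 1/(1-\gamma)$ and the pointwise estimate \eqref{eq:var-V-UB}, whose derivation invoked the specific structure of $\pi_{i_{h+1}}$ exactly once, in establishing
\[
	\big\|\bm{P}^{\pi_{i_{h+1}}}(\bm{Q}^{\star}\circ\bm{Q}^{\star})-\bm{P}^{\pi^{\star}}(\bm{Q}^{\star}\circ\bm{Q}^{\star})\big\|_{\infty}\le\frac{4}{1-\gamma}\max_{\frac{t}{2}\le i<t}\|\bm{\Delta}_i\|_{\infty}.
\]
Hence the whole argument goes through for any $\{\widehat{\pi}_k\}\subseteq\Pi$ once we re-establish this last display with $\pi_{i_{h+1}}$ replaced by an arbitrary $\widehat{\pi}\in\Pi$.

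To this end, arguing exactly as in the original derivation (using $\|\bm{P}\bm{z}\|_{\infty}\le\|\bm{z}\|_{\infty}$, $\bm{\Pi}^{\pi^{\star}}\bm{Q}^{\star}=\bm{V}^{\star}$, and $\|\bm{Q}^{\star}\|_{\infty}\le 1/(1-\gamma)$),
\[
	\big\|\bm{P}^{\widehat{\pi}}(\bm{Q}^{\star}\circ\bm{Q}^{\star})-\bm{P}^{\pi^{\star}}(\bm{Q}^{\star}\circ\bm{Q}^{\star})\big\|_{\infty}
	\le\big\|\big(\bm{\Pi}^{\widehat{\pi}}\bm{Q}^{\star}-\bm{V}^{\star}\big)\circ\big(\bm{\Pi}^{\widehat{\pi}}\bm{Q}^{\star}+\bm{V}^{\star}\big)\big\|_{\infty}
	\le\frac{2}{1-\gamma}\big\|\bm{\Pi}^{\widehat{\pi}}\bm{Q}^{\star}-\bm{V}^{\star}\big\|_{\infty},
\]
so it suffices to show $\|\bm{\Pi}^{\widehat{\pi}}\bm{Q}^{\star}-\bm{V}^{\star}\|_{\infty}\le 2\max_{\frac{t}{2}\le i<t}\|\bm{\Delta}_i\|_{\infty}$. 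Fix a state $s$; by the definition of $\Pi_s$ in \eqref{defn:Pi-asyncQ-123} there is an index $i_s\in[t/2,t)$ with $\widehat{\pi}(s)=\pi_{i_s}(s)$. Using the identity $Q_{i_s}(s,\pi_{i_s}(s))=V_{i_s}(s)$ from \eqref{eq:Vt-Qt-pit-connection-infinite}, we decompose
\[
	V^{\star}(s)-Q^{\star}\big(s,\widehat{\pi}(s)\big)=\big[V^{\star}(s)-V_{i_s}(s)\big]+\big[Q_{i_s}\big(s,\pi_{i_s}(s)\big)-Q^{\star}\big(s,\pi_{i_s}(s)\big)\big],
\]
and since $|V^{\star}(s)-V_{i_s}(s)|=\big|\max_a Q^{\star}(s,a)-\max_a Q_{i_s}(s,a)\big|\le\|\bm{\Delta}_{i_s}\|_{\infty}$, the triangle inequality gives $|V^{\star}(s)-Q^{\star}(s,\widehat{\pi}(s))|\le 2\|\bm{\Delta}_{i_s}\|_{\infty}\le 2\max_{\frac{t}{2}\le i<t}\|\bm{\Delta}_i\|_{\infty}$. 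Maximizing over $s$ yields the claim, hence the displayed inequality above for every $\widehat{\pi}_k\in\Pi$.

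With this replacement in place, the pointwise bound \eqref{eq:var-V-UB} holds with $\pi_{i_{h+1}}$ replaced by $\widehat{\pi}_{h+1}$, the telescoping computation \eqref{eq:sum-prod-P-var-UB} goes through word for word with $\bm{P}^{\pi_{i_k}}$ replaced by $\bm{P}^{\widehat{\pi}_k}$ (it only uses $\prod_{k\le h}\bm{P}^{\widehat{\pi}_k}\bm{1}=\bm{1}$, that each row of $\prod_{k\le h}\bm{P}^{\widehat{\pi}_k}$ has unit $\ell_1$ norm, and $\|\bm{Q}^{\star}\|_{\infty}\le 1/(1-\gamma)$), and combining the resulting bound on $\sum_{h=0}^{H-1}\gamma^{h}\prod_{k=1}^{h}\bm{P}^{\widehat{\pi}_k}\mathsf{Var}_{\bm{P}}(\bm{V}^{\star})$ with \eqref{eq:Var-P-Var-star-gap-inf} and $\sum_{h<H}\gamma^{h}\le 1/(1-\gamma)$ (exactly as in \eqref{eq:mixture-gamma-P-Var-V-inf}) produces the advertised inequality of Lemma~\ref{lemma:inf-alpha-claim-async}. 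The only genuine obstacle is the ``mix-and-match'' nature of $\Pi$ --- $\widehat{\pi}$ need not be greedy with respect to any single $Q_i$ --- but since each coordinate of $\bm{\Pi}^{\widehat{\pi}}\bm{Q}^{\star}$ still arises from the greedy action of some iterate in the window $[t/2,t)$, the state-by-state decomposition above absorbs it at no loss.
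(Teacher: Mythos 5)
Your proof is correct and is essentially the argument the paper intends (the paper omits the proof of Lemma~\ref{lemma:inf-alpha-claim-async}, asserting it is identical to that of Lemma~\ref{lemma:inf-alpha-claim}). You correctly isolate the one policy-dependent step --- bounding $\|\bm{\Pi}^{\widehat{\pi}}\bm{Q}^{\star}-\bm{V}^{\star}\|_{\infty}$ --- and your state-by-state decomposition via $Q_{i_s}(s,\pi_{i_s}(s))=V_{i_s}(s)$ is exactly the per-coordinate localization of the paper's insertion of $\bm{\Pi}^{\pi_{i_{h+1}}}\bm{Q}_{i_{h+1}}=\bm{V}_{i_{h+1}}$ needed to handle the mix-and-match policies in $\Pi$, yielding the same constant $\frac{4}{1-\gamma}\max_{\frac{t}{2}\le i<t}\|\bm{\Delta}_i\|_{\infty}$.
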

The key difference between Lemma~\ref{lemma:inf-alpha-claim-async} and Lemma~\ref{lemma:inf-alpha-claim} is that: 
the components of $\widehat{\pi}_{k}$ corresponding to different states can be chosen in a separate manner. 
The proof follows from an identical argument as the above proof of Lemma~\ref{lemma:inf-alpha-claim}, and is hence omitted.

\section{Analysis for TD learning (Theorem~\ref{thm:policy-evaluation})}
\label{sec:TD-learning-analysis}

As it turns out, if $|\cA|=1$ (which reduces to the case of TD learning), we can further modify the previous analysis in Section~\ref{sec:Analysis:-infinite-horizon-MDPs} to yield an improved $\frac{1}{(1-\gamma)^3}$ scaling. This forms the main content of this section, which leads to the proof of Theorem~\ref{thm:policy-evaluation} for TD learning. Akin to the Q-learning case, we proceed to establish a more general version of Theorem~\ref{thm:policy-evaluation} that covers the full $\varepsilon$-range. This is formally stated below, which subsumes Theorem~\ref{thm:policy-evaluation} as a special case. 
\begin{theorem}
	\label{thm:policy-evaluation-general}
	Consider any $\gamma\in(0,1)$ and any $\varepsilon \in \big( 0, \frac{1}{1-\gamma} \big]$. Theorem~\ref{thm:policy-evaluation} continues to hold if 
	\begin{equation}
		T\ge \frac{c_{3}\big(\log^{3}T\big)\big(\log\frac{|\mathcal{S}|T}{\delta}\big)}{\gamma^2(1-\gamma)^{3}\min\{\varepsilon,\varepsilon^{2}\}}
		\label{eq:thm:sample-size-TD}
	\end{equation}
	for some sufficiently large universal constant $c_3>0$. 
\end{theorem}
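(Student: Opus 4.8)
The plan is to replay the Q-learning argument of Section~\ref{sec:Analysis:-infinite-horizon-MDPs}, exploiting the fact that with $|\cA|=1$ there is no max operator: one has $\bm{V}_t=\bm{Q}_t$, the greedy policy $\pi_t$ disappears, $\bm{P}^{\pi_t}=\bm{P}$ for every $t$, and the decomposition analogous to~\eqref{eq:iteration-infinite} becomes the \emph{exact} linear identity
\[
\bm{\Delta}_t=\bm{A}_t\bm{\Delta}_{t-1}+\bm{m}_t,\qquad \bm{A}_t\coloneqq(1-\eta_t)\bm{I}+\eta_t\gamma\bm{P},\quad \bm{m}_t\coloneqq\eta_t\gamma(\bm{P}_t-\bm{P})\bm{V}_{t-1},
\]
with $\bm{\Delta}_t\coloneqq\bm{V}_t-\bm{V}^\star$. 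Unrolling gives $\bm{\Delta}_t=\bm{G}_0^{(t)}\bm{\Delta}_0+\sum_{i=1}^t\bm{G}_i^{(t)}\bm{m}_i$ with $\bm{G}_i^{(t)}\coloneqq\prod_{j=i+1}^t\bm{A}_j$. The structural point is that every $\bm{A}_j$ --- hence every $\bm{G}_i^{(t)}$ --- is a \emph{deterministic} nonnegative matrix with $\|\bm{A}_j\|_1=1-\eta_j(1-\gamma)$, and that the $\bm{A}_j$'s all commute (being polynomials in $\bm{P}$); moreover $\{\bm{m}_i\}$ is a martingale-difference sequence whose conditional covariance is diagonal (samples are independent across states) with entries $\gamma^2\eta_i^2\mathsf{Var}_{\bm{P}}(\bm{V}_{i-1})$.

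First I would discard the initialization term: the lower bound on $\eta_j$ in~\eqref{eq:thm:eta-TD} gives $\|\bm{G}_0^{(t)}\|_1=\prod_{j=1}^t(1-\eta_j(1-\gamma))\le T^{-2}$ for every $t$ in the range of interest (using that $c_1c_2$ is small), so $\bm{G}_0^{(t)}\bm{\Delta}_0$ is negligible by the crude bound $\|\bm{\Delta}_t\|_\infty\le\frac1{1-\gamma}$ (the analogue of~\eqref{eq:Delta-t-range-infinite}). The crux is to control $\sum_i\bm{G}_i^{(t)}\bm{m}_i$ through Freedman's inequality (Theorem~\ref{thm:Freedman}) applied coordinatewise, followed by a union bound over $\cS$. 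For the conditional variance $\bm{W}_t$, whose $s$-th entry is $\gamma^2\sum_i\eta_i^2\sum_{s'}(\bm{G}_i^{(t)})_{s,s'}^2\,\mathsf{Var}_{\bm{P}}(\bm{V}_{i-1})(s')$, I would use: (i) every entry of $\bm{G}_i^{(t)}$ lies in $[0,1]$, so $\eta_i^2(\bm{G}_i^{(t)})_{s,s'}^2\le\big(\eta_i\|\bm{G}_i^{(t)}\|_1\big)\eta_i(\bm{G}_i^{(t)})_{s,s'}\le\big(\max_k\eta_k\|\bm{G}_k^{(t)}\|_1\big)\eta_i(\bm{G}_i^{(t)})_{s,s'}$ with $\max_k\eta_k\|\bm{G}_k^{(t)}\|_1\le\widetilde{O}\!\big(\tfrac1{(1-\gamma)T}\big)$; and (ii) the telescoping identity $\sum_{i=1}^t\eta_i\bm{G}_i^{(t)}(\bm{I}-\gamma\bm{P})=\bm{I}-\bm{G}_0^{(t)}$ (valid since the $\bm{A}_j$ commute), which yields $\sum_i\eta_i\bm{G}_i^{(t)}=(\bm{I}-\gamma\bm{P})^{-1}(\bm{I}-\bm{G}_0^{(t)})\preceq(\bm{I}-\gamma\bm{P})^{-1}$ entrywise. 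Together these give $\bm{W}_t\preceq\gamma^2\widetilde{O}\!\big(\tfrac1{(1-\gamma)T}\big)(\bm{I}-\gamma\bm{P})^{-1}\max_{\frac t2\le i<t}\mathsf{Var}_{\bm{P}}(\bm{V}_i)$ up to negligible early-iteration terms (handled by splitting $i<t/2$, where $\|\bm{G}_i^{(t)}\|_1$ is super-polynomially small, from $i\ge t/2$). \emph{This is precisely where one power of $\frac1{1-\gamma}$ is saved over Q-learning:} in Section~\ref{sec:Analysis:-infinite-horizon-MDPs} the policies $\pi_{i_k}$ vary across iterations, forcing the lossy Jensen/Cauchy--Schwarz step that inflates $\bm{\beta}_1$ by a factor $\tfrac1{1-\gamma}$, whereas a single fixed kernel lets the weights $\eta_i\bm{G}_i^{(t)}$ telescope against $(\bm{I}-\gamma\bm{P})$ without that loss.

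Next I would import two ingredients from the Q-learning proof. The inequality~\eqref{eq:Var-P-Var-star-gap-inf} holds word for word (its proof never uses the max operator), giving $\max_{\frac t2\le i<t}\mathsf{Var}_{\bm{P}}(\bm{V}_i)\le\mathsf{Var}_{\bm{P}}(\bm{V}^\star)+\tfrac4{1-\gamma}\max_{\frac t2\le i<t}\|\bm{\Delta}_i\|_\infty\bm{1}$, and the telescoping computation inside the proof of Lemma~\ref{lemma:inf-alpha-claim}, specialized to a single policy (so that the cross-policy error term there vanishes), reduces to the classical total-variance bound $\|(\bm{I}-\gamma\bm{P})^{-1}\mathsf{Var}_{\bm{P}}(\bm{V}^\star)\|_\infty\lesssim\tfrac1{\gamma^2(1-\gamma)^2}$. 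Substituting, $\|\bm{W}_t\|_\infty\le\widetilde{O}\!\big(\tfrac1{\gamma^2(1-\gamma)^3T}\big)\big(1+\max_{\frac t2\le i<t}\|\bm{\Delta}_i\|_\infty\big)$, and Freedman --- with the dyadic $\sigma^2/2^K$ peeling exactly as in Step~2 of the proof of Lemma~\ref{lem:inf-upper-bound}, to handle that $\bm{W}_t$ is itself random through $\{\bm{\Delta}_i\}$ --- produces, with probability at least $1-\delta$ simultaneously over all $t\gtrsim T/\mathrm{polylog}(T)$,
\[
\|\bm{\Delta}_t\|_\infty\le\widetilde{O}\!\left(\sqrt{\frac{(\log^3T)(\log\tfrac{|\cS|T}{\delta})}{\gamma^2(1-\gamma)^3T}\Big(1+\max_{\frac t2\le i<t}\|\bm{\Delta}_i\|_\infty\Big)}\right).
\]
Solving this recurrence exactly as in Section~\ref{sec:combine_ub_lb} --- set $u_k\coloneqq\max\{\|\bm{\Delta}_t\|_\infty:2^kT/\mathrm{polylog}(T)\le t\le T\}$, seed with $u_0\le\frac1{1-\gamma}$, iterate $O(\log\log\frac1{1-\gamma})$ times --- yields $\|\bm{\Delta}_T\|_\infty\le\widetilde{O}\!\big(\sqrt{\tfrac{(\log^3T)(\log(|\cS|T/\delta))}{\gamma^2(1-\gamma)^3T}}+\tfrac{(\log^3T)(\log(|\cS|T/\delta))}{\gamma^2(1-\gamma)^3T}\big)$, which is $\le\varepsilon$ under~\eqref{eq:thm:sample-size-TD}, with $\min\{\varepsilon,\varepsilon^2\}$ absorbing the higher-order second term exactly as for Theorem~\ref{thm:infinite-horizon}.

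I expect the main obstacle to be the uniform control of the deterministic weights $\{\bm{G}_i^{(t)}\}$ --- in particular establishing $\max_k\eta_k\|\bm{G}_k^{(t)}\|_1\le\widetilde{O}\!\big(\tfrac1{(1-\gamma)T}\big)$ for \emph{both} the rescaled-linear and the constant step-size schedules of~\eqref{eq:thm:eta-TD} (since $\eta_k$ can be $\Theta(1)$ for small $k$, one must play this off against the exponential decay of $\|\bm{G}_k^{(t)}\|_1$) --- together with checking that every early-iteration remainder dropped along the way is genuinely $o(\varepsilon)$; the remaining steps are bookkeeping parallel to the Q-learning case.
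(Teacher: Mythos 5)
Your proposal is correct and follows essentially the same route as the paper's proof in Appendix~\ref{sec:TD-learning-analysis}: the same linear unrolling $\bm{\Delta}_t=\prod_i(\bm{I}-\eta_i(\bm{I}-\gamma\bm{P}))\bm{\Delta}_0+\bm{\xi}_t$, the same variance bound via $\mathsf{Var}(\bm{u}^\top(\bm{P}_k-\bm{P})\bm{v})\le\|\bm{u}\|_1\bm{u}^\top\mathsf{Var}_{\bm{P}}(\bm{v})$ combined with the telescoping identity $\sum_k\eta_k\prod_{i>k}(\bm{I}-\eta_i(\bm{I}-\gamma\bm{P}))\le(\bm{I}-\gamma\bm{P})^{-1}$, the same import of \eqref{eq:Var-P-Var-star-gap-inf} and of the total-variance bound $(\bm{I}-\gamma\bm{P})^{-1}\mathsf{Var}_{\bm{P}}(\bm{V}^\star)\lesssim\frac{1}{\gamma^2(1-\gamma)^2}\bm{1}$, and the same recurrence-solving step from Section~\ref{sec:combine_ub_lb}. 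Your diagnosis of where the extra $\frac{1}{1-\gamma}$ is saved relative to Q-learning matches the paper's reasoning exactly.
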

%


\subsection{Preliminary facts}
\label{sec:infinite_prelim_TD}

Before embarking on the analysis, we begin by presenting several useful preliminary facts. 
The first one is a direct
consequence of the claimed iteration complexity (\ref{eq:thm:sample-size-TD})
when $\varepsilon\leq\frac{1}{1-\gamma}$: 
\begin{equation}
	T\geq \frac{c_{3}\big(\log^{3}T\big)\big(\log\frac{|\mathcal{S}|T}{\delta}\big)}{\gamma^2(1-\gamma)^{3}\min\{\varepsilon,\varepsilon^{2}\}}
	\geq\frac{c_{3}\big(\log^{3}T\big)\big(\log\frac{|\mathcal{S}|T}{\delta}\big)}{\gamma^2(1-\gamma)^{2}},
	\label{eq:T-bound-TD}
\end{equation}
a simple fact that will be used multiple times. 
In addition, the update rule \eqref{eqn:td-learning} of TD learning can be expressed using vector/matrix notation as follows
\begin{align} 
\label{eq:iteration-rule-td}
	\bm V_t = (1-\eta_t)\bm V_{t-1} + \eta_t(\bm r + \gamma \bm P_t\bm V_{t-1}) 
	\qquad \text{for all } t\geq 1,
\end{align}
where the matrix $\bm{P}_t\in \{0,1\}^{|\cS|\times |\cS|}$ obeys
\[
	\bm{P}_t(s,s') \coloneqq \begin{cases} 1, \quad &\text{if }s'=s_t(s) \\ 0, & \text{else} \end{cases}
\]
for any $s,s'\in \cS$. 
In the sequel, we collect a few other facts concerning the range of $\bm{V}_{t}$ and learning rates.

\paragraph{Range of $\bm{V}_{t}$.}

We claim that: when the initialization $\bm{V}_{0}$ obeys $\bm{0} \leq \bm{V}_{0}\leq \frac{1}{1-\gamma} \bm{1}$, 
the TD learning iterates obey
\begin{align} \label{eq:V-i-LB-UB}
	\bm{0}\le\bm{V}_{t}\le\frac{1}{1-\gamma}\bm{1}  \qquad \text{and} 
	\qquad \|\bm{V}_{t}-\bm{V}^{\star}\|_{\infty}\le\frac{1}{1-\gamma}
	\qquad \text{for all }t\geq 0, 
\end{align}
provided that $0\leq \eta_t \leq 1$ for all $t\geq 0$. The proof follows immediately by repeating the proof of Lemma~\ref{lemma:non-negativity-Qt-Vt} (see Section~\ref{sec:infinite_prelim}) with $|\cA|=1$, and is hence omitted for brevity.

\paragraph{Learning rates.}

We shall also collect several useful results concerning the learning rates $\{\eta_{t}\}$.
Let us abuse the notation by defining the following crucial quantities:
\begin{equation}
\eta_{k}^{(t)}\coloneqq\begin{cases}
\prod_{i=1}^{t}\big(1-\eta_{i}(1-\gamma) \big), & \text{if }k=0,\\
\eta_{k}\prod_{i=k+1}^{t} \big( 1-\eta_{i}(1-\gamma) \big), & \text{if }0<k<t,\\
\eta_{t}, & \text{if }k=t. 
\end{cases}\label{def:eta-k-t}
\end{equation}
Note that this definition \eqref{def:eta-k-t} differs from the one \eqref{def:eta-i-t} used for Q-learning, and will only be employed in this section. 
Consider any iteration number $t$ satisfying
\begin{equation}
t\ge\frac{T}{c_{2}\log T}. \label{eq:t-LB-c2}
\end{equation}
Clearly, the learning rate $\eta_t$ under Assumption~\eqref{eq:thm:eta-TD} obeys
\begin{equation}
(1-\gamma)\eta_{t}\geq\frac{1-\gamma}{1+\frac{c_{1}(1-\gamma)T}{\log^{2}T}}\geq\frac{1-\gamma}{\frac{2c_{1}(1-\gamma)T}{\log^{2}T}}=\frac{\log^{2}T}{2c_{1}T}.
	\label{eq:eta-t-lower-bound-123}
\end{equation}
In what follows, we intend to bound $\eta_{k}^{(t)}$ for two cases separately.
\begin{itemize}
\item For any $i$ obeying $0\leq i\le t/2$, it is easily seen from \eqref{eq:eta-t-lower-bound-123} that
\begin{subequations}\label{eq:eta-i-t-UB12-TD}
\begin{align}
\eta_{i}^{(t)} & \le\big(1-\eta_{t/2}(1-\gamma)\big)^{t/2}\le\Big(1-\frac{\log^{2}T}{2c_{1}T}\Big)^{t/2}\leq\Big(1-\frac{\log^{2}T}{2c_{1}T}\Big)^{\frac{T}{2c_{2}\log T}} \notag\\
 & =\left\{ \Big(1-\frac{\log^{2}T}{2c_{1}T}\Big)^{\frac{2c_{1}T}{\log^{2}T}}\right\} ^{\frac{\log T}{4c_{1}c_{2}}}\leq\frac{1}{T^{2}},
	\label{eq:eta-i-t-UB1-TD}
\end{align}
where the last inequality holds as long as $c_{1}c_{2}\le 1/8$ and \eqref{eq:T-bound-TD} holds.

\item When it comes to the case with $i> t/2$, we can develop the following upper bound
\begin{equation}
\eta_{i}^{(t)}\le\eta_{i}\leq\frac{1}{c_{2}(1-\gamma)i/\log^{2}T}<\frac{2\log^{3}T}{(1-\gamma)T},\label{eq:eta-i-t-UB2-TD}
\end{equation}
which relies on Assumption~\eqref{eq:thm:eta-TD}. 
\end{subequations}
\end{itemize} 
In addition, 
given that $\bm{P}^k\bm{1}=\bm{1}$ for any integer $k>0$, it can be easily verified that
\[
	\prod_{i = k+1}^t \big(\bm{I}-\eta_i(\bm{I}-\gamma\bm P)\big) \bm{1} = \prod_{i = k+1}^t \big(1-\eta_i(1-\gamma)\big) \bm{1} ,
\]
and as a result,
\begin{align}
	\Bigg\|\prod_{i = k+1}^t \big(\bm{I}-\eta_i(\bm{I}-\gamma\bm P)\big)\Bigg\|_1 =  \prod_{i = k+1}^t \big(1-\eta_i(1-\gamma)\big).
	\label{eq:eta-prod-complex-formula}
\end{align}

\subsection{Proof of Theorem~\ref{thm:policy-evaluation-general}}

\paragraph{Step 1: decomposing the error $\bm V_t - \bm V^{\star}$. }

Taking $\bm \Delta_t \defn \bm V_t - \bm V^{\star}$, via the basic relation \eqref{eq:iteration-infinite}, the TD learning update rule can be written as 
%
\begin{align}
\bm \Delta_t 
&= (1-\eta_t)\bm \Delta_{t-1} + \eta_t\gamma \big(\bm P\bm \Delta_{t-1} + (\bm P_t - \bm P)\bm V_{t-1} \big) \notag\\
	&= \big(\bm{I}-\eta_t(\bm{I}-\gamma\bm{P})\big)\bm \Delta_{t-1} + \eta_t\gamma  (\bm P_t - \bm P)\bm V_{t-1} .
\end{align}
%
Invoking the above relation recursively then leads to
\begin{align}
\bm \Delta_t = \prod_{i = 1}^t \big(\bm{I}-\eta_i(\bm{I}-\gamma\bm P)\big)\bm \Delta_0 + \underbrace{\sum_{k = 1}^t\eta_k\prod_{i = k+1}^t \big(\bm{I}-\eta_i(\bm{I}-\gamma\bm P)\big)\gamma(\bm P_k - \bm P)\bm V_{k-1}}_{ \eqqcolon\, \bm \xi_t}.
	\label{eq:defn-xit-TD}
\end{align}

\paragraph{Step 2: controlling the first term of \eqref{eq:defn-xit-TD}.} 
With regards to the first term  of \eqref{eq:defn-xit-TD}, we make the observation that
\begin{align}
\Big\Vert \prod_{i=1}^{t}\big(\bm{I}-\eta_{i}(\bm{I}-\gamma\bm{P})\big)\bm{\Delta}_{0}\Big\Vert _{\infty} 
& \le \Big\Vert \prod_{i=1}^{t}\big(\bm{I}-\eta_{i}(\bm{I}-\gamma\bm{P})\big)\Big\Vert _{1}\left\Vert \bm{\Delta}_{0}\right\Vert _{\infty} \notag\\
 & =\left\{ \prod_{i=1}^{t}\big(1-\eta_{i}(1-\gamma)\big)\right\} \left\Vert \bm{\Delta}_{0}\right\Vert _{\infty} \notag\\
 & \le\eta_{0}^{(t)}\cdot\frac{1}{1-\gamma}\le\frac{1}{(1-\gamma)T^{2}}, 
\end{align}
where the second line arises from \eqref{eq:eta-prod-complex-formula}, 
and the last inequality holds true due to \eqref{eq:eta-i-t-UB1-TD} as long as $t\ge\frac{T}{c_{2}\log T}.$

\paragraph{Step 3: controlling the second term of \eqref{eq:defn-xit-TD}.} 

We then move on to the second term $\bm \xi_t$ in \eqref{eq:defn-xit-TD}, which admits the following expression
\begin{align}
	\bm \xi_t = \sum_{k = 1}^t\bm{z}_{k}\qquad\text{with }\bm{z}_{k}\coloneqq\eta_k\prod_{i = k+1}^t \big(\bm{I}-\eta_i(\bm{I}-\gamma\bm P)\big)\gamma(\bm P_k - \bm P)\bm V_{k-1}. 
	\label{eq:defn-xit-zk-TD}
\end{align}
Here, the summands $\{\bm{z}_{k}\}$ clearly satisfy
\[
\mathbb{E}\big[\bm{z}_{k}\mymid\bm{V}_{k-1},\cdots,\bm{V}_{0}\big]=\bm{0}.
\]
We then attempt to invoke the Freedman inequality (see Theorem~\ref{thm:Freedman})
to control this term.
Towards this end, there are several quantities that need to be calculated.  
\begin{itemize}
\item First of all, we observe that
\begin{align}
B & \coloneqq \max_{1\le k\leq t} \|\bm{z}_{k}\|_{\infty}\leq\max_{1\le k \leq t} \Big\|\eta_k\prod_{i = k+1}^t \big(\bm{I}-\eta_i(\bm{I}-\gamma\bm P)\big)\gamma(\bm P_k - \bm P)\bm V_{k-1} \Big\|_{\infty} \nonumber\\
 & \leq\max_{1\le k \leq t} \Big\|\eta_k\prod_{i = k+1}^t \big(\bm{I}-\eta_i(\bm{I}-\gamma\bm P)\big) \Big\|_{1}  \big\| (\bm P_k - \bm P)\bm V_{k-1} \big\|_{\infty} \nonumber\\
	& =\max_{1\le k \leq t} \left\{ \eta_k\prod_{i = k+1}^t \big(1-\eta_i(1-\gamma)\big) \right\} 
	\big\| (\bm P_k - \bm P)\bm V_{k-1} \big\|_{\infty} \nonumber\\	
 & \leq\max_{1\le k \leq t}\eta_{k}^{(t)}\big(\|\bm{P}_{k}\|_{1}+\|\bm{P}\|_{1}\big)\|\bm{V}_{k-1}\|_{\infty}\leq\frac{4\log^{3}T}{(1-\gamma)^2T}, \label{eq:TD_B}
\end{align}
where the third line again makes use of the relation \eqref{eq:eta-prod-complex-formula}
and the last line follows the facts $\|\bm{P}_{k}\|_{1}=\|\bm{P}\|_{1}=1$, $\|\bm{V}_{k-1}\|_{\infty} \le 1/(1-\gamma)$,
as well as the properties \eqref{eq:eta-i-t-UB12-TD}.

\item The next step is to control certain variance terms.  
Towards this, we first make note of a userful fact.  
For any given non-negative vector $\bm{u}=[u_{i}]_{1\leq i\leq|\cS|}\geq\bm{0}$
and any vector $\bm{v}$, it is easily seen that
\begin{align}
\mathsf{Var}\Big(\bm{u}^{\top}(\bm{P}_{k}-\bm{P})\bm{v}\Big) & =\sum_{i=1}^{|\cS|}u_{i}^{2}\mathsf{Var}\Big((\bm{P}_{k}-\bm{P})_{i,\cdot}\bm{v}\Big)\leq\Big\{\max_{i}|u_{i}|\Big\}\big[u_{1},\cdots,u_{|\cS|}\big]\mathsf{Var}_{\bm{P}}\left(\bm{v}\right)\nonumber \\
	& \leq \|\bm{u}\|_1 \bm{u}^{\top}\mathsf{Var}_{\bm{P}}\left(\bm{v}\right),\label{eq:Var-u-Pk-v-bound}
\end{align}
where we remind the reader of the notation $\mathsf{Var}_{\bm{P}}(\bm{v})$ in \eqref{eq:defn-Var-P-V}. 
Additionally, 
for any vector $\bm{a}=[a_j]$, let us employ the notation $\mathsf{Var}\big(\bm{a}\mymid\bm{V}_{k-1},\cdots,\bm{V}_{0}\big)$
to represent a vector whose $j$-th entry is given by $\mathsf{Var}\big(a_{j}\mymid\bm{V}_{k-1},\cdots,\bm{V}_{0}\big)$.
Armed with this notation, we obtain
\begin{align}
\mathsf{Var}\big(\bm{z}_{k}\mymid\bm{V}_{k-1},\cdots,\bm{V}_{0}\big) 
& \leq\gamma^{2}\Big\Vert \eta_{k}\prod_{i=k+1}^{t}\big(\bm{I}-\eta_{i}(\bm{I}-\gamma\bm{P})\big)\Big\Vert _{1}\left\{ \eta_{k}\prod_{i=k+1}^{t}\big(\bm{I}-\eta_{i}(\bm{I}-\gamma\bm{P})\big)\right\} \mathsf{Var}_{\bm{P}}\left(\bm{V}_{k-1}\right) \notag\\
 & =\gamma^{2}\left\{ \eta_{k}\prod_{i=k+1}^{t}\big(1-\eta_{i}(1-\gamma)\big)\right\} \left\{ \eta_{k}\prod_{i=k+1}^{t}\big(\bm{I}-\eta_{i}(\bm{I}-\gamma\bm{P})\big)\right\} \mathsf{Var}_{\bm{P}}\left(\bm{V}_{k-1}\right) \notag\\
 & \leq \eta_{k}\eta_{k}^{(t)}\prod_{i=k+1}^{t}\big(\bm{I}-\eta_{i}(\bm{I}-\gamma\bm{P})\big)\mathsf{Var}_{\bm{P}}\left(\bm{V}_{k-1}\right),
\end{align}
where the first inequality is a consequence of  \eqref{eq:Var-u-Pk-v-bound} and the definition of $\bm{z}_k$ (cf.~\eqref{eq:defn-xit-zk-TD}), the second line arises from \eqref{eq:eta-prod-complex-formula}, and the last relation results from the definition of $\eta_k^{(t)}$. 
This in turn allows us to compute 
\begin{align}
\bm{W}_{t} & \coloneqq\sum_{k=1}^{t}\mathsf{Var}\big(\bm{z}_{k}\mymid\bm{V}_{k-1},\cdots,\bm{V}_{0}\big)\le\sum_{k=1}^{t}\eta_{k}\eta_{k}^{(t)}\prod_{i=k+1}^{t}\big(\bm{I}-\eta_{i}(\bm{I}-\gamma\bm{P})\big)\mathsf{Var}_{\bm{P}}\left(\bm{V}_{k-1}\right)\nonumber\\
& \le \sum_{k=1}^{t/2}\eta_{k}^{(t)}\Big\Vert \eta_{k}\prod_{i=k+1}^{t}\big(\bm{I}-\eta_{i}(\bm{I}-\gamma\bm{P})\big)\Big\Vert _{1}\|\bm{V}_{k-1}\|_{\infty}^{2}  \bm{1}
	+\sum_{k=t/2+1}^{t}\eta_{k}\eta_{k}^{(t)}\prod_{i=k+1}^{t}\big(\bm{I}-\eta_{i}(\bm{I}-\gamma\bm{P})\big)\mathsf{Var}_{\bm{P}}\left(\bm{V}_{k-1}\right)\nonumber\\
 & \leq\sum_{k=1}^{t/2}\big(\eta_{k}^{(t)}\big)^{2}\frac{1}{(1-\gamma)^{2}} \bm{1} 
	+ \Big\{ \max_{k:\,t/2<k\leq t} \eta_{k}^{(t)} \Big\}  
	\sum_{k=t/2+1}^{t}\eta_{k}\prod_{i=k+1}^{t}\big(\bm{I}-\eta_{i}(\bm{I}-\gamma\bm{P})\big)\mathsf{Var}_{\bm{P}}\left(\bm{V}_{k-1}\right)\nonumber\\
 & \leq\frac{1}{2(1-\gamma)^{2}T^{3}} \bm{1} +\frac{2\log^{3}T}{(1-\gamma)T} \Bigg(\sum_{k=t/2+1}^{t}\eta_{k}\prod_{i=k+1}^{t}\big(\bm{I}-\eta_{i}(\bm{I}-\gamma\bm{P})\big)\Bigg)\max_{k:\,t/2\leq k<t}\mathsf{Var}_{\bm{P}}\big(\bm{V}_{k}\big)\nonumber\\
 & \leq\frac{1}{2(1-\gamma)^{2}T^{3}} \bm{1}
	+\frac{2\log^{3}T}{(1-\gamma)T}(\bm{I}-\gamma\bm{P})^{-1}\max_{k:\,t/2\leq k<t}\mathsf{Var}_{\bm{P}}\big(\bm{V}_{k}\big),
	\label{eq:Wt-bound-Var}
\end{align}
where the penultimate inequality results from \eqref{eq:eta-i-t-UB12-TD}; 
to see why the last inequality holds, observe that
\begin{align*}
&\sum_{k=t/2+1}^{t}\eta_{k}\prod_{i=k+1}^{t}\big(\bm{I}-\eta_{i}(\bm{I}-\gamma\bm{P})\big) \\
&\qquad\qquad= (\bm{I}-\gamma\bm{P})^{-1}\sum_{k=t/2+1}^{t}\eta_{k}(\bm{I}-\gamma\bm{P})\prod_{i=k+1}^{t}\big(\bm{I}-\eta_{i}(\bm{I}-\gamma\bm{P})\big) \\
&\qquad\qquad= (\bm{I}-\gamma\bm{P})^{-1}\sum_{k=t/2+1}^{t}\bigg[\prod_{i=k+1}^{t}\big(\bm{I}-\eta_{i}(\bm{I}-\gamma\bm{P})\big) - \prod_{i=k}^{t}\big(\bm{I}-\eta_{i}(\bm{I}-\gamma\bm{P})\big)\bigg] \\
&\qquad\qquad= (\bm{I}-\gamma\bm{P})^{-1} - (\bm{I}-\gamma\bm{P})^{-1}\prod_{i=t/2+1}^{t}\big(\bm{I}-\eta_{i}(\bm{I}-\gamma\bm{P})\big) \le (\bm{I}-\gamma\bm{P})^{-1},
\end{align*}
where we have used the fact that all entries of $(\bm{I}-\gamma\bm{P})^{-1}$ and $\bm{I}-\eta_{i}(\bm{I}-\gamma\bm{P})$ are non-negative.

\item In addition, we also derive the following
trivial upper bound based on \eqref{eq:Wt-bound-Var}:
\begin{align}
\big|\bm{W}_{t}\big| & \leq\frac{1}{2(1-\gamma)^{2}T^{3}}\bm{1}+\frac{2\log^{3}T}{(1-\gamma)T}\big\|(\bm{I}-\gamma\bm{P})^{-1}\big\|_{1}\max_{k:\,t/2\leq k<t}\big\|\mathsf{Var}_{\bm{P}}\big(\bm{V}_{k}\big)\big\|_{\infty}\bm{1} \notag\\
 & \leq\frac{1}{2(1-\gamma)^{2}T^{3}}\bm{1}+\frac{2\log^{3}T}{(1-\gamma)^{4}T}\bm{1}\leq\frac{3\log^{3}T}{(1-\gamma)^{4}T}\bm{1}\eqqcolon\sigma^{2}\bm{1},
	\label{eq:Wt-trivial-UB-TD}
\end{align}
where we have invoked the fact that $\|(\bm{I}-\gamma \bm{P})^{-1} \|_1 = 1/(1-\gamma)$. 
Therefore, by setting $K=\big\lceil 2\log_2\frac{1}{1-\gamma}\big\rceil $, one arrives at
\begin{equation}
\frac{\sigma^{2}}{2^{K}}\leq\frac{3\log^{3}T}{(1-\gamma)^2T}.\label{eq:sigma-2K-UB}
\end{equation}
\end{itemize}
Equipped with the preceding bounds, let us apply the Freedman inequality in
Theorem~\ref{thm:Freedman} and invoke the union bound over all entries
of $\bm{\xi}_{t}$ to show that 
\begin{align*}
|\bm{\xi}_{t}| & \le
\sqrt{8\Big(\bm{W}_{t}+\frac{\sigma^{2}}{2^{K}}\bm{1}\Big)\log\frac{8|\mathcal{S}|T\log\frac{1}{1-\gamma}}{\delta}}+\Big(\frac{4}{3}B\log\frac{8|\mathcal{S}|T\log\frac{1}{1-\gamma}}{\delta}\Big) \bm{1}\\
 & \le\sqrt{16\Big(\bm{W}_{t}+\frac{3\log^{3}T}{(1-\gamma)^2T}\bm{1}\Big)\log\frac{|\mathcal{S}|T}{\delta}}+\Big(3B\log\frac{|\mathcal{S}|T}{\delta}\Big) \bm{1}\\
	& \le\sqrt{\frac{32\big(\log^{3}T\big)\big(\log\frac{|\mathcal{S}|T}{\delta}\big)}{(1-\gamma)T}\Big((\bm{I}-\gamma\bm P)^{-1}\max_{k:\,t/2\leq k<t}\mathsf{Var}_{\bm{P}}\big(\bm{V}_{k}\big)+\frac{2}{1-\gamma}\bm{1}\Big)}+\frac{12\big(\log^{3}T\big)\big(\log\frac{|\mathcal{S}|T}{\delta}\big)}{(1-\gamma)^{2}T}\bm{1}
\end{align*}
with probability at least $1-\delta/T.$ 
Here, the second line follows since 
$$\log\frac{8|\mathcal{S}|T\log\frac{1}{1-\gamma}}{\delta} \le 2\log\frac{|\mathcal{S}|T}{\delta}$$
as long as $\frac{|\mathcal{S}|T}{\delta} \ge 8\log\frac{1}{1-\gamma}$, 
whereas the last line holds by using \eqref{eq:TD_B}, \eqref{eq:Wt-bound-Var} and \eqref{eq:sigma-2K-UB}.
Further, we make the observation that
\begin{align*}
(\bm{I}-\gamma\bm{P})^{-1}\mathsf{Var}_{\bm{P}}\big(\bm{V}^{\star}\big) & =(\bm{I}-\gamma\bm{P})^{-1}\Big(\bm{P}(\bm{V}^{\star}\circ\bm{V}^{\star})-(\bm{P}\bm{V}^{\star})\circ(\bm{P}\bm{V}^{\star})\Big)\\
 & =(\bm{I}-\gamma\bm{P})^{-1}\left(\bm{P}(\bm{V}^{\star}\circ\bm{V}^{\star})-\frac{1}{\gamma^{2}}(\bm{V}^{\star}-\bm{r})\circ(\bm{V}^{\star}-\bm{r})\right)\\
 & \le(\bm{I}-\gamma\bm{P})^{-1}\left(\bm{P}(\bm{V}^{\star}\circ\bm{V}^{\star})-\frac{1}{\gamma^{2}}\bm{V}^{\star}\circ\bm{V}^{\star}+\frac{2}{\gamma^{2}}\bm{r}\circ\bm{V}^{\star}\right)\\
 & \le(\bm{I}-\gamma\bm{P})^{-1}\left(\bm{P}(\bm{V}^{\star}\circ\bm{V}^{\star})-\frac{1}{\gamma}\bm{V}^{\star}\circ\bm{V}^{\star}+\frac{2}{\gamma^{2}}\bm{r}\circ\bm{V}^{\star}\right)\\
 & = \frac{1}{\gamma} (\bm{I}-\gamma\bm{P})^{-1}\left(\gamma\bm{P}-\bm{I}\right)\left(\bm{V}^{\star}\circ\bm{V}^{\star}\right)+\frac{2}{\gamma^{2}}(\bm{I}-\gamma\bm{P})^{-1}\left(\bm{r}\circ\bm{V}^{\star}\right)\\
 & \leq\frac{2}{\gamma^{2}}(\bm{I}-\gamma\bm{P})^{-1}\left(\bm{r}\circ\bm{V}^{\star}\right)\leq\frac{2}{\gamma^{2}(1-\gamma)^{2}}\bm{1},
\end{align*}
where the second line makes use of the basic relation $\bm{V}^{\star} = \bm{r} + \gamma\bm{P}\bm{V}^{\star}$.
As a consequence, we conclude 
\begin{align}
	(\bm{I}-\gamma\bm P)^{-1}\max_{k:\,t/2\le k<t} \mathsf{Var}_{\bm{P}}\big(\bm{V}_{k}\big) &\le (\bm{I}-\gamma\bm P)^{-1}\Big(\mathsf{Var}_{\bm{P}}\big(\bm{V}^{\star}\big) + \frac{4}{1-\gamma}\max_{k:\,t/2\le k<t} \|\bm{\Delta}_{k}\|_{\infty}\bm{1}\Big) \nonumber\\
&\le \frac{2}{\gamma (1-\gamma)^2}\Big(1 + 2\max_{k:\,t/2\le k<t}\|\bm{\Delta}_{k}\|_{\infty}\Big) \bm{1}. 
\end{align}
Here, the first inequality arises from \eqref{eq:Var-P-Var-star-gap-inf}, while the second inequality holds due to the facts that $\big\|(\bm{I}-\gamma\bm P)^{-1}\big\|_1 = 1/(1-\gamma)$. 

\paragraph{Step 4: putting everything together.}

Consequently, substituting the bounds in Steps 2-3 into \eqref{eq:defn-xit-TD} yields
\begin{align}
 \|\bm{\Delta}_{t}\|_{\infty} & \leq 30 \sqrt{\frac{\big(\log^{3}T\big)\big(\log\frac{|\mathcal{S}|T}{\delta}\big)}{\gamma^2(1-\gamma)^{3}T}\bigg(1+\max_{\frac{t}{2}\le i<t}\|\bm{\Delta}_{i}\|_{\infty}\bigg)}\qquad\text{for all}\quad t\ge\frac{T}{c_{2}\log T}. \label{eq:Delta-t-inf-recursion}
 \end{align}
%
Repeating the same argument as in Section~\ref{sec:combine_ub_lb}, we see that 
\begin{align}
\|\bm{\Delta}_{T}\|_{\infty} & \leq c_{9}\bigg(\sqrt{\frac{\big(\log^{3}T\big)\big(\log\frac{|\mathcal{S}|T}{\delta}\big)}{\gamma^2(1-\gamma)^{3}T}}+\frac{\big(\log^{3}T\big)\big(\log\frac{|\mathcal{S}|T}{\delta}\big)}{\gamma^2(1-\gamma)^{3}T}\bigg)
	\label{eq:Delta-T-UB-TD-123}
\end{align}
holds with probability at least $1-\delta$, where $c_9>0$ is some universal constant.  
As a result, one has 
\begin{align*}
	\|\bm{\Delta}_{T}\|_{\infty} & \leq\frac{1}{2}\bigg(\sqrt{\min\{\varepsilon,\varepsilon^{2}\}}+\min\{\varepsilon,\varepsilon^{2}\}\bigg)=\frac{1}{2}\left(\varepsilon+\varepsilon^{2}\right)\mathds{1}\{\varepsilon\leq1\}+\frac{1}{2}\left(\varepsilon+\varepsilon^{2}\right)\mathds{1}\{\varepsilon>1\}\leq\varepsilon,
\end{align*}
as long as the sample size satisfies the following 
\[
\frac{\big(\log^{3}T\big)\big(\log\frac{|\mathcal{S}|T}{\delta}\big)}{\gamma^2 (1-\gamma)^{3}T}\leq\frac{\min\{\varepsilon,\varepsilon^{2}\}}{c_{3}},
\]
for some constant $c_{3} \geq \max\{1, 2c_9\}$. This requirement is equivalent to condition \eqref{eq:thm:sample-size-TD} as claimed.

\subsection{Proof for Remarks~\ref{remark:anytime-TD} and \ref{remark:averaging-TD}}
\label{sec:proof-remark:anytime-TD}

\paragraph{Proof for Remark~\ref{remark:anytime-TD}.}
Let us divide the dynamics of the algorithm into two parts. 
\begin{itemize}

	\item For any $1\leq t\leq T/2$, it follows from Lemma~\ref{lemma:non-negativity-Qt-Vt} that 
		\begin{align}
			\bm{0} \leq \bm{V}_{T/2} \leq \frac{1}{1-\gamma} \bm{1}. 
		\end{align}

	\item Next, let us consider any $T/2 < t\leq T$, and set $\widetilde{t}=t - T/2$. 
		It comes from the choice \eqref{eq:new-choice-learning-rates-remark} that
\[
	\frac{1}{1+\frac{\widetilde{c}_{1}(1-\gamma)(\widetilde{t}+T/2)}{\log^{2}(\widetilde{t}+T/2+1)}}\le\eta_{\widetilde{t}+T/2}=\eta_{t}\le\frac{1}{1+\frac{\widetilde{c}_{2}(1-\gamma)(\widetilde{t}+T/2)}{\log^{2}(\widetilde{t}+T/2+1)}}
\]
\[
	\Longrightarrow\qquad\frac{1}{1+\frac{c_{1}(1-\gamma)(T/2)}{\log^{2}(T/2)}}\le\eta_{\widetilde{t}+T/2}\le\frac{1}{1+\frac{c_{2}(1-\gamma)\widetilde{t}}{\log^{2}(T/2)}}, 
\]
		provided that $\widetilde{c}_1$ and $\widetilde{c}_2$ are suitably chosen. 
		By treating $\bm{V}_{T/2}$ as the initial point and invoking Theorem~\ref{thm:policy-evaluation}, 
		we immediately establish \eqref{eq:VT-entrywise-TD-thm} under the choice \eqref{eq:new-choice-learning-rates-remark} 
		and the sample size condition \eqref{eq:sample-size-anytime-TD}. 

\end{itemize}

\paragraph{Proof for Remark~\ref{remark:averaging-TD}.}
Let us define, for each $1\leq t\leq T$, 
\begin{align}
	\varepsilon_{t}\coloneqq \min\Bigg\{ \sqrt{\frac{2c_{3}(\log^{3}T)\big(\log\frac{|\cS|T}{\delta/T}\big)}{(1-\gamma)^{3}t}}, 
	\frac{1}{1-\gamma} \Bigg\}. 
	\label{eq:defn-epsilon-t-anytime-TD}
\end{align}
Invoking the claim in Remark~\ref{remark:anytime-TD} in conjunction with Lemma~\ref{lemma:non-negativity-Qt-Vt} reveals that with probability at least $1-\delta/T$, 
\begin{align*}
	 \max_{s\in \cS} \big| V_t(s) - V^{\star}(s) \big| \leq \varepsilon_{t} 
\end{align*}
for any given $t \leq T$. 
Taking  the union bound over all $1\leq t\leq T$ implies that with probability exceeding $1-\delta$, 
\begin{align*}
	\max_{s\in \cS} \Big| \frac{1}{T}\sum_{t=1}^T V_t(s) - V^{\star}(s) \Big| 
	&\leq  \frac{1}{T}\sum_{t=1}^T \max_{s\in \cS} \big|  V_t(s) - V^{\star}(s) \big| 
	\leq  \frac{1}{T}\sum_{t=1}^T \varepsilon_{t} \\
	&\leq  \sqrt{\frac{2c_{3}(\log^{3}T)\big(\log\frac{|\cS|T^2}{\delta}\big)}{(1-\gamma)^{3}}} \frac{1}{T}\sum_{t=1}^T \frac{1}{\sqrt{t}}
	\leq 4\sqrt{\frac{c_{3}(\log^{3}T)\big(\log\frac{|\cS|T}{\delta}\big)}{(1-\gamma)^{3}T}},
\end{align*}
where the last inequality results from the elementary inequality $\sum_{t=1}^T 1/\sqrt{t} \leq 2\sqrt{T}$. This finishes the proof.

\section{Lower bound: sub-optimality of synchronous Q-learning (Theorem~\ref{thm:LB-example})}
\label{sec:lower-bounds}

In this section, a main focus is to establish the lower bound claimed in Theorem~\ref{thm:LB-example} by analyzing synchronous Q-learning for the MDP instance constructed in Section~\ref{sec:MDP-construction-hard}. Without loss of generality, we assume
\begin{equation} \label{eq:logT_ub}
\log T\leq \frac{1}{1-\gamma}
\end{equation} 
throughout the proof; otherwise the lower bound in Theorem~\ref{thm:LB-example} is worse than the minimax lower bound $\frac{1}{(1-\gamma)^3T}$ in \citet{azar2013minimax}.

Throughout, we shall use $P_t$ to represent the sample transitions such that for any triple $(s,a,s')$,
\begin{align}
	P_t(s'\mymid s,a) \coloneqq \begin{cases}  1, \quad & \text{if } s_t(s,a)=s', \\ 0, & \text{otherwise}, \end{cases}
\end{align}
where $s_t(s,a)$ stands for the sample collected in the $t$-th iteration (see \eqref{defn:empirical-Bellman-t-inf}). 
Recognizing that state  2 is associated with a singleton action space, we shall often write
\[
	P_t(s'\mymid 2) \coloneqq   P_t(s'\mymid 2,1)
\]
for notational simplicity.

\subsection{Key quantities related to learning rates}

We find it convenient to define the following quantities (by abuse of notation) 
\begin{subequations}
\label{eq:defn-eta-kt-hard}
\begin{align}
\eta_k^{(t)} &\defn \eta_k\prod_{i = k+1}^t \big(1-\eta_i(1-\gamma p)\big) \qquad\text{for any }1 \le k < t, \\
\eta_0^{(t)} &\defn \prod_{i = 1}^t \big(1-\eta_i(1-\gamma p)\big), \\
\eta_t^{(t)} &\defn \eta_t. 
\end{align}
\end{subequations}
It is helpful to establish several basic properties about these quantities. 
As can be easily verified, 
\begin{align}
	\eta_0^{(t)} + (1-\gamma p)\sum_{k = 1}^t \eta_k^{(t)} = \prod_{i=1}^{t}(1-\widehat{\eta}_{i})+\widehat{\eta}_{1}\prod_{i=2}^{t}(1-\widehat{\eta}_{i})+\widehat{\eta}_{2}\prod_{i=3}^{t}(1-\widehat{\eta}_{i})+\cdots+\widehat{\eta}_{t-1}(1-\widehat{\eta}_{t})+\widehat{\eta}_{t}=1,
	\label{eq:eta-sum-hat}
\end{align}
where we denote $\widehat{\eta}_{i} \defn \eta_{i}(1-\gamma p)$ to simplify notation. 
Similarly, for any given integer $0\leq \tau < t$ one has
\begin{align}
	\label{eq:eta-sum-hat-tau}
	\prod_{i = \tau+1}^t \big(1-\eta_i(1-\gamma p)\big) + (1-\gamma p)\sum_{k = \tau+1}^t \eta_k^{(t)} = 1.
\end{align}

\subsection{Preliminary calculations}

Before moving forward, we record several basic relations as a result of the Q-learning update rule.

\subsubsection{Basic update rules and expansion}
Given that $Q_0 = V_0 = 0$ and that state 0 is absorbing, the  update rule \eqref{eqn:q-learning} gives
\begin{align} \label{eq:vt-0}
V_t(0) = Q_t(0,1) = \big(1-\eta_t(1-\gamma)\big)Q_{t-1}(0,1) = \prod_{i = 1}^t \big(1-\eta_i(1-\gamma)\big)Q_0(0,1) = 0
\end{align}
for all $t\geq 1$. 
Regarding state 2, the update rule \eqref{eqn:q-learning} taken together with \eqref{eq:vt-0} leads to
\begin{align} 
	V_t(2) = Q_t(2,1)  & = \big(1-\eta_t\big)Q_{t-1}(2,1) + \eta_t\big\{ r(2,1)+ \gamma P_t(2 \mymid 2)V_{t-1}(2) + \gamma P_t(0 \mymid 2)V_{t-1}(0) \big\} \notag\\
	& = \big(1-\eta_t\big)V_{t-1}(2) + \eta_t\big\{ 1 + \gamma P_t(2 \mymid 2)V_{t-1}(2)\big\},  \label{eq:vt-2}
\end{align}
and for state 3,
\begin{align} 
	V_t(3) = Q_t(3,1)  & = \big(1-\eta_t\big)Q_{t-1}(3,1) + \eta_t\big\{ r(3,1)+ \gamma V_{t-1}(3) \big\} \notag\\
	& = \big(1-\eta_t(1-\gamma)\big)V_{t-1}(3) + \eta_t .  \label{eq:vt-3}
\end{align}
Similarly, one also has
\begin{subequations}
\label{eq:Qt-1}
\begin{align}
	Q_t(1, 1) &= (1-\eta_t)Q_{t-1}(1, 1) + \eta_t\big\{ 1 + \gamma P_{t}( 1 \mymid 1, 1) V_{t-1}(1)\big\}, \\
	Q_t(1, 2) &= (1-\eta_t)Q_{t-1}(1, 2) + \eta_t\big\{ 1 + \gamma P_{t}( 1 \mymid 1, 2) V_{t-1}(1)\big\}.
\end{align}
\end{subequations}
In what follows, we shall first determine a crude range for certain quantities relates to the learning rates $\eta_t$, and then combine this with the above relations to establish the desired result.

Next, we record some elementary decomposition of $V_t(2)$.  For any iteration $t$ and $\tau<t$, one can continue the derivation in \eqref{eq:vt-2} to obtain
\begin{align}
	V_t(2) &= \big(1-\eta_t(1-\gamma p)\big)V_{t-1}(2) + \eta_t\big\{ 1+ \gamma \big( P_t(2 \mymid 2) - p \big) V_{t-1}(2)\big\} \nonumber\\
	&= \prod_{i = \tau+1}^t \big(1-\eta_i(1-\gamma p)\big)V_{\tau}(2) + \sum_{k = \tau+1}^t\eta_k\prod_{i = k+1}^t \big(1-\eta_i(1-\gamma p)\big)\big\{ 1+ \gamma \big(P_k(2 \mymid 2) - p \big)V_{k-1}(2)\big\} \nonumber\\
	&=  \prod_{i = \tau+1}^t \big(1-\eta_i(1-\gamma p)\big)V_{\tau}(2) + \sum_{k = \tau+1}^t \eta_k^{(t)}  + \sum_{k = \tau+1}^t \eta_k^{(t)}  \gamma \big(P_k(2 \mymid 2) - p \big)V_{k-1}(2) \nonumber\\
	&=  \prod_{i = \tau+1}^t \big(1-\eta_i(1-\gamma p)\big)V_{\tau}(2) + \frac{1 - \prod_{i = \tau+1}^t \big(1-\eta_i(1-\gamma p)\big) }{1-\gamma p}  + \sum_{k = \tau+1}^t \eta_k^{(t)}  \gamma \big(P_k(2 \mymid 2) - p \big)V_{k-1}(2) \nonumber\\
	&= \frac{1}{1-\gamma p} - \prod_{i = \tau+1}^t \big(1-\eta_i(1-\gamma p)\big)\left[\frac{1}{1-\gamma p} - V_{\tau}(2)\right] + \sum_{k = \tau+1}^t\eta_k^{(t)}\gamma \big(P_k(2\mymid 2) - p\big) V_{k-1}(2), 
	\label{eq:expansion-Vt2-tau}
\end{align}
where the penultimate line arises from \eqref{eq:eta-sum-hat-tau}.
In particular, in the special case where $\tau=0$ (so that $V_{\tau}(2)=V_{0}(2)=0$), this simplifies to
\begin{align} 
	V_t(2) = \frac{1-\eta_0^{(t)}}{1-\gamma p} + \sum_{k = 1}^t\eta_k^{(t)}\gamma \big(P_k(2 \mymid 2) - p \big)V_{k-1}(2),  
	\label{eq:Vt2-expansion-case1}
\end{align}
which relies on the definition of $\eta_0^{(t)}$ in \eqref{eq:defn-eta-kt-hard}. 
With similar derivation, \eqref{eq:vt-3} leads to
\begin{align} 
	V_t(3) = \frac{1}{1-\gamma}\Big[1-\prod_{i = 1}^T \big(1-\eta_i(1-\gamma)\big)\Big] = V^{\star}(3) - \frac{1}{1-\gamma}\prod_{i = 1}^T \big(1-\eta_i(1-\gamma)\big).
	\label{eq:Vt3-expansion-case3}
\end{align}

\subsubsection{Mean and variance of $V^{\star}(2) - V_T(2)$ }

We start by computing the mean $V^{\star}(2) - \mathbb{E}[V_t(2)]$. 
From the construction \eqref{eq:construction-hard-MDP}, it is easily seen that $\mathbb{E}[ P_k(2 \mymid 2) ] = p$, which together with the identity \eqref{eq:Vt2-expansion-case1} leads to
\begin{align}
	\label{eq:EV2}
	\mathbb{E} \big[V_T(2)\big] = \frac{1-\eta_0^{(T)}}{1-\gamma p}
	\qquad \text{and}\qquad
	V^{\star}(2) - \mathbb{E} \big[V_T(2)\big] = \frac{\eta_0^{(T)}}{1-\gamma p}. 
\end{align}
Similarly, applying the above argument to \eqref{eq:expansion-Vt2-tau} and rearranging terms, we immediately arrive at
\begin{align}
	 V^{\star}(2) - \mathbb{E}\big[V_{T}(2)\big] 
	= \prod_{i = \tau+1}^T \big(1-\eta_i(1-\gamma p)\big)\left[\frac{1}{1-\gamma p} - \mathbb{E}\big[ V_{\tau}(2)\big]\right]
	\label{eq:Vstar2-V2-expansion-Case2}
\end{align}
for any integer $0\leq \tau <T$.

Next, we develop a lower bound on the variance $\mathsf{Var}\big(V_T(2)\big)$. 
Towards this end, consider first a martingale sequence $\{Z_k\}_{0\leq k\leq T}$ adapted to 
a filtration $\mathcal{F}_0\subseteq  \mathcal{F}_1 \subseteq \cdots \subseteq \mathcal{F}_T$, 
namely,  $\mathbb{E} [Z_{k+1} \mymid \mathcal{F}_{k}]=0$ and $\mathbb{E} [Z_k \mymid \mathcal{F}_{k}]=Z_k$ for all $0\leq k\leq T$. 
In addition, consider any $0\leq \tau < T$, and let $W_{0}$ be a random variable such that $\mathbb{E}[W_{0}\mymid \mathcal{F}_{\tau}]=W_0$. 
Then the law of total variance together with basic martingale properties tells us that 
\begin{align}
\mathsf{Var}\left(W_{0}+\sum_{k=\tau+1}^{T}Z_{k}\right) & =\mathbb{E}\left[\mathsf{Var}\left(W_{0}+\sum_{k=\tau+1}^{T}Z_{k}\mid\mathcal{F}_{T-1}\right)\right]+\mathsf{Var}\left(\mathbb{E}\left[W_{0}+\sum_{k=\tau+1}^{T}Z_{k}\mid\mathcal{F}_{T-1}\right]\right) \notag\\
 & =\mathbb{E}\big[\,\mathsf{Var}\left(Z_{T}\mid\mathcal{F}_{T-1}\right)\big]+\mathsf{Var}\left(W_{0}+\sum_{k=\tau+1}^{T-1}Z_{k}\right)=\cdots \notag\\
 & =\sum_{k=\tau+1}^{T}\mathbb{E}\big[\,\mathsf{Var}\left(Z_{k}\mid\mathcal{F}_{k-1}\right)\big]+\mathsf{Var}\left(W_{0}\right)\geq\sum_{k=\tau+1}^{T}\mathbb{E}\big[\,\mathsf{Var}\left(Z_{k}\mid\mathcal{F}_{k-1}\right)\big].
	\label{eq:variance-expansion-martingale}
\end{align}
Consequently, for any $0\leq \tau < T-1$, it follows from the decomposition \eqref{eq:expansion-Vt2-tau} (with $\tau$ replaced by $\tau+1$) that
\begin{align}
\mathsf{Var}\big(V_{T}(2)\big) & \ge\mathbb{E}\left[\sum_{k=\tau+2}^{T}\mathsf{Var}\Big(\eta_{k}^{(T)}\gamma\big(P_{k}(2\mymid2)-p\big)V_{k-1}(2)\mymid V_{k-1}(2)\Big)\right] \notag\\
 & =\sum_{k=\tau+2}^{T}\big(\eta_{k}^{(T)}\gamma\big)^{2}p(1-p)\mathbb{E}\big[\big(V_{k-1}(2)\big)^{2}\big] \notag\\
 & \geq\frac{\left(1-\gamma\right)\left(4\gamma-1\right)}{9}\cdot\frac{1}{4(1-\gamma)^{2}}\sum_{k=\tau+2}^{T}\big(\eta_{k}^{(T)}\big)^{2} \notag\\
 & =\frac{ 4\gamma-1}{36\left(1-\gamma\right)}\sum_{k=\tau+2}^{T}\big(\eta_{k}^{(T)}\big)^{2},
	\label{eq:Var-VT-2-LB-123}
\end{align}
where the first identity relies on the fact that $P_k(2\mymid 2)$ is a Bernoulli random variable with mean $p$, and the inequality comes from the definition of $\tau$ (see \eqref{eq:defn-tau-Case2-LB}) and the choice of $p$ (see \eqref{defn-parameter-p}). As an implication, the sum of squares of $\eta_{k}^{(T)}$ plays a crucial role in determining the variance of $V_{T}(2)$.

\subsection{Lower bounds for three cases}

\subsubsection{Case 1: small learning rates ($\ceta \geq \log T$ or $0\leq \eta \leq \frac{1}{(1-\gamma)T}$)}

In this case, we focus on lower bounding $V^{\star}(2) - \mathbb{E}\big[ V_T(2)\big]$.  
In view of this identity \eqref{eq:EV2}, this boils down to controlling $\eta_0^{(T)}$.

Suppose that $\ceta > \log T$ (for rescaled linear learning rates) or $0\leq \eta < \frac{1}{(1-\gamma)T}$ (for constant learning rates).  
A little algebra then gives 
\begin{align}
\label{eq:etat-gammap-UB1}
\eta_{t}(1-\gamma p) & \leq\begin{cases}
\frac{1-\gamma p}{(1-\gamma)t\log T}=\frac{4}{3t\log T}\leq\frac{1}{2},\qquad & \text{if }\eta_{t}=\frac{1}{1+\ceta(1-\gamma)t}\\
\frac{1-\gamma p}{(1-\gamma)T}=\frac{4}{3T}\leq\frac{1}{2}, & \text{if }\eta_{t}=\eta
\end{cases}
\end{align}
for any $t\geq 1$, provided that $T\geq 15$. Consequently, one can derive
\begin{align}
	\log \eta_0^{(T)} 
	= \sum_{i = 1}^T \log \big(1-\eta_i(1-\gamma p)\big) \ge - 1.5\sum_{i = 1}^T \eta_i(1-\gamma p) \ge -2,
	\label{eq:log-eta-0T-LB}
\end{align}
where the first inequality holds due to the elementary fact $\log(1-x)\geq -1.5x$ for all $0\leq x \leq 0.5$, 
and the last inequality follows from the following bound (which makes use of \eqref{eq:etat-gammap-UB1})
\[
\sum_{i=1}^{T}\eta_{i}(1-\gamma p)\leq\begin{cases}
\frac{3}{4\log T}\sum_{i=1}^{T}\frac{1}{i}\leq1, & \text{if }\eta_{t}=\frac{1}{1+\ceta(1-\gamma)t}\\
\frac{4}{3T}\sum_{i=1}^{T}1=\frac{4}{3}, & \text{if }\eta_{t}=\eta.
\end{cases}
\]
Combining the above result with the properties \eqref{eq:EV2} and \eqref{eq:log-eta-0T-LB} then yields
\begin{equation}
\label{eqn:Vstar-lb-violin}
	V^{\star}(2) - \mathbb{E} \big[V_T(2)\big]  =  \frac{\eta_0^{(T)}}{1-\gamma p} \ge  \frac{e^{-2}}{1-\gamma p} = \frac{3}{4e^2(1-\gamma)}.
\end{equation}
This taken together with \eqref{eq:bias-variance-decomposition} gives
\begin{equation}
	\label{eq:MSE-V2-case1}
	\mathbb{E}\big[ \big( V^{\star}(2) - V_T(2) \big)^2 \big] 
	\geq \big( V^{\star}(2) - \mathbb{E} \big[V_T(2)\big] \big)^2 
	\geq \frac{9}{16e^4(1-\gamma)^2}.
\end{equation}

\subsubsection{Case 2: large learning rates ($\ceta \leq 1-\gamma$ or $ \eta \geq \frac{1}{(1-\gamma)^2T}$)}

By virtue of \eqref{eq:Vstar2-V2-expansion-Case2},  the mean gap $V^{\star}(2) - \mathbb{E}\big[V_{T}(2)\big]$ depends on two factors: (i) the choice of the learning rates, and (ii) the gap between $\frac{1}{1-\gamma p}$ and $\mathbb{E}\big[ V_{\tau}(2)\big]$, where $\tau$ is an integer obeying $0\leq \tau <T$. 
To control the factor (ii), we need to choose $\tau$ properly. 
Let us start by considering the simple scenario with $\mathbb{E}\big[ \big( V_{T}(2) \big)^2 \big] < \frac{1}{4(1-\gamma)^2}$, for which we have
\begin{align}
	V^{\star}(2) -  \mathbb{E}\big[  V_{T}(2)  \big] 
	\geq \frac{3}{4(1-\gamma)} -  \sqrt{\mathbb{E}\big[ \big( V_{T}(2) \big)^2 \big]} 
	\geq \frac{1}{4(1-\gamma)}. 
	\label{eq:mean-diff-large-tau-T}
\end{align}
Here, we have used \eqref{eq:optimal-V-Q-hardMDP} and the elementary fact $\mathbb{E}[X]\leq \sqrt{\mathbb{E}[X^2]}$. Consequently, it remains to look at the scenario obeying $\mathbb{E}\big[ \big( V_{T}(2) \big)^2 \big] \geq \frac{1}{4(1-\gamma)^2}$, towards which we propose to set $\tau$ as follows
\begin{align}
	\label{eq:defn-tau-Case2-LB}
	\tau \defn \min\left\{0\leq \tau' \leq T-1 \,\,\Big|\,\, \mathbb{E}\big[ \big( V_{t}(2) \big)^2 \big] 
	\ge \frac{1}{4(1-\gamma)^2} ~\text{for all } \tau'+1 \leq t \leq  T \right\} .
\end{align}
Clearly, $\tau$ is well-defined in this scenario and obeys (in view of both \eqref{eq:defn-tau-Case2-LB} and the initialization $V_0=0$)
\begin{align}
	\label{eq:E-Vtau-UB-case2}
	\mathbb{E}\big[ \big( V_{\tau}(2) \big)^2 \big] 
	< \frac{1}{4(1-\gamma)^2} .
\end{align}
Our analysis for this scenario is divided into three subcases based on the size of the learning rates.

\paragraph{\em Case 2.1.} Consider the case where
\begin{align}
	\label{eq:case-21-condition}
\prod_{i = \tau+1}^T \big(1-\eta_i(1-\gamma p)\big) \geq \frac{1}{2}.
\end{align}
Invoke \eqref{eq:Vstar2-V2-expansion-Case2} to deduce that
\begin{align*}
	V^{\star}(2) - \mathbb{E}\big[V_{T}(2)\big] &= \prod_{i = \tau+1}^T \big(1-\eta_i(1-\gamma p)\big)\Big[\frac{1}{1-\gamma p} - \mathbb{E}\big[V_{\tau}(2)\big]\Big] \\
	&\ge \prod_{i = \tau+1}^T \big(1-\eta_i(1-\gamma p)\big)\Big[\frac{3}{4(1-\gamma) } - \sqrt{\mathbb{E}\big[ \big( V_{\tau}(2) \big)^2 \big] }\Big] \\
&\ge \prod_{i = \tau+1}^T \big(1-\eta_i(1-\gamma p)\big)\frac{1}{4(1-\gamma)} \ge \frac{1}{8(1-\gamma)}, 
\end{align*}
where the second line makes use of the definition \eqref{defn-parameter-p} and the elementary fact $\mathbb{E}[X]\leq \sqrt{\mathbb{E}[X^2]}$, 
and the last line relies on the inequalities \eqref{eq:E-Vtau-UB-case2} and \eqref{eq:case-21-condition}.

\paragraph{\em Case 2.2.} We now move on to the case where 
\begin{align}
0 \le \prod_{i = \tau+1}^T \big(1-\eta_i(1-\gamma p)\big) \le \frac{1}{2}.
	\label{eq:prod-etai-UB-lower-bound}
\end{align}
We intend to demonstrate that the variance of $V_T(2)$---and hence the typical size of its fluctuation---is too large. 
In view of the observation \eqref{eq:Var-VT-2-LB-123}, it boils down to lower bounding $\sum_{k=\tau+2}^{T}\big(\eta_{k}^{(T)}\big)^{2}$, which we accomplish as follows. 
\begin{itemize}
	\item Consider constant learning rates $\eta_k=\eta$, and suppose that $\eta$ obeys $ \frac{1}{(1-\gamma)^2T} < \eta \leq 1< \frac{1}{1-\gamma p}$. 
		It is readily seen that $\eta_{k}^{(T)}=\eta\big(1-\eta(1-\gamma p)\big)^{T-k}$ for any $k\geq 1$. 
We claim that it suffices to focus on the scenario where
\begin{align}
	\tau \leq T-2. 
	\label{eq:assumption-tau-T-minus-2}
\end{align}
In fact, if $\tau \geq T-1$, then the definition \eqref{eq:defn-tau-Case2-LB} of $\tau$ necessarily requires that
\[
	\mathbb{E}\big[V_{T-1}(2)\big] \leq \sqrt{ \mathbb{E}\big[ \big( V_{T-1}(2) \big)^2 \big] } < \frac{1}{2(1-\gamma)}.
\]
In view of \eqref{eq:EV2} (with $T$ replaced by $T-1$), a little algebra shows that this is equivalent to $\big(1-\eta(1-\gamma p)\big)^{T-1} \ge 1/3$,
and hence $\big(1-\eta(1-\gamma p)\big)^{T} \ge 1/9$.
In turn, this combined with \eqref{eq:EV2} leads to
	\begin{align}
		V^{\star}(2) - \mathbb{E}\big[V_{T}(2)\big] 
		= \frac{ \big(1-\eta(1-\gamma p)\big)^{T}}{1-\gamma p} 
		= \frac{3 \big(1-\eta(1-\gamma p)\big)^{T}}{4( 1-\gamma )} 
		\ge \frac{1}{12(1-\gamma)},
		\label{eq:large-tau-case-mean-diff}
	\end{align}
which already suffices for our purpose. 

Next, assuming that \eqref{eq:assumption-tau-T-minus-2} holds, one can derive 
\begin{align}
\sum_{k=\tau+2}^{T}\big(\eta_{k}^{(T)}\big)^{2} & =\sum_{k=\tau+2}^{T}\eta^{2}\big(1-\eta(1-\gamma p)\big)^{2(T-k)}=\frac{\eta^{2}\big[1-\big(1-\eta(1-\gamma p)\big)^{2(T-\tau-1)}\big]}{1-\big(1-\eta(1-\gamma p)\big)^{2}} \notag\\
	& \ge\frac{\eta^{2}/2}{1-\big( 1- \eta(1-\gamma p)\big)^{2}}\ge\frac{3\eta}{16(1-\gamma)},
	\label{eq:eta-k-prod-case-1234}
\end{align}
where the first inequality holds since (from the assumptions \eqref{eq:prod-etai-UB-lower-bound} and $\tau\leq T-2$)
\[
	0\leq \big(1-\eta(1-\gamma p)\big)^{2(T-\tau-1)} \le \big(1-\eta(1-\gamma p)\big)^{T-\tau} 
		= \prod_{i = \tau+1}^T \big(1-\eta_i(1-\gamma p)\big) \le \frac{1}{2}, 
\]
and the last inequality follows since 
\[
	0 \le 1-\big( 1- \eta(1-\gamma p)\big)^{2} =  1-\Big(1-\frac{4\eta(1-\gamma)}{3}\Big)^{2} \le \frac{8\eta(1-\gamma)}{3}.
\]
Substituting \eqref{eq:eta-k-prod-case-1234} into \eqref{eq:Var-VT-2-LB-123}, we obtain
\begin{align}
\mathsf{Var}\big(V_{T}(2)\big) & \ge\frac{4\gamma-1}{36(1-\gamma)}\sum_{k=\tau+1}^{T}\big(\eta_{k}^{(T)}\big)^{2}\ge\frac{2}{36(1-\gamma)}\cdot\frac{3\eta}{16(1-\gamma)} \notag\\
 & =\frac{\eta}{96(1-\gamma)^{2}}\geq\frac{1}{96(1-\gamma)^{4}T} ,\label{eq:case2_constant_stepsize}
\end{align}
 provided that $\gamma \geq 3/4$ (so that $4\gamma - 1 \geq 2$). 
Here, the last inequality is valid since either $\eta\geq \frac{1}{(1-\gamma)^2T}$.

\item We then move on to linearly rescaled learning rates with $\eta_t = \frac{1}{1+\ceta(1-\gamma)t}$ for some $0\leq \ceta<1-\gamma$. 
Towards this, we first make the observation that 
\begin{align}
\frac{\eta_{k-1}^{(T)}}{\eta_{k}^{(T)}} & =\frac{\eta_{k-1}\big(1-\eta_{k}(1-\gamma p)\big)}{\eta_{k}}=\frac{1-\frac{4}{3}(1-\gamma)\eta_{k}}{1-\big(\frac{1}{\eta_{k}}-\frac{1}{\eta_{k-1}}\big)\eta_{k}}=\frac{1-\frac{4}{3}(1-\gamma)\eta_{k}}{1-\ceta(1-\gamma)\eta_{k}}=1-\frac{\big(\frac{4}{3}-\ceta\big)(1-\gamma)\eta_{k}}{1-\ceta(1-\gamma)\eta_{k}} \notag\\
 & \le1-(1-\gamma)\eta_{k}\le1-(1-\gamma)\eta_{T},
 \label{eq:ratio-etakT-eta-kminus1}
\end{align}	
with the proviso that $\ceta < 1-\gamma \leq 1/3$ (as long as $\gamma \geq 2/3$).  
By defining $\tau' \defn T-\frac{1}{(1-\gamma)\eta_{T}}$, one can deduce that
\begin{align}
\sum_{k=\tau+2}^{T}\big(\eta_{k}^{(T)}\big)^{2} & \ge\sum_{k=\max\{\tau+2,\tau'+1\}}^{T}\big(\eta_{k}^{(T)}\big)^{2}\ge\frac{1}{T-\max\{\tau+1,\tau'\}}\left[\sum_{k=\max\{\tau+2,\tau'+1\}}^{T}\eta_{k}^{(T)}\right]^{2} \notag\\
 & \ge(1-\gamma)\eta_{T}\left[\sum_{k=\max\{\tau+2,\tau'+1\}}^{T}\eta_{k}^{(T)}\right]^{2} ,
	\label{eq:sum-eta-kT-lower-bound-234}
\end{align}
where the penultimate inequality comes from the Cauchy-Schwarz inequality. 
In addition, recognizing that $\eta_{k_1}^{(T)} \leq \big(1-(1-\gamma)\eta_{T}\big)^{k_2-k_1}\eta_{k_2}^{(T)}$ for any $k_2 \ge k_1$ (see \eqref{eq:ratio-etakT-eta-kminus1}), one has 
\begin{align*}
	\sum_{k=\tau'+1}^{T}\eta_{k}^{(T)} & = \sum_{k=\tau'+1}^{T}\eta_{k}^{(T)}, \notag\\
\sum_{k=\max\{2\tau'-T+1,1\}}^{\tau'}\eta_{k}^{(T)} & \leq\big(1-(1-\gamma)\eta_{T}\big)^{T-\tau'}\sum_{k=\tau'+1}^{T}\eta_{k}^{(T)}, \notag\\
\sum_{k=\max\{3\tau'-2T+1,1\}}^{2\tau'-T}\eta_{k}^{(T)} & \leq \big(1-(1-\gamma)\eta_{T}\big)^{2(T-\tau')}\sum_{k=\tau'+1}^{T}\eta_{k}^{(T)} , \notag\\
 & \cdots
\end{align*}
Summing these inequalities up and rearranging terms, we reach
\begin{align*}
\sum_{k=\tau'+1}^{T}\eta_{k}^{(T)} & \ge\frac{\sum_{k=1}^{T}\eta_{k}^{(T)}}{1+\big(1-(1-\gamma)\eta_{T}\big)^{T-\tau'}+\big(1-(1-\gamma)\eta_{T}\big)^{2(T-\tau')}+\ldots}\geq\frac{\sum_{k=1}^{T}\eta_{k}^{(T)}}{\frac{1}{1-\big(1-(1-\gamma)\eta_{T}\big)^{T-\tau'}}}\\
 & =\left(1-\big(1-(1-\gamma)\eta_{T}\big)^{T-\tau'}\right)\sum_{k=1}^{T}\eta_{k}^{(T)}\geq (1-e^{-1})\sum_{k=1}^{T}\eta_{k}^{(T)},
\end{align*}
which relies on the fact $\big(1-(1-\gamma)\eta_{T}\big)^{T-\tau'} = \big(1-1/(T-\tau')\big)^{T-\tau'} \le e^{-1}$ (using the definition of $\tau'$).
Consequently, it is easily seen that
\begin{align*}
\sum_{k=\max\{\tau+2,\tau'+1\}}^{T}\eta_{k}^{(T)} 
	& = \min \left\{ \sum_{k=\tau+2}^{T}\eta_{k}^{(T)} , \sum_{k=\tau'+1}^{T}\eta_{k}^{(T)} \right\} 
	\ge (1-e^{-1})\sum_{k = \tau+2}^T \eta_k^{(T)} \\
	& \overset{\text{(i)}}{=} (1-e^{-1})\left[1-\prod_{i = \tau+2}^t \big(1-\eta_i(1-\gamma p)\big)\right]\frac{1}{1-\gamma p} \\
	&\overset{\text{(ii)}}{\ge} \left[1-\frac{1}{2\big(1-\eta_{\tau+1}(1-\gamma p)\big)}\right]\frac{1-e^{-1}}{1-\gamma p} 
	\overset{\text{(iii)}}{\ge} \frac{1-e^{-1}}{4(1-\gamma p)} \ge \frac{3}{32(1-\gamma )} .
\end{align*}
Here, (i) and (ii) follow from \eqref{eq:eta-sum-hat-tau} and \eqref{eq:prod-etai-UB-lower-bound}, respectively, while (iii) holds since 
$$\eta_{\tau+1}(1-\gamma p) \le  1-\gamma p = \frac{4(1-\gamma)}{3} \le \frac{1}{3}$$ as long as $\gamma \ge 3/4$. 
Substitution into \eqref{eq:sum-eta-kT-lower-bound-234} yields
\begin{align}
\sum_{k=\tau+2}^{T}\big(\eta_{k}^{(T)}\big)^{2} & 
	\ge\frac{9\eta_{T}}{1024(1-\gamma)}.
	\label{eq:sum-eta-kT-lower-bound-567}
\end{align}

Substituting the above bound into \eqref{eq:Var-VT-2-LB-123}, we obtain
\begin{align}
\mathsf{Var}\big(V_{T}(2)\big) & \ge\frac{4\gamma-1}{36(1-\gamma)}\sum_{k=\tau+1}^{T}\big(\eta_{k}^{(T)}\big)^{2}\ge\frac{2}{36(1-\gamma)}\cdot\frac{9\eta_{T}}{1024(1-\gamma)} \notag\\
 & =\frac{\eta_{T}}{2048(1-\gamma)^{2}}\geq\frac{1}{4096(1-\gamma)^{4}T} , \label{eq:case2_rescaled_stepsize}
\end{align}
 provided that $\gamma \geq 3/4$ (so that $4\gamma - 1 \geq 2$). 
Here, the last inequality is valid since $\eta_T=\frac{1}{1+\ceta (1-\gamma) T} \geq \frac{1}{1+(1-\gamma)^2 T}\geq \frac{1}{2(1-\gamma)^2 T}$ as long as $T\geq \frac{1}{(1-\gamma)^2}$.

\end{itemize}

\paragraph{Putting all this together.}

With the above bounds in place, it is readily seen that either the bias is too large (see \eqref{eq:large-tau-case-mean-diff})
or the variance is too large (see \eqref{eq:case2_constant_stepsize} and \eqref{eq:case2_rescaled_stepsize}). These bounds taken collectively with \eqref{eq:bias-variance-decomposition} yield
\begin{align}
	\mathbb{E}\big[ \big( V^{\star}(2) - V_T(2) \big)^2 \big] 
	&\geq \big( V^{\star}(2) - \mathbb{E} \big[V_T(2)\big] \big)^2 + \mathsf{Var}\big(V_{T}(2)\big)  \notag\\
	&\geq \min \left\{   \frac{1}{144(1-\gamma)^2}, \frac{1}{96(1-\gamma)^4T} , \frac{1}{4096(1-\gamma)^4T}  \right\} = \frac{1}{4096(1-\gamma)^4T} ,
	\label{eq:MSE-V2-case1}
\end{align}
provided $T\geq \frac{1}{(1-\gamma)^2}$.

\subsubsection{Case 3: medium learning rates ($ 1-\gamma < \ceta < \log T$ or $ \frac{1}{(1-\gamma)T} \leq \eta \leq \frac{1}{(1-\gamma)^2T}$)}

Throughout this case, we assume that 
\begin{equation}
	\label{eq:eta0_lb_case3}
	\eta_{0}^{(T)}\leq \frac{1}{75}.
\end{equation}
In fact, if $\eta_{0}^{(T)}>1/75$, then the scenario becomes much easier to cope with. 
To see this, applying the previous result~\eqref{eqn:Vstar-lb-violin} and recalling the choice \eqref{defn-parameter-p} of $p$ immediately yield
\begin{align}
	V^{\star}(2) - \mathbb{E}\big[V_T(2)\big] \ge \frac{\eta_0^{(T)}}{1-\gamma p} > \frac{1}{100(1-\gamma)},
\end{align}
which together with \eqref{eq:bias-variance-decomposition} and the assumption $T\geq \frac{1}{(1-\gamma)^2}$ yields
\begin{align}
	\mathbb{E}\big[ \big( V^{\star}(2) - V_T(2) \big)^2 \big] 
	&\geq \big( V^{\star}(2) - \mathbb{E} \big[V_T(2)\big] \big)^2 
	 \geq \frac{1}{10000(1-\gamma)^2}
	 \geq \frac{1}{10000(1-\gamma)^4 T} .
	\label{eq:MSE-V2-case3-456}
\end{align}

We now turn our attention to the dynamics w.r.t.~state $1$
and its associated value function $V_{t}(1)$ under the condition \eqref{eq:eta0_lb_case3}.

\paragraph{Two auxiliary sequences.}
Towards this, we first eliminate the effect of initialization on $Q_t(1, a)$ by introducing the following auxiliary sequence
\begin{align}
	\label{eqn:Qhat-schumann}
	\widehat{Q}_t(a) = (1-\eta_t)\widehat{Q}_{t-1}(a) + \eta_t\big\{ 1 + \gamma P_{t}(1\mymid 1, a)\widehat{V}_{t-1}\big\},
\end{align}
with 
\begin{align*}
	\widehat{V}_{t-1} \coloneqq \max_a \widehat{Q}_{t-1}(a) \qquad \text{and} \qquad \widehat{Q}_0(a) \coloneqq Q^{\star}(1, a) = \frac{1}{1-\gamma p},
\end{align*}
where we recall the value of $ Q^{\star}(1, a)$ from Lemma~\ref{lem:optimal-V-Q-hardMDP}.
In other words, $\{\widehat{Q}_t(a)\}$ is essentially a Q-learning sequence when initialized at the ground truth. 
Despite the difference in initialization, 
we claim that the discrepancy between $\widehat{Q}_t(a)$ and $Q_t(1, a)$ can be well controlled in the following sense:
\begin{align} 
\label{eq:Qt}
	Q_t(1, a) \ge \widehat{Q}_t(a) - \frac{1}{1-\gamma}\prod_{i = 1}^t \big(1-\eta_i(1-\gamma)\big) ,
	\qquad  a \in \{1,2\},
\end{align}
which shall be justified in Section~\ref{sec:proof-auxiliary-results-Case3}. As we shall discuss momentarily, the gap $\frac{1}{1-\gamma}\prod_{i = 1}^t \big(1-\eta_i(1-\gamma)\big)$ is sufficiently small for this case.

Further, in order to control $\widehat{Q}_t(a)$, we find it convenient to introduce another auxiliary sequence as follows
\begin{align}
	\overline{Q}_t = (1-\eta_t)\overline{Q}_{t-1} + \eta_t\big\{ 1 + \gamma P_{t}(1\mymid 1, 1)\overline{Q}_{t-1}\big\} 
	\qquad \text{and} \qquad 
	\overline{Q}_0 = V^{\star}(1) = \frac{1}{1-\gamma p},
	\label{eq:sequence-overline-Qt}
\end{align}
which can be interpreted as a Q-learning sequence when there is only a single action (so that there is no max operator involved). 
In view of the basic fact that $\widehat{V}_t =\max_a \widehat{Q}_t(a) \ge \widehat{Q}_t(1)$, we can easily verify that
\begin{align}
	\widehat{Q}_t(1) \ge (1-\eta_t)\widehat{Q}_{t-1}(1) + \eta_t\big\{1 + \gamma P_{t}(1\mymid 1, 1)\widehat{Q}_{t-1}(1)\big\}
	\ge \overline{Q}_t,
	\label{eq:ordering-Qt-hat-overline}
\end{align}
allowing one to lower bound $\widehat{V}_t$ by controlling $\overline{Q}_t$.

\paragraph{A useful lower bound on the auxiliary sequence~\eqref{eqn:Qhat-schumann}.}

In what follows, let us establish a useful lower bound on the sequence~\eqref{eqn:Qhat-schumann} introduced above. 
 Then we claim that there exists some $\tau \leq T$ (see \eqref{eq:defn-tau-case31} and \eqref{eq:defn-tau-case32}) such that  
\begin{align}
	\label{eqn:Tailbound-vhat-rolland}
	\mathbb{P}\Big\{\widehat{V}_t \ge \frac{1}{4(1-\gamma)} \Big\} \ge \frac{1}{2},\qquad \text{for}~t \ge \tau.
\end{align}
The auxiliary sequence constructed in \eqref{eq:sequence-overline-Qt} plays a crucial role in establishing this claim.

\begin{proof}[Proof of the claim \eqref{eqn:Tailbound-vhat-rolland}]

We intend to employ the sequence $\overline{Q}_{t}$ (cf.~\eqref{eq:sequence-overline-Qt}) to help control $\widehat{V}_t$. 
It is first observed that the sequence $\overline{Q}_t$ admits the following decomposition (akin to the derivation in \eqref{eq:Vt2-expansion-case1})
\begin{align}
\notag \overline{Q}_{t} & =\big(1-\eta_{t}(1-\gamma p)\big)\overline{Q}_{t-1}+\eta_{t}\big\{1+\gamma\big(P_{t}(1\mymid1,1)-p\big)\overline{Q}_{t-1}\big\}\\
\notag & =\prod_{i=1}^{t}\big(1-\eta_{i}(1-\gamma p)\big)\overline{Q}_{0}+\sum_{k=1}^{t}\eta_{k}\prod_{i=k+1}^{t}\big(1-\eta_{i}(1-\gamma p)\big)\big\{1+\gamma\big(P_{k}(1\mymid1,1)-p\big)\overline{Q}_{k-1}\big\}\\
 \notag & =\eta_{0}^{(t)}  \frac{1}{1-\gamma p} +\sum_{k=1}^{t}\eta_{k}^{(t)}+\sum_{k=1}^{t}\eta_{k}^{(t)}\gamma\big(P_{k}(1\mymid1,1)-p\big)\overline{Q}_{k-1}\\
 & =\frac{1}{1-\gamma p}+\sum_{k=1}^{t}\underset{\eqqcolon\,z_{k}}{\underbrace{\eta_{k}^{(t)}\gamma\big(P_{k}(1\mymid1,1)-p\big)\overline{Q}_{k-1}}},
	\label{eqn:qbar-up-decomp}
\end{align}
where the last line results from \eqref{eq:eta-sum-hat}. In order to lower bound $\overline{Q}_{t}$, it boils down to controlling $\sum_k z_k$.

Note that the sequence $\{z_k\}$ defined above is a martingale satisfying
\begin{align*}
	\mathbb{E}\big[ z_k\mymid &P_{k-1}(1\mymid 1, 1), \ldots, P_{1}(1\mymid 1, 1)\big] = 0 \\
	\text{and} &\qquad |z_k| \le \max_{1 \le k \le t} \eta_k^{(t)} \cdot \frac{\gamma p}{1-\gamma},
\end{align*}
where the last inequality follows from the basic property $0\leq \overline{Q}_{k-1}\leq \frac{1}{1-\gamma}$ (akin to Lemma~\ref{lemma:non-negativity-Qt-Vt}) and the fact that $\big| P_{k}(1\mymid1,1)-p\big|\leq \max \{p, 1-p\}=p$ since $p = (4\gamma-1)/(3\gamma)$ and $\gamma \geq 3/4$. We intend to invoke Freedman's inequality to control \eqref{eqn:qbar-up-decomp}. 
Armed with these properties and the fact that $P_k(1\mymid 1,1)$ is a Bernoulli random variable with mean $p$, we obtain
\begin{align*}
\sum_{k = 1}^t \mathsf{Var}\Big(z_k\mymid P_{k-1}(1\mymid 1, 1), \ldots, P_{1}(1\mymid 1, 1)\Big) &= \sum_{k = 1}^t \big( \eta_k^{(t)}\big)^2p(1-p)\big(\gamma\overline{Q}_{k-1}\big)^2 \\
&\le \max_{1 \le k \le t} \eta_k^{(t)} \cdot \sum_{k = 1}^t \eta_k^{(t)} \cdot \frac{1}{3(1-\gamma)} \le \frac{\max_{1 \le k \le t} \eta_k^{(t)}}{4(1-\gamma)^2}.
\end{align*}
Here, the penultimate inequality relies on the fact $0\leq \overline{Q}_{k-1}\leq \frac{1}{1-\gamma}$ (akin to Lemma~\ref{lemma:non-negativity-Qt-Vt}) and the choice of $p$ (see definition~\eqref{defn-parameter-p}), 
whereas the last inequality results from the following condition (derived through \eqref{eq:eta-sum-hat})
\[
	\sum_{k = 1}^t \eta_k^{(t)} = \big(1 - \eta_0^{(t)}\big)\frac{1}{1-\gamma p} \le \frac{1}{1-\gamma p} = \frac{3}{4(1-\gamma)} .
\]
Applying Freedman's inequality (see \eqref{eq:Freedman-special}) then yields 
\begin{align} 
	\label{eq:variance}
\mathbb{P}\left\{\bigg|\sum_{k = 1}^t z_k\bigg| \ge \sqrt{\frac{4\max_{1 \le k \le t} \eta_k^{(t)}}{(1-\gamma)^2}\log\frac{2}{\delta}} + \frac{4\max_{1 \le k \le t} \eta_k^{(t)}}{3(1-\gamma)}\log\frac{2}{\delta} \right\} \le \delta.
\end{align}

As an implication of the preceding result, a key ingredient towards bounding $\sum_{k = 1}^t z_k$ lies in controlling the quantity $\max_{1 \le k \le t} \eta_k^{(t)}$. To do so, we claim for the moment that there exists some $\tau\leq T$ such that 
\begin{align}
	\label{eqn:schubert-winterreise}
	\max_{1 \le k \le t} \eta_k^{(t)} \le \frac{1}{50},\qquad\text{for}~t \ge \tau,
\end{align} 
whose proof is postponed to Section~\ref{sec:proof-auxiliary-results-Case3}. 
In light of this claim, setting $\delta = 1/2$ in the expression~\eqref{eq:variance} yields $$\sum_{k = 1}^t z_k \geq -\frac{1}{2(1-\gamma)}$$ with probably at least $1/2$. Combining this with the decomposition~\eqref{eqn:qbar-up-decomp} and  the property \eqref{eq:ordering-Qt-hat-overline}, we arrive at 
\begin{align*}
	\widehat{V}_t \geq \widehat{Q}_t(1) \geq \overline{Q}_t \geq \frac{1}{1-\gamma p} -\frac{1}{2(1-\gamma)}
	=  \frac{1}{4(1-\gamma)}
\end{align*}
with probability at least $1/2$, where the last identity relies on the choice of $p$ (see the definition~\eqref{defn-parameter-p}).
This establishes the advertised claim \eqref{eqn:Tailbound-vhat-rolland}. 
\end{proof}

\paragraph{Main proof.}
With the property \eqref{eqn:Tailbound-vhat-rolland} in place, we are positioned to prove our main result. 
Towards this, we find it convenient to define
\begin{subequations}
\begin{align}
	\Delta_t(a) &\defn \widehat{Q}_t(a) - Q^{\star}(1, a), \qquad a = 1, 2; \\
	\Delta_{t,\mathsf{max}} &\defn \max_a \Delta_t(a). 
\end{align}
\end{subequations}
The goal is thus to control $\Delta_{T,\mathsf{max}}$; in fact, we intend to show that $\Delta_{T,\mathsf{max}}$ is in expectation excessively large, resulting in an ``over-estimation'' issue that hinders convergence.
Towards this, it follows from the iterative update rule \eqref{eqn:Qhat-schumann} that 
\begin{align*}
\Delta_t(a) &= (1-\eta_t)\Delta_{t-1}(a) + \eta_t\big(1 + \gamma P_{t}(1\mymid 1, a)\widehat{V}_{t-1} - Q^{\star}(1, a)\big) \\
&= (1-\eta_t)\Delta_{t-1}(a) + \eta_t\gamma\big(P_{t}(1\mymid 1, a)\widehat{V}_{t-1} - pV^{\star}(1)\big) \\
	&= (1-\eta_t)\Delta_{t-1}(a) + \eta_t\gamma\big(p \big( \widehat{V}_{t-1} - V^{\star}(1) \big) + \big( P_{t}(1\mymid 1, a) - p \big) \widehat{V}_{t-1}\big) \\
&= (1-\eta_t)\Delta_{t-1}(a) + \eta_t\gamma\Big(p\Delta_{t-1,\mathsf{max}} + \big(P_{t}(1\mymid 1, a) - p \big)\widehat{V}_{t-1}\Big).
\end{align*}
Here, the second line comes from the Bellman equation $Q^{\star}(1, a) = 1 + \gamma pV^{\star}(1)$,
whereas the last line holds since $\widehat{V}_{t-1} - V^{\star}(1) = \max_a \big(\widehat{Q}_{t-1}(a) - V^{\star}(1)\big) = \max_a\Delta_{t-1}(a)$ (as a consequence of the relation~\eqref{eq:optimal-V-Q-hardMDP}). 
Applying the above relation recursively leads to
\begin{align}
\Delta_t(a) = \sum_{k = 1}^t\eta_k\prod_{i = k+1}^t \big(1-\eta_i\big)\gamma\Big(p\Delta_{k-1,\mathsf{max}} + \big(P_{k}(1\mymid 1, a) - p \big)\widehat{V}_{k-1}\Big),
\end{align}
where we have used the initialization $\Delta_0(a) = 0$. Letting 
\begin{subequations}
	\label{eqn:defn-xi-t}
\begin{align}
	\xi_t(a) &\defn \sum_{k = 1}^t\eta_k\prod_{i = k+1}^t \big(1-\eta_i\big)\gamma \big( P_{k}(1\mymid 1, a) - p \big)\widehat{V}_{k-1}, \\
	\xi_{t,\mathsf{max}} &\defn \max_a \xi_t(a),
\end{align}
\end{subequations}
%
one can express the above relation as follows 
\begin{align*}
\Delta_{t,\mathsf{max}} = \sum_{k = 1}^t\eta_k\prod_{i = k+1}^t \big(1-\eta_i\big)\gamma p \Delta_{k-1,\mathsf{max}} + \xi_{t,\mathsf{max}}.
\end{align*}

Next, we claim that $\mathbb{E} [ \xi_{t,\mathsf{max}} ]$ satisfies the following property 
\begin{align}
\label{eqn:lb-xi}
	\mathbb{E} [ \xi_{t,\mathsf{max}} ] &\ge \frac{c}{\sqrt{(1-\gamma)^2T}\log T}
	\qquad \text{for all}~t \ge \widehat{\tau}
\end{align}
for some universal constant $c>0$,  where
\begin{align}
\label{eq:tau-hat-byproduct-duplicate}
\widehat{\tau} \defn \max\Bigg\{\tau' \,\,\Big|\,\, \prod_{i = \tau'}^{T} \big(1-\eta_i(1-\gamma p)\big) \le \frac{6}{7}\Bigg\}	,
\end{align}
whose existence is ensured under the condition \eqref{eq:eta0_lb_case3}.
Given the validity of this claim (which we shall justify in Section~\ref{sec:proof-auxiliary-results-Case3}), we immediately arrive at
\begin{align}
\label{eqn:Delta-iterates-exp}
	\mathbb{E} [ \Delta_{t,\mathsf{max}} ] &\ge \sum_{k = 1}^t\eta_k\prod_{i = k+1}^t \big(1-\eta_i\big)\gamma p \mathbb{E} [ \Delta_{k-1,\mathsf{max}} ] + \frac{c}{\sqrt{(1-\gamma)^2T}\log T}\qquad \text{for all}~t \ge \widehat{\tau}.
\end{align}

In order to study the above recursion, it is helpful to look at the following sequence
\begin{align}
\label{eqn:new-sequence}
x_t = (1-\eta_t)x_{t-1} + \eta_t \bigg(\gamma p x_{t-1} + \frac{c}{\sqrt{(1-\gamma)^2T}\log T} \bigg)
\end{align}
with $x_{\widehat{\tau}} = 0$, where we recall the definition of $\widehat{\tau}$ in \eqref{eq:tau-hat-byproduct-duplicate}.  
In comparison to the iterative relation~\eqref{eqn:Delta-iterates-exp} which starts from $\mathbb{E} [ \Delta_{0,\mathsf{max}} ] = 0$ (and hence $\mathbb{E} [ \Delta_{t,\mathsf{max}} ] \geq 0$), we let the sequence $x_t$ start from $x_{\widehat{\tau}} = 0$, where $\widehat{\tau}$ is defined in \eqref{eq:tau-hat-byproduct-duplicate}.
It is straightforward to verify that 
\begin{equation}
	\mathbb{E} \big[ \Delta_{T,\mathsf{max}} \big] \ge x_T,
	\label{eq:E-Delta-Tmax-xT-order}
\end{equation}
recognizing that
\begin{align*}
x_t = \sum_{k = \widehat{\tau}}^t\eta_k\prod_{i = k+1}^t \big(1-\eta_i\big)\gamma p x_{k-1} + 
 \sum_{k = \widehat{\tau}}^T \eta_k\prod_{i = k+1}^T
\big(1- \eta_i)\frac{c}{\sqrt{(1-\gamma)^2T}\log T}. 
\end{align*}
A little algebra reveals that the sequence~\eqref{eqn:new-sequence} obeys
\begin{align*}
	x_T &= \frac{c}{\sqrt{(1-\gamma)^2T}\log T} \sum_{k = \widehat{\tau}}^T \eta_k\prod_{i = k+1}^T
\big(1-\eta_i(1-\gamma p) \big) 
= \frac{c}{\sqrt{(1-\gamma)^2T}\log T}\frac{1}{1-\gamma p}\left[1-\prod_{i = \widehat{\tau}}^T
\big(1-\eta_i(1-\gamma p) \big)\right] \\
&= \frac{3c}{4\sqrt{(1-\gamma)^4 T}\log T}\left[1-\prod_{i = \widehat{\tau}}^T
\big(1-\eta_i(1-\gamma p) \big)\right]
\ge \frac{3c}{28\sqrt{(1-\gamma)^4T}\log T},
\end{align*}
where the second equality arises from \eqref{eq:eta-sum-hat-tau},
and the last inequality holds as long as $\prod_{i = \widehat{\tau}}^T \big(1-\eta_i(1-\gamma p)\big) \le 6/7$ (see \eqref{eq:tau-hat-byproduct}). 
This taken together with \eqref{eq:E-Delta-Tmax-xT-order} leads to 
\begin{align*}
	\mathbb{E} \big[ \Delta_{T,\mathsf{max}} \big] \ge x_T \ge \frac{3c}{28\sqrt{(1-\gamma)^4T}\log T} .
\end{align*}

Combining the above bound with \eqref{eq:Qt} leads to
\begin{align*} 
	\mathbb{E} \Big[ V_T(1) - V^{\star}(1) \Big] &\ge \mathbb{E} \Big[  \Delta_{T,\mathsf{max}} - \frac{1}{1-\gamma}\prod_{i = 1}^T \big(1-\eta_i(1-\gamma)\big) \Big] \\
	&\ge \frac{3c}{28\sqrt{(1-\gamma)^4T}\log T} - \frac{1}{1-\gamma}\prod_{i = 1}^T \big(1-\eta_i(1-\gamma)\big).
\end{align*}
Taking this together with \eqref{eq:Vt3-expansion-case3}, we arrive at
\begin{align*} 
\max \Big\{	&\mathbb{E} \Big[ \big|V_T(3) - V^{\star}(3) \big| \Big],\;  \mathbb{E} \Big[ \big|V_T(1) - V^{\star}(1) \big| \Big] \Big\} \\
	&\ge \max\left\{\frac{1}{1-\gamma}\prod_{i = 1}^T \big(1-\eta_i(1-\gamma)\big), ~\frac{3c}{28\sqrt{(1-\gamma)^4T}\log T} - \frac{1}{1-\gamma}\prod_{i = 1}^T \big(1-\eta_i(1-\gamma)\big) \right\} \\
	&\ge \frac{1}{2} \cdot \frac{1}{1-\gamma}\prod_{i = 1}^T \big(1-\eta_i(1-\gamma)\big) + \frac{1}{2}  \left[\frac{3c}{28\sqrt{(1-\gamma)^4T}\log T} - \frac{1}{1-\gamma}\prod_{i = 1}^T \big(1-\eta_i(1-\gamma)\big)\right] \\
	&= \frac{3c}{56\sqrt{(1-\gamma)^4T}\log T}.
\end{align*}
This combined with \eqref{eq:bias-variance-decomposition}  establishes the following desired lower bound: 
\begin{align*} 
	\max_s \mathbb{E} \Big[ \big|V_T(s) - V^{\star}(s)\big|^2 \Big] 
	\geq \left( \frac{3c}{56\sqrt{(1-\gamma)^4T}\log T} \right)^2 = \frac{9c^2}{56^2 (1-\gamma)^4T \log^2 T}.
\end{align*}

\subsubsection{Proofs of auxiliary results}
\label{sec:proof-auxiliary-results-Case3}

\paragraph{Proof of the inequality \eqref{eq:Qt}.} 

We shall establish this claim by induction. 
To begin with, the inequality \eqref{eq:Qt} holds trivially for the base case with $t=0$.
Now, let us assume that the claim holds up to the $(t-1)$-th iteration, and we would like to justify it for the $t$-th iteration.
As an immediate consequence of the claim \eqref{eq:Qt} for the $(t-1)$-th iteration and the definitions of $V_{t-1}$ and  $\widehat{V}_{t-1}$, we have 
\begin{align*} 
	V_{t-1}(1) = \max_{a} Q_{t-1}(1, a) 
	&\ge \max_{a}  \widehat{Q}_{t-1}(a) - \frac{1}{1-\gamma}\prod_{i = 1}^{t-1} \big(1-\eta_i(1-\gamma)\big) \\
	&= \widehat{V}_{t-1} - \frac{1}{1-\gamma}\prod_{i = 1}^{t-1} \big(1-\eta_i(1-\gamma)\big) .
\end{align*}
By virtue of the respective update rules of $Q_t(1, a)$ and $\widehat{Q}_t(a)$, we can express their difference as follows:
\begin{align*}
Q_t(1, a) - \widehat{Q}_t(a) &= (1-\eta_t)\big(Q_{t-1}(1, a) - \widehat{Q}_{t-1}(a)\big) + \eta_t\gamma P_{t}(1\mymid 1, a)\big(V_{t-1}(1) - \widehat{V}_{t-1}\big) \\
&\ge -(1-\eta_t)\frac{1}{1-\gamma}\prod_{i = 1}^{t-1} \big(1-\eta_i(1-\gamma)\big) - \eta_t\gamma P_{t}(1\mymid 1, a) \frac{1}{1-\gamma}\prod_{i = 1}^{t-1} \big(1-\eta_i(1-\gamma)\big) \\
&\ge -(1-\eta_t)\frac{1}{1-\gamma}\prod_{i = 1}^{t-1} \big(1-\eta_i(1-\gamma)\big) - \eta_t\gamma  \frac{1}{1-\gamma}\prod_{i = 1}^{t-1} \big(1-\eta_i(1-\gamma)\big) \\
&= -\frac{1}{1-\gamma}\prod_{i = 1}^t \big(1-\eta_i(1-\gamma)\big), 
\end{align*}
where the first inequality invokes the induction hypothesis for the $(t-1)$-th iteration. 
This establishes \eqref{eq:Qt} for the $t$-th iteration, and hence the proof is complete via an induction argument.

\paragraph{Proof of the claim~\eqref{eqn:schubert-winterreise}.}
When taking the constant learning rates $\eta_t \equiv \eta \le \frac{1}{(1-\gamma)^2T} \le \frac{1}{50}$ (under the condition $T \ge \frac{50}{(1-\gamma)^2}$), one has
\begin{align*}
\max_{1 \le k \le t} \eta_k^{(t)} \le \eta_t =\eta \le \frac{1}{50},
\end{align*} 
thus allowing us to take $\tau=1$ for this case.

It then suffices to look at rescaled linear learning rates (i.e., $\eta_t = \frac{1}{1+\ceta(1-\gamma)t}$). 
As already calculated in the expression~\eqref{eq:ratio-etakT-eta-kminus1}, the ratio of two consecutive quantities obeys 
\begin{align}
\label{eqn:ratio-schubert}
\frac{\eta_{k-1}^{(t)}}{\eta_k^{(t)}} 
= \frac{1-\frac{4}{3}(1-\gamma)\eta_k}{1-c_\eta(1-\gamma)\eta_k}.
\end{align}
%
In what follows, we divide into two cases, depending on whether this sequence is decreasing or increasing.

\begin{itemize}

\item {\em The case with $4/3 \le \ceta < \log T$.} 
In this scenario, the ratio~in \eqref{eqn:ratio-schubert} is larger than 1, and hence the sequence $\{\eta_{k}^{(t)}\}$ decreases with $k$. 
Let us define 
\begin{align}
	\tau \defn \min\Bigg\{\tau' \,\,\Big|\,\, \prod_{i = 1}^{\tau'} \big(1-\eta_i(1-\gamma p)\big) \le \frac{1}{50}\Bigg\},	
	\label{eq:defn-tau-case31}
\end{align}
which clearly satisfies $\tau \leq T$ (in view of \eqref{eq:eta0_lb_case3}). 
For all $t \ge \tau$, one has
\begin{align*}
\max_{1 \le k \le t} \eta_k^{(t)} \le \prod_{i = 1}^{\tau} \big(1-\eta_i(1-\gamma p)\big) \le \frac{1}{50}.
\end{align*} 
At the same time, we claim that one must have 
\begin{align}
	\prod_{i = \tau}^{T} \big(1-\eta_i(1-\gamma p)\big) \le \frac{2}{3}.
	\label{eq:prod-etai-tau-T-23}
\end{align}
Otherwise, recalling $\eta_0^{(T)} = \prod_{i = 1}^{T} \big(1-\eta_i(1-\gamma p)\big)$, we have 
$$
\eta_0^{(T)} = \left\{ \prod_{i = 1}^{\tau-1} \big(1-\eta_i(1-\gamma p)\big) \right\}
\left\{ \prod_{i = \tau}^{T} \big(1-\eta_i(1-\gamma p)\big)   \right\} > \frac{1}{50} \cdot \frac{2}{3}=\frac{1}{75}, 
$$ 
which contradicts our assumption that $\eta_0^{(T)} > 1/75 $ (cf.~\eqref{eq:eta0_lb_case3}).

\item {\em The case with $1-\gamma < \ceta < 4/3$.}
In this case, the sequence $\eta_{k}^{(t)}$ increases with $k$. 
If we set  
\begin{align}
	\tau \defn \Big\lceil\frac{49}{\ceta(1-\gamma)} \Big\rceil
	< \frac{50}{(1-\gamma)^2}< T,
	\label{eq:defn-tau-case32}
\end{align}
then for all $t \ge \tau$ we have 
\begin{align*}
\max_{1 \le k \le t} \eta_k^{(t)} = \eta_t^{(t)} = \eta_t \le \eta_{\tau} 
\leq 
\frac{1}{1+\ceta(1-\gamma)\frac{49}{\ceta(1-\gamma)}}
= \frac{1}{50}. 
\end{align*} 
Under the condition $T \ge \frac{150}{(1-\gamma)^2}\geq \frac{150}{\ceta(1-\gamma)}$ (so that $T-\tau +1 \geq \frac{100}{\ceta(1-\gamma)} \geq \frac{100}{(1-\gamma)4/3}$), 
one can show that
\begin{align}
	\prod_{i = \tau}^{T} \big(1-\eta_i(1-\gamma p)\big) \le \Big( 1-\frac{1-\gamma}{100} \Big)^{T-\tau+1}
\leq \Big( 1-\frac{1-\gamma}{100} \Big)^{\frac{100}{(1-\gamma)4/3}} \le \frac{3}{4}.
	\label{eq:prod-etai-tau-T-34}
\end{align}

\item Putting these two cases together (with $\tau$ specified in \eqref{eq:defn-tau-case31} and \eqref{eq:defn-tau-case32}), we obtain  
\begin{align} 
\max_{1 \le k \le t} \eta_k^{(t)} \le \frac{1}{50}
\end{align}
for all $t \ge \tau$, thus establishing the desired inequality~\eqref{eqn:schubert-winterreise}.

\end{itemize}

\paragraph{Proof of the inequality~\eqref{eqn:lb-xi}.} 
For every $t$, recalling the definition~\eqref{eqn:defn-xi-t}, it is convenient to write 
\begin{align*}
	\mathbb{E} [\xi_{t,\mathsf{max}}] &= \mathbb{E} \left[ \frac{\xi_t(1)+\xi_t(2)+|\xi_t(1)-\xi_t(2)|}{2} \right] 
	= \mathbb{E} \left[ \frac{|\xi_t(1)-\xi_t(2)|}{2}  \right] \\
	&= \frac{1}{2}\mathbb{E} \left[ \, \left|\sum_{k = 1}^t\eta_k\prod_{i = k+1}^t \big(1-\eta_i\big)\gamma \big( P_{k}(1\mymid 1, 1) -  P_{k}(1\mymid 1, 2) \big)\widehat{V}_{k-1}) \,\right| \right] ,
\end{align*}
where we have used the fact that $\mathbb{E}[\xi_t(a)] = 0$. 
To control the right-hand side of the above equation, let us define  
\begin{align*}
	\zeta_t &\defn \sum_{k = 1}^t z_k ,
	\qquad z_k \defn \eta_k\prod_{i = k+1}^t \big(1-\eta_i\big)\gamma \big(P_{k}(1\mymid 1, 1) - P_{k}(1\mymid 1, 2) \big)\widehat{V}_{k-1}
\end{align*}
for any $k\geq 1$, 
where $\{z_k\}$ also forms a martingale sequence since
\begin{align*}
	\mathbb{E}\Big[ z_k\mymid \big\{ P_{j}(1\mymid 1, 1), P_{j}(1\mymid 1, 2) \big\}_{1\leq j< k} \Big] = 0.
\end{align*}

As a consequence of Freedman's inequality, we claim that $\zeta_t $ satisfies
\begin{align}
\label{eqn:tail-bunny}
\mathbb{P}\left\{\big|\zeta_t\big| \ge \sqrt{\frac{8\log\frac{2}{\delta}}{3(1-\gamma)}\sum_{k = 1}^t\eta_k^2\Big[\prod_{i = k+1}^t \big(1-\eta_i\big)\Big]^2} + \frac{4\eta_t \log\frac{2}{\delta}}{3(1-\gamma)} \right\} \le \delta. 
\end{align}
To verify this relation, we first notice that 
\begin{align}
\label{eqn:unif-bound}
|z_k| \le \max_{1 \le k \le t} \eta_k\prod_{i = k+1}^t \big(1-\eta_i\big) \cdot \frac{1}{1-\gamma} \le \frac{\eta_t}{1-\gamma} ,
\end{align}
provided that $\max_{k} \eta_k\prod_{i = k+1}^t \big(1-\eta_i\big) \le \eta_t$.
To verify the condition $\max_{k} \eta_k\prod_{i = k+1}^t \big(1-\eta_i\big) \le \eta_t$,  one can check---similar to \eqref{eq:ratio-etakT-eta-kminus1}---that
\begin{equation}\label{eq:ratio-check_eta_000}
\frac{\eta_{k-1}\prod_{i = k}^t \big(1-\eta_i\big)}{\eta_k\prod_{i = k+1}^t \big(1-\eta_i\big)} = 1 - \frac{\big(1 - \ceta(1-\gamma)\big)\eta_k}{1 - \ceta(1-\gamma)\eta_k}
	\leq 1,
\end{equation}
which indicates that $\eta_k\prod_{i = k+1}^t \big(1-\eta_i\big)$ is an increasing sequence as long as $\ceta \le \log T \le \frac{1}{1-\gamma}$ (see \eqref{eq:logT_ub}).  
In addition to the boundedness condition \eqref{eqn:unif-bound}, we can further calculate 
\begin{align*}
	& \sum_{k = 1}^t \mathsf{Var}\big(z_k\mymid P_{k-1}(1\mymid 1, 1), P_{k-1}(1\mymid 1, 2), \ldots, P_{1}(1\mymid 1, 1), P_{1}(1\mymid 1, 2)\big) \\
	&\qquad = \sum_{k = 1}^t\eta_k^2\left[\prod_{i = k+1}^t \big(1-\eta_i\big)\right]^2\cdot2p(1-p)\cdot\big(\gamma\widehat{V}_{k-1}\big)^2   \le \sum_{k = 1}^t\eta_k^2\left[\prod_{i = k+1}^t \big(1-\eta_i\big)\right]^2\cdot \frac{2}{3(1-\gamma)}, 
\end{align*}
where the last inequality comes from the facts that $\widehat{V}_{k-1} \le \frac{1}{1-\gamma}$ and the choice $p = \frac{4\gamma - 1}{3\gamma}$.
These bounds taken together with Freedman's inequality (see \eqref{eq:Freedman-special}) validate \eqref{eqn:tail-bunny}.

By virtue of \eqref{eqn:tail-bunny}, setting $\delta = \frac{(1-\gamma)^2}{2}\mathbb{E} \big[ |\zeta_t|^2 \big]$ yields that with probability at least $1-\delta$, 
\begin{align} \label{eq:B-delta}
|\zeta_t| \le B	\defn 
\sqrt{\frac{8\log\frac{2}{\delta}}{3(1-\gamma)} \sum_{k = 1}^t\eta_k^2\Big[\prod_{i = k+1}^t \big(1-\eta_i\big)\Big]^2} + \frac{4\eta_t \log\frac{2}{\delta}}{3(1-\gamma)} ~~
\text{ with } \delta = \frac{(1-\gamma)^2}{2}\mathbb{E} \big[ |\zeta_t|^2 \big]. 
\end{align}
When $T \ge \frac{1}{(1-\gamma)^2}$, one can ensure that 
\begin{align}
	\mathbb{E} [\xi_{t,\mathsf{max}}]  
	&= \frac{1}{2}\mathbb{E} \big[ |\zeta_t| \big]
	\ge \frac{1}{2}\mathbb{E} \big[ |\zeta_t|\mathds{1}\big(|\zeta_t| \le B\big) \big]  
	\ge \frac{1}{2B}\mathbb{E} \big[ |\zeta_t|^2\mathds{1}\big(|\zeta_t| \le B\big) \big] \nonumber \\
	&= \frac{1}{2B}\Big\{ \mathbb{E} [ |\zeta_t|^2 ] - \mathbb{E} \big[ |\zeta_t|^2\mathds{1}\big(|\zeta_t| > B\big) \big] \Big\} \nonumber \\
	&\overset{(\mathrm{i})}{\geq} \frac{1}{2B}\Big\{ \mathbb{E} [ |\zeta_t|^2 ] - \frac{1}{(1-\gamma)^2} \mathbb{P}\big\{|\zeta_t| > B\big\} \Big\} \nonumber \\
	&\ge \frac{1}{2B}\Big\{ \mathbb{E} [ |\zeta_t|^2 ] - \frac{\delta}{(1-\gamma)^2} \Big\} 
	\overset{(\mathrm{ii})}{\geq} \frac{1}{4B}\mathbb{E} [ |\zeta_t|^2 ]. \label{eq:emax_intermediate_lb}
\end{align}
Here, (i) holds since 
\begin{align*}
\big|\zeta_t\big| \le  \sum_{k=1}^t |z_k| \leq  \left[ \sum_{k=1}^t \eta_k\prod_{i = k+1}^t \big(1-\eta_i\big) \right] \cdot \frac{1}{1-\gamma} \leq \frac{1}{1-\gamma}
\end{align*} 
as a consequence of \eqref{eqn:unif-bound} and \eqref{eq:sum-eta-i-t-infinite}; (ii) holds by the choice of $\delta$.
It is thus sufficient to lower bound $\mathbb{E} [|\zeta_t|^2]$.
Towards this,  let us  define
\begin{align}
\label{eq:tau-hat-byproduct}
\widehat{\tau} \defn \max\Bigg\{\tau' \,\,\Big|\,\, \prod_{i = \tau'}^{T} \big(1-\eta_i(1-\gamma p)\big) \le \frac{6}{7}\Bigg\}	,
\end{align}
which clearly satisfies $\tau\leq \widehat{\tau}\leq T$ (in view of \eqref{eq:prod-etai-tau-T-23} and \eqref{eq:prod-etai-tau-T-34}).
Then, for all $t \ge \widehat{\tau}$ one has 
(which shall be proved towards the end of this subsection)
\begin{align}
	\sum_{k = \tau}^{t} \eta_k^2 \Big[\prod_{i = k+1}^t \big(1-\eta_i\big)\Big] ^2
	\ge \frac{1}{8}\sum_{k = 1}^{t} \eta_k^2 \Big[\prod_{i = k+1}^t \big(1-\eta_i\big)\Big]^2.
	\label{eq:sum-LB-1/8-4567}
\end{align}

We now proceed to lower bound $\mathbb{E} [|\zeta_t|^2]$ for $t \ge \widehat{\tau}$. We first observe that for any $t \ge \widehat{\tau}$, 
\begin{align*}
	\mathbb{E} \big[ |\zeta_t|^2 \big] 
	&\ge \sum_{k = 1}^t \mathbb{E} \left[ \mathsf{Var}\Big(\eta_k\prod_{i = k+1}^t \big(1-\eta_i\big)\gamma\big(P_{k}(1\mymid 1, 1) - P_{k}(1\mymid 1, 2) \big)\widehat{V}_{k-1} \mymid \widehat{V}_{k-1}\Big) \right] \\
	&\ge  \frac{1}{2}\sum_{k = \tau}^t \mathbb{E} \left[ \mathsf{Var}\Big(\eta_k\prod_{i = k+1}^t \big(1-\eta_i\big)\gamma \big(P_{k}(1\mymid 1, 1) - P_{k}(1\mymid 1, 2) \big)\widehat{V}_{k-1} \mymid \widehat{V}_{k-1} \ge \frac{1}{4(1-\gamma)} \Big) \right],
\end{align*}
where the first line relies on \eqref{eq:variance-expansion-martingale}, 
and the last step makes use of the fact \eqref{eqn:Tailbound-vhat-rolland}. 
To further control the right-hand side of the above inequality, we take $\tau' \defn \max\left\{t - \frac{1}{\eta_{t/2}}, 1\right\} $  and show that 
\begin{align}
	\mathbb{E} \big[ |\zeta_t|^2 \big]  
	&\overset{\text{(i)}}{\ge} \frac{1}{2}\sum_{k = \tau}^t\eta_k^2\Big[\prod_{i = k+1}^t \big(1-\eta_i\big)\Big]^2 \gamma^2\cdot 2p(1-p) \frac{1}{16(1-\gamma)^2}
	\nonumber\\
	&\ge \frac{1}{48(1-\gamma)}\sum_{k = \tau}^t\eta_k^2\Big[\prod_{i = k+1}^t \big(1-\eta_i\big)\Big]^2 \overset{\text{(ii)}}{\ge} \frac{1}{400(1-\gamma)}\sum_{k = 1}^t\eta_k^2\Big[\prod_{i = k+1}^t \big(1-\eta_i\big)\Big]^2 \nonumber\\
&\ge \frac{1}{400(1-\gamma)}
	\sum_{k = \tau'}^t\eta_k^2\Big[\prod_{i = k+1}^t \big(1-\eta_i\big)\Big]^2 \overset{\text{(iii)}}{\ge} \frac{\eta_t}{9600(1-\gamma)}. \label{eq:square-zeta}
\end{align}
Here, (i) makes use of the constraint $\widehat{V}_{k-1} \ge \frac{1}{4(1-\gamma)}$, 
while (ii) makes use of \eqref{eq:sum-LB-1/8-4567}, and (iii) are valid if the following property holds (which shall be proved towards the end of this subsection)
%
%
\begin{align}
	\sum_{k = \tau'}^t\eta_k^2\Big[\prod_{i = k+1}^t \big(1-\eta_i\big)\Big]^2
	\ge \frac{1}{24} \eta_t.
	\label{eq:sum-LB-1/8-890}
\end{align}

We are now well-equipped to control $\mathbb{E} [\xi_{t,\mathsf{max}}] $
using the property \eqref{eq:emax_intermediate_lb}. Recall the expression of $B$ in \eqref{eq:B-delta}, we know that bounding $\mathbb{E} [ |\zeta_t|^2 ] / B$ boils down to  controlling
\begin{align}
\frac{\mathbb{E} [ |\zeta_t|^2 ]}{\frac{\eta_t}{1-\gamma}\log\frac{2}{\delta}} \qquad \mbox{and}\qquad \frac{\mathbb{E} [ |\zeta_t|^2 ]}{\sqrt{\frac{\log\frac{2}{\delta}}{1-\gamma}\sum_{k = 1}^t\eta_k^2\Big[\prod_{i = k+1}^t \big(1-\eta_i\big)\Big]^2}} . 
	\label{eq:two-terms-LB-123}
\end{align}
\begin{itemize}
\item
For the first term in \eqref{eq:two-terms-LB-123}, recalling that $\delta = \frac{(1-\gamma)^2}{2}\mathbb{E} \big[ |\zeta_t|^2 \big]$, we can demonstrate that
\begin{align}
\log \frac{1}{\delta} = -\log \frac{(1-\gamma)^2}{2}\mathbb{E} \big[ |\zeta_t|^2 \big] \le -\log \frac{(1-\gamma)\eta_t}{19200} \le \log \frac{19200(1+(1-\gamma)T\log T)}{1-\gamma} \lesssim \log T ,
	\label{eq:log-delta-T-relation-123}
\end{align}
where the first inequality makes use of the bound \eqref{eq:square-zeta}, and the second inequality arises from the fact $\eta_t \geq \frac{1}{1+(1-\gamma)T\log T}$ (given the range of the learning rates in this case). 
Combining this with \eqref{eq:square-zeta}, 
we can guarantee that 
\[
\frac{\mathbb{E} [ |\zeta_t|^2 ]}{\frac{\eta_t}{1-\gamma}\log\frac{2}{\delta}} \gtrsim \frac{1}{\log T}. 
\]

\item
Moving to the second term in \eqref{eq:two-terms-LB-123}, one can ensure that
\begin{align*}
\frac{\mathbb{E} \big[ |\zeta_t|^2 \big]}{\sqrt{\frac{\log\frac{2}{\delta}}{1-\gamma}\sum_{k = 1}^t\eta_k^2\Big[\prod_{i = k+1}^t \big(1-\eta_i\big)\Big]^2}} & \overset{\mathrm{(i)}}{\gtrsim} \sqrt{\frac{1}{(1-\gamma)\log T}\sum_{k = 1}^t\eta_k^2\Big[\prod_{i = k+1}^t \big(1-\eta_i\big)\Big]^2} \\
& \overset{\mathrm{(ii)}}{\gtrsim}  \sqrt{\frac{\eta_t}{(1-\gamma)\log T}} \overset{\mathrm{(iii)}}{\gtrsim} \frac{1}{\sqrt{(1-\gamma)^2T}\log T}.
\end{align*}
Here, (i) follows from \eqref{eq:square-zeta} and \eqref{eq:log-delta-T-relation-123} since
\[
	\mathbb{E} \big[ |\zeta_t|^2 \big] \gtrsim \frac{1}{1-\gamma}\sum_{k = 1}^t\eta_k^2\Big[\prod_{i = k+1}^t \big(1-\eta_i\big)\Big]^2\qquad \text{and} \qquad \log\frac{2}{\delta} \lesssim \log T;
\]
(ii) arises from \eqref{eq:sum-LB-1/8-890}; and (iii) relies on the fact $\eta_t \gtrsim \frac{1}{(1-\gamma)T\log T}$ (given the range of the learning rates in this case).
\end{itemize}

Substituting the above relations into  \eqref{eq:emax_intermediate_lb} and using the expression of $B$ in \eqref{eq:B-delta}, we reach at
\begin{align*}
	\mathbb{E} [ \xi_{t, \mathsf{max}} ] \ge \frac{1}{4B}\mathbb{E} \big[ |\zeta_t|^2 \big]
	&\ge \frac{c}{\sqrt{(1-\gamma)^2T}\log T},
\end{align*}
for some constant $c > 0$. Thus, this validates the inequality~\eqref{eqn:lb-xi}.

\begin{proof}[Proof of the claim \eqref{eq:sum-LB-1/8-4567}] By the definition of $\widehat{\tau}$ in \eqref{eq:tau-hat-byproduct}, we have 
$\prod_{i = \widehat{\tau}}^{T} \big(1-\eta_i(1-\gamma p)\big) \le 6/7$. An important observation is that
\begin{align}\label{eqn:death-maiden}
\sum_{k = \tau}^{t} \eta_k\left[\prod_{i = k+1}^t \big(1-\eta_i\big)\right] 
	\stackrel{\mathrm{(i)}}{=} 1-\prod_{i = \tau}^t \big(1-\eta_i\big) 
	\stackrel{\mathrm{(ii)}}{\ge} \frac{1}{8}
	\stackrel{\mathrm{(iii)}}{\geq} \frac{1}{8}\sum_{k = 1}^{t} \eta_k\left[\prod_{i = k+1}^t \big(1-\eta_i\big)\right]. 
\end{align}
Here, the relations (i) and (iii) arise from \eqref{eq:sum-eta-i-t-infinite-tau}, and the inequality~(ii) follows since 
\begin{align}
\prod_{i = \tau}^t \big(1-\eta_i\big) \le \prod_{i = \tau}^{\widehat{\tau}} \big(1-\eta_i\big) \le \prod_{i = \tau}^{\widehat{\tau}} \big(1-\eta_i(1-\gamma p)\big) = \frac{\prod_{i = \tau}^{T} \big(1-\eta_i(1-\gamma p)\big)}{\prod_{i = \widehat{\tau}+1}^{T} \big(1-\eta_i(1-\gamma p)\big)} 
\leq \frac{\, 3/4\,}{6/7} \le \frac{7}{8},
\label{eq:prod-etai-tau-T-2345}
\end{align}
where $\tau$ is defined in \eqref{eq:defn-tau-case31} and \eqref{eq:defn-tau-case32} for linearly rescaled learning rates and 
 $\tau = 1$ for constant learning rates, and 
 we have also made use of \eqref{eq:eta0_lb_case3}, \eqref{eq:prod-etai-tau-T-23} and \eqref{eq:prod-etai-tau-T-34} in the penultimate inequality in \eqref{eq:prod-etai-tau-T-2345}. 

With \eqref{eqn:death-maiden} in place, we can continue to prove the claim \eqref{eq:sum-LB-1/8-4567}. Recognizing that $\eta_k\big[\prod_{i = k+1}^t \big(1-\eta_i\big)\big]$ is increasing in $k$ (see \eqref{eq:ratio-check_eta_000}),
we can obtain
\begin{align}
	\sum_{k = 1}^{\tau-1} \eta_k^2 \Big[\prod_{i = k+1}^t \big(1-\eta_i\big)\Big]^2 
	&\le \max_{1\leq k < \tau} \left\{ \eta_{k} \Big[\prod_{i = k +1}^t \big(1-\eta_i\big)\Big] \right\} \sum_{k = 1}^{\tau-1} \eta_k \Big[  \prod_{i = k+1}^t \big(1-\eta_i\big) \Big] \notag\\
	&\le \eta_{\tau} \Big[\prod_{i = \tau+1}^t \big(1-\eta_i\big)\Big] \sum_{k = 1}^{\tau-1} \eta_k \Big[  \prod_{i = k+1}^t \big(1-\eta_i\big) \Big] \notag\\
	& \le 7\eta_{\tau} \Big[\prod_{i = \tau+1}^t \big(1-\eta_i\big)\Big] \sum_{k = \tau}^{t} \eta_k\Big[\prod_{i = k+1}^t \big(1-\eta_i\big)\Big],
\end{align}
where the last inequality comes from \eqref{eqn:death-maiden}.
With the preceding inequality in place, the claim \eqref{eq:sum-LB-1/8-4567} then follows by observing that 
\begin{align*}
	\sum_{k = \tau}^{t} \eta_k^2 \Big[\prod_{i = k+1}^t \big(1-\eta_i\big)\Big] ^2
	&\ge \min_{\tau\leq k \leq t} \left\{ \eta_{k} \Big[\prod_{i = k+1}^t \big(1-\eta_i\big)\Big] \right\} \sum_{k = \tau}^{t} \eta_k\Big[\prod_{i = k+1}^t \big(1-\eta_i\big)\Big] \\
	&\ge \eta_{\tau} \Big[\prod_{i = \tau+1}^t \big(1-\eta_i\big)\Big] \sum_{k = \tau}^{t} \eta_k\Big[\prod_{i = k+1}^t \big(1-\eta_i\big)\Big], 
\end{align*}
where we make use of the monotonicity of $\eta_k\big[\prod_{i = k+1}^t \big(1-\eta_i\big)\big]$ again.
\end{proof}

\begin{proof}[Proof of the claim \eqref{eq:sum-LB-1/8-890}]
Note that for $\tau' \defn \max\left\{t - \frac{1}{\eta_{t/2}}, 1\right\}$, one has 
\[
\eta_k\Big[\prod_{i = k+1}^t \big(1-\eta_i\big)\Big] \ge \eta_t\big(1-\eta_{t/2}\big)^{t-\tau'} \ge
\eta_t\big(1-\eta_{t/2}\big)^{1/\eta_{t/2}} 
\ge \frac{1}{3}\eta_t,\qquad\text{for all}~\tau'\le k \le t,
\]
as long as the following condition holds (recalling the definition of $\widehat{\tau}$ in \eqref{eq:tau-hat-byproduct})
\begin{equation}
	\label{eq:eta-t2-t-rank-1/10}
	\eta_{t/2} \le 2\eta_t \le 2\eta_{\widehat{\tau}} \le 1/10.
\end{equation}
In addition, 
similar to \eqref{eq:eta-sum-hat-tau}, we can derive
\begin{align*}
	\sum_{k = \tau'}^t\eta_k\prod_{i = k+1}^t \big(1-\eta_i\big) &= 1 - \prod_{i = \tau'}^t \big(1-\eta_i\big) 
	\ge 1-\max\left\{\big(1-\eta_t\big)^{1/\eta_{t/2}+1}, \, \prod_{i = 1}^t \big(1-\eta_i\big)\right\} \\
	& \ge 1-\max\left\{e^{-1/2}, \, \prod_{i = 1}^{\widehat{\tau}} \big(1-\eta_i\big)\right\} 
	\ge \frac{1}{8},
\end{align*}
where we once again use the condition \eqref{eq:eta-t2-t-rank-1/10}, and the last inequality comes from the derivation in \eqref{eq:prod-etai-tau-T-2345}. 
Putting these two bounds together yields
\begin{align*}
\sum_{k=\tau'}^{t}\eta_{k}^{2}\Big[\prod_{i=k+1}^{t}\big(1-\eta_{i}\big)\Big]^{2} & \geq\min_{k:\,\tau'\leq k\leq t}\left\{ \eta_{k}\Big[\prod_{i=k+1}^{t}\big(1-\eta_{i}\big)\Big]\right\} \sum_{k=\tau'}^{t}\eta_{k}\Big[\prod_{i=k+1}^{t}\big(1-\eta_{i}\big)\Big]\\
 & \geq\frac{1}{3}\eta_{t}\cdot\frac{1}{8}\geq\frac{1}{24}\eta_{t}.
\end{align*}

To finish up, it remains to justify \eqref{eq:eta-t2-t-rank-1/10}. 
This condition is obvious for constant learning rates. As for rescaled learning rates, one can see that 
\[
	\eta_i = \frac{1}{1+(1-\gamma)\ceta i} \ge \frac{19}{20\ceta(1-\gamma)i}  \qquad \text{for all }i \ge \overline{\tau},
\]
where $\overline{\tau} \defn \lceil \frac{19}{\ceta(1-\gamma)} \rceil$. This allows one to obtain 
\[
\log\Big[\prod_{i = \overline{\tau}}^{T} \big(1-\eta_i(1-\gamma p)\big)\Big] \le -\sum_{i = \overline{\tau}}^{T} \eta_i(1-\gamma p) \le -\sum_{i = \overline{\tau}}^{T}\frac{19}{15\ceta i} \le -\frac{19\log\frac{\,T\,}{\overline{\tau}}}{15\ceta} \le -\frac{19\log\frac{\ceta(1-\gamma)T}{20}}{15\ceta} \le -\frac{1}{5},
\]
provided that $T\geq \frac{c_1}{(1-\gamma)^2}$ for some sufficiently large constant $c_1>0$ and $1-\gamma < \ceta < \log T$. 
Taking this together with \eqref{eq:tau-hat-byproduct} implies that $\widehat{\tau} \ge \overline{\tau}$ and hence 
	$\eta_{\widehat{\tau}} \le \eta_{\overline{\tau}} = \frac{1}{1+ (1-\gamma)\ceta \overline{\tau}\,} = 1/20$. 
\end{proof}

\subsection{Proof of Lemma~\ref{lem:optimal-V-Q-hardMDP}}
\label{sec:proof-lem:optimal-V-Q-hardMDP}

Given that state $0$ is an absorbing state with zero immediate reward, it is easily seen that
\[
V^{\pi}(0)=0\qquad\text{for all }\pi
\qquad \Longrightarrow \qquad V^{\star}(0) = Q^{\star}(0,1)  =0.
\]
%
Moreover, by construction, taking action 1 and taking action 2 in state 1 result
in the same behavior (in terms of both the reward function and the
associated transition probability), and as a consequence,
\begin{equation}
Q^{\star}(1,1)=Q^{\star}(1,2)=V^{\star}(1).\label{eq:Q1-Q2-Vstar}
\end{equation}
From Bellman's equation, we can thus deduce that
\[
	Q^{\star}(1,1)=r(1,1)+\gamma P(0\mymid1,1)V^{\star}(0)+\gamma P(1\mymid1,1)V^{\star}(1),
\]
which in conjunction with (\ref{eq:Q1-Q2-Vstar}) and a little algebra leads to
\[
	V^{\star}(1)=\frac{r(1,1)+\gamma P(0\mymid1,1)V^{\star}(0)}{1-\gamma P(1\mymid1,1)}=\frac{1}{1-\gamma p}=\frac{3}{4(1-\gamma)}.
\]
Here, the second identity follows since $V^{\star}(0)=0$, and the
third identity makes use of \eqref{defn-parameter-p}. 
The calculation for $V^{\star}(2)$ and $Q^{\star}(2,1)$  follows from an identical argument and is hence omitted. 
%

Turning to state $3$, by Bellman's equation, we have
\[
V^{\star}(3)  = Q^{\star}(3,1)=r(3,1)+\gamma P(3\mymid 3,1)V^{\star}(3) = 1 + \gamma V^{\star}(3)  ,
\]
which leads to $V^{\star}(3) = \frac{1}{1-\gamma}$.

\section{Analysis for asynchronous Q-learning (Theorem~\ref{thm:infinite-horizon-asyn})}
\label{sec:AsynQ_analysis}

\subsection{Notation and preliminary facts}
\label{sec:notation-AsynQ-analysis}

\paragraph{Vector and matrix notation.}

We shall adopt the vector notation $\bm{Q}_t \in \mathbb{R}^{|\cS||\cA| }$, $\bm{V}_t\in \mathbb{R}^{|\cS|}$, $\bm{r}\in \mathbb{R}^{|\cS||\cA| }$ in the same way as in Section~\ref{subsec:matrix-notation}. 
The sample transition matrix $\bm{P}_t \in \mathbb{R}^{|\cS||\cA| \times |\cS|}$ in the asynchronous case is defined such that
\begin{equation}
	\bm{P}_t\big( (s,a), s^{\prime} \big) = \begin{cases} 1, \quad &\text{if } (s,a,s^{\prime}) = (s_{t-1},a_{t-1}, s_t); \\ 
						0, \quad & \text{else}.\end{cases}
	\label{eq:defn-Pt-async-Q}
\end{equation}
It is also handy to introduce the diagonal matrix 
$\bm{\Lambda}_t \in \mathbb{R}^{|\cS||\cA| \times |\cS||\cA|}$ such that
\begin{equation}
	\label{eq:defn-Lambda-t-async}
	\bm{\Lambda}_t\big( (s,a), (s,a) \big) = \begin{cases} \eta, \qquad & \text{if }(s, a) = (s_{t-1}, a_{t-1}); \\
							0, & \text{otherwise.}
	\end{cases}
\end{equation}
%
%
Armed with the above notation, the asynchronous Q-learning update rule \eqref{eq:async-Q-update-rule} can be conveniently expressed as follows:
\begin{align}
	\bm{Q}_{t} =\big(\bm{I}-\bm{\Lambda}_{t}\big)\bm{Q}_{t-1}+\bm{\Lambda}_{t}\big(\bm{r}+\gamma\bm{P}_{t}\bm{V}_{t-1}\big). 
	\label{eq:asynQ}
\end{align}
%


\paragraph{Range of $V_t$ and $Q_t$.}

Similar to the synchronous counterpart, we have the following elementary properties.
\begin{lemma}
Suppose that $0<\eta_t\leq 1$ for all $t\geq 0$. Assume that $\bm{0}\leq \bm{Q}_0\leq \frac{1}{1-\gamma} \bm{1}$. 
Then for any $t \ge 0$, one has
\begin{align}
	\bm{0} \le \bm{Q}_t \le \frac{1}{1-\gamma} \bm{1} \qquad \text{and} \qquad \bm{0}\leq \bm{V}_t \le \frac{1}{1-\gamma} \bm{1}.
\end{align}
\end{lemma}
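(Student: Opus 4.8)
The plan is to mirror the proof of Lemma~\ref{lemma:non-negativity-Qt-Vt} from the synchronous case, carrying out a straightforward induction on $t$. The asynchronous update rule \eqref{eq:asynQ} has the convenient form $\bm{Q}_{t}=(\bm{I}-\bm{\Lambda}_{t})\bm{Q}_{t-1}+\bm{\Lambda}_{t}(\bm{r}+\gamma\bm{P}_{t}\bm{V}_{t-1})$, where $\bm{\Lambda}_t$ is a diagonal matrix with entries in $[0,1]$ on the diagonal (since $0<\eta_t\le 1$), and $\bm{I}-\bm{\Lambda}_t$ likewise has diagonal entries in $[0,1]$; both matrices are entrywise nonnegative. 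The base case $t=0$ is immediate from the hypothesis $\bm{0}\le \bm{Q}_0\le \frac{1}{1-\gamma}\bm{1}$, together with the definition $\bm{V}_0$ being the entrywise max over actions of $\bm{Q}_0$, which forces $\bm{0}\le \bm{V}_0\le \frac{1}{1-\gamma}\bm{1}$ as well.

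For the inductive step, assume $\bm{0}\le \bm{Q}_{t-1}\le \frac{1}{1-\gamma}\bm{1}$ and $\bm{0}\le \bm{V}_{t-1}\le \frac{1}{1-\gamma}\bm{1}$. Lower bound: since $\bm{I}-\bm{\Lambda}_t\ge \bm{0}$, $\bm{Q}_{t-1}\ge \bm{0}$, $\bm{\Lambda}_t\ge \bm{0}$, $\bm{r}\ge \bm{0}$, $\bm{P}_t\ge \bm{0}$, and $\bm{V}_{t-1}\ge \bm{0}$, every term on the right-hand side of \eqref{eq:asynQ} is nonnegative, hence $\bm{Q}_t\ge \bm{0}$. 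Upper bound: using $\|\bm{r}\|_\infty\le 1$, $\|\bm{P}_t\|_1=1$ (each row of $\bm{P}_t$ as defined in \eqref{eq:defn-Pt-async-Q} has a single $1$ entry in a non-updated coordinate or is a standard basis vector, so its $\ell_1$ row norm is exactly $1$ on the updated coordinate; for non-updated rows the update keeps the old value), and the induction hypothesis $\|\bm{V}_{t-1}\|_\infty\le \frac{1}{1-\gamma}$, the $(s,a)$-th entry of $\bm{Q}_t$ is a convex combination (with weights $1-\eta_t(s,a)$ and $\eta_t(s,a)$, where $\eta_t(s,a)\in\{0,\eta\}$) of $Q_{t-1}(s,a)$ and $r(s,a)+\gamma(\bm{P}_t\bm{V}_{t-1})(s,a)$. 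The first is $\le \frac{1}{1-\gamma}$ by hypothesis; the second is $\le 1+\gamma\cdot\frac{1}{1-\gamma}=\frac{1}{1-\gamma}$. A convex combination of two quantities each bounded by $\frac{1}{1-\gamma}$ is bounded by $\frac{1}{1-\gamma}$, so $\bm{Q}_t\le \frac{1}{1-\gamma}\bm{1}$. Finally, $\bm{V}_t(s)=\max_a \bm{Q}_t(s,a)$ inherits $\bm{0}\le \bm{V}_t\le \frac{1}{1-\gamma}\bm{1}$ directly.

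There is essentially no obstacle here — this is a routine boundedness argument identical in spirit to Lemma~\ref{lemma:non-negativity-Qt-Vt}, the only cosmetic difference being that in the asynchronous setting only one coordinate of $\bm{Q}$ is touched per iteration (encoded by the diagonal $\bm{\Lambda}_t$), while the untouched coordinates satisfy $Q_t(s,a)=Q_{t-1}(s,a)$ and trivially retain the bounds. One should just be slightly careful to note that $\bm{\Lambda}_t$ and $\bm{I}-\bm{\Lambda}_t$ are both nonnegative with unit (or zero) diagonal entries, which is what makes the convex-combination reasoning go through coordinatewise. The proof can therefore be stated in a few lines, concluding by induction. If desired, one can simply write ``The proof is identical to that of Lemma~\ref{lemma:non-negativity-Qt-Vt} upon noting that the update \eqref{eq:asynQ} modifies $\bm{Q}_{t-1}$ in only one coordinate via a convex combination with weights determined by the diagonal of $\bm{\Lambda}_t$, and is hence omitted.''
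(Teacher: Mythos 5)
Your proof is correct and follows exactly the route the paper intends: the paper itself simply states that the proof is identical to that of Lemma~\ref{lemma:non-negativity-Qt-Vt}, and your coordinatewise induction (nonnegativity of every term for the lower bound, a convex combination of quantities bounded by $\frac{1}{1-\gamma}$ for the upper bound, with untouched coordinates trivially preserved) is precisely that argument adapted to the diagonal weighting by $\bm{\Lambda}_t$.
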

\begin{proof} The proof is the same as that of Lemma~\ref{lemma:non-negativity-Qt-Vt}, and is hence omitted for brevity. \end{proof}

\subsection{Main steps for proving Theorem~\ref{thm:infinite-horizon-asyn}}

We are now in a position to outline the main steps for the proof of Theorem~\ref{thm:infinite-horizon-asyn}.

\paragraph{Step 1: deriving basic recursions.}

According to the update rule~\eqref{eq:asynQ}, we can derive the following elementary decomposition
\begin{align}
\bm{\Delta}_{t} \defn \bm{Q}_{t}-\bm{Q}^{\star} & =\big(\bm{I}-\bm{\Lambda}_{t}\big)\bm{Q}_{t-1}+\bm{\Lambda}_{t}\big(\bm{r}+\gamma\bm{P}_{t}\bm{V}_{t-1}\big)-\bm{Q}^{\star}\nonumber \\
 & =\big(\bm{I}-\bm{\Lambda}_{t}\big)\big(\bm{Q}_{t-1}-\bm{Q}^{\star}\big)+\bm{\Lambda}_{t}\big(\bm{r}+\gamma\bm{P}_{t}\bm{V}_{t-1}-\bm{Q}^{\star}\big)\nonumber \\
 & =\big(\bm{I}-\bm{\Lambda}_{t}\big)\big(\bm{Q}_{t-1}-\bm{Q}^{\star}\big)+\gamma\bm{\Lambda}_{t}\big(\bm{P}_{t}\bm{V}_{t-1}-\bm{P}\bm{V}^{\star}\big)\nonumber \\
 & =\big(\bm{I}-\bm{\Lambda}_{t}\big)\bm{\Delta}_{t-1}+\gamma\bm{\Lambda}_{t}\big(\bm{P}_{t}-\bm{P}\big)\bm{V}_{t-1}+\gamma\bm{\Lambda}_{t}\bm{P}\big(\bm{V}_{t-1}-\bm{V}^{\star}\big), \label{eq:iteration-asynQ}
\end{align}
where the penultimate identity follows from the Bellman optimality equation $\bm{Q}^{\star} = \bm{r} + \bm{P}\bm{V}^{\star}$.
Combining \eqref{eq:iteration-asynQ} with the inequalities \eqref{eq:V2Q-infinite} and using the definition \eqref{eq:defn-pit-s} of $\pi_t$ result in
\begin{subequations}
\begin{align}
\bm{\Delta}_{t} &\le \big(\bm{I}-\bm{\Lambda}_{t}\big)\bm{\Delta}_{t-1}+\gamma\bm{\Lambda}_{t}\big(\bm{P}_{t}-\bm{P}\big)\bm{V}_{t-1}+\gamma\bm{\Lambda}_{t}\bm{P}^{\pi_{t-1}}\bm{\Delta}_{t-1}, \\
\bm{\Delta}_{t} &\ge \big(\bm{I}-\bm{\Lambda}_{t}\big)\bm{\Delta}_{t-1}+\gamma\bm{\Lambda}_{t}\big(\bm{P}_{t}-\bm{P}\big)\bm{V}_{t-1}+\gamma\bm{\Lambda}_{t}\bm{P}^{\pi^{\star}}\bm{\Delta}_{t-1}.
\end{align}
\end{subequations}
Apply the above two relations recursively to obtain
\begin{subequations}
\label{eq:Delta-allstep-UB-LB-asynQ}
\begin{align}
\bm{\Delta}_{t} &\le \gamma\sum_{i=1}^{t}\prod_{j=i+1}^{t}\big(\bm{I}-\bm{\Lambda}_{j}\big)\bm{\Lambda}_{i}\big(\bm{P}_{i}-\bm{P}\big)\bm{V}_{i-1}
	+ \gamma\sum_{i=1}^{t}\prod_{j=i+1}^{t}\big(\bm{I}-\bm{\Lambda}_{j}\big)\bm{\Lambda}_{i}\bm{P}^{\pi_{i-1}}\bm{\Delta}_{i-1}+
	\prod_{j=1}^{t}\big(\bm{I}-\bm{\Lambda}_{j}\big)\bm{\Delta}_{0}, \label{eq:Delta-allstep-UB-asynQ}\\
\bm{\Delta}_{t} &\ge \gamma\sum_{i=1}^{t}\prod_{j=i+1}^{t}\big(\bm{I}-\bm{\Lambda}_{j}\big)\bm{\Lambda}_{i}\big(\bm{P}_{i}-\bm{P}\big)\bm{V}_{i-1}
	+ \gamma\sum_{i=1}^{t}\prod_{j=i+1}^{t}\big(\bm{I}-\bm{\Lambda}_{j}\big)\bm{\Lambda}_{i}\bm{P}^{\pi^{\star}}\bm{\Delta}_{i-1}+
	\prod_{j=1}^{t}\big(\bm{I}-\bm{\Lambda}_{j}\big)\bm{\Delta}_{0}.
	\label{eq:Delta-allstep-LB-asynQ}
\end{align}
\end{subequations}

By defining the following diagonal matrices
\begin{equation}
\bm \Lambda_{i}^{(t)}\coloneqq\begin{cases}
\prod_{j=1}^{t}(\bm I-\bm \Lambda_{j}), & \text{if }i=0 ,\\
\bm \Lambda_{i}\prod_{j=i+1}^{t}(\bm I-\bm \Lambda_{j}), & \text{if }0<i<t ,\\
\bm \Lambda_{t}, & \text{if }i=t ,
\end{cases}\label{def:Lambda-i-t}
\end{equation}
and setting 
\begin{equation}
	\beta = \frac{c_3(1-\gamma)}{\log T}
	\label{eq:defn-beta-async-Q}
\end{equation}
for some constant $c_3 > 0$, 
we can rearrange terms in the upper bound \eqref{eq:Delta-allstep-UB-asynQ} to reach
\begin{align}
	\bm{\Delta}_{t} &\le
	\underset{\eqqcolon\,\bm{\zeta}_{t}}{\underbrace{\bm \Lambda_{0}^{(t)}\bm{\Delta}_{0} + \gamma\sum_{i=1}^{(1-\beta)t}\bm \Lambda_{i}^{(t)}\Big[\big(\bm{P}_{i}-\bm{P}\big)\bm{V}_{i-1} + \bm{P}^{\pi_{i-1}}\bm{\Delta}_{i-1}\Big]}} \nonumber\\
	&\qquad+ \underset{\eqqcolon\,\bm{\xi}_{t}}{\underbrace{\gamma\sum_{i=(1-\beta)t+1}^{t}\bm \Lambda_{i}^{(t)}\big(\bm{P}_{i}-\bm{P}\big)\bm{V}_{i-1}}} + \gamma\sum_{i=(1-\beta)t+1}^{t}\bm \Lambda_{i}^{(t)}\bm{P}^{\pi_{i-1}}\bm{\Delta}_{i-1}. 
	\label{eq:defn-main}
\end{align}
%
In the subsequent steps, we shall first develop bounds on the sizes of the terms $\bm{\zeta}_{t}$ and $\bm{\xi}_{t}$ in \eqref{eq:defn-main} separately, and then combine these bounds with \eqref{eq:defn-main} recursively in order to derive the advertised upper bound on $\bm{\Delta}_{t}$.

\paragraph{Step 2: bounding the terms $\bm{\zeta}_{t}$ and $\bm{\xi}_{t}$.}

The terms $\bm{\zeta}_{t}$ and $\bm{\xi}_{t}$ defined in \eqref{eq:defn-main} can be bounded with high probability by the following lemmas.

\begin{lemma} \label{lem:zeta-async} 
With probability at least $1-\delta$, we have
\begin{align}
\|\bm{\zeta}_{t}\|_{\infty} \leq \frac{4}{(1-\gamma)T}
\end{align}
for all  $t$ obeying $\frac{T}{c_{4}\log T}\leq t\leq T$. Here, $c_4>0$ is some constant obeying $c_4 \le c_1c_3/4$, 
where the constants $c_1$ and $c_3$ appear in \eqref{eq:thm:infinite-horizaon-eta-asyn} and \eqref{eq:defn-beta-async-Q}, respectively. 
\end{lemma}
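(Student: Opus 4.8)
\textbf{Proof proposal for Lemma~\ref{lem:zeta-async}.}
The plan is to exploit the diagonal structure of the matrices $\bm{\Lambda}_i^{(t)}$ in \eqref{def:Lambda-i-t} together with a high-probability lower bound on how often each state--action pair is visited over a time window. First I would record the explicit form of these matrices: since each $\bm{\Lambda}_j$ is $\eta$ times the diagonal indicator of the pair $(s_{j-1},a_{j-1})$ (cf.~\eqref{eq:defn-Lambda-t-async}), the matrix $\prod_{j=i+1}^{t}(\bm{I}-\bm{\Lambda}_j)$ is diagonal with $(s,a)$-th entry $(1-\eta)^{N_{i,t}(s,a)}$, where $N_{i,t}(s,a)\defn\#\{\,i<j\le t:(s_{j-1},a_{j-1})=(s,a)\,\}$ counts the visits to $(s,a)$ over iterations $(i,t]$. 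Consequently, for $0<i<t$ the matrix $\bm{\Lambda}_i^{(t)}$ has a single nonzero (diagonal) entry, located at $(s_{i-1},a_{i-1})$ and equal to $\eta(1-\eta)^{N_{i,t}(s_{i-1},a_{i-1})}$, while $\bm{\Lambda}_0^{(t)}$ is diagonal with $(s,a)$-th entry $(1-\eta)^{N_{0,t}(s,a)}$. A key simplification is then a telescoping/geometric-series bound: for each fixed pair $(s,a)$, enumerating the indices $i\le(1-\beta)t$ with $(s_{i-1},a_{i-1})=(s,a)$ and summing the corresponding diagonal entries gives
\[
\sum_{i=1}^{(1-\beta)t}\big(\bm{\Lambda}_i^{(t)}\big)_{(s,a),(s,a)}
\le (1-\eta)^{N_{(1-\beta)t,\,t}(s,a)},
\]
i.e.\ the entire partial sum is at most $(1-\eta)$ raised to the number of visits to $(s,a)$ in the window $\big((1-\beta)t,\,t\big]$.

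Next I would invoke a concentration bound for uniformly ergodic Markov chains, in the spirit of the occupancy-measure estimates used in \cite{li2020sample}: under the uniform-ergodicity Assumption, with probability at least $1-\delta$ one has, simultaneously for every $t$ with $\frac{T}{c_4\log T}\le t\le T$ and every $(s,a)\in\cS\times\cA$,
\[
N_{(1-\beta)t,\,t}(s,a)\ \ge\ \tfrac14\,\mumin\,\beta\,t
\qquad\text{and}\qquad
N_{0,t}(s,a)\ \ge\ \tfrac14\,\mumin\,t .
\]
This is precisely where the union bound over $t$ and over $(s,a)$ produces the $\log\frac{|\cS||\cA|T}{\delta}$ factors, and where the sample-size requirement \eqref{eq:thm:infinite-horizon-T-asyn} --- in particular the term $\tfrac{\tmix}{\mumin(1-\gamma)}$ --- is used, so that the window length $\beta t\gtrsim\frac{(1-\gamma)T}{\log^2 T}$ dominates $\tmix$ and the expected visit counts exceed the relevant logarithmic factors. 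Plugging $\beta=\frac{c_3(1-\gamma)}{\log T}$ and $\eta=\frac{c_1\log^3 T}{(1-\gamma)T\mumin}$ (cf.~\eqref{eq:thm:infinite-horizaon-eta-asyn}, \eqref{eq:defn-beta-async-Q}) and using $t\ge\frac{T}{c_4\log T}$ gives $\eta\cdot\tfrac14\mumin\beta t\ge\frac{c_1c_3}{4c_4}\log T\ge\log T$ as soon as $c_4\le c_1c_3/4$, so that $(1-\eta)^{N_{(1-\beta)t,t}(s,a)}\le e^{-\eta\mumin\beta t/4}\le T^{-1}$; the same reasoning, with more room to spare, yields $(1-\eta)^{N_{0,t}(s,a)}\le T^{-1}$.

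Finally I would assemble the pieces. Writing $\bm{\zeta}_t$ as in \eqref{eq:defn-main} and bounding entrywise using the diagonal structure,
\[
\|\bm{\zeta}_t\|_\infty\le\big\|\bm{\Lambda}_0^{(t)}\big\|_1\|\bm{\Delta}_0\|_\infty
+\gamma\max_{(s,a)}\Big(\textstyle\sum_{i=1}^{(1-\beta)t}\big(\bm{\Lambda}_i^{(t)}\big)_{(s,a),(s,a)}\Big)\max_{1\le i\le(1-\beta)t}\Big(\big\|(\bm{P}_i-\bm{P})\bm{V}_{i-1}\big\|_\infty+\big\|\bm{P}^{\pi_{i-1}}\bm{\Delta}_{i-1}\big\|_\infty\Big).
\]
Using the range bounds $\|\bm{\Delta}_0\|_\infty\le\frac{1}{1-\gamma}$ and $\|\bm{\Delta}_{i-1}\|_\infty\le\frac{1}{1-\gamma}$ (from the $\ell_\infty$-range lemma for asynchronous $\bm{Q}_t,\bm{V}_t$), $\|\bm{P}^{\pi_{i-1}}\|_1=1$, and $\|(\bm{P}_i-\bm{P})\bm{V}_{i-1}\|_\infty\le 2\|\bm{V}_{i-1}\|_\infty\le\frac{2}{1-\gamma}$, together with the two $T^{-1}$ bounds above, yields $\|\bm{\zeta}_t\|_\infty\le\frac{1}{(1-\gamma)T}+\gamma\cdot\frac{3}{1-\gamma}\cdot\frac{1}{T}\le\frac{4}{(1-\gamma)T}$ for all $t$ in the stated range, as claimed. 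The main obstacle is the second step --- the uniform-in-$t$, uniform-in-$(s,a)$ lower bound on the sliding-window visit counts of a uniformly ergodic chain --- which requires a careful mixing-time--based concentration argument (e.g.\ a blocking/Bernstein estimate) and is exactly what motivates the $\tmix$-dependent part of the sample-size condition; the remaining steps are the bookkeeping outlined above.
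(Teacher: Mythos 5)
Your proposal is correct and follows essentially the same route as the paper's proof: exploit the diagonal structure of $\bm{\Lambda}_i^{(t)}$, invoke the sliding-window visit-count concentration for the uniformly ergodic trajectory (the paper cites \citet[Lemma~8]{li2020sample} for exactly the bound you postulate), and finish with the range bounds $\|\bm{\Delta}_i\|_\infty,\|\bm{V}_i\|_\infty\le\frac{1}{1-\gamma}$. The only difference is cosmetic bookkeeping: your telescoping identity $\sum_{i\le(1-\beta)t}(\bm{\Lambda}_i^{(t)})_{(s,a),(s,a)}\le(1-\eta)^{N_{(1-\beta)t,t}(s,a)}$ is slightly sharper than the paper's cruder bound $t\cdot\max_i\|\bm{\Lambda}_i^{(t)}\|$, which is why you only need $(1-\eta)^{N}\le T^{-1}$ where the paper needs $T^{-2}$; both lead to the same conclusion under the same constraint $c_4\le c_1c_3/4$.
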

\begin{proof} See Section~\ref{sec:proof-lem:zeta-async}. \end{proof}
\begin{lemma} \label{lem:xi-async}
Suppose that $0<\eta \le \frac{\log^{3}T}{(1-\gamma)T\mumin}$. With probability at least $1-\delta$, one has
\begin{align}
	|\bm{\xi}_{t}| \leq \sqrt{\frac{16\big(\log^{3}T\big)\big(\log\frac{|\mathcal{S}||\mathcal{A}|T}{\delta}\big)}{(1-\gamma)T\mumin}\Big(\max_{(1-\beta)t\leq i<t}\mathsf{Var}_{\bm{P}}(\bm{V}_{i})+\bm{1}\Big)}+\frac{6\big(\log^{3}T\big)\big(\log\frac{|\mathcal{S}||\mathcal{A}|T}{\delta}\big)}{(1-\gamma)^{2}T\mumin}\bm{1}
\end{align}
for all  $t$ obeying $\frac{T}{c_{4}\log T}\leq t\leq T$ for some constant $c_4>0$.  
\end{lemma}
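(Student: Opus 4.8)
The plan is to mirror Step~2 of the proof of Lemma~\ref{lem:inf-upper-bound} (the synchronous Q-learning case), with the deterministic scalar weights $\eta_i^{(t)}$ replaced by the random diagonal matrices $\bm{\Lambda}_i^{(t)}$ of \eqref{def:Lambda-i-t}. Concretely, I would write $\bm{\xi}_t = \sum_{i=(1-\beta)t+1}^{t}\bm{z}_i$ with $\bm{z}_i \defn \gamma\bm{\Lambda}_i^{(t)}(\bm{P}_i-\bm{P})\bm{V}_{i-1}$; by \eqref{eq:defn-Pt-async-Q}--\eqref{eq:defn-Lambda-t-async} each $\bm{z}_i$ is supported on the single coordinate $(s_{i-1},a_{i-1})$ visited at iteration $i$, and that coordinate has conditional mean zero once the visited pair and the iterate $\bm{V}_{i-1}$ are revealed. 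The intent is then to invoke Freedman's inequality (Theorem~\ref{thm:Freedman}) coordinatewise and union bound over the $|\cS||\cA|$ entries of $\bm{\xi}_t$ (as well as over the $O(T)$ values of $t$ and the $O(\log\tfrac{1}{1-\gamma})$ peeling levels in \eqref{eq:Freedman-random}, all of which enter only logarithmically).

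For the Freedman step I would compute the three usual quantities, using the elementary facts that every diagonal entry of $\bm{\Lambda}_i^{(t)}$ is at most $\eta$, that $\sum_i \bm{\Lambda}_i^{(t)} \le \bm{1}$ entrywise (a geometric-sum identity: along the subsequence of visits to a fixed pair the weights are $\eta, \eta(1-\eta), \eta(1-\eta)^2,\dots$, summing to at most one), that $\|\bm{V}_i\|_\infty\le \tfrac{1}{1-\gamma}$ (the asynchronous analogue of Lemma~\ref{lemma:non-negativity-Qt-Vt}), and that $\mathsf{Var}(\bm{z}_i\mymid\bm{V}_{i-1},\dots,\bm{V}_0) = \gamma^2(\bm{\Lambda}_i^{(t)})^2\,\mathsf{Var}_{\bm{P}}(\bm{V}_{i-1})$. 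Plugging in $\eta\le \tfrac{\log^3 T}{(1-\gamma)T\mumin}$ gives a uniform bound $B\le \tfrac{2\gamma\eta}{1-\gamma}\le \tfrac{2\log^3 T}{(1-\gamma)^2 T\mumin}$, a predictable-variation bound $\bm{W}_t \le \eta\max_{(1-\beta)t\le i<t}\mathsf{Var}_{\bm{P}}(\bm{V}_i) \le \tfrac{\log^3 T}{(1-\gamma)T\mumin}\max_i \mathsf{Var}_{\bm{P}}(\bm{V}_i)$, and a crude bound $|\bm{W}_t|\le \tfrac{\log^3 T}{(1-\gamma)^3 T\mumin}\bm{1}\eqqcolon\sigma^2\bm{1}$; taking $K=\lceil 2\log_2\tfrac{1}{1-\gamma}\rceil$ makes $\sigma^2/2^K\le \tfrac{\log^3 T}{(1-\gamma)T\mumin}$. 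Substituting into \eqref{eq:Freedman-random} and simplifying the logarithms (using, as in the synchronous proof, that $\log\tfrac{|\cS||\cA|T}{\delta}$ dominates $\log\tfrac{1}{1-\gamma}$ and $\log K$ under the sample-size hypothesis of Theorem~\ref{thm:infinite-horizon-asyn}) yields exactly the advertised inequality, with the only quantitative changes relative to the synchronous case being $\tfrac{1}{T}\rightsquigarrow\tfrac{1}{T\mumin}$ and the loss of one $\log T$ factor (here $\eta$ is prescribed directly rather than through a relation of the form $\eta_t\asymp\tfrac{\log^3 T}{(1-\gamma)t}$).

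The step I expect to be the genuine obstacle is justifying that $\{\bm{z}_i\}$ may be treated as a martingale-difference sequence. Unlike the synchronous case, the weight $\bm{\Lambda}_i^{(t)}=\bm{\Lambda}_i\prod_{j=i+1}^{t}(\bm{I}-\bm{\Lambda}_j)$ depends on the visitation pattern of iterations $i+1,\dots,t$, i.e.\ on the future, and through the Markov chain this is correlated with the fresh randomness $s_i$ that drives $(\bm{P}_i-\bm{P})\bm{V}_{i-1}$, so $\mathbb{E}[\bm{z}_i\mymid\mathcal{F}_{i-1}]$ need not vanish for the naive filtration. I would handle this exactly as in the asynchronous Q-learning analysis of \cite{li2020sample}: either by working with a carefully staged filtration under which the product factor is frozen before the relevant transition is sampled, or by first intersecting with a high-probability event --- provided by the uniform-ergodicity assumption together with the mixing-time and $\mumin$ bounds --- on which the per-pair visitation counts over the window $((1-\beta)t,t]$ are pinned down, so that the weights $\eta(1-\eta)^{N_i^{t}(s,a)}$ become effectively deterministic and the sum over each fixed pair reduces to a bona fide martingale. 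Once this is in place the remainder is a mechanical transcription of Step~2 of the synchronous proof, and Lemma~\ref{lem:xi-async} follows.
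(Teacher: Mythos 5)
Your computational skeleton (the Freedman quantities $B$, $\bm W_t$, $\sigma^2$, the peeling level $K$, and the union bound) matches the paper's, and you have correctly located the one step that is not a transcription of the synchronous argument: the weights $\bm{\Lambda}_i^{(t)}=\bm{\Lambda}_i\prod_{j=i+1}^{t}(\bm I-\bm{\Lambda}_j)$ depend on the future visitation pattern, so the time-indexed sum is not a martingale. However, neither of the two fixes you sketch for this obstacle works as stated, and this is where the proposal has a genuine gap. A ``staged filtration'' cannot freeze $\prod_{j>i}(1-\bm{\Lambda}_j(s,a))$ before the transition at time $i$ is sampled, because that product is determined by visits occurring \emph{after} time $i$. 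And intersecting with a high-probability event on which the visitation counts are ``pinned down'' does not help either: the counts concentrate but are not deterministic on any such event, and conditioning on an event destroys the martingale-difference property needed for Freedman.

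The paper's actual device is different and worth internalizing. Fix $(s,a)$ and re-index the sum by visit number rather than by time: since the learning rate is a constant $\eta$, the weight attached to the $k$-th visit out of $K_t(s,a)$ total visits is exactly $(1-\eta)^{K_t(s,a)-k}\eta$, so
\[
\xi_t(s,a)=\gamma\sum_{k=K_{(1-\beta)t+1}}^{K_t(s,a)}(1-\eta)^{K_t(s,a)-k}\eta\big(\bm{P}_{t_k+1}(s,a)-\bm{P}(s,a)\big)\bm{V}_{t_k}.
\]
The randomness of the weights now sits entirely in the two endpoints $K_\beta=K_{(1-\beta)t+1}$ and $K=K_t(s,a)$. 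One therefore proves the Freedman bound for every \emph{deterministic} pair $1\le K_\beta\le K\le T$ --- for fixed $(K_\beta,K)$ the weights $(1-\eta)^{K-k}\eta$ are deterministic, the transitions $\{\bm{P}_{t_k+1}(s,a)\}_k$ observed at successive visits are i.i.d.\ draws from $P(\cdot\mid s,a)$ (this is the content of \citet[Section~B.1]{li2020sample}), and $\bm{V}_{t_k}$ is adapted, so the summands form a bona fide martingale-difference sequence --- and then union bounds over the $O(|\cS||\cA|T^2)$ choices of $(s,a,K_\beta,K)$, which costs only a constant factor inside $\log\frac{|\cS||\cA|T}{\delta}$. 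Substituting the random endpoints at the end gives the lemma. Without this re-indexing-plus-union-bound step (or an equivalent decoupling argument) your proof does not go through; with it, the rest of your write-up is correct.
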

\begin{proof} See Section~\ref{sec:proof-lem:xi-async}. \end{proof}

\paragraph{Step 3: controlling $\bm{\Delta}_{t}$.}
Consider any $t$ obeying $\frac{T}{c_{4}\log T}\leq t\leq T$ and any $k$ obeying $2t/3 \leq k\leq t$.   
Under the sample size condition \eqref{eq:thm:infinite-horizon-T-asyn},  Lemmas~\ref{lem:zeta-async}-\ref{lem:xi-async} together with a little algebra lead to
\[
	|\bm{\zeta}_{k}| + |\bm{\xi}_{k}|  \leq 
\sqrt{\frac{32\big(\log^{3}T\big)\big(\log\frac{|\mathcal{S}||\mathcal{A}|T}{\delta}\big)}{(1-\gamma)T\mumin}\bigg(\max_{(1-\beta)k\le i\leq k}\mathsf{Var}_{\bm{P}}(\bm{V}_{i})+\bm{1}\bigg)} 
	\leq \sqrt{\bm{\varphi}_{t}}, 
\]
where we define
\begin{equation}
	\bm{\varphi}_{t}\coloneqq \frac{ 32 \big(\log^{3}T \big)\big( \log\frac{|\mathcal{S}||\mathcal{A}|T}{\delta} \big) }{(1-\gamma)T\mumin}\bigg(\max_{\frac{t}{2}\le i\leq t}\mathsf{Var}_{\bm{P}}(\bm{V}_{i})+\bm{1}\bigg). \label{def:phi-asynQ}
\end{equation}
Combining this inequality with \eqref{eq:defn-main} allows us to obtain
\begin{align}
\bm{\Delta}_{k} & \le\sqrt{\bm{\varphi}_{t}}+\sum_{i=(1-\beta)k+1}^{k}\bm \Lambda_{i}^{(k)}\gamma\bm{P}^{\pi_{i-1}}\bm{\Delta}_{i-1}=\sqrt{\bm{\varphi}_{t}}+\sum_{i=(1-\beta)k}^{k-1}\bm \Lambda_{i+1}^{(k)}\gamma\bm{P}^{\pi_{i}}\bm{\Delta}_{i}\qquad\text{for all }~2t/3\le k\le t.\label{eq:Deltak-UB-phit-infinite-asynQ}
\end{align}

Similar to the quantity $\alpha_{i}^{(t)}$ defined in \eqref{eq:defn-alpha-it-sync}, let us define
\begin{align}
	\bm D_{i}^{(t)} \defn \bigg( \sum_{j=(1-\beta)t}^{t-1}\bm \Lambda_{j+1}^{(t)}\bigg)^{-1}\bm \Lambda_{i+1}^{(t)} ,
	\label{eq:defn-Dit-async-Q}
\end{align}
which, according to \eqref{eq:sum-eta-i-t-infinite} and the definition \eqref{eq:defn-Lambda-t-async}, clearly satisfies
\begin{equation}
	\bm D_{i}^{(t)} \geq \bm{\Lambda}_{i+1}^{(t)} \geq \bm{0} \qquad \text{and} \qquad 
	\sum_{i=(1-\beta)t}^{t-1}\bm D_{i}^{(t)} = \bm{I}. 
	\label{eq:Dit-property-async}
\end{equation}
Set $i_0=t$ for notational convenience. With this set of notation and the property \eqref{eq:Dit-property-async} in mind, we can derive the following bound
\begin{align}
\bm{\Delta}_{t} & \le\sum_{i_{1}=(1-\beta)t}^{t-1}\Big(\bm D_{i_{1}}^{(t)}\sqrt{\bm{\varphi}_{t}}+\bm \Lambda_{i_{1}+1}^{(t)}\gamma\bm{P}^{\pi_{i_{1}}}\bm{\Delta}_{i_{1}}\Big)\nonumber \\
 & \leq\sum_{i_{1}=(1-\beta)t}^{t-1}\Bigg[\bm D_{i_{1}}^{(t)}\sqrt{\bm{\varphi}_{t}}+\bm \Lambda_{i_{1}+1}^{(t)}\gamma\bm{P}^{\pi_{i_{1}}}\sum_{i_{2}=(1-\beta)i_{1}}^{i_{1}-1}\Big(\bm D_{i_{2}}^{(i_{1})}\sqrt{\bm{\varphi}_{t}}+\bm \Lambda_{i_{2}+1}^{(i_{1})}\gamma\bm{P}^{\pi_{i_{2}}}\bm{\Delta}_{i_{2}}\Big)\Bigg]\nonumber \\
 & \leq\sum_{i_{1}=(1-\beta)t}^{t-1}\bm D_{i_{1}}^{(t)}\sqrt{\bm{\varphi}_{t}}+\sum_{i_{1}=(1-\beta)t}^{t-1}\bm D_{i_{1}}^{(t)}\sum_{i_{2}=(1-\beta)i_{1}}^{i_{1}-1}\big(\gamma\bm{P}^{\pi_{i_{1}}}\bm D_{i_{2}}^{(i_{1})}\big)\sqrt{\bm{\varphi}_{t}} \nonumber \\
 & \qquad\qquad +\sum_{i_{1}=(1-\beta)t}^{t-1}\sum_{i_{2}=(1-\beta)i_{1}}^{i_{1}-1}\prod_{k=1}^{2}\big(\bm \Lambda_{i_{k}+1}^{(i_{k-1})}\gamma\bm{P}^{\pi_{i_{k}}}\big)\bm{\Delta}_{i_{2}}\nonumber \\
 & =\sum_{i_{1}=(1-\beta)t}^{t-1}\bm D_{i_{1}}^{(t)}\Bigg\{\bm I + \sum_{i_{2}=(1-\beta)i_{1}}^{i_{1}-1} \gamma\bm{P}^{\pi_{i_{1}}}\bm D_{i_{2}}^{(i_{1})} \Bigg\}\sqrt{\bm{\varphi}_{t}}+\sum_{i_{1}=(1-\beta)t}^{t-1}\sum_{i_{2}=(1-\beta)i_{1}}^{i_{1}-1}\prod_{k=1}^{2}\big(\bm \Lambda_{i_{k}+1}^{(i_{k-1})}\gamma\bm{P}^{\pi_{i_{k}}}\big)\bm{\Delta}_{i_{2}}.\label{eq:Delta-recursion-inf-asynQ}
\end{align}
Here, the first relation makes use of the second property in \eqref{eq:Dit-property-async}, 
the second relation further expands $\bm{\Delta}_{i_{1}}$ in the same way as in the first line of \eqref{eq:Delta-recursion-inf-asynQ}, 
whereas the third inequality relies on the first property in \eqref{eq:Dit-property-async}.

Next, we intend to invoke the above relation multiple times to reach a simpler relation. 
Set
\begin{align}
	H \defn \frac{\log T}{1-\gamma}. 
\end{align}
Similar to the way we derive \eqref{eq:Delta-UB-temp-inf}, we can apply the relation \eqref{eq:Delta-recursion-inf-asynQ} recursively and use the basic relation $|\bm{\Delta}_{k}|\leq \frac{1}{1-\gamma}\bm{1}$ for any $k$ to show that
\begin{align}
\bm{\Delta}_{t} & \le\sum_{\left(i_{1},\cdots,i_{H}\right)\in\mathcal{I}_{t}}\Bigg\{\bm{D}_{i_{1}}^{(t)}\bigg(\bm{I}+\sum_{h=1}^{H-1}\gamma^{h}\prod_{k=1}^{h}\big(\bm{P}^{\pi_{i_{k}}}\bm{D}_{i_{k+1}}^{(i_{k})}\big)\bigg)\sqrt{\bm{\varphi}_{t}}+\gamma^{H}\prod_{k=1}^{H}\big(\bm{\Lambda}_{i_{k}+1}^{(i_{k-1})}\bm{P}^{\pi_{i_{k}}}\big)\big|\bm{\Delta}_{i_{H}}\big|\Bigg\},
	\label{eq:Delta-UB-temp-inf-asynQ-1}
\end{align}
where $\mathcal{I}_{t}$ has been defined in~\eqref{eq:defn-It-inf}. To further simplify \eqref{eq:Delta-UB-temp-inf-asynQ-1}, 
we need to control the two terms on the right-hand side of \eqref{eq:Delta-UB-temp-inf-asynQ-1} separately. 
\begin{itemize}

	\item We shall begin with the first term on the right-hand side of \eqref{eq:Delta-UB-temp-inf-asynQ-1}. Towards this end, let us define a collection of policies $\{\widehat{\pi}_k\}$ recursively and backward as follows: 
\begin{align}
\widehat{\pi}_k \defn
	\begin{cases} \arg\max_{\pi \in \Pi} \bm{\Pi}^{\pi}\sqrt{\bm{\varphi}_{t}}, & \text{if }k=H-1, \\
		\arg\max_{\pi \in \Pi} \bm \Pi^{\pi}\Big(\bm{I}+\sum_{h=k+1}^{H-1}\gamma^{h}\prod_{j=1}^{h}\bm{P}^{\widehat{\pi}_j}\Big)\sqrt{\bm{\varphi}_{t}} , \quad & \text{if }k= H-2, \cdots, 1,
	\end{cases}
	\label{eq:defn-hat-pi-Pi}
\end{align}
or alternatively (in view of the definition \eqref{eqn:ppivq} of $\bm{P}^{\pi}$), 		
\begin{align}
\widehat{\pi}_k \defn
	\begin{cases} \arg\max_{\pi \in \Pi} \bm P^{\pi}\sqrt{\bm{\varphi}_{t}}, & \text{if }k=H-1; \\
		\arg\max_{\pi \in \Pi} \bm P^{\pi}\Big(\bm{I}+\sum_{h=k+1}^{H-1}\gamma^{h}\prod_{j=1}^{h}\bm{P}^{\widehat{\pi}_j}\Big)\sqrt{\bm{\varphi}_{t}} , \quad & \text{if }k= H-2, \cdots, 1. 
	\end{cases}
	\label{eq:defn-hat-pi-P}
\end{align}
Here, $\Pi$ is a policy set satisfying
\begin{eqnarray}
\label{defn:Pi-asyncQ}
\begin{aligned}
	\Pi  \defn\big\{\pi=[\pi(s)]_{s\in\cS}\, \big| \, \pi(s) \in\Pi_{s}, \forall s\in\cS\big\}, 
	\qquad \Pi_{s}  \defn\big\{ \pi_{i}(s) \mid i\in[t/2,t)\big\};	 
\end{aligned}
\end{eqnarray}
in words, for any policy $\pi$ belonging to $\Pi$, each $\pi(s)$ coincides with one of the policy iterates $\pi_i(s)$ during the latest $\beta t$ iterations, although we do not require all $\{\pi(s)\}$ across different states to be associated with the same time stamp $i$.  
%
With this collection of policies in place, we can deduce that
\begin{align*}
 & \sum_{i_{1},\cdots,i_{H}}\bm{D}_{i_{1}}^{(t)}\bigg(\bm{I}+\sum_{h=1}^{H-1}\prod_{k=1}^{h}\big(\gamma\bm{P}^{\pi_{i_{k}}}\bm{D}_{i_{k+1}}^{(i_{k})}\big)\bigg)\sqrt{\bm{\varphi}_{t}}\\
 & =\sum_{i_{1},\cdots,i_{H-1}}\sum_{i_{H}}\bm{D}_{i_{1}}^{(t)}\bigg(\bm{I}+\sum_{h=1}^{H-2}\prod_{k=1}^{h}\big(\gamma\bm{P}^{\pi_{i_{k}}}\bm{D}_{i_{k+1}}^{(i_{k})}\big)+\prod_{k=1}^{H-2}\big(\gamma\bm{P}^{\pi_{i_{k}}}\bm{D}_{i_{k+1}}^{(i_{k})}\big)\Big(\gamma\bm{P}^{\pi_{i_{H-1}}}\bm{D}_{i_{H}}^{(i_{H-1})}\Big)\bigg)\sqrt{\bm{\varphi}_{t}}\\
 & =\sum_{i_{1},\cdots,i_{H-1}}\bm{D}_{i_{1}}^{(t)}\bigg(\bm{I}+\sum_{h=1}^{H-2}\prod_{k=1}^{h}\big(\gamma\bm{P}^{\pi_{i_{k}}}\bm{D}_{i_{k+1}}^{(i_{k})}\big)+\prod_{k=1}^{H-2}\big(\gamma\bm{P}^{\pi_{i_{k}}}\bm{D}_{i_{k+1}}^{(i_{k})}\big)\Big(\gamma\bm{P}^{\pi_{i_{H-1}}}\sum_{i_{H}}\bm{D}_{i_{H}}^{(i_{H-1})}\Big)\bigg)\sqrt{\bm{\varphi}_{t}}\\
 & \overset{(\mathrm{i})}{=}\sum_{i_{1},\cdots,i_{H-1}}\bm{D}_{i_{1}}^{(t)}\bigg(\bm{I}+\sum_{h=1}^{H-2}\prod_{k=1}^{h}\big(\gamma\bm{P}^{\pi_{i_{k}}}\bm{D}_{i_{k+1}}^{(i_{k})}\big)+\prod_{k=1}^{H-2}\big(\gamma\bm{P}^{\pi_{i_{k}}}\bm{D}_{i_{k+1}}^{(i_{k})}\big)\gamma\bm{P}^{\pi_{i_{H-1}}}\bigg)\sqrt{\bm{\varphi}_{t}}\\
 & \overset{(\mathrm{ii})}{\leq}\sum_{i_{1},\cdots,i_{H-1}}\bm{D}_{i_{1}}^{(t)}\bigg(\bm{I}+\sum_{h=1}^{H-2}\prod_{k=1}^{h}\big(\gamma\bm{P}^{\pi_{i_{k}}}\bm{D}_{i_{k+1}}^{(i_{k})}\big)+\prod_{k=1}^{H-2}\big(\gamma\bm{P}^{\pi_{i_{k}}}\bm{D}_{i_{k+1}}^{(i_{k})}\big)\gamma\bm{P}^{\widehat{\pi}_{H-1}}\bigg)\sqrt{\bm{\varphi}_{t}}\\
 & =\sum_{i_{1},\cdots,i_{H-2}}\bm{D}_{i_{1}}^{(t)}\bigg(\bm{I}+\sum_{h=1}^{H-3}\prod_{k=1}^{h}\big(\gamma\bm{P}^{\pi_{i_{k}}}\bm{D}_{i_{k+1}}^{(i_{k})}\big)+\prod_{k=1}^{H-3}\big(\gamma\bm{P}^{\pi_{i_{k}}}\bm{D}_{i_{k+1}}^{(i_{k})}\big)\gamma\bm{P}^{\pi_{i_{H-2}}}\bigg(\sum_{i_{H-1}}\bm{D}_{i_{H-1}}^{(i_{H-2})}\bigg)\big(\bm{I}+\gamma\bm{P}^{\widehat{\pi}_{H-1}}\big)\bigg)\sqrt{\bm{\varphi}_{t}}\\
 & \overset{(\mathrm{iii})}{=}\sum_{i_{1},\cdots,i_{H-2}}\bm{D}_{i_{1}}^{(t)}\bigg(\bm{I}+\sum_{h=1}^{H-3}\prod_{k=1}^{h}\big(\gamma\bm{P}^{\pi_{i_{k}}}\bm{D}_{i_{k+1}}^{(i_{k})}\big)+\prod_{k=1}^{H-3}\big(\gamma\bm{P}^{\pi_{i_{k}}}\bm{D}_{i_{k+1}}^{(i_{k})}\big)\gamma\bm{P}^{\pi_{i_{H-2}}}\big(\bm{I}+\gamma\bm{P}^{\widehat{\pi}_{H-1}}\big)\bigg)\sqrt{\bm{\varphi}_{t}}\\
 & \overset{(\mathrm{iv})}{\leq}\sum_{i_{1},\cdots,i_{H-2}}\bm{D}_{i_{1}}^{(t)}\bigg(\bm{I}+\sum_{h=1}^{H-3}\prod_{k=1}^{h}\big(\gamma\bm{P}^{\pi_{i_{k}}}\bm{D}_{i_{k+1}}^{(i_{k})}\big)+\prod_{k=1}^{H-3}\big(\gamma\bm{P}^{\pi_{i_{k}}}\bm{D}_{i_{k+1}}^{(i_{k})}\big)\gamma\bm{P}^{\widehat{\pi}_{H-2}}\big(\bm{I}+\gamma\bm{P}^{\widehat{\pi}_{H-1}}\big)\bigg)\sqrt{\bm{\varphi}_{t}}, 
\end{align*}
where we abbreviate $\sum_{(i_{1},\cdots,i_{H})\in \mathcal{I}_t}$ as $\sum_{i_{1},\cdots,i_{H}}$ 
as long as it is clear from the context.  Here, (i) and (iii) arise from the second property in \eqref{eq:Dit-property-async}, 
while (ii) and (iv) are due to the construction \eqref{eq:defn-hat-pi-P}. 
Continuing the derivation of the above inequality recursively, we arrive at  
\begin{align*}
\sum_{i_{1},\cdots,i_{H}}\bm{D}_{i_{1}}^{(t)}\bigg(\bm{I}+\sum_{h=1}^{H-1}\prod_{k=1}^{h}\big(\gamma\bm{P}^{\pi_{i_{k}}}\bm{D}_{i_{k+1}}^{(i_{k})}\big)\bigg)\sqrt{\bm{\varphi}_{t}} & \le\bigg(\bm{I}+\sum_{h=1}^{H-1}\gamma^{h}\prod_{k=1}^{h}\bm{P}^{\widehat{\pi}_{k}}\bigg)\sqrt{\bm{\varphi}_{t}}.
\end{align*}

\item

We now turn attention to the second term on the right-hand side of \eqref{eq:Delta-UB-temp-inf-asynQ-1}. It is seen that
\begin{align*}
\sum_{i_{1},\cdots,i_{H}}\prod_{k=1}^{H} \gamma^H \big(\bm{\Lambda}_{i_{k}+1}^{(i_{k-1})}\bm{P}^{\pi_{i_{k}}}\big)\big|\bm{\Delta}_{i_{H}}\big| & \leq\frac{\gamma^H}{1-\gamma}\sum_{i_{1},\cdots,i_{H}}\prod_{k=1}^{H}\big(\bm{D}_{i_{k}}^{(i_{k-1})}\bm{P}^{\pi_{i_{k}}}\big)\bm{1}\\
& = \frac{\gamma^H}{1-\gamma}\sum_{i_{1},\cdots,i_{H-1}}\prod_{k=1}^{H-1}\big(\bm{D}_{i_{k}}^{(i_{k-1})}\bm{P}^{\pi_{i_{k}}}\big)\bigg(\sum_{i_{H}}\bm{D}_{i_{H}}^{(i_{H-1})} \big( \bm{P}^{\pi_{i_{H}}}\bm{1} \big)\bigg)\\
 & = \frac{\gamma^H}{1-\gamma}\sum_{i_{1},\cdots,i_{H-1}}\prod_{k=1}^{H-1}\big(\bm{D}_{i_{k}}^{(i_{k-1})}\bm{P}^{\pi_{i_{k}}}\big)\bigg(\sum_{i_{H}}\bm{D}_{i_{H}}^{(i_{H-1})}\bigg) \bm{1} \\
 & =\frac{\gamma^H}{1-\gamma}\sum_{i_{1},\cdots,i_{H-1}}\prod_{k=1}^{H-1}\big(\bm{D}_{i_{k}}^{(i_{k-1})}\bm{P}^{\pi_{i_{k}}}\big)\bm{1}\\
	& =\cdots =\frac{\gamma^H}{1-\gamma}\bm{1} = \frac{\gamma^{\frac{\log T}{1-\gamma}}}{1-\gamma}\bm{1} \\
	& \leq \frac{1}{(1-\gamma)T} \bm{1},
\end{align*}
where the first line follows from the first property in \eqref{eq:Dit-property-async}, 
the third line is due to the fact $\bm{P}^{\pi}\bm{1}=\bm{1}$ for any $\pi$,  
and the fourth line arises from the second property in \eqref{eq:Dit-property-async}.

\end{itemize}
Substituting the above two bounds into \eqref{eq:Delta-UB-temp-inf-asynQ-1} yields
\begin{align}
\bm{\Delta}_{t} 
	& \leq\underbrace{\bigg(\bm{I}+\sum_{h=1}^{H-1}\gamma^{h}\prod_{k=1}^{h}\bm{P}^{\widehat{\pi}_{k}}\bigg)\sqrt{\bm{\varphi}_{t}}}_{\eqqcolon \,\bm{\beta}}+\frac{1}{(1-\gamma)T}\bm{1}. 
	\label{eq:Delta-UB-temp-inf-asynQ}
\end{align}

\paragraph{Step 4: putting all pieces together.}
Repeating our analysis for the term $\bm{\beta}_1$ in Section~\ref{sec:proof_inf_upper_bound} (i.e., Step 5 of Section~\ref{sec:proof_inf_upper_bound} with Lemma~\ref{lemma:inf-alpha-claim} replaced by Lemma~\ref{lemma:inf-alpha-claim-async}), 
we arrive at 
\begin{align*}
|\bm{\beta}|^{2} 
& \leq 
\frac{320\big(\log^{4}T\big)\big(\log\frac{|\mathcal{S}||\mathcal{A}|T}{\delta}\big)}{\gamma^2 (1-\gamma)^{4}T}\Big(1+2\max_{\frac{t}{2}\le i<t}\|\bm{\Delta}_{i}\|_{\infty}\Big)\bm{1} 
\end{align*}
with probability at least $1-\delta$. 
Substitution into \eqref{eq:Delta-UB-temp-inf-asynQ} then yields
\begin{align}
\notag \bm{\Delta}_{t} &\leq 
 \sqrt{\frac{320\big(\log^{3}T\big)\big(\log\frac{|\mathcal{S}||\mathcal{A}|T}{\delta}\big)}{\gamma^2 (1-\gamma)^{4}T\mumin}\Big(1+\max_{\frac{t}{2}\le i<t}\|\bm{\Delta}_{i}\|_{\infty}\Big)}\ \bm{1} + \frac{1}{(1-\gamma)T}\bm{1} \\ 
 &\leq 30 \sqrt{\frac{\big(\log^{3}T\big)\big(\log\frac{|\mathcal{S}||\mathcal{A}|T}{\delta}\big)}{\gamma^2 (1-\gamma)^{4}T\mumin}\Big(1+\max_{\frac{t}{2}\le i<t}\|\bm{\Delta}_{i}\|_{\infty}\Big)}\ \bm{1}
	\label{eq:Delta-UB-final-asynQ}
\end{align}
holds simultaneously for all $t\ge\frac{T}{c_{4}\log T}$, provided that the sample size condition \eqref{eq:thm:infinite-horizon-T-asyn} is satisfied. 
Similarly, we can also establish the following lower bound on $\bm{\Delta}_{t}$ (which we omit the details for the sake of brevity)
\begin{align*}
\bm{\Delta}_{t} \geq -30 \sqrt{\frac{\big(\log^{4}T\big)\big(\log\frac{|\mathcal{S}||\mathcal{A}|T}{\delta}\big)}{\gamma^2 (1-\gamma)^{4}T\mumin}\Big(1+\max_{\frac{t}{2}\le i<t}\|\bm{\Delta}_{i}\|_{\infty}\Big)}\ \bm{1} 
\end{align*}
with probability at least $1-\delta$. To summarize, it is seen that with probability exceeding $1-2\delta$, 
\begin{align}
	\|\bm{\Delta}_{t} \|_{\infty} \leq 30 \sqrt{\frac{\big(\log^{4}T\big)\big(\log\frac{|\mathcal{S}||\mathcal{A}|T}{\delta}\big)}{\gamma^2 (1-\gamma)^{4}T\mumin}\Big(1+\max_{\frac{t}{2}\le i<t}\|\bm{\Delta}_{i}\|_{\infty}\Big)}. 
	\label{eq:Delta-UB-LB-final-asynQ}
\end{align}
This resembles the relation \eqref{eq:Delta-t-inf-recursion-inf-outline} derived for the synchronous case, 
except that $T$ in the denominator is replaced with $\mumin T$. 
As a result, we can readily repeat the argument in Appendix~\ref{sec:combine_ub_lb} to reach
\begin{align}
	\|\bm{\Delta}_{T}\|_{\infty} \leq O\bigg( \sqrt{\frac{\big(\log^{4}T  \big)\big(\log\frac{|\mathcal{S}||\mathcal{A}|T}{\delta}\big)}{(1-\gamma)^{4}T\mumin}} +\frac{\big(\log^{4}T\big)\big(\log\frac{|\mathcal{S}||\mathcal{A}|T}{\delta}\big)}{(1-\gamma)^{4}T\mumin} 
	\bigg), \label{eq:Delta-T-UB-final-inf-async}
\end{align}
which in turn establishes the claimed result in Theorem~\ref{thm:infinite-horizon-asyn}. 


\subsection{Proofs of technical lemmas}

\subsubsection{Proof of Lemma~\ref{lem:zeta-async}}
\label{sec:proof-lem:zeta-async}

In view of the definition of $\bm{\zeta}_{t}$ in \eqref{eq:defn-main} and the fact that $\bm \Lambda_{0}^{(t)}$ is a diagonal matrix, 
we can deduce that
\begin{align*}
\|\bm{\zeta}_{t}\|_{\infty} 
& \leq \|\bm \Lambda_{0}^{(t)}\|\|\bm{\Delta}_{0}\|_{\infty}+t\max_{1\leq i\leq(1-\beta)t}\|\bm \Lambda_{i}^{(t)}\|\max_{1\leq i\leq(1-\beta)t}\big(\|\bm{P}^{\pi_{i-1}}\bm{\Delta}_{i-1}\|_{\infty}+\|\bm{P}_{i}\bm{V}_{i-1}\|_{\infty}+\|\bm{P}\bm{V}_{i-1}\|_{\infty}\big)\\
 & \leq\|\bm \Lambda_{0}^{(t)}\|\|\bm{\Delta}_{0}\|_{\infty}+t\max_{1\leq i\leq(1-\beta)t}\|\bm \Lambda_{i}^{(t)}\|\max_{1\leq i\leq(1-\beta)t}\Big\{\|\bm{P}^{\pi_{i-1}}\|_{1}\|\bm{\Delta}_{i-1}\|_{\infty}+\big(\|\bm{P}_{i}\|_{1}+\|\bm{P}\|_{1}\big)\|\bm{V}_{i-1}\|_{\infty}\Big\}\\
 & \overset{(\mathrm{i})}{=}\|\bm \Lambda_{0}^{(t)}\|\|\bm{\Delta}_{0}\|_{\infty}+t\max_{1\leq i\leq(1-\beta)t}\|\bm \Lambda_{i}^{(t)}\|\max_{1\leq i\leq(1-\beta)t}\big(\|\bm{\Delta}_{i-1}\|_{\infty}+2\,\|\bm{V}_{i-1}\|_{\infty}\big)\\
 & \overset{(\mathrm{ii})}{\leq}\frac{1}{T^{2}}\cdot\frac{1}{1-\gamma}+\frac{1}{T^{2}}\cdot t\cdot\frac{3}{1-\gamma}\\
 & \le\frac{4}{(1-\gamma)T}.
\end{align*}
Here, (i) holds true since $\|\bm{P}^{\pi_{i-1}}\|_{1}=\|\bm{P}_{i}\|_{1}=\|\bm{P}\|_{1}=1$. To verify (ii), we first define
\begin{equation}\label{eq:defn-tk-sa}
	t_{k}(s,a) \defn \text{the time stamp when }\text{the trajectory visits }(s,a)\text{ for the }k\text{-th time}
\end{equation}
and
\begin{equation}
	K_{t}(s,a) \defn  \Big| \big\{ k \geq 1\mid t_{k}(s,a) < t\big\} \Big|,  \label{eq:defn-Kt}
\end{equation}
namely, the total number of times --- before the $t$-th iteration ---
that the sample trajectory visits $(s,a)$.
Then \citet[Lemma 8]{li2020sample} tells us that with probability at least $1-\delta$, 
\begin{align}
K_{t_1}(s,a)-K_{t_2}(s, a) \ge \frac{1}{2}(t_1 - t_2)\mumin,
\end{align}
holds uniformly for all $(s, a) \in \cS\times\cA$ and $0 \le t_2 \le t_1 \le T$ obeying 
$$
	t_1 - t_2 \ge \frac{886t_{\mathsf{mix}}}{\mumin}\log\frac{|\cS||\cA|T}{\delta}.
$$
This in turn implies that: 
if $\beta t \ge \frac{886t_{\mathsf{mix}}}{\mumin}\log\frac{|\cS||\cA|T}{\delta}$ and $i\leq (1-\beta)t$, then one has
\begin{subequations}
	\label{eq:Lambda-0t-Lambda-it-async-Q}
\begin{align}
	\|\bm \Lambda_{0}^{(t)}\| &= \bigg\| \prod_{j=1}^{t}\big(\bm{I}-\bm{\Lambda}_{j}\big)\bigg\| 
	= \max_{(s, a)\in \cS\times \cA}(1-\eta)^{K_{t}(s,a)} \leq (1-\eta)^{\frac{1}{2}t\mumin} \le \frac{1}{T^{2}} \\
\|\bm \Lambda_{i}^{(t)}\| &= \bigg \| \bm \Lambda_{i}\prod_{j=i+1}^{t}(\bm I-\bm \Lambda_{j})\bigg\|  \leq \max_{(s, a)} (1-\eta)^{K_{t}(s,a)-K_{i}(s, a)} \leq (1-\eta)^{\frac{1}{2}\beta t\mumin} \le \frac{1}{T^{2}}
\end{align}
\end{subequations}
with probability at least $1-\delta$, 
provided that $\eta\beta t\mumin > 4\log T$. 
In other words, \eqref{eq:Lambda-0t-Lambda-it-async-Q} holds with probability at least $1-\delta$, as long as
\[
t>\max\left\{ \frac{4\log T}{\eta\beta\mumin}, \, \frac{886\tmix}{\beta\mumin}\log\frac{|\cS||\cA|T}{\delta}\right\} =\max\left\{ \frac{4T}{c_{1}c_{3}\log T}, \, \frac{886\tmix}{c_{3}(1-\gamma)\mumin}\log\frac{|\cS||\cA|T}{\delta}\log T\right\} .  	
\]
This taken together with the sample size assumption \eqref{eq:thm:infinite-horizon-T-asyn} concludes the proof of Lemma~\ref{lem:zeta-async}.

\subsubsection{Proof of Lemma~\ref{lem:xi-async}}
\label{sec:proof-lem:xi-async}

Fix any state-action pair $(s,a)\in\mathcal{S}\times\mathcal{A}$, and let us look at the $(s,a)$-th entry of $\bm{\xi}_{t}$, i.e., 
$\xi_{t}(s,a)$.
For notational simplicity, let $\bm{\Lambda}_{j}(s,a)$ denote the $(s,a)$-th diagonal entry
of the diagonal matrix $\bm{\Lambda}_{j}$, and $\bm{P}_{t}(s,a)$
(resp.~$\bm{P}(s,a)$) the $(s,a)$-th row of $\bm{P}_{t}$ (resp.~$\bm{P}$).

Using the definition of $\bm{\xi}_{t}$ in \eqref{eq:defn-main} and the above notation, we can derive
\begin{align}
	\xi_{t}(s,a) & = \gamma \sum_{i=(1-\beta)t+1}^{t}\prod_{j=i+1}^{t}\big(1-\bm{\Lambda}_{j}(s,a)\big)\bm{\Lambda}_{i}(s,a)\big(\bm{P}_{i}(s,a)-\bm{P}(s,a)\big)\bm{V}_{i-1}.
	\label{eq:expression-xi-sa}
\end{align}
Equipped with the definitions of $t_k(s,a)$ (cf.~\eqref{eq:defn-tk-sa}) and $K_t(s,a)$ (cf.~\eqref{eq:defn-Kt}), 
we can further rewrite \eqref{eq:expression-xi-sa} as
\begin{align}
	\xi_{t}(s,a) & = \gamma \sum_{k=K_{(1-\beta)t+1}}^{K_{t}(s,a)}(1-\eta)^{K_{t}(s,a)-k}\eta\big(\bm{P}_{t_{k}+1}(s,a)-\bm{P}(s,a)\big)\bm{V}_{t_{k}}.\label{eq:expression-xi-sa-1}
\end{align}
In what follows, we shall suppress the notation and write $t_k = t_k(s,a)$ 
and $K_t = K_t(s,a)$  to streamline notation.

 
The main step thus boils down to controlling \eqref{eq:expression-xi-sa-1}. 
Towards this, we claim that: with probability at least $1-\delta$,
\begin{align}
	& \Bigg|\sum_{k=K_{\beta}}^{K}(1-\eta)^{K-k}\eta\big(\bm{P}_{t_{k}+1}(s,a)-\bm{P}(s,a)\big)\bm{V}_{t_{k}}\Bigg| \nonumber\\
	&\qquad \leq \sqrt{\frac{16\big(\log^{3}T\big)\big(\log\frac{|\mathcal{S}||\mathcal{A}|T}{\delta}\big)}{(1-\gamma)T\mumin}\Big(\max_{t_{K_{\beta}}\leq i \leq t_K}\mathsf{Var}_{\bm{P}(s, a)}(\bm{V}_{i})+1\Big)}+\frac{6\big(\log^{3}T\big)\big(\log\frac{|\mathcal{S}||\mathcal{A}|T}{\delta}\big)}{(1-\gamma)^{2}T\mumin} 
	\label{eq:claim-Bernstein-fixed-K}
\end{align}
holds simultaneously for all $(s,a)\in\mathcal{S}\times\mathcal{A}$ and
all $1 \le K_{\beta} \le K \le T$, provided that $0<\eta \le \frac{\log^{3}T}{(1-\gamma)T\mumin}$. 
If this claim were true, then 
taking $K_{\beta} = K_{ (1-\beta)t + 1 }$ and $K = K_t$ and substituting the bound \eqref{eq:claim-Bernstein-fixed-K} into the expression \eqref{eq:expression-xi-sa-1} would lead to
\begin{align}
	|\bm{\xi}_{t}| \leq \sqrt{\frac{16\big(\log^{3}T\big)\big(\log\frac{|\mathcal{S}||\mathcal{A}|T}{\delta}\big)}{(1-\gamma)T\mumin}\Big(\max_{(1-\beta)t\leq i<t}\mathsf{Var}_{\bm{P}}(\bm{V}_{i})+\bm{1}\Big)}+\frac{6\big(\log^{3}T\big)\big(\log\frac{|\mathcal{S}||\mathcal{A}|T}{\delta}\big)}{(1-\gamma)^{2}T\mumin}\bm{1}, 
\end{align}
thus concluding the proof of this lemma. To finish up, it is sufficient to justify the claim (\ref{eq:claim-Bernstein-fixed-K}), which forms the content of the remainder of this proof.

\begin{proof}[Proof of the claim (\ref{eq:claim-Bernstein-fixed-K})]
Let us use the notation in \eqref{def:eta-i-t} to express
$\eta_k^{(K)} = (1-\eta)^{K-k}\eta$.  For any fixed integer $K>0$, the following vectors
\[
\left\{ \bm{P}_{t_{k}+1}(s,a)\mid1\leq k\leq K\right\} 
\]
are identically and independently distributed; see \citet[Section~B.1]{li2020sample}.
We can then express the term
\[
X_K \defn \sum_{k=K_{\beta}}^{K}(1-\eta)^{K-k}\eta\big(\bm{P}_{t_{k}+1}(s,a)-\bm{P}(s,a)\big)\bm{V}_{t_{k}},
\]
as follows: 
\[
X_K=\sum_{k=K_{\beta}}^{K}z_{k}\qquad\text{with }z_{k}\coloneqq\eta_k^{(K)}\big(\bm{P}_{t_{k}+1}(s,a)-\bm{P}(s,a)\big)\bm{V}_{t_{k}},
\]
where the $z_{k}$'s satisfy 
\[
	\mathbb{E}\left[z_{k}\mymid t_{k},\cdots,t_{1}, \bm{V}_{t_{k}},\cdots,\bm{V}_{t_{1}}\right]=0.
\]

We intend to invoke the Freedman inequality 
to control $X_K$ for any $K$ obeying $K\leq T$.
Similar to the synchronous counterpart, we can see that
\begin{align*}
	B & \coloneqq \max_{1<k\leq K}\|z_{k}\|_{\infty} \leq \eta_k^K \big( \big\|\bm{P}_{t_k+1} \|_1 + \|\bm{P}\|_1 \big) \big\|\bm{V}_{t_k}\|_{\infty} \leq \frac{2\eta_k^K}{1-\gamma} \leq \frac{2\eta}{1-\gamma} , \\
W & \coloneqq\sum_{k=K_{\beta}}^{K}\mathsf{Var}\big(z_{k}\mymid t_{k},\cdots,t_{1},\bm{V}_{t_{k}},\cdots,\bm{V}_{t_{1}}\big)=\gamma^{2}\sum_{k=K_{\beta}}^{K}\big(\eta_{k}^{(K)}\big)^{2}\mathsf{Var}\big((\bm{P}_{t_{k}+1}-\bm{P})\bm{V}_{t_{k}}\mid\bm{V}_{t_{k}}\big)\\
 & \leq\sum_{k=K_{\beta}}^{K}\big(\eta_{k}^{(K)}\big)^{2}\mathsf{Var}_{\bm{P}(s,a)}\big(\bm{V}_{t_{k}}\big)\leq\Big(\max_{K_{\beta}\leq k\leq K}\eta_{k}^{(K)}\Big)\bigg(\sum_{k=K_{\beta}}^{K}\eta_{k}^{(K)}\bigg)\mathsf{Var}_{\bm{P}(s,a)}\big(\bm{V}_{t_{k}}\big)\\
 & \leq\eta\max_{K_{\beta}\leq k\leq K}\mathsf{Var}_{\bm{P}(s,a)}\big(\bm{V}_{t_{k}}\big)\leq\eta\max_{t_{K_{\beta}}\leq i\leq t_{K}}\mathsf{Var}_{\bm{P}(s,a)}\big(\bm{V}_{i}\big) ,
\end{align*}
where we have made use of \eqref{eq:sum-eta-i-t-infinite}. In addition, we make note of a trivial upper bound on $W$ as follows
	\begin{align*}
\sigma^{2} & :=\frac{\eta}{(1-\gamma)^2} \geq \eta\max_{t_{K_{\beta}}\leq i\leq t_{K}}\mathsf{Var}_{\bm{P}(s,a)}\big(\bm{V}_{i}\big) 
		\geq W_{t}.
\end{align*}
With the preceding bounds in place, applying the Freedman inequality in Theorem \ref{thm:Freedman} 
and taking $L=\log_2 \frac{1}{1-\gamma} $ imply that 
\begin{align*}
|X_{K}| & \le\sqrt{8\max\bigg\{ W,\frac{\sigma^{2}}{2^{L}}\bigg\}\log\frac{4|\mathcal{S}||\mathcal{A}|T^{2}\log_{2}\frac{1}{1-\gamma}}{\delta}}+\frac{8\eta}{3(1-\gamma)}\log\frac{4|\mathcal{S}||\mathcal{A}|T^{2}\log_{2}\frac{1}{1-\gamma}}{\delta}\\
 & \le\sqrt{8\eta\max\Big\{\max_{t_{K_{\beta}}\leq i\leq t_{K}}\mathsf{Var}_{\bm{P}(s,a)}(\bm{V}_{i}),1\Big\}\log\frac{4|\mathcal{S}||\mathcal{A}|T^{2}\log_{2}\frac{1}{1-\gamma}}{\delta}}+\frac{8\eta}{3(1-\gamma)}\log\frac{4|\mathcal{S}||\mathcal{A}|T^{2}\log_{2}\frac{1}{1-\gamma}}{\delta}\\
 & \le\sqrt{\frac{16\big(\log^{3}T\big)\big(\log\frac{|\mathcal{S}||\mathcal{A}|T}{\delta}\big)}{(1-\gamma)T\mumin}\Big(\max_{t_{K_{\beta}}\leq i\leq t_{K}}\mathsf{Var}_{\bm{P}(s,a)}(\bm{V}_{i})+1\Big)}+\frac{6\big(\log^{3}T\big)\big(\log\frac{|\mathcal{S}||\mathcal{A}|T}{\delta}\big)}{(1-\gamma)^{2}T\mumin}
\end{align*}
with probability at least $1 - \frac{\delta}{|\mathcal{S}||\mathcal{A}|T^2}$, provided that $\eta \le \frac{\log^{3}T}{(1-\gamma)T\mumin}$.
We can thus conclude the proof by taking the union bound over all $(s, a)\in \cS \times \cA$ and all $1 \le K_{\beta} \le K \le T$. 
\end{proof}

\section{Lower bound for asynchronous Q-learning (Theorem~\ref{thm:LB-asynQ})}
\label{sec:lower-bound-async-Q}

This section establishes Theorem~\ref{thm:LB-asynQ} by identifying a hard MDP instance satisfying the assumed conditions.

\subsection{Construction of a hard instance and its values}
Let us construct an MDP $\mathcal{M}_{\mathsf{hard}}$ with state space $\cS=\{0,1,2,3\}$ as follows, 
which is partly inspired by the idea from~\citet{li2022settling}. We shall denote by $\cA_s$ the action space associated with state $s$. 
The probability transition kernel and the reward function of $\mathcal{M}_{\mathsf{hard}}$ are specified as follows:  
\begin{subequations}
\label{eq:hard-MDP-asynQ}
\begin{align}
	& \mathcal{A}_0=\{1,2\},  & P(\cdot \mymid 0, 1) &= [1, 0, 0, 0], & r(0, 1) = \frac{2}{3}, \\
	& \quad   & P(\cdot \mymid 0, 2) &= \big[1-2(1-\gamma)(1-\mu_0), 2(1-\gamma)\mu_1, 2(1-\gamma)\mu_2, 2(1-\gamma)\mu_3\big],\hspace{-0.5em}   & r(0, 2) = 0, \\
	& \mathcal{A}_1=\{1,2,3\}, \hspace{-0.5em} & P(\cdot \mymid 1, 1) &= \bigg[\frac{3}{2}(1-\gamma)\mu_0, 1 - \frac{3}{2}(1-\gamma)\mu_0, 0, 0\bigg],   & r(1, 1) = 1,  \\
	& \quad   & P(\cdot \mymid 1, 2) &= \bigg[\frac{3}{2}(1-\gamma)\mu_0, 1 - \frac{3}{2}(1-\gamma)\mu_0, 0, 0\bigg],  & r(1, 2) = 1, \\
	& \quad   & P(\cdot \mymid 1, 3) &= \big[0, 1 - 3(1-\gamma)(\mu_2 + \mu_3), 3(1-\gamma)\mu_2, 3(1-\gamma)\mu_3\big],  & r(1, 3) = 0, \\
	& \mathcal{A}_2=\{1,2\},  & P(\cdot \mymid 2, 1) &= \big[2(1-\gamma)\mu_0, 0, 1-2(1-\gamma)\mu_0, 0\big],  & r(2,1) = 1, \\
	& \quad  & P(\cdot \mymid 2, 2) &= \big[0, 2(1-\gamma)\mu_1, 1 - 2(1-\gamma)(\mu_1 + \mu_3), 2(1-\gamma)\mu_3\big],  & r(2,2) = 0, \\
	& \mathcal{A}_3=\{1,2\},  & P(\cdot \mymid 3, 1) &= [0, 0, 0, 1], & r(3, 1) = \frac{3}{4}, \\
	& \quad  & P(\cdot \mymid 3, 2) &= \big[2(1-\gamma)\mu_0, 2(1-\gamma)\mu_1, 2(1-\gamma)\mu_2, 1-2(1-\gamma)(1-\mu_3)\big], \hspace{-0.5em} & r(3,2) = 0, 
\end{align}
\end{subequations}
where the parameter $\mu = [\mu_0,\mu_1,\mu_2,\mu_3]\in \Delta(\cS)$  is set as 
\begin{align}
	\label{defn-parameter-mu}
	\mu = \Big[\frac{2}{5},\, \frac{2}{5} - \frac{c_{\mu}}{\log^2 T},\, \frac{1}{5},\, \frac{c_{\mu}}{\log^2 T}\Big],
\end{align}
for some sufficiently small quantity $0<c_{\mu} =O(1)$. In particular, actions $1$ and $2$ from state $1$ are identical, which will play a similar critical role as the case of synchronous Q-learning in pinpointing the ``over-estimation'' issue.

In addition, let the behavior policy $\pi_{\mathsf{b}}$ be uniform distributions such that
\begin{align*}
	\pi_{\mathsf{b}}(a \mymid s) = \frac{1}{|\cA_s|},\qquad\text{for all }s \in \cS\text{ and all }a\in \cA_s. 
\end{align*}
Then the transition probability from state $s$ to state $s'$ under the behavior policy $\pi_{\mathsf{b}}$ is given by
\begin{align*}
	P^{\pi_{\mathsf{b}}}(s'\mymid s) = \sum_{a\in \cA_s} \pi_{\mathsf{b}}(a\mymid s)P(s' \mymid s, a) = \gamma\mathds{1}(s' = s) + (1-\gamma)\mu(s').
\end{align*}
With this in mind, it can be easily verified that the stationary distribution under $\pib$ is given by 
\begin{align*}
	\mu_{\mathsf{b}}(s, a) = \frac{1}{|\cA_s|}\mu(s),
	\qquad \text{for all }s\in \cS \text{ and all }a\in \cA_s. 
\end{align*}
This together with \eqref{defn-parameter-mu} implies that
\begin{align}
	\mumin = \frac{c_{\mu}}{3\log^2 T}. 
	\label{eq:construction-mumin}
\end{align}
Moreover, if the sample trajectory is initialized with an initial state distribution $\mu_0$, 
then the marginal state distribution at time $t$ can be calculated as
\begin{align*}
\mu_t = (P^{\pi_{\mathsf{b}}})^t\mu_0 = (\gamma I + (1-\gamma)\mu1^{\top})^t\mu_0 = \gamma^t\mu_0 + (1-\gamma^t)\mu,
\end{align*} 
thus indicating that the total variation distance $d_{\mathsf{TV}}$ between $\mu_t$ and $\mu$ obeys 
\begin{align*}
	d_{\mathsf{TV}}(\mu_t, \mu) = \frac{1}{2} \| \mu_t - \mu \|_1 = \frac{1}{2} \gamma^t\| \mu_0 - \mu \|_1.
\end{align*}
Consequently, 
the mixing time of the sample trajectory~\citep{paulin2015concentration} obeys 
\begin{equation}
	t_{\mathsf{mix}} \in \Big[\frac{c_{\mathsf{mix},1}}{1-\gamma},\frac{c_{\mathsf{mix},2}}{1-\gamma} \Big]
	\label{eq:mixing-time-asyn-Q-LB-135}
\end{equation}
for some universal constants $c_{\mathsf{mix},1},c_{\mathsf{mix},2}>0$.

Before embarking on the analysis of the behavior of asynchronous Q-learning, 
let us first look at the optimal value function and Q-function of the constructed MDP. 
\begin{lemma}
\label{lem:optimal-V-Q-hardMDP-asynQ}
Consider the MDP $\mathcal{M}_{\mathsf{hard}}$ as constructed in \eqref{eq:hard-MDP-asynQ}. It holds that
\begin{subequations}
\label{eq:optimal-V-Q-hardMDP-asynQ}
\begin{align}
	V^{\star}(0) &= Q^{\star}(0,1) = \frac{2}{3(1-\gamma)}; \\
	Q^{\star}(0,2) &< V^{\star}(0) - \frac{4}{15}; \\
	V^{\star}(1) &= Q^{\star}(1, 1) = Q^{\star}(1, 2) = \frac{5+2\gamma}{(5+3\gamma)(1-\gamma)};\\
	Q^{\star}(1,3) &< V^{\star}(1) - \frac{7}{8}; \\
	V^{\star}(2) &= Q^{\star}(2,1) = \frac{15+8\gamma}{3(5+4\gamma)(1-\gamma)}; \\
	Q^{\star}(2,2) &< V^{\star}(2) - \frac{7}{9}; \\
	V^{\star}(3) &= Q^{\star}(3,1) = \frac{3}{4(1-\gamma)}; \\
	Q^{\star}(3,2) &< V^{\star}(3) - \frac{1}{4}.
\end{align}
\end{subequations}
\end{lemma}

\begin{proof}
In order to justify \eqref{eq:optimal-V-Q-hardMDP-asynQ}, 
let us begin by defining two vectors $\bm{V}$ and $\bm{Q}$ as follows:
\begin{align*}
	\bm{V} &= \Big[\frac{2}{3(1-\gamma)}, \frac{5+2\gamma}{(5+3\gamma)(1-\gamma)}, \frac{15+8\gamma}{3(5+4\gamma)(1-\gamma)}, \frac{3}{4(1-\gamma)}\Big]^{\top}, \\
	\bm{Q} &= \bm{r} + \gamma \bm{P} \bm{V}, 
\end{align*}
where $\bm{r} = [r(s,a)]$ denotes the reward vector. 
Then the claimed expressions of the value function in \eqref{eq:optimal-V-Q-hardMDP-asynQ} 
are valid as long as we can validate $V(s) = \max_a Q(s, a)$ (namely, they satisfy the Bellman equation). 
These are elementary calculations, which we omit for brevity. 
Once the expressions of both the value function and the Q-function are settled, 
the remaining set of advertised inequalities can be validated straightforwardly, 
which is omitted as well for conciseness. 
%
%
\end{proof}

\subsection{Analysis for the constructed MDP}

We now proceed to analyze the dynamics of asynchronous Q-learning when applied to the above MDP instance, 
for which we divide into three cases based on the magnitudes of the learning rates. 
Throughout the proof, 
we denote by $t_k(s,a)$ the iteration number corresponding to the $k$-th time the state-action pair $(s,a)$ is visited, 
and let $N_T(s,a)$ represent the total number of visits to $(s,a)$ up to time $T$. 
We shall also reuse the notation of the sample transition matrix $\bm{P}_t$ defined in \eqref{eq:defn-Pt-async-Q}, as well as the value estimate vector $\bm{V}_t=[V_t(s)]_{s\in \cS}$ as in Appendix~\ref{sec:notation-AsynQ-analysis}.

\subsubsection{Case 1: small learning rates ($\eta < \frac{1}{\mumin(1-\gamma)T}$)}

In this case, we focus attention on analyzing state $3$. 
To begin with, we claim that if $\eta \leq c_8\frac{(1-\gamma)^2}{\log T}$ for some sufficiently small constant $c_8>0$, 
then one has, for all $t\leq T$, 
\begin{align}
Q_t(3, 1) \le V^{\star}(3)\qquad\text{and}\qquad Q_t(3, 2) < V^{\star}(3) - \frac{1}{6} .
	\label{eq:claim-Qt-31-UB-async}
\end{align}
Given the assumption $\mumin T \geq \frac{c_3\log T}{(1-\gamma)^4}$, 
the regime $\eta < \frac{1}{\mumin(1-\gamma)T}$ clearly satisfies $\eta \leq c_8\frac{(1-\gamma)^2}{\log T}$.

\begin{proof}[Proof of Claim~\eqref{eq:claim-Qt-31-UB-async}]
We would like to prove the second inequality in \eqref{eq:claim-Qt-31-UB-async} by induction. 
Clearly, it suffices to look at those iterations where the value of $Q_t(3, 2)$ changes, namely, $\{t_k(3, 2)\}_{k\geq 1}$. 
Assume for the moment that \eqref{eq:claim-Qt-31-UB-async} holds for all $t < t_k(3, 2)$.  
Taking $\overline{\bm{V}} \coloneqq \big[\frac{1}{1-\gamma}, \frac{1}{1-\gamma}, \frac{1}{1-\gamma}, V^{\star}(3)\big]^{\top}$, 
we can invoke the asynchronous Q-learning update rule iteratively to deduce that
\begin{align}
	Q_{t_k(3,2)}(3, 2) &= (1 - \eta)^kQ_{0}(3, 2) + \sum_{i = 1}^k\gamma\eta(1-\eta)^{k - i}
	\bm{P}_{t_i(3,2)}\big((3,2),\cdot\big) \bm{V}_{t_i-1} \notag\\
	&\le \sum_{i = 1}^k\gamma\eta(1-\eta)^{k - i}\bm{P}_{t_i(3,2)}\big((3,2),\cdot\big) \overline{\bm{V}} \notag\\
	&\le \gamma \bm{P}(\cdot \mymid 3, 2)\overline{\bm{V}} + \sum_{i = 1}^k\gamma\eta(1-\eta)^{k - i}
	\big[\bm{P}_{t_i(3,2)}\big((3,2),\cdot\big) -  \bm{P}(\cdot \mymid 3, 2) \big] \overline{\bm{V}}    \notag\\
	&< \gamma \bm{P}(\cdot \mymid 3, 2)\overline{\bm{V}} + \frac{1}{12}    \notag\\ 	
	&\le V^{\star}(3) - \frac{1}{6}. 
	\label{eq:last-eq-LB-1579}
\end{align}
Here, the second line holds since $\overline{\bm{V}}$  serves as an entrywise upper bound on $\bm{V}_t$; 
the third line follows since $\sum_{i = 1}^k \eta(1-\eta)^{k - i}\leq 1$;    
the fourth line in \eqref{eq:last-eq-LB-1579} is valid since,   
according to the Bernstein inequality (see \citet[Lemma 1]{li2020sample}), 
\begin{align*}
	\bigg| \sum_{i = 1}^k \eta(1-\eta)^{k - i} \big[\bm{P}_{t_i(3,2)}\big((3,2),\cdot\big) -  \bm{P}(\cdot \mymid 3, 2) \big] \overline{\bm{V}} \bigg| 
	< \frac{1}{12}
\end{align*}
holds with probability at least $1 - 1/T$; 
and the validity of the last inequality can be shown by observing that
\begin{align*}
\gamma\bm{P}(\cdot\mymid3,2)\overline{\bm{V}} & =\gamma P(3\mymid3,2)\overline{V}(3)+\frac{\gamma}{1-\gamma}\big(1-P(3\mymid3,2)\big)\\
 & =\gamma\Big(1-2(1-\gamma)\Big)\overline{V}(3)+\gamma\big(2(1-\gamma)\mu_{3}\big)\overline{V}(3)+\frac{\gamma}{1-\gamma}\big(2(1-\gamma)\mu_{3}\big)\\
 & =\gamma\Big(1-2(1-\gamma)\Big)\overline{V}(3)+3.5\gamma\mu_{3}\le\gamma\big(1-2(1-\gamma)\big)\overline{V}(3)+2\gamma\\
 & =\overline{V}(3)-\frac{3}{4}(2\gamma+1)+2\gamma\le\overline{V}(3)-\frac{1}{4},
\end{align*}
where we have used the facts that $\overline{V}(3)=V^{\star}(3) = \frac{3}{4(1-\gamma)}$, 
$P(3\mymid3,2)= 1- 2(1-\gamma)(1-\mu_3)$  
and $\gamma <1$. 
Thus, standard induction arguments immediately validate the second inequality in the claim \eqref{eq:claim-Qt-31-UB-async} for all $t\leq T$.

Regarding the first inequality of \eqref{eq:claim-Qt-31-UB-async}, it is seen that for any $k\geq 1$, 
\begin{align*}
	Q_{t_k(3,1)}(3, 1) &= (1 - \eta)^kQ_{0}(3, 1) + \sum_{i = 1}^k\eta(1-\eta)^{k - i}\Big( r(3, 1) + 
	\gamma \bm{P}_{t_i(3,1)}\big((3,1),\cdot\big)\bm{V}_{t_i(3,1)-1} \Big) \\
&\le \sum_{i = 1}^k\eta(1-\eta)^{k - i}\big( r(3, 1) + \gamma\overline{V}(3) \big) \le V^{\star}(3), 
\end{align*}
where we have used the facts that $Q_0\equiv 0$ and $\overline{V}(3)=V^{\star}(3)=\frac{3}{4(1-\gamma)}$ 
and the elementary inequality $\sum_{i = 1}^k \eta(1-\eta)^{k - i}\leq 1$.  
Given that the above bound holds for all $k\geq 1$ (and $\{t_k(3,1)\}_{k\geq 1}$ correspond to all iterations when the value of $Q_{t}(3, 1)$ changes), we have established the first advertised inequality in \eqref{eq:claim-Qt-31-UB-async}. 
\end{proof}

Next, let us define
\begin{align}
s \coloneqq \max\Big\{ k : Q_{t_k(3, 1)}(3, 1) < V^{\star}(3) - \frac{1}{8} \Big\}. 
	\label{eq:defn-s-Qtk-1357}
\end{align}
From this definition and Claim~\eqref{eq:claim-Qt-31-UB-async}, we know that for any $k>s$, it holds that
\[
	Q_{t_k(3, 1)}(3, 1) \geq V^{\star}(3) - \frac{1}{8} .
\]
Recognizing that
\begin{align*}
\big|Q_{t_k(s, a)}(s, a) - Q_{t_{k-1}(s, a)}(s, a)\big|
	= \eta \big|Q_{t_k(s, a)}(s, a) - r(s,a) - \gamma \bm{P}_{t_k(s, a)}\big((s, a),\cdot\big) \bm{V}_{t_{k-1}(s,a)} \big| 
	\le \frac{\eta}{1-\gamma} < \frac{1}{24},
\end{align*}
we can readily obtain 
\[
Q_{t_{k-1}(3,1)}(3,1)\geq Q_{t_{k}(3,1)}(3,1)-\frac{1}{24}\geq V^{\star}(3)-\frac{1}{6}>Q_{t_{k-1}(3,1)}(3,2),
\]
\[
	\Longrightarrow \qquad V_{t_{k-1}(3, 1)}(3) = Q_{t_{k-1}(3, 1)}(3, 1), \qquad \forall k > s. 
\]
Combine this with the Q-learning update rule to arrive at
\begin{align*}
Q_{t_k(3,1)}(3, 1) &= (1 - \eta)Q_{t_{k-1}(3,1)}(3, 1) + \eta\big( r(3, 1) + \gamma Q_{t_{k-1}(3,1)}(3, 1) \big) \\
&= \big(1 - \eta(1-\gamma)\big) Q_{t_{k-1}(3,1)}(3, 1) + \frac{3}{4}\eta \\
	&= \cdots = \big(1 - \eta(1-\gamma)\big)^{k-s}Q_{t_{s}(3,1)}(3, 1) + \frac{3}{4}\eta\sum_{i = 0}^{k-s-1} \big(1 - \eta(1-\gamma)\big)^i \\
	&= V^{\star}(3) - \big(1 - \eta(1-\gamma)\big)^{k-s}\big[V^{\star}(3) - Q_{t_{s}(3,1)}(3, 1) \big], 
\end{align*}
where the last inequality holds since $V^{\star}(3) = \frac{3}{4(1-\gamma)} $. 
This taken together with \eqref{eq:defn-s-Qtk-1357} (so that $V^{\star}(3) - Q_{t_{s}(3,1)}(3, 1)\geq 1/8$) leads to
\begin{align}
V^{\star}(3) - Q_{T}(3, 1) = Q^{\star}(3, 1) - Q_{T}(3, 1) \geq \frac{1}{8}\big(1 - \eta(1-\gamma)\big)^{N_T(3, 1)} 
	\geq c_9 
	\label{eq:V3-Q31-gap-1579}
\end{align}
for some constant $c_9>0$;  
here, the last inequality holds since, according to \citet[Lemma 8]{li2020sample},  
\begin{align*}
	N_T(3, 1) \leq \frac{3}{2}\mu_{\mathsf{b}}(3, 1) T = \frac{9}{4}\mumin T \le O\Big( \frac{1}{\eta(1-\gamma)} \Big)
\end{align*}
occurs with probability at least $1-1/T$, provided that
$T\geq \frac{443\tmix \log(10T)}{\mumin}$.  
Note that according to \eqref{eq:construction-mumin} and  \eqref{eq:mixing-time-asyn-Q-LB-135}, 
$\frac{\tmix }{\mumin}$ is on the order of $\frac{\log^2T}{c_{\mu}(1-\gamma)}$.

Putting \eqref{eq:claim-Qt-31-UB-async} and \eqref{eq:V3-Q31-gap-1579} together then reveals that 
with probability at least $1-2/T$, 
\begin{equation} \label{eq:case1}
	\max_a \big| Q^{\star}(3,a) - Q_{T}(3,a) \big|
	\geq V^{\star}(3) - V_{T}(3) 
	= V^{\star}(3) - \max\big\{ Q_{T}(3, 1), Q_{T}(3, 2) \big\} 
	\geq \min\Big\{ \frac{1}{6}, c_9\Big\}. 
\end{equation}

\subsubsection{Case 2: large learning rates ($\eta > \frac{\log T}{\mumin(1-\gamma)^2T}$)}

\paragraph{Case 2.1: $\eta \geq \frac{c_8(1-\gamma)^2}{\log T}$ for some small enough constant $c_8>0$.} 
Under the condition that $\eta \geq \frac{c_8(1-\gamma)^2}{\log T}$, 
we claim that with probability at least $\frac{1-\gamma}{50}$, 
%
\begin{equation}
	\exists (s,a)\qquad \text{s.t.}\quad \big|Q_T(s, a) - Q^{\star}(s, a)\big| \geq c_{10}\frac{1-\gamma}{\log T}
	\label{eq:claim-Q-gap-2489}
\end{equation}
for some universal constant $c_{10}>0$. 
We shall first prove this claim.

\begin{proof}[Proof of Claim~\eqref{eq:claim-Q-gap-2489}]
We shall focus attention on the case where $(s_{T-1}, a_{T-1}) = (2, 1)$.  
Given that the stationary distribution obeys $\mu_{\mathsf{b}}(2,1)=1/10$ and that $T$ is sufficiently large (so that the empirical distribution approaches the stationary distribution),  
we know that 
\[
	\mathbb{P}\big( (s_{T-1}, a_{T-1}) = (2, 1) \big) \geq \mu_{\mathsf{b}}(2,1)/2= 1/20.
\]
%

Let us first look at the case where $\big|V_{T-1}(0) - V^{\star}(0) \big| > \frac{1}{27(1-\gamma)}$. It follows from $P(\cdot\mymid 0, 1)=[1,0,0,0]$ that
\[
	\big|Q_{T}(0,1) - Q^{\star}(0,1) \big| = 
	\big|r(0,1) + \gamma V_{T-1}(0) - V^{\star}(0) \big|
	= \gamma \big| V_{T-1}(0) - V^{\star}(0) \big| > \frac{0.03}{1-\gamma}
\]
as long as $\gamma \geq 0.95$, 
which clearly satisfies \eqref{eq:claim-Q-gap-2489}. 
When it comes to the complement case where $\big|V_{T-1}(0) - V^{\star}(0) \big| \le \frac{1}{27(1-\gamma)}$, 
 either of the following two scenarios will happen: 
 \begin{itemize}
		\item If $|V_{T-1}(2) - V_{T-1}(0)| \le \frac{1}{27(1-\gamma)}$, then the assumption  $\big|V_{T-1}(0) - V^{\star}(0) \big| \le \frac{1}{27(1-\gamma)}$ yields
\begin{align}
Q_T(2, 2) &\le V_{T-1}(2) \le V_{T-1}(0) + \frac{1}{27(1-\gamma)}
	\le V^{\star}(0) + \frac{2}{27(1-\gamma)} \notag\\
&\le \gamma \big[1 - 2(1-\gamma)(\mu_1 + \mu_3) \big]V^{\star}(2) - \frac{1}{27(1-\gamma)} < Q^{\star}(2, 2) - \frac{1}{27(1-\gamma)},
	\label{eq:QT-22-UB-Qstar-22-135}
\end{align}
where the second line holds since, for $\gamma \ge 0.95$,
\begin{align*}
\gamma \big[1 - 2(1-\gamma)(\mu_1 + \mu_3) \big]V^{\star}(2) &= \gamma\Big[1-\frac{4}{5}(1-\gamma)\Big]\frac{15+8\gamma}{3(5+4\gamma)(1-\gamma)} \\
	&\ge \frac{7}{9(1-\gamma)} 
	= V^{\star}(0) + \frac{1}{9(1-\gamma)},
\end{align*}
and the last inequality holds since $Q^{\star}(2,2) > r(2,2)+\gamma P(2\mymid 2,2) V^{\star}(2)$ (from the Bellman equation). 
		 
	 \item  Consider instead the scenario with $|V_{T-1}(2) - V_{T-1}(0)| > \frac{1}{27(1-\gamma)}$.  
		 For notational convenience, define 
		 $$
		 	Q_T^s(2, 1) \coloneqq (1 - \eta)Q_{T-1}(2, 1) + \eta\big(r(2, 1) + \gamma V_{T-1}(s) \big),
			\quad \forall s\in \cS. 
		 $$
		 %
		 Recognizing that $\min\big\{ P(0\mymid 2,1), P(2\mymid 2,1) \big\} \geq 2(1-\gamma)\mu_0$,  
		 we can show that
		 with probability at least $(1-\gamma)\mu_0=\frac{2(1-\gamma)}{5}$, 
\begin{align*}
\big|Q_T(2, 1) - Q^{\star}(2, 1)\big| 
	&= \max\Big\{\big| Q_T^0(2, 1) - Q^{\star}(2, 1) \big|,\, \big|Q_T^2(2, 1) - Q^{\star}(2, 1)\big| \Big\} \\
&\ge \frac{1}{2} \big| Q_T^0(2, 1) - Q_T^2(2, 1) \big| 
	= \frac{1}{2}\eta\gamma \big|V_{T-1}(0) - V_{T-1}(2) \big| \\
	& > \frac{\eta\gamma}{54(1-\gamma)}  \geq \frac{0.95c_8(1-\gamma)}{54\log T}, 
\end{align*}
where the last inequality holds since  $\eta \geq \frac{c_8(1-\gamma)^2}{\log T}$  and $\gamma \geq 0.95$. 
 \end{itemize}
We can thus conclude that for all the above scenarios, the claim \eqref{eq:claim-Q-gap-2489} holds with probability at least 
$\frac{1-\gamma}{50}$. 
\end{proof}

\paragraph{Case 2.2: $\frac{\log T}{\mumin(1-\gamma)^2T} < \eta \leq \frac{c_8(1-\gamma)^2}{\log T}$.}
Recall from Claim~\eqref{eq:claim-Qt-31-UB-async} that: when $\eta \leq \frac{c_8(1-\gamma)^2}{\log T}$, one has
\begin{align}
Q_t(3, 1) \le V^{\star}(3)\qquad\text{and}\qquad Q_t(3, 2) \le V^{\star}(3) - \frac{1}{6},
	\qquad \forall t\leq T.
	\label{eq:claim-Qt-31-UB-async-repeat}
\end{align}
In addition, for any $k \geq \frac{\log T}{\eta(1-\gamma)}$, we can use $P(\cdot\mymid3,1)=[0,0,0,1]$ and the Bellman equation to derive
\begin{align*}
Q_{t_k(3, 1)}(3, 1) &= (1 - \eta)Q_{t_{k-1}(3, 1)}(3, 1) + \eta\big( r(3, 1) + \gamma V_{t_k(3, 1)-1}(3) \big) \\
	&\geq (1 - \eta)Q_{t_{k-1}(3, 1)}(3, 1) + \eta\Big( \frac{3}{4} + \gamma Q_{t_{k-1}(3, 1)}(3, 1) \Big) \\ 
&= \big(1 - \eta(1-\gamma) \big)Q_{t_{k-1}}(3, 1) + \frac{3}{4}\eta \\
&= \big(1 - \eta(1-\gamma)\big)^kQ_{0}(3, 1) + \frac{3}{4}\sum_{i = 1}^k\eta \big(1-\eta(1-\gamma)\big)^{k - i} \\
&= \Big[1 - \big(1 - \eta(1-\gamma)\big)^k \Big]V^{\star}(3) \ge V^{\star}(3) - \frac{1}{T(1-\gamma)}, 
\end{align*}
where the last line is valid since $V^{\star}(3)=\frac{3}{4(1-\gamma)}$ and $Q_0(3,1)=0$. 
It is also seen from \citet[Lemma 8]{li2020sample} that,   with probability at least $1-1/T$, 
\begin{align*}
	N_{T/3}(3, 1) \geq \frac{1}{2}\mu_{\mathsf{b}}(3, 1) T = \frac{3}{4}\mumin T 
	\geq \frac{\log T}{\eta(1-\gamma)} 
\end{align*}
as long as $\eta \ge \frac{\log T}{\mumin(1-\gamma)T}$. 
The above two results taken collectively yield  
\begin{align}
0 \le V^{\star}(3) - V_t(3) \leq V^{\star}(3)-Q_{t}(3,1) \le \frac{1}{T(1-\gamma)},\qquad\text{for all }\frac{T}{3} \le t \le T. 
\end{align}
%

Next, we move on to analyze state $2$. 
Towards this end, let us define
\begin{align}
s \coloneqq \max\Big\{ k : Q_{t_k(2, 1)}(2, 2) > Q_{t_k(2, 1)}(2, 1) - \frac{1}{2} ~~\text{or}~~ \big|Q_{t_k(2, 1)}(2, 1) - V^{\star}(2)\big| > \frac{1}{4} \Big\}.
	\label{eq:defn-s-case2-async-Q-LB}
\end{align}
Note that when $\eta \leq \frac{c_8(1-\gamma)^2}{\log T}$, one has
\begin{align*}
	\big|Q_{t_k(s,a)}(s, a) - Q_{t_{k-1}(s,a)}(s, a)\big| 
	= \eta \big|Q_{t_k(s,a)}(s, a) - r(s, a) - \gamma \bm{P}_{t_{k}(s,a)}\big((s,a),\cdot\big)\bm{V}_{t_k(s,a)-1} \big| 
	\le \frac{\eta}{1-\gamma} < \frac{1}{4},
\end{align*}
which combined with \eqref{eq:defn-s-case2-async-Q-LB} implies that $Q_{t_{k-1}(2,1)}(2, 2) < Q_{t_{k-1}(2,1)}(2, 1)$ for any $k>s$ and hence
\begin{align} \label{eq:V-Q-2}
	V_{t_{k-1}(2,1)}(2) = Q_{t_{k-1}(2,1)}(2, 1) \qquad \text{ for } k > s.
\end{align}
This crucial identity together with the construction of $P(\cdot\mymid 2,1)$ in turn allows one to derive, for any $k>s$,
\begin{align}
	Q_{t_k(2, 1)}(2, 1)&= (1 - \eta)Q_{t_{k-1}(2, 1)}(2, 1) + \eta\Big(r(2, 1) + \gamma \bm{P}_{t_k(2, 1)} \big((2,1),\cdot\big) 
	\bm{V}_{t_{k}(2,1)-1}\Big) \notag\\
&= \big(1 - \eta(1-\gamma)(1+2\gamma\mu_0)\big)Q_{t_{k-1}(2, 1)}(2, 1) \notag\\
& \qquad + \eta\Big(1 + 2\gamma(1-\gamma)\mu_0 V_{t_{k}(2, 1)-1}(0) + \gamma \Big(\bm{P}_{t_k(2, 1)} \big((2,1),\cdot\big) - \bm{P}(\cdot\mymid 2, 1)\Big)\bm{V}_{t_{k}(2, 1)-1}\Big) \notag\\
&= \big(1 - \eta(1-\gamma)(1+2\gamma\mu_0)\big)^{k-s}Q_{t_{s}(2, 1)}(2, 1) 
	+ \eta\sum_{i = s+1}^{k} \big(1 - \eta(1-\gamma)(1+2\gamma\mu_0)\big)^{k-i} \notag\\
	&\qquad\cdot\Big(1 + 2\gamma(1-\gamma)\mu_0V_{t_{i}(2, 1)-1}(0) + \gamma \Big(\bm{P}_{t_i(2,1)}\big((2,1),\cdot\big) - \bm{P}(\cdot\mymid 2, 1)\Big)\bm{V}_{t_{i}(2, 1)-1}\Big), \label{eq:expression-Qtk-21-async} 
\end{align}
%
thus leading to
%
\begin{align*}
\mathbb{E}\big[Q_{t_k(2, 1)}(2, 1) \mid Q_{t_s(2, 1)}(2, 1) \big] &\le \big(1 - \eta(1-\gamma)(1+2\gamma\mu_0)\big)^{k-s}Q_{t_s(2, 1)}(2, 1) \\
&\qquad+ \eta\sum_{i = s+1}^{k} \big(1 - \eta(1-\gamma)(1+2\gamma\mu_0)\big)^{k-i}\big(1 + 2\gamma(1-\gamma)\mu_0V^{\star}(0)\big) \\
&\le \big(1 - \eta(1-\gamma)(1+2\gamma\mu_0)\big)^{k-s}Q_{t_s(2, 1)}(2, 1) \\
&\qquad+ \eta\sum_{i = s+1}^{k} \big(1 - \eta(1-\gamma)(1+2\gamma\mu_0)\big)^{k-i} V^{\star}(2) \\
&= V^{\star}(2) - \big(1 - \eta(1-\gamma)(1+2\gamma\mu_0)\big)^{k-s}\big(V^{\star}(2) - Q_{t_s(2, 1)}(2, 1) \big) \\
&\le V^{\star}(2) - \frac{1}{4}\big(1 - \eta(1-\gamma)(1+2\gamma\mu_0)\big)^{k-s},
\end{align*}
where the second inequality arises from the Bellman equation (so that $V^{\star}(2)\geq 1 + 2\gamma(1-\gamma)\mu_0V^{\star}(0)$). 
\begin{itemize}
	\item If $\big(1 - \eta(1-\gamma)(1+2\gamma\mu_0)\big)^{k-s} \ge \frac{1}{2}$, then we can readily see that
\begin{align}
\mathbb{E}\Big[\big|Q_{T}(2, 1) - Q^{\star}(2, 1)\big|^2\Big] \ge \Big(\mathbb{E}\big[Q_{t_k(2, 1)}(2, 1) - V^{\star}(2) \mid Q_{t_s(2, 1)}(2, 1) \big]\Big)^2 \ge \frac{1}{64}.
\end{align}
\item
Otherwise, consider the case where $\big(1 - \eta(1-\gamma)(1+2\gamma\mu_0)\big)^{k-s} < \frac{1}{2}$. 
The expression \eqref{eq:expression-Qtk-21-async} allows one to control the variance as follows: 
\begin{align*}
 &\mathsf{Var}\big(Q_{t_{k}(2,1)}(2,1)\mid Q_{t_{s}(2,1)}(2,1)\big) =\eta^{2}\gamma^{2}\sum_{i=s+1}^{k}\big(1-\eta(1-\gamma)(1+2\gamma\mu_{0})\big)^{2(k-i)}\\
 & \qquad\qquad\qquad\cdot\mathbb{E}\left[\mathbb{E}\bigg[\Big(\Big(\bm{P}_{t_{i}(2,1)}\big((2,1),\cdot\big)-\bm{P}(\cdot\mymid2,1)\Big)\bm{V}_{t_{i}(2,1)-1}\Big)^{2}\mid\bm{V}_{t_{i}(2,1)-1}\bigg]\right]\\
 & \qquad=\eta^{2}\gamma^{2}\sum_{i=s+1}^{k}\big(1-\eta(1-\gamma)(1+2\gamma\mu_{0})\big)^{2(k-i)}\\
 & \qquad\qquad\cdot\mathbb{E}\left[\mathbb{E}\bigg[2(1-\gamma)\mu_{0}\big(1-2(1-\gamma)\mu_{0}\big)\Big\{ V_{t_{i}(2,1)-1}(2)-V_{t_{i}(2,1)-1}(0)\Big\}^{2}\mid\bm{V}_{t_{i}(2,1)-1}\bigg]\right]\\
 & \qquad\geq\eta^{2}\gamma^{2}\sum_{i=s+1}^{k}\big(1-\eta(1-\gamma)(1+2\gamma\mu_{0})\big)^{2(k-i)}\left\{ (1-\gamma)\mu_{0}\cdot\frac{1}{36(1-\gamma)^{2}}\right\} \\
	& \qquad= \frac{\eta^{2}\gamma^{2}\mu_0}{36(1-\gamma)}
	\frac{1-\big(1-\eta(1-\gamma)(1+2\gamma\mu_{0})\big)^{k-s}}{\eta(1-\gamma)(1+2\gamma\mu_{0})}
	\ge \frac{\eta}{200(1+2\gamma\mu_0)(1-\gamma)^2} \geq \frac{\eta}{400(1-\gamma)^2} , 
\end{align*}
where the third inequality holds since 
$V_{t_{i}(2,1)-1}(2) - V_{t_{i}(2,1)-1}(0) \ge V^{\star}(2) - \frac{1}{4} - V^{\star}(0) > \frac{5}{27(1-\gamma)}$ (using the definition of $s$ in \eqref{eq:defn-s-case2-async-Q-LB}), 
		and the last line uses $\gamma \geq 0.95$ and $\mu_0=2/5$. 
%
As a result, 
\begin{align}
\mathbb{E}\Big[\big|Q_{T}(2, 1) - Q^{\star}(2, 1)\big|^2\Big] \ge 
	\mathbb{E}\Big[ \mathsf{Var}\big(Q_{t_{k}(2,1)}(2,1)\mid Q_{t_{s}(2,1)}(2,1)\big) \Big]
	\geq \frac{\eta}{400(1-\gamma)^2}.
\end{align}
\end{itemize}

Taking the above two results together reveals that
\begin{align}
\mathbb{E}\Big[\big|Q_{T}(2, 1) - Q^{\star}(2, 1)\big|^2\Big] 
	\geq  \min \bigg\{ \frac{\eta}{400(1-\gamma)^2}, \frac{1}{64} \bigg\}
	\geq   \min \bigg\{ \frac{\log T}{400\mumin (1-\gamma)^4 T}, \frac{1}{64} \bigg\}.
	\label{eq:claim-Q-gap-248976}
\end{align}

\paragraph{Combining Case 2.1 and Case 2.2.}

Putting together \eqref{eq:claim-Q-gap-2489} and \eqref{eq:claim-Q-gap-248976} together leads to
\begin{equation} \label{eq:case2}
	\max_{s, a} \mathbb{E} \Big[ \big| Q_T(s, a) - Q^{\star}(s, a) \big|^2 \Big]  
	\geq \min \bigg\{ \frac{\log T}{400\mumin (1-\gamma)^4 T}, \frac{1}{64}, \frac{c_{10}^2(1-\gamma)^3}{50\log^2 T}\bigg\},
\end{equation}
provided that $\eta > \frac{\log T}{\mumin(1-\gamma)^2T}$.

\subsubsection{Case 3: medium learning rates ($ \frac{1}{\mumin(1-\gamma)T} \leq \eta \leq \frac{\log T}{\mumin(1-\gamma)^2T}$)}

We now shift attention to the dynamics underlying state $1$, and look at 
 its associated value function $V_{t}(1)$.  

\paragraph{Two auxiliary sequences.} 
Before proceeding, we abuse notation by defining 
\begin{align*}
t_i \coloneqq \min\Big\{ t > t_{i-1} : \text{both pairs $(1, 1)$ and $(1, 2)$ have been visited during $(t_{i-1}, t]$} \Big\},
\end{align*}
and letting $N_{(t_{k-1}, t_k]}(1, a, 1)$ (resp.~$N_{(t_{k-1}, t_k]}(1, a)$) denote the number of times the sample trajectory 
visits $(s,a,s')=(1,a,1)$ (resp.~$(s,a)=(1,a)$) 
within the time interval $(t_{k-1}, t_k]$. 
From standard Bernstein's inequality (see \citet[Lemma 8]{li2020sample}) and the definition of $t_i$, 
one can easily see that with probability at least $1-1/T$, 
\begin{align}
	1 \le N_{(t_{i-1}, t_i]}(1, a) \leq c_{11}\log T 
	\label{eq:interval-count-asyncQ}
\end{align}
holds simultaneously for all $t_i\leq T$,  where $c_{11}>0$ is some suitable constant.  
In turn,  
this bound \eqref{eq:interval-count-asyncQ} also implies that 
\begin{equation}
	K \coloneqq  \max\big\{k : t_k \le T \big\} \geq c_{12} \frac{T}{\log T}
	\label{eq:bound-K-async-Q-876}
\end{equation}
for some constant $c_{12}>0$. 
These follow from fairly standard concentration arguments and are hence omitted.

Under our assumption on $T$, the sample trajectory mixes well after $T/3$ iterations. 
In order to further remove the effect of $Q_{T/3}(1, a)$, let us introduce the following auxiliary sequence
\begin{align}
	\label{eqn:Qhat-asynQ}
	\widehat{Q}_k(a) = (1-\eta)\widehat{Q}_{k-1}(a) + \eta\Big\{ \frac{1}{3} + \gamma P_{k}(a)\widehat{V}_{k-1}\Big\},
\end{align}
where $\widehat{V}_{k-1} \coloneqq \max_a \widehat{Q}_{k-1}(a)$, 
\begin{align*}
	\widehat{Q}_0(a) &\coloneqq Q^{\star}(1, a) - V^{\star}(0) = \frac{5}{3(5+3\gamma)(1-\gamma)}, 
	 \qquad a=1,2 \\
	P_k(a) &\coloneqq \frac{N_{(t_{k-1}, t_k]}(1, a, 1)}{N_{(t_{k-1}, t_k]}(1, a)}. 
\end{align*}
Repeating the proof of \eqref{eq:Qt} (which we omit here for brevity), we arrive at the following relation:
\begin{align} 
\label{eq:Qt-asynQ}
	Q_{t_k(1,a)}(1, a) - V^{\star}(0) \ge \widehat{Q}_k(a) - \frac{1}{1-\gamma} \big(1-\eta(1-\gamma)\big)^{k} - \frac{1}{T(1-\gamma)^2},
	\qquad  a \in \{1,2\}. 
\end{align}
Furthermore, in order to control $\widehat{Q}_k(a)$, we construct an additional auxiliary sequence as follows 
\begin{align}
	\overline{Q}_k = (1-\eta)\overline{Q}_{k-1} + \eta\Big\{ \frac{1}{3} + \gamma P_k(1)\overline{Q}_{k-1}\Big\} 
	\qquad \text{and} \qquad 
	\overline{Q}_0 = V^{\star}(1) - V^{\star}(0) = \frac{5}{3(5+3\gamma)(1-\gamma)}.
	\label{eq:sequence-overline-Qt-asynQ}
\end{align}
From the basic fact that $\widehat{V}_k =\max_a \widehat{Q}_k(a) \ge \widehat{Q}_k(1)$, it can be easily verified that 
%
\begin{align}
	\widehat{Q}_k(1) \ge (1-\eta)\widehat{Q}_{k-1}(1) + \eta\Big\{ \frac{1}{3} + \gamma P_{k}(1)\widehat{Q}_{k-1}(1)\Big\}
	\ge \overline{Q}_k,
	\label{eq:ordering-Qt-hat-overline-asynQ}
\end{align}
which in turn motivates us to lower bound $\widehat{V}_k$ by controlling $\overline{Q}_k$. 
Using similar analysis for~\eqref{eqn:Tailbound-vhat-rolland}, we reach
\begin{align}
	\label{eqn:Tailbound-vhat-rolland-asynQ}
	\mathbb{P}\Big\{\widehat{V}_k \ge \frac{1}{5(1-\gamma)} \Big\} \ge \frac{1}{2},\qquad \text{for any }k\text{ with}~t_k \ge \frac{2T}{3}.
\end{align}

\paragraph{Main proof.}
With the preceding auxiliary sequences in place, let us define (akin to the synchronous case) 
\begin{subequations}
\begin{align}
	\Delta_k(a) &\defn \widehat{Q}_k(a) + V^{\star}(0) - Q^{\star}(1, a), \qquad a = 1, 2; \\
	\Delta_{k,\mathsf{max}} &\defn \max_a \Delta_k(a). 
\end{align}
\end{subequations}
Based on the iterative update rule \eqref{eqn:Qhat-asynQ}, we can once again derive
\begin{align}
\Delta_k(a) = \sum_{i = 1}^k\eta \big(1-\eta\big)^{k - i}\gamma\Big(p\Delta_{k-1,\mathsf{max}} + \big(P_{k}(a) - p \big)\widehat{V}_{k-1}\Big), \qquad a = 1, 2, 
\end{align}
where $p = 1 - \frac{3}{2}(1-\gamma)\mu_0$.
Then adopting the same analysis as for the synchronous case, we arrive at
\begin{align} \label{eq:case3}
	\mathbb{E} \Big[ \max_a \big| Q_T(1,a) - Q^{\star}(1,a) \big| \Big] \geq 
	\mathbb{E} \big[ \big| V_T(1) - V^{\star}(1) \big| \big] &\geq \frac{c_{13}}{\sqrt{\mumin(1-\gamma)^4T\log^3 T}},
\end{align}
for some constant $c_{13}>0$. 
Here, we have made use of the fact that
\begin{align*}
\big(1-\eta(1-\gamma)\big)^{K} 
	\leq  \bigg(1-\frac{3\log^2 T}{c_{\mu} T}\bigg)^{\frac{c_{12}T}{\log T}}  \le \frac{1}{T},
\end{align*}
an immediate consequence of \eqref{eq:bound-K-async-Q-876} and the assumption that $\eta(1-\gamma) \ge \frac{1}{\mumin T} = \frac{3\log^2 T}{c_{\mu} T}$.

\subsubsection{Putting all this together}
Combining~\eqref{eq:case1},~\eqref{eq:case2}, and~\eqref{eq:case3} leads to
\begin{equation} 
	\max_{s, a} \mathbb{E} \Big[ \big| Q_T(s, a) - Q^{\star}(s, a) \big|^2 \Big]  
	\geq \min \bigg\{c_9^2, \frac{\log T}{400\mumin (1-\gamma)^4 T}, \frac{1}{64}, \frac{c_{10}^2(1-\gamma)^3}{\log^2 T}, \frac{c_{13}^2}{\mumin(1-\gamma)^4T\log^3 T}\bigg\},
\end{equation}
for any $0 < \eta < 1$.
Then the conclusion is handy under the proviso that $T \ge \frac{c_3}{\mumin(1-\gamma)^7\log T}$.

\bibliographystyle{apalike}
\bibliography{bibfileRL}

\end{document}